\documentclass[10pt]{article} 
\usepackage[preprint]{tmlr}


\usepackage{amsmath,amsfonts,bm}









\def\eqref#1{equation~\ref{#1}}









\def\1{\bm{1}}










\DeclareMathAlphabet{\mathsfit}{\encodingdefault}{\sfdefault}{m}{sl}
\SetMathAlphabet{\mathsfit}{bold}{\encodingdefault}{\sfdefault}{bx}{n}













\DeclareMathOperator*{\argmax}{arg\,max}
\DeclareMathOperator*{\argmin}{arg\,min}

\usepackage[utf8]{inputenc} 
\usepackage[T1]{fontenc}    
\usepackage{hyperref}
\usepackage{url}
\usepackage{subcaption}

\usepackage{algorithm}
\usepackage{algpseudocode}

\usepackage{bm}
\usepackage{graphicx}
\usepackage{amsthm}
\usepackage{amsmath}
\usepackage{amssymb}
\usepackage[normalem]{ulem}
\usepackage{captdef}

\newtheorem{theorem}{Theorem}[section]

\newtheorem{lemma}[theorem]{Lemma}
\newtheorem{corollary}[theorem]{Corollary}
\theoremstyle{plain}
\newtheorem{definition}[theorem]{Definition}

\newcommand{\targmin}{{\rm argmin}}

\usepackage{mcr}

\title{Distributionally Robust Coreset Selection under Covariate Shift}


\author{%
	\name Tomonari Tanaka, Hanting Yang, Tatsuya Aoyama, Satoshi Akahane, Yoshito Okura
	\addr Nagoya University
	\AND
	\name Hiroyuki Hanada, Noriaki Hashimoto
	\addr RIKEN
	\AND
	\name Yu Inatsu
	\addr Nagoya Institute of Technology
	\AND
	\name Taro Murayama, Hanju Lee, Shinya Kojima
	\addr DENSO CORPORATION
	\AND \name Ichiro Takeuchi
	\addr Nagoya University \& RIKEN
	\email takeuchi.ichiro.n6@f.mail.nagoya-u.ac.jp
}



\begin{document}

\maketitle

\begin{abstract}

Coreset selection, which involves selecting a small subset from an existing training dataset, is an approach to reducing training data, and various approaches have been proposed for this method.
In practical situations where these methods are employed, it is often the case that the data distributions differ between the development phase and the deployment phase, with the latter being unknown.
Thus, it is challenging to select an effective subset of training data that performs well across all deployment scenarios.
We therefore propose Distributionally Robust Coreset Selection~(DRCS).
DRCS theoretically derives an estimate of the upper bound for the worst-case test error, assuming that the future covariate distribution may deviate within a defined range from the training distribution.
Furthermore, by selecting instances in a way that suppresses the estimate of the upper bound for the worst-case test error, DRCS achieves distributionally robust training instance selection.
This study is primarily applicable to convex training computation, but we demonstrate that it can also be applied to deep learning under appropriate approximations.
In this paper, we focus on covariate shift, a type of data distribution shift, and demonstrate the effectiveness of DRCS through experiments.

\end{abstract}

\section{Introduction}

Coreset selection is a technique designed to reduce the size of training data while maintaining the performance of predictive models~\citep{guo2022deepcore}.
By carefully identifying a representative subset of the original dataset, this approach addresses critical challenges such as computational efficiency and memory limitations.
Coresets are constructed to capture the most informative and diverse samples, ensuring that the essential characteristics of the underlying data distribution are preserved.
This makes the method especially valuable in scenarios involving large-scale datasets or resource-constrained environments, where processing the entire dataset may be infeasible.
Coreset selection finds broad applications in areas such as data summarization, accelerating model training, reducing annotation costs, and improving model interpretability.
It is also practically applied to technologies that require retaining useful training instances, such as continual learning and active learning~\citep{sener2017active, toneva2019forgetting, paul2021deep, ducoffe2018adversarial, margatina2021active}.
As machine learning advances into more complex domains, coreset selection will become increasingly critical for balancing efficiency and scalability.

This paper addresses the problem of coreset selection in scenarios where the future deployment environment of the model is uncertain.
This challenge frequently arises in applications that require models to be tailored to specific, and often unpredictable, conditions or environments.
For example, it includes tasks such as assessing investment risks under fluctuating economic conditions, predicting medical outcomes across diverse demographic groups, and estimating agricultural yields under varying climate scenarios.
In such settings, the ability to create a representative and compact subset of the data that ensures reliable model performance is crucial.
Traditional coreset selection does not make the assumption that future test distributions are uncertain, and it is unclear whether the model will work effectively under such conditions.
The objective of this study is to develop a methodology for selecting a coreset from the original dataset while explicitly accounting for the need for worst-case robustness. This enables the model to perform effectively even under uncertain and variable future deployment conditions.

In this study, we focus on a distributionally robust setting under covariate shift conditions, which is a critical challenge in many real-world machine learning applications.
Covariate shift occurs when the distribution of input features changes between the training and deployment (test) phases.
Distributionally robust learning~\citep{goh2010distributionally, delage2010distributionally, chen2021distributionally} aims to tackle this problem by optimizing model performance under worst-case distributional shifts, enabling models to perform effectively across a wide range of potential data distributions and ensuring robustness to variability and uncertainties.
Within this context, we address the specific problem of selecting a robust coreset, assuming that the future covariate distribution may deviate within a defined range from the distribution of the original dataset.
To this end, we propose a novel method, termed the \emph{Distributionally Robust Coreset Selection (DRCS)} method, which focuses on constructing a representative and robust subset of data tailored for these challenging conditions.

The basic idea of the DRCS method is to select a coreset that minimizes the worst-case test error under uncertain future test distributions, addressing the challenge of ensuring robustness in the face of potential distributional shifts.
To achieve this, we derive a novel and theoretically grounded upper bound for the worst-case test error in the context of distributionally robust covariate shift.
This upper bound serves as a critical foundation for guiding the coreset selection process.
Building on this theoretical insight, we propose an efficient algorithm designed to select a coreset that approximately minimizes this upper bound, ensuring that the resulting subset of data is both compact and robust to changes in future test distributions.
Although the method is primarily developed for problems formulated within a specific class of convex optimization frameworks, it is versatile and can be extended to coreset selection for deep learning models by leveraging the neural tangent kernel (NTK) or fine-tuning scheme.

\begin{figure}[H]
	\centering
	\includegraphics[width=0.8\hsize]{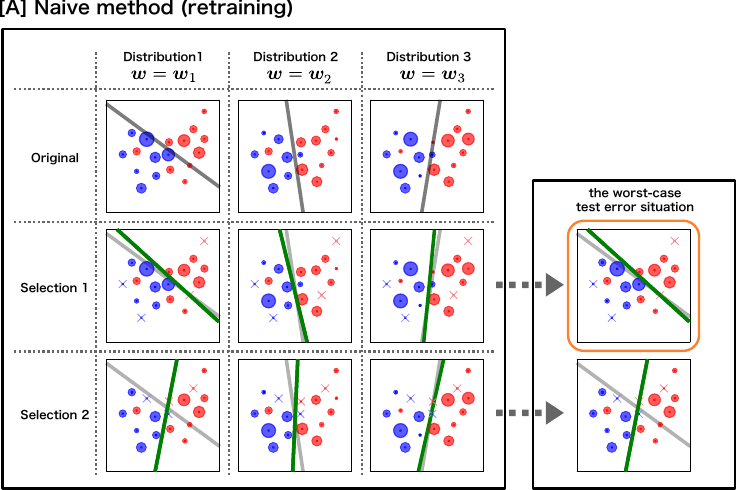}
	\caption{
	%
	The concept of coreset selection in this study.
	In the left panel, each plot shows the distribution of the training data, where each column represents patterns of distribution changes, while each row represents patterns of instance selection.
	Let gray lines represent the learned results with specified weights and all instances, while green lines the retrained results with specified weights and selected instances.
	The goal is to obtain a selection pattern that can suppress the degradation of test error, even in the worst-case distribution for each selection pattern.
	In this figure, through the three distributions, Selection 1 can be considered a better choice than Selection 2.
	It is practically impossible since we need to explore such worst-case test error for all distributions and selection patterns.
	}
	\label{fig:concept1}
\end{figure}

\begin{figure}[H]
	\centering
	\includegraphics[width=0.8\hsize]{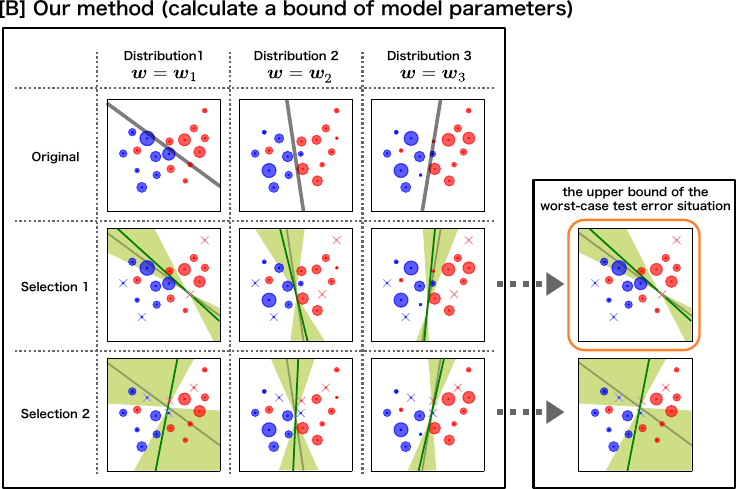}
	\caption{
	%
	The concept of coreset selection in this study.
	This figures also show the distribution of the training data.
	The green area indicates the bound of model parameters obtained when retraining is performed, and we can analyticaly calculate it before retraining.
	We perform coreset selection to minimize the worst-case test error in the distribution, where the bound becomes the largest among all possible distributions.
	}
	\label{fig:concept2}
\end{figure}

\begin{figure}[H]
	\centering
	\begin{tabular}{cc}
		\begin{minipage}[t]{0.35\hsize}
			\centering
			\includegraphics[width=0.95\hsize]{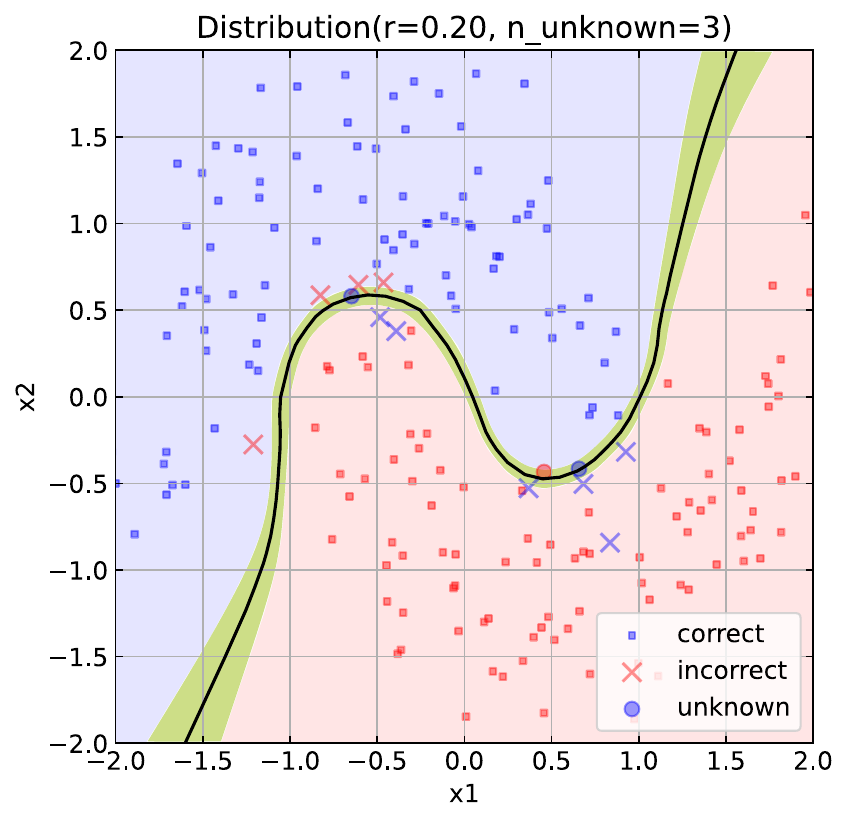}
			\subcaption{Training instance selection with a reduced bound of model parameters}
		\end{minipage}
		&
		\begin{minipage}[t]{0.35\hsize}
			\centering
			\includegraphics[width=0.95\hsize]{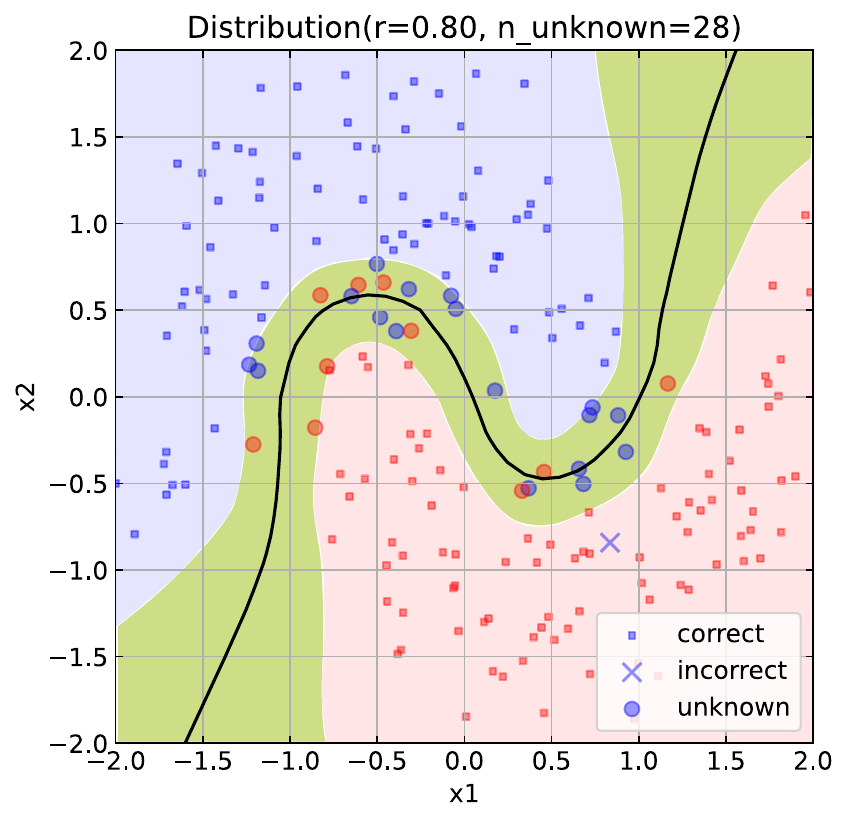}
			\subcaption{Training instance selection with random sampling}
		\end{minipage}
		\label{fig:concept2}
	\end{tabular}
	\caption{
		This figure illustrates an upper bound of the validation error in this study.
		Both figures show the distribution of the validation data.
		We calculate a bound of model parameters and use this to derive an upper bound of the validation error.
		In this figure, the blue and red areas represent that the validation data in these areas have a determined classification.
		On the other hand, the validation data in the green area does not have a determined classification.
		As a result, the upper bound of the validation error can be reached in cases where all instances in the green area are incorrectly classified.
		Since the bound of the model parameters depends on how to select instances such as (a) and (b), we perform coreset selection in a distribution where the upper bound of the validation error is minimized.
	}
\end{figure}

\subsection{Contribution}

In this study, we address the challenges discussed in Section~\ref{sec:related-limits} by introducing model parameter bounding techniques.
Furthermore, we propose a distributionally robust coreset selection method that provides theoretical guarantees for model performance in binary classification problems.
The contributions of this study are as:

\begin{itemize}
	\item We consider the problem of coreset selection under covariate shift environments while taking distributional robustness into account, and propose a method to address this challenge.
	\item In the proposed method, we derive an upper bound of the validation error under the worst-case covariate shift and perform coreset selection to minimize this upper bound.
\end{itemize}

\section{Problem Settings}
\label{sec:problem-settings}
In this section, we formulate a distributionally robust coreset selection (DRCS) problem in the context of uncertain covariate-shift settings.

\subsection{Preliminaries and Notations}
We consider a binary classification problem with the training dataset ${\mathcal D} := \{(\bm x_i, y_i)\}_{i \in [n]}$, where $n$ denotes the number of training instances, $\bm x_i \in \cX \subseteq {\mathbb R}^d$ represents the $i$-th feature vector defined on the input domain $\cX$, and $y_i \in \{\pm 1\}$ denotes the corresponding label for $i \in [n]$, with the notation $[n]$ indicating the set of natural numbers up to $n$.  
Similarly, let ${\mathcal D}^\prime := \{(\bm x_i^\prime, y_i^\prime)\}_{i \in [n^\prime]}$ be the validation dataset of size $n^\prime$, where each validation instance is assumed to follow the same distribution as the training instances.
For binary classification problems, we consider a classifier parameterized by a set of parameters $\bm \beta$, defined as  
\begin{align}  
\label{eq:classifier}  
f(\cdot; \bm \beta): \mathbb{R}^d \ni \bm x \mapsto f(\bm x; \bm \beta) \in \mathbb{R},  
\end{align}  
where the binary label for an input vector $\bm x$ is predicted as $-1$ if $f(\bm x; \bm \beta) < 0$, and $+1$ otherwise.  
Furthermore, we denote the loss function for binary classification (e.g., binary cross-entropy or hinge loss) as
\begin{align}
 \label{eq:loss_func}
 \ell: \{\pm 1\} \times \RR \ni (y, f(\bm x; \bm \beta)) \mapsto \ell(y, f(\bm x; \bm \beta)) \in \RR_+,
\end{align}
where $\RR_+$ indicates the set of nonnegative numbers~\footnote{
For example, in the case of the binary cross-entropy loss for a logistic regression model, it is expressed as  
\begin{align*}  
\ell(y, f(\bm x; \bm \beta)) = \log (1 + e^{-y f(\bm x; \bm \beta)}).  
\end{align*}  
As another example, the hinge loss for a support vector machine (SVM) is expressed as  
\begin{align*}  
\ell(y, f(\bm x; \bm \beta)) = \max \{0, 1 - y f(\bm x; \bm \beta)\}.  
\end{align*}  
}.

In this study, we investigate distributionally robust learning under an uncertain covariate shift setting, where the input distributions for the training/validation datasets and the test datasets differ, with the discrepancy between these distributions known to lie within a specified range.
In such a covariate-shift setting, it is well known that the difference in input distributions can be addressed through weighted learning.  
We denote the weight for the $i$-th training instance as $w_i > 0, i \in [n]$, and represent the $n$-dimensional vector of these weights as $\bm{w} \in [0, \infty)^n$.  
Typically, these weights are determined based on the ratio of the test input density to the training/validation input density~\citep{shimodaira2000improving, sugiyama2007covariate}.  
Therefore, a binary classification problem in a covariate-shift setting is generally formulated as a (regularized) weighted empirical risk minimization problem in the form of  
\begin{align}  
\label{eq:weighted_minimization}  
\min_{\bm \beta} {\frac{1}{\sum_{i \in [n]} w_i}} \sum_{i \in [n]} w_i \ell(y_i, f(\bm x_i; \bm \beta)) + \rho(\bm \beta),  
\end{align}  
where $\rho(\bm \beta)$ denotes a regularization function.  

In a conventional covariate-shift setting where the test input distribution is known, the weight vector $\bm{w}$ can be predetermined based on the density ratio, enabling the optimal model parameters to be obtained by directly solving the minimization problem in \eqref{eq:weighted_minimization}.
On the other hand, in the distributionally robust setting, the density ratio and hence the weight vector $\bm w$ are unknown.
In this study, we consider a distributionally robust learning scenario where the weight vector lies within a hypersphere of a certain radius $S$, centered around the uniform training data weights (i.e., $ \bm w = \bm{1}_n := [1, \ldots, 1]^\top $).
This distributionally robust learning problem is formulated as  
\begin{align}  
\max_{\bm{w} \in \cW} \min_{\bm{\beta}} {\frac{1}{\sum_{i \in [n]} w_i}} \sum_{i \in [n]} w_i \ell(y_i, f(\bm{x}_i; \bm{\beta})) + \rho(\bm{\beta}),  
\end{align}  
where  
\begin{align}  
\cW := \{\bm{w} \in \mathbb{R}^n \mid \| \bm{w} - \bm{1}_{n} \|_2 \le S\}  
\end{align}  
represents the hypersphere within which the weight vector can exist~\footnote{  
We assume $S \leq 1$, focusing on scenarios where the differences between the training and test input distributions are not substantial.  
}.

\subsection{Applicable Class of Problems by the Proposed Method}
Before presenting the proposed method, we define the class of classification problems to which the DRCS method applies.
The proposed DRCS method is applicable when the classifier in \eqref{eq:classifier}, the loss function in \eqref{eq:loss_func}, and the regularization function in \eqref{eq:weighted_minimization} satisfy certain conditions.
As a class of applicable classifiers, we focus on the basis function model in the form of
\begin{align}
 \label{eq:basis_expansion}
 f(\bm{x}; \bm{\beta}) = \sum_{j\in [k]} \beta_j \phi_j(\bm{x}) = \bm{\beta}^\top \bm{\phi}(\bm{x}),
\end{align}

where $\phi_j: \mathbb{R}^d \ni \bm{x} \mapsto \phi_j(\bm{x}) \in \mathbb{R}, j \in [k]$ is the $j$-th basis function, $k$ is the number of basis functions, and $\bm{\phi}(\bm{x}) \in \mathbb{R}^k$ is the $k$-dimensional vector that gathers the $k$ basis functions.
Furthermore, the loss function $\ell(y, f(\bm{x}; \bm{\beta}))$ is assumed to be convex with respect to its second argument, and the regularization function $\rho(\bm{\beta})$ must also be convex with respect to $\bm{\beta}$.

While these conditions may seem restrictive, kernelization is achievable by considering the dual form of the basis function model in \eqref{eq:basis_expansion}, enabling extensions to popular nonlinear classifiers such as nonlinear logistic regression and kernel support vector machines.  
When applying the proposed method to deep learning models, tools such as Neural Tangent Kernel (NTK)~\citep{neuraltangents2020} and Neural Network Gaussian Processes (NNGP)~\citep{lee2017deep} can be utilized.
These tools bridge traditional kernel methods with deep learning, offering both theoretical insights and practical tools for tackling high-dimensional and complex learning problems.  
Furthermore, in practical DRCS problems, since the training and test data are typically assumed to share a certain degree of similarity, a fine-tuning approach that updates part of the model parameters during testing is beneficial.  
The proposed method is applicable in such scenarios and serves as a valuable tool for coreset selection in deep learning models as well.


\subsection{Distributionally Robust Coreset Selection (DRCS) Problems}
To formulate the coreset selection problem, let us introduce an $ n $-dimensional binary vector $ \bm v \in \{0, 1\}^n $, where $ v_i = 1 $ indicates that the $ i $-th training instance is included in the coreset, while $ v_i = 0 $ indicates that the $ i $-th training instance is excluded.  
Hereafter, we refer to $ \bm v $ as the \emph{coreset vector}.  
Let us write the training error of our DRCS problem as
\begin{align}
\label{eq:primal}
P_{\bm v, \bm w}(\bm \beta)
:=
\frac{1}{\sum_{i \in [n]} v_i w_i}
\sum_{i \in [n]}
v_i
w_i
\ell
\left(
y_i, f(\bm x_i; \bm \beta)
\right)
+
\rho(\bm \beta). 
\end{align}
Given a coreset vector $\bm v \in \{0, 1\}^n$ and a weight vector $\bm w \in \cW$, the classifier's optimal model parameter vector is written as
\begin{align}
 \label{eq:beta-star}
 \bm \beta^*(\bm v, \bm w)
 :=
 \argmin_{\bm \beta}
 P_{\bm v, \bm w}(\bm \beta).
\end{align}

Given a weight vector $\bm{w}^\prime \in \mathbb{R}^{n^\prime}$ for validation dataset, let us consider the weighted validation error for the optimal model parameter vector in \eqref{eq:beta-star} can be defined as  
\begin{align}
\label{eq:VaEr}
{\rm VaEr}(\bm{v}, \bm{w}^\prime) 
:= 
\frac{1}{\sum_{i \in [n^\prime]} w_i^\prime} 
\sum_{i \in [n^\prime]} 
w_i^\prime 
I 
\left\{ 
y_i^\prime \neq {\rm sgn}(f(\bm{x}_i^\prime; \bm{\beta}^*(\bm{v}, \bm{w}))) 
\right\},
\end{align}
where  
\begin{align}  
\cW^\prime := \{\bm{w}^\prime \in \mathbb{R}^{n^\prime} \mid \| \bm{w}^\prime - \bm{1}_{n^\prime} \|_2 \le Q\}  
\end{align}  
represents the hypersphere of a certain radius $Q$ within which the weight vector can exist.
Here, $I\{\cdot\}$ is the indicator function that returns $1$ if the argument is true and $0$ otherwise, and ${\rm sgn}: \RR \to \{\pm 1\}$ is the sign function that returns the sign of the given scalar input. 
If the weight vector $\bm w^\prime$ for the validation dataset is appropriately determined based on the density ratio of the input distributions of the validation and the test datasets, the weighted validation error ${\rm VaEr}$ in \eqref{eq:VaEr} can be used as a test error estimator.

In a distributionally robust setting where the weight vector is uncertain, minimizing the worst-case test error is necessary.  
As an estimator of the worst-case test error, it is reasonable to use the worst-case weighted validation error (WrVaEr) in \eqref{eq:VaEr}, expressed as  
\begin{align}
\label{eq:WrVaEr}
{\rm WrVaEr}(\bm{v}) 
= 
\max_{\bm{w} \in \cW} 
{\rm VaEr}(\bm{v}, \bm{w}).
\end{align}
Based on the above discussion, the goal of the DRCS problem is formulated as the problem of finding the coreset vector $\bm v \in \{0, 1\}^n$ that minimizes the worst-case weighted validation error in \eqref{eq:WrVaEr}.
Let
$m < n$
be the size of the coreset, i.e., the number of remaining training instances after coreset selection.
Then, the DRCS problem we consider is formulated as 
\begin{align}
\label{eq:DRCS-problem}
\bm{v}^* = \argmin_{\bm{v}} {\rm WrVaEr}(\bm{v})
 ~
\text{ subject to } \|\bm{v}\|_1 = m.
\end{align}

By summarizing \eqref{eq:beta-star}, \ref{eq:VaEr}, \ref{eq:WrVaEr}, \ref{eq:DRCS-problem}, minimizing the worst-case weighted validation error (WrVaEr) can be expressed as:
\begin{equation} \label{eq:obj-acc-lb}
  \min_{\bm v} \max_{\bm w} \min_{\bm \beta} {\rm VaEr}(\bm{v}, \bm{w}, \bm{\beta}).
\end{equation}

Unfortunately, the problem in \eqref{eq:DRCS-problem} is highly challenging.  
First, it involves trilevel optimization over three types of vectors: $ \bm{v} $, $ \bm{w} $, and $ \bm{\beta} $.
Trilevel optimization is technically complex as it requires solving nested optimization problems across three hierarchical levels, involving a large number of variables and significantly increasing computational complexity, along with the difficulty of ensuring convergence and global optimality.  
Moreover, the third-level optimization for $ \bm{v} \in \{0, 1\}^n $ is a combinatorial optimization problem, which becomes infeasible to solve optimally when $ n $ is large.  
To address these technical challenges, our approach involves the following two steps:  
\begin{itemize}  
 \item [(i)] Deriving an upper bound for the worst-case weighted validation error in \eqref{eq:WrVaEr}.  
 \item [(ii)] Greedily identifying a coreset that minimizes this upper bound of worst-case weighted validation error.
\end{itemize}
In the next section, we describe methods for (i) and (ii) in Section~\ref{subsec:UB_WC} and Section~\ref{subsec:GrCS}, respectively.

\subsection{A Working Example: $L_2$-regularized Logistic Regression}
As a working example of such a classification problem, let us consider $L_2$-regularized logistic regression of basis function models, i.e., we consider a basis function model in \eqref{eq:basis_expansion}, binary cross-entropy as the loss function $\ell(y, f(\bm x; \bm \beta))$ and $L_2$ regularization function as the regularization function $\rho(\bm \beta)$.
Considering the coreset vector $\bm v$, the optimization problem for $L_2$-regularized logistic regression is defined as  
\begin{align}
\bm \beta_{\bm v, \bm w}^* = \argmin_{\bm \beta \in \mathbb{R}^k} P_{\bm v, \bm w}(\bm \beta),
\end{align}
where  
\begin{align}
P_{\bm v, \bm w}(\bm \beta)
=
\frac{1}{\sum_{i \in [n]} v_i w_i}
\sum_{i \in [n]}
v_i w_i
\log (1 + e^{-y_i f(\bm x_i; \bm \beta)})
+
\frac{\lambda}{2} \|\bm \beta\|_2^2, 
\end{align}
which is referred to as the \emph{primal problem} and $P_{\bm v, \bm w}(\bm \beta)$ is called \emph{primal objective function}.

\section{Proposed DRCS Method}
\label{sec:method}

In this section, we present the proposed DRCS method that approximately solve the DRCS problem in \eqref{eq:DRCS-problem}.
As discussed in the previous section, our approach to approximately finding the optimal coreset vector $\bm{v}^*$ involves deriving an upper bound for \eqref{eq:WrVaEr} and selecting the coreset by greedily minimizing this upper bound.

\subsection{Upper Bound for the Worst-case Test Error}
\label{subsec:UB_WC}
In this subsection, we present our main result.

\subsubsection{Main Theorem}
An upper bound of the worst-case weighted validation error, which can serve as the worst-case test error estimator, $ {\rm WrVaEr}(\bm v) $ in \eqref{eq:WrVaEr}, is presented in the following theorem. Our main result can be formulated as \eqref{eq:main-bound}.

\begin{definition}
A function $f:\mathbb{R}^n\to\mathbb{R}$ is called $\mu$-{\em strongly convex} if $f(\bm\beta) - \frac{\mu}{2}\|\bm\beta\|_2^2$ is convex.
\end{definition}

\begin{definition} \label{def:dual}
For $\argmin_{\bm\beta} P_{\bm v,\bm w}(\bm \beta)$ in \eqref{eq:primal}, we define $f(\bm x; \bm\beta) = \bm{\beta}^\top \bm\phi(\bm x)$. Then, its {\em dual problem} is defined as
\begin{align}
\label{eq:dual}
D_{\bm v,\bm w}(\bm\alpha) = - \frac{1}{\sum_{i\in[n]} v_i w_i}\sum_{i\in[n]} v_i w_i \ell^*(-\alpha_i) - \rho^*\left(\frac{1}{\sum_{i\in[n]} v_i w_i}(\operatorname{diag}(\bm v\otimes\bm w \otimes \bm y) \Phi)^\top\bm\alpha \right).
\end{align}
where $\Phi:=\left[\bm\phi(\bm x_1)\quad\bm\phi(\bm x_2) \ldots \bm\phi(\bm x_n)\right]^{\top} \in \RR^{n\times k}$.
Here, especially, if $\rho(\bm\beta) := \frac{\lambda}{2}\|\bm\beta\|_2^2$, it is calculated as
\begin{align}
D_{\bm v,\bm w}(\bm\alpha) = - \frac{1}{\sum_{i\in[n]} v_i w_i}\sum_{i\in[n]} v_i w_i \ell^*(-\alpha_i) - \frac{1}{2\lambda\left(\sum_{i\in[n]} v_i w_i\right)^2}\| (\operatorname{diag}(\bm v\otimes\bm w \otimes \bm y) \Phi)^\top\bm\alpha \|_2^2.
\end{align}
\end{definition}
This derivation is presented in Appendix \ref{app:fenchel}.

Then we state the main theorem as follows.
\begin{theorem}
 \label{theo:main}
 {Assume that $\rho$ in the primal objective function $P_{\bm 1_n, \bm 1_n}$ is $\mu$-strongly convex with respect to $\bm \beta$.}
 Let us denote the optimal primal and dual solutions for the entire training set (i.e., $v_i = 1 ~ \forall i \in [n]$) with uniform weights (i.e., $w_i = 1 ~ \forall i \in [n]$) as
 \begin{align*}
  \bm \beta^*_{\bm 1_n, \bm 1_n} = \argmin_{\bm \beta \in \RR^k} P_{\bm 1_n, \bm 1_n}(\bm \beta)
  \quad\text{and}\quad
  \bm \alpha^*_{\bm 1_n, \bm 1_n} = \argmax_{\bm \alpha \in \RR^n} D_{\bm 1_n, \bm 1_n}(\bm \alpha),
 \end{align*}
 respectively.
 Then, an upper bound of the worst-case weighted validation error is written as 
 \begin{align}
  \label{eq:main-bound}
  {\rm WrVaEr}(\bm v)
  \le
  {\rm WrVaEr}^{\rm UB}(\bm v)
  =
  1
  -
  \left(
  \bm 1_{n^\prime}^\top \bm \zeta(\bm v)
  -
  Q
  \sqrt{
  \|\bm \zeta(\bm v)\|_2^2
  -
  \frac{
  (\bm 1_{n^\prime}^\top \bm \zeta(\bm v))^2
  }{
  n^\prime
  }
  }
  \right)
  \frac{1}{n^\prime},
 \end{align}
 where,
 \begin{align}
  \label{eq:zeta}
  & \bm \zeta(\bm v) \in \{0, 1\}^{n^\prime};
  \quad
  \zeta_i(\bm v)
  =
  I\left\{
  y_i^\prime \bm \beta_{\bm 1_n, \bm 1_n}^{*\top} \bm \phi(\bm x_i^\prime) - \|\bm \phi(\bm x_i^\prime)\|_2 \sqrt{\frac{2}{\lambda} \max_{\bm w \in \cW} {\rm DG}(\bm v, \bm w)} > 0
  \right\}, \\
 \label{eq:duality_gap}
 & {\rm DG}(\bm v, \bm w) := P_{\bm v, \bm w}(\bm \beta^*_{\bm 1_n, \bm 1_n}) - D_{\bm v, \bm w}({\bm \alpha}_{\bm 1_n, \bm 1_n}^{*}).
\end{align}
Here, the quantity ${\rm DG}$ is called the {\em duality gap} and it plays a main role of the upper bound of the validation error.

Especially, if we use L2-regularization $\rho(\beta) := \frac{\lambda}{2}\|\bm\beta\|_2^2$, ${\rm DG}(\bm v, \bm w)$, the maximization $\max_{\bm w \in \cW} {\rm DG}(\bm v, \bm w)$ can be algorithmically computed.
\end{theorem}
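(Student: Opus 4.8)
The plan is to bound the worst-case weighted validation error by first controlling how far the retrained parameter vector $\bm\beta^*(\bm v,\bm w)$ can stray from the reference solution $\bm\beta^*_{\bm 1_n,\bm 1_n}$, and then propagating that parameter-space ball through to the $0/1$ validation loss. Concretely, I would proceed in three stages. First, a \emph{strong-convexity / duality-gap bound}: since $P_{\bm v,\bm w}$ is $\mu$-strongly convex in $\bm\beta$ (here $\mu=\lambda$ for the $L_2$ case), for \emph{any} feasible dual point $\bm\alpha$ we have the standard inequality $\tfrac{\mu}{2}\|\bm\beta^*(\bm v,\bm w) - \bm\beta\|_2^2 \le P_{\bm v,\bm w}(\bm\beta) - D_{\bm v,\bm w}(\bm\alpha)$ for any primal $\bm\beta$; plugging in the \emph{reference} primal and dual solutions $\bm\beta^*_{\bm 1_n,\bm 1_n}$ and $\bm\alpha^*_{\bm 1_n,\bm 1_n}$ gives $\|\bm\beta^*(\bm v,\bm w) - \bm\beta^*_{\bm 1_n,\bm 1_n}\|_2 \le \sqrt{\tfrac{2}{\lambda}\,{\rm DG}(\bm v,\bm w)}$, and taking a $\max$ over $\bm w\in\cW$ yields a $\bm w$-independent radius $r(\bm v):=\sqrt{\tfrac{2}{\lambda}\max_{\bm w\in\cW}{\rm DG}(\bm v,\bm w)}$. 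Second, a \emph{per-instance sign-stability argument}: for validation point $(\bm x'_i,y'_i)$, the margin $y'_i\,\bm\beta^\top\bm\phi(\bm x'_i)$ varies by at most $\|\bm\phi(\bm x'_i)\|_2\,\|\bm\beta - \bm\beta^*_{\bm 1_n,\bm 1_n}\|_2$ over the ball, by Cauchy–Schwarz; hence if $y'_i\,\bm\beta^{*\top}_{\bm 1_n,\bm 1_n}\bm\phi(\bm x'_i) - \|\bm\phi(\bm x'_i)\|_2\,r(\bm v) > 0$ then the point is \emph{provably correctly classified} by $\bm\beta^*(\bm v,\bm w)$ for every $\bm w\in\cW$, which is exactly the event defining $\zeta_i(\bm v)=1$ in \eqref{eq:zeta}. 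Instances with $\zeta_i(\bm v)=0$ are pessimistically counted as misclassified.

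Third, the \emph{worst-case reweighting of the validation error}: with the set of guaranteed-correct indices fixed, ${\rm VaEr}(\bm v,\bm w')$ is upper-bounded by the weighted fraction of the remaining ($\zeta_i=0$) indices, and we must maximize $\tfrac{1}{\bm 1^\top\bm w'}\sum_{i:\zeta_i=0} w'_i$ over $\bm w'\in\cW'=\{\|\bm w'-\bm 1_{n'}\|_2\le Q\}$. Writing $\bm w' = \bm 1_{n'} + \bm\delta$ with $\|\bm\delta\|_2\le Q$, the quantity to bound becomes $1 - \tfrac{\bm 1^\top\bm\zeta(\bm v) + \bm\zeta(\bm v)^\top\bm\delta}{n' + \bm 1^\top\bm\delta}$; I would linearize/optimize this ratio — the extremal $\bm\delta$ puts as much negative mass as possible on the coordinates where $\zeta_i=1$ (to shrink the numerator) subject to the norm constraint, and a short Lagrange/Cauchy–Schwarz computation gives the minimizer of $\bm 1^\top\bm\zeta - Q\sqrt{\|\bm\zeta\|_2^2 - (\bm 1^\top\bm\zeta)^2/n'}$ for the effective correctly-classified count, which is precisely the parenthesized expression in \eqref{eq:main-bound}. (Because $\bm\zeta$ is $0/1$, $\|\bm\zeta\|_2^2=\bm 1^\top\bm\zeta$, so this term also simplifies, but I would keep the stated form.) Combining the three stages gives ${\rm WrVaEr}(\bm v)\le{\rm WrVaEr}^{\rm UB}(\bm v)$.

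The main obstacle is the third stage: the objective $\tfrac{\bm 1^\top\bm\zeta + \bm\zeta^\top\bm\delta}{n'+\bm 1^\top\bm\delta}$ is a \emph{ratio} of linear forms in $\bm\delta$ over a Euclidean ball, so it is not immediately a convex/concave program; one must argue (e.g. via Charnes–Cooper-type homogenization, or by noting that at the optimum $\bm\delta$ lies in the span of $\bm 1$ and $\bm\zeta$ and reducing to a two-dimensional problem) that the extremum has the closed form claimed, and verify the sign/feasibility conditions (this is where the footnote assumptions $S\le 1$, and an implicit smallness of $Q$, are used to keep weights nonnegative and the bound meaningful). A secondary technical point is justifying that $\bm\alpha^*_{\bm 1_n,\bm 1_n}$ is a \emph{feasible} dual point for the perturbed problem $D_{\bm v,\bm w}$ (or, more carefully, that the duality-gap bound still applies with the reference dual iterate even though it was computed for $(\bm 1_n,\bm 1_n)$) — this is what makes ${\rm DG}(\bm v,\bm w)$ in \eqref{eq:duality_gap} a valid gap upper bound rather than an exact gap, and it relies only on weak duality $D_{\bm v,\bm w}(\bm\alpha)\le P_{\bm v,\bm w}(\bm\beta)$ holding for all feasible pairs. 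Finally, for the last sentence of the theorem, I would observe that for $L_2$ regularization ${\rm DG}(\bm v,\bm w)$ is, as a function of $\bm w$, a sum of terms each convex in $\bm w$ (the primal term is linear-fractional but with $\bm v,\bm y$ fixed becomes tractable, and the dual quadratic term is convex), so $\max_{\bm w\in\cW}{\rm DG}(\bm v,\bm w)$ is a convex maximization over a ball that can be solved algorithmically (e.g. via its KKT conditions or an eigenvalue/trust-region subproblem); the detailed derivation I would defer to a later section.
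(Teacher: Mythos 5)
Your stages 1 and 2 coincide with the paper's argument: the paper bounds $\|\bm\beta^*_{\bm v,\bm w}-\bm\beta^*_{\bm 1_n,\bm 1_n}\|_2$ by $\sqrt{\tfrac{2}{\lambda}{\rm DG}(\bm v,\bm w)}$ via strong convexity with the reference primal/dual pair (its Lemma on the parameter ball, citing the DRSSS result), and then uses exactly your Cauchy--Schwarz margin-stability step to define the ``surely correct'' indicator $\zeta_i(\bm v)$ and to count all other validation points pessimistically as errors. The final remark on the $L_2$ case (convex quadratic in $\bm w$, maximized algorithmically over the ball) also matches the paper's proof sketch.

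The genuine gap is in your third stage. The paper's Lemma on the validation-weight maximization does \emph{not} optimize the ratio over the ball alone: it explicitly adds the normalization constraint that the validation weights have constant sum, $\bm 1_{n'}^\top\bm w'=n'$, alongside $\|\bm w'-\bm 1_{n'}\|_2\le Q$. Under that constraint the denominator is fixed, the problem reduces to minimizing the linear function $\bm\zeta^\top\bm w'$ over the intersection of the sphere and the hyperplane, and the Lagrange-multiplier computation gives exactly $\bm 1_{n'}^\top\bm\zeta-Q\sqrt{\|\bm\zeta\|_2^2-(\bm 1_{n'}^\top\bm\zeta)^2/n'}$. Your formulation keeps $\bm 1^\top\bm\delta$ in the denominator and claims the same closed form emerges from the ratio optimization; that is false in general, because dropping the sum constraint enlarges the feasible set and strictly changes the optimum. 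For instance, with $n'=100$, $\bm 1^\top\bm\zeta=50$, $Q=1$, the constrained optimum of the correctly-classified fraction is $0.45$, while optimizing the ratio over the ball (restricting $\bm\delta$ to the span of $\bm\zeta$ and $\bm 1-\bm\zeta$, which is where the optimum lies) gives approximately $0.4497$ at a point with $\bm 1^\top\bm\delta\neq 0$. So either you must impose the normalization $\bm 1_{n'}^\top\bm w'=n'$ (as the paper does, and as is natural since the weights play the role of a reweighted empirical distribution), or your Charnes--Cooper/two-dimensional reduction would have to be carried out honestly and would then produce a different, slightly larger bound than the equality claimed in \eqref{eq:main-bound}. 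As written, the proposal does not establish the stated formula. A minor additional point: your appeal to weak duality to justify ${\rm DG}(\bm v,\bm w)$ as an upper bound on the true gap for the perturbed problem is the right idea and is what the cited lemma in the paper encapsulates; just make sure $\bm\alpha^*_{\bm 1_n,\bm 1_n}$ is indeed dual-feasible for $D_{\bm v,\bm w}$ in the loss class considered.
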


We sketch the proof of Theorem~\ref{theo:main} in the next section. The complete proof is given in {Appendix~\ref{app:main-proof}}.

\subsubsection{Proof Sketch}

The proof sketch of Theorem~\ref{theo:main} is primarily divided into the following steps.

\begin{enumerate}
  \item
  First, as a premise, consider a model where $P_{\boldsymbol{w}}$ is $\mu$-strongly convex.
  \item
  Second, under assumption 1, derive the bound of model parameters. The model parameter vector ${\bm \beta}^*_{\bm v, \bm w}$, trained using the coreset vector $\bm v$ and weight vector $\bm w$, is guaranteed to converge within the range ${\cal B}^*_{\bm v, \bm w}$, which is represented by an L2-norm hypersphere \citep{10.1162/neco_a_01619}. 
  A hypersphere of the radius is given by $R = \sqrt{\frac{2}{\lambda}{\rm DG}(\bm v, \bm w)}$, which is calculated in \eqref{eq:duality_gap}.
  \item
  Third, calculate a bound of the weighted validation error.
  Using the hypersphere range ${\cal B}^*_{\bm v, \bm w}$, computed for a specific coreset vector $\bm v$ and weight vector $\bm w$, an upper bound of the weighted validation error can be analytically calculated.
  This upper bound is determined by ${\rm DG}$. The larger ${\rm DG}$, the greater the upper bound becomes.
  \item
  Fourth, maximize ${\rm DG}$ with respect to the weight vector $\bm w$ for training dataset in \eqref{eq:maximum-gap}. We can obtain an upper bound of the worst-case weighted valdation error.
		We can easily confirm that ${\rm DG}(\bm v, \bm w)$ is a convex function with respect to $\bm w$,
		and therefore its maximization is difficult in general.
		However, if we use L2-regularization, it becomes a convex quadratic function and its maximization
		can be algorithmically computed by solving an eigenvalue problem. In fact,
		\begin{align}  
		\label{eq:maximum-gap}  
		\max_{\bm w \in \cW} {\rm DG}(\bm v, \bm w) =  
		\max_{\bm w \in \cW}  
		\left(\boldsymbol{v} \otimes \boldsymbol{w}\right)^{\top} A \left(\boldsymbol{v} \otimes \boldsymbol{w}\right) + \boldsymbol{b}^{\top}\left(\boldsymbol{v} \otimes \boldsymbol{w}\right) + c,  
		\end{align}  
		where the matrix $ A $ of the quadratic term, the vector $ \bm b $ of the linear term, and the constant term $ c $ are respectively given as  
		\begin{align*}  
		A  
		&=  
		\frac{1}{2\lambda} \operatorname{diag}(\bm\alpha_{\bm 1_n, \bm 1_n}^* \otimes \boldsymbol{y})^{\top} K \operatorname{diag}(\bm\alpha_{\bm 1_n, \bm 1_n}^* \otimes \boldsymbol{y}),  
		\\  
		\bm b  
		&=  
		\left[\ell(y_i, f(\bm x_i; \bm \beta_{\bm 1_n, \bm 1_n}^*)) - {\alpha}_{\bm 1_n, \bm 1_n, i}^* \right]_{i \in [n]},  
		\\  
		c  
		&=  
		\frac{1}{2\lambda} (\bm 1_n \otimes \bm\alpha_{\bm 1_n, \bm 1_n}^* \otimes \boldsymbol{y})^{\top} K (\bm 1_n \otimes \bm\alpha_{\bm 1_n, \bm 1_n}^* \otimes \boldsymbol{y}).
		\end{align*}
    Here, $K \in \mathbb{R}^{n \times n}$ is a kernel matrix, where is defined as $K_{i,j} = \bm{\phi}(\bm{x}_i)^\top \bm{\phi}(\bm{x}_j)$.
  \item
  Finally, maximize an upper bound of weighted validation error with respect to the weight vector $\bm w^\prime$ for the validation dataset.
  We can derive an upper bound of the worst-case weighted validation error, leading to \eqref{eq:zeta} and therefore \eqref{eq:main-bound}.
\end{enumerate}

\subsection{Greedy Coreset Selection Based on the Upper Bound}
\label{subsec:GrCS}
The basic idea of the proposed DRCS method is to select the coreset vector $\bm v \in \{0, 1\}^n$ that minimizes the upper bound of the worst-case weighted validation error represented by \eqref{eq:main-bound}.
Since this is a combinatorial optimization problem, finding the global optimal solution within a realistic timeframe is challenging; thus, we adopt greedy approaches to obtain approximate solutions.

A naive greedy approach, referred to as {\tt greedy approach 1}, repeats the followings: we remove one training instance that minimizes \eqref{eq:main-bound}, and update $\bm w$ to maximize ${\rm DG}(\bm v, \bm w)$ in \eqref{eq:duality_gap} (Algorithm~\ref{alg:1}).
Although {\tt greedy approach 1} is sufficient for small datasets, the computational cost becomes prohibitively high for larger datasets.
The most significant computational cost in calculating the bound in \eqref{eq:main-bound} lies in the eigenvalue computation in \eqref{eq:maximum-gap}, required to determine ${\rm max}_{\bm w \in \cW}{\rm DG}(\bm v, \bm w)$.
To circumvent this cost, another approach, referred to as {\tt greedy approach 2}, does not update $\bm w$ whenever an instance is removed, but instead fixes $\bm w$ optimized with initial $\bm v$ (i.e., $\bm v = \bm 1_n$) (Algorithm~\ref{alg:2}).
Then, it recalculates the values of ${\rm min}_{\bm v}\mathrm{WrVaEr}^{\mathrm{UB}}(\bm v)$ after the removal of each instance to dynamically update the selection process.
Moreover, as a much simpler approach, referred to as {\tt greedy approach 3}, determines the instances to be removed based solely on the initial values of ${\rm min}_{\bm v}\mathrm{WrVaEr}^{\mathrm{UB}}(\bm v)$ without recalculating them after each removal (Algorithm~\ref{alg:3}).
These greedy approaches are heuristics and do not guarantee optimality. However, in Section~\ref{sec:experiment}, we demonstrate that these approaches facilitate the selection of a coreset, effectively mitigating the increase in worst-case test error.

The details of these algorithm is given in {Appendix~\ref{app:algorithm}}. In this Appendix, we provide the pseudocode of them.

\section{Related Works and Limitations}
\label{sec:related-limits}

\subsection{Related Works}
\label{subsec:related}

\textbf{Coreset Selection}
Coreset selection is a technique for selecting important data samples in training to enhance data efficiency, reduce the computational cost, and maintain or improve model accuracy.
Currently, several approaches exist, each differing in how they evaluate the importance of data.
Representative methods are outlined below.
\textbf{Geometry-Based Methods:}
Geometry-Based Methods utilize the data distribution in the feature space to improve learning efficiency by reducing redundant samples.
Examples include k-Center-Greedy\citep{sener2017active}, which minimizes the maximum distance between samples, and Herding\citep{welling2009herding}, which iteratively selects samples based on the distance between the coreset center and the original dataset center.
\textbf{Uncertainty-Based Methods:}
Uncertainty-Based Methods prioritize selecting samples for which the model has the least confidence.
Methods such as Least Confidence, Entropy, and Margin select high-uncertainty samples based on these metrics\citep{coleman2019selection}.
\textbf{Error/Loss-Based Methods:}
Error/Loss-Based Methods select important samples based on loss function values or gradient information.
Examples include GraNd\citep{paul2021deep} and EL2N\citep{paul2021deep}, which are based on the magnitude of the loss.
\textbf{Decision Boundary-Based Methods:}
Decision Boundary-Based Methods focus on selecting samples near decision boundaries that are difficult to classify.
Examples include Adversarial DeepFool\citep{ducoffe2018adversarial} and Contrastive Active Learning (CAL)\citep{margatina2021active}.
\textbf{Gradient Matching-Based Methods:}
Gradient Matching-Based Methods aim to approximate the gradient of the entire dataset with a small number of samples.
CRAIG\citep{mirzasoleiman2020coresets} and GradMatch\citep{killamsetty2021grad} utilize this approach by leveraging gradient information.
\textbf{Bilevel Optimization-Based Methods:}
Bilevel Optimization-Based Methods formulate coreset selection as a bilevel optimization problem.
Retrieve\citep{killamsetty2021retrieve} and Glister\citep{killamsetty2021glister} have been applied to continual learning and active learning.
\textbf{Submodularity Optimization-Based Methods:}
Optimization methods based on Submodular functions\citep{iyer2013submodular} enable combinatorial optimization by introducing submodular functions to avoid the combinatorial explosion, allowing for the optimization of diverse sets of samples.
Examples of submodular functions include Graph Cut (GC), Facility Location (FL), and Log-Determinant (LD)\citep{iyer2021submodular}.
Other approaches can be found, for example, in \citet{guo2022deepcore}.

\textbf{Distributionally Robust}
DR has been studied in various machine learning problems to enhance model robustness against variations in data distribution.
The DR learning problem is generally formulated as a worst-case optimization problem to account for potential distributional shifts.
Consequently, techniques that integrate DR learning and optimization have been explored in both fields.
The proposed method builds upon such DR techniques, emphasizing effective training sample selection even when the future test distribution is unknown. It incorporates DR considerations during sample selection rather than during training computation.
While the primary goal of the proposed method is to reduce computational resources through sample removal, this process also has practical implications in other scenarios.
For example, in continual learning (e.g., see \citet{wang2022memory}), managing data by selectively retaining or discarding samples is crucial, especially when anticipating shifts in future data distributions.
Improper deletion of important data may result in catastrophic forgetting\citep{kirkpatrick2017overcoming}, where the model loses previously acquired knowledge after learning on new data.
Our proposed coreset selection method explicitly addresses this DR setting, and, to the best of our knowledge, no existing studies explore coreset selection under such a framework.
Moreover, many existing coreset selection methods are heuristic in nature and lack robust theoretical guarantees.

\subsection{Limitations}
\label{subsec:limits}

The proposed DRCS method is developed for binary classification problems and is applicable to models (such as SVM and logistic regression) where the primal objective function (loss function + regularization term) possesses strong convexity. Therefore, it cannot be directly applied to deep learning models. However, by appropriately selecting a regularization function, the method can also be extended to kernel methods. Consequently, we can well approximate the deep learning by utilizing the recently popular Neural Tangent Kernel(NTK)\citep{neuraltangents2020}.

The proposed DRCS method utilizes a bound of the model parameter, and a similar approach has been studied in instance selection by \citet{10.1162/neco_a_01619}. The method proposed by \citet{10.1162/neco_a_01619} (DRSSS) is effective for sample-sparse models due to its characteristics, but it is limited to models that are both strongly convex and instance-sparse, and can remove only samples that do not change the training results at all.

In contrast, the proposed method is applicable to any model with strong convexity, making it a more versatile approach.

\section{Numerical Experiment}\label{sec:experiment}
\subsection{Experimental Settings}\label{subsec:exp-settings}
	In this section, we numerically evaluate the proposed DRCS through experiments.
	For this experiment, $\bm \beta_{\bm 1_n, \bm 1_n}^{*}, \bm \alpha_{\bm 1_n, \bm 1_n}^{*}$ is learned by solving \eqref{eq:primal} and \ref{eq:dual}.
	We perform cross-validation with a training-to-test data ratio of 4:1 in the experiments.
	Next, we define the training weight range as ${\cal W} := \left\{ \bm w \mid \left\|\bm w - \bm 1_n\right\|_2\leq S \right\}$ and set $S$ as follows.
	The datasets are primarily designed for binary classification, and for multi-class datasets, two classes are extracted and used.
	We assume that the weights for positive instances (${i \mid y_i = +1}$) change from 1 to $a$, while the weights for negative instances (${i \mid y_i = -1}$) remain at 1.
	The magnitude of this weight change is then defined as $S$. Specifically, we set $S = \sqrt{n^+}|a - 1|$, where $n^+$ is the number of positive instances in the training dataset.
	With this setup, $\bm w$ is allowed to vary within the range of weight changes up to $S$, permitting all weight fluctuations with a magnitude of $S$.
	Similarly, we define the validation weight range as ${\cal W}' := \left\{ \bm w' \mid \left\|\bm w' - \bm 1_n\right\|_2\leq Q \right\}$ and set $Q$ in the same way as above.
	In the learning setup of this paper, we adopt the empirical risk minimization approach as shown in \eqref{eq:primal}, and thus the regularization parameter $\lambda$ is determined based on the number of instances.
	Specifically, we first present results obtained by retraining with $\lambda$ that is determined by cross-varidation. Results for other values of $\lambda$ will be discussed later in Section \ref{subsec:discussion-lambda}.
	Details of implementations are presented in Appendix~\ref{app:implementation}, and details of data setups and hyperparameters are presented in Appendix~\ref{app:experimental-setup}.

	\subsection{Coreset Selection for Tabular Data}
	\label{subsec:result-table}

	In this section, we present the experimental results for tabular datasets.
	The experiments in this section were conducted using an logistic regression model (logistic loss + L2 regularization), with Radial Basis Function (RBF) kernels
	\footnote{Hyperparameters of RBF kernel are determined with heuristics; see Appendix~\ref{app:experimental-setup}} \citep{scholkopf2001generalized} applied to the datasets listed in Table \ref{tab:dataset-SS}.
	In this experiment for tabular data, we adopt Algorithm~\ref{alg:1} for calucuration of the upper bound in \eqref{eq:main-bound}.

	{\bf Baselines} The baseline methods for comparing coreset selection are as follows:
	geometry based mehods: (a)Herding \citep{welling2009herding}, (b)k-Center-Greedy \citep{sener2017active}, uncertainty-based methods:(c) Margin(:a method that selects instances closest to the decision boundary (margin).
	) Other method: (d)Random sampling.
	For table datasets, methods that do not rely on deep learning are chosen.

	The results are shown in Figure \ref{fig:result-table-acc} and \ref{fig:result-table-guarantee}. The horizontal axis represents the size of the selected dataset, where moving to the right indicates a smaller selected dataset.
	The vertical axis represents the weighted validation accuracy ($1-{\rm VaEr}$) minimized with respect to $\bm w'$ by using \eqref{eq:linear-score-sum}.
	The retraining settings of all methods are controlled to be the same.
	In Figure~\ref{fig:result-table-acc}, the proposed method demonstrates the higher weighted validation accuracy compared to other methods.

	All results for other datasets, other $\lambda$ and support vector machine model are presented in Appendix~\ref{app:result-logistic} and \ref{app:result-svm}.

	\begin{figure}[tb]
		\begin{center}
			\begin{tabular}{ccc}
				\includegraphics[width=0.30\textwidth]{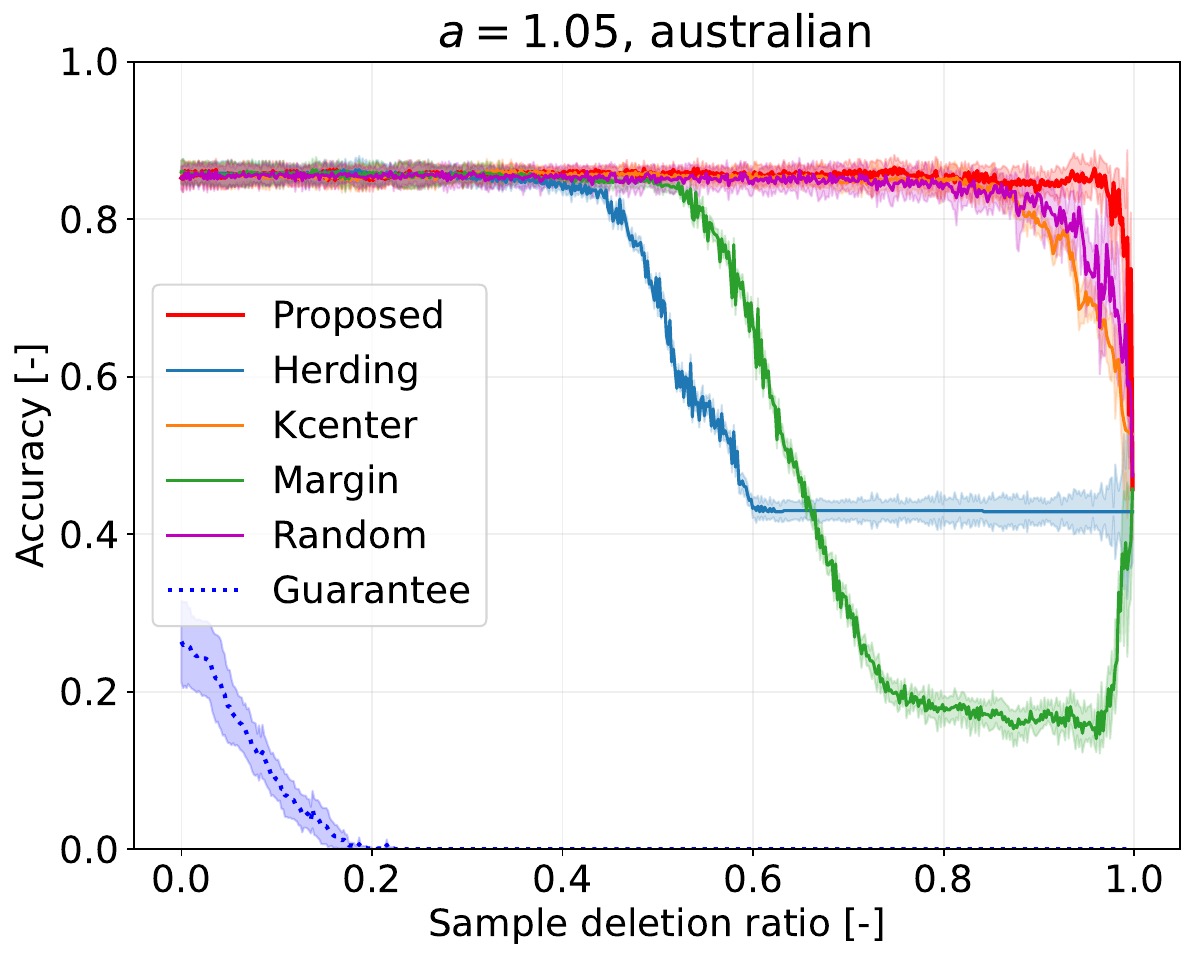} &
				\includegraphics[width=0.30\textwidth]{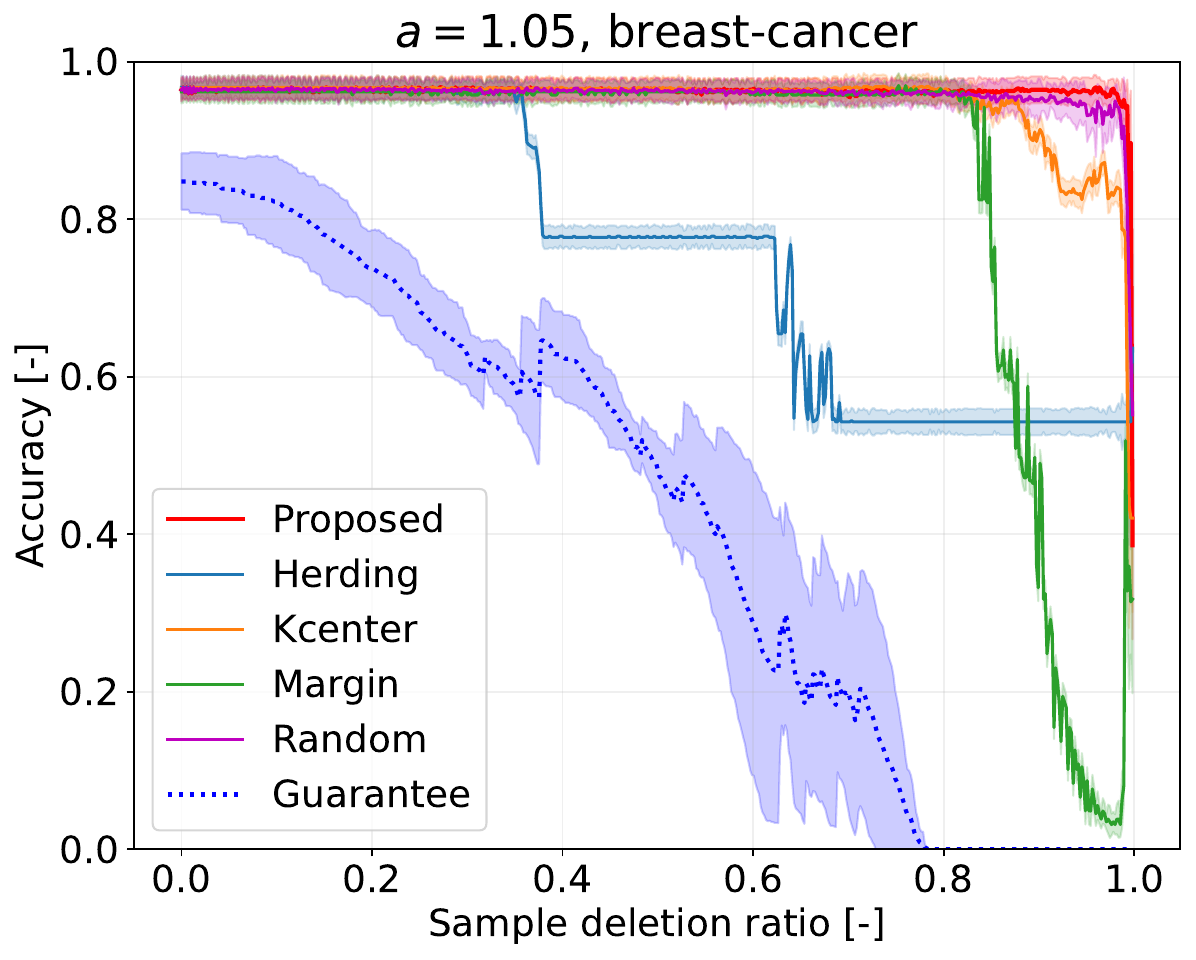} &
				\includegraphics[width=0.30\textwidth]{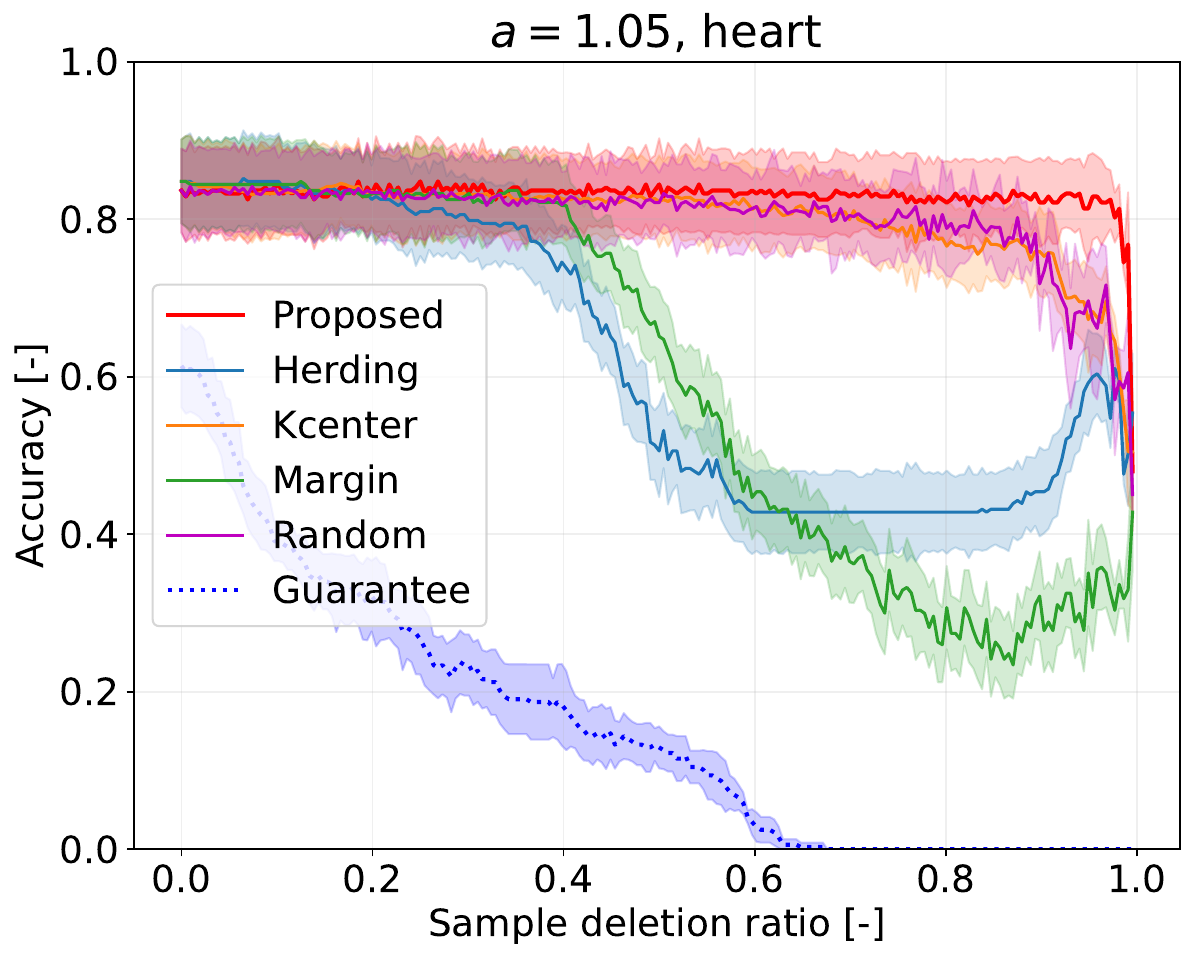}
			\end{tabular}
		\end{center}
		\caption{We compare our proposed method with several instance selection baselines with respect to the weighted validation accuracy ($1-{\rm VaEr}$). Our method exhibits superior performance.}
		\label{fig:result-table-acc}
		\begin{center}
			\begin{tabular}{ccc}
				\includegraphics[width=0.30\textwidth]{fig/table_logistic/australian-logistic/kernel/kernel_ss_screening_rate_lam1.5_x_n_y_etest} &
				\includegraphics[width=0.30\textwidth]{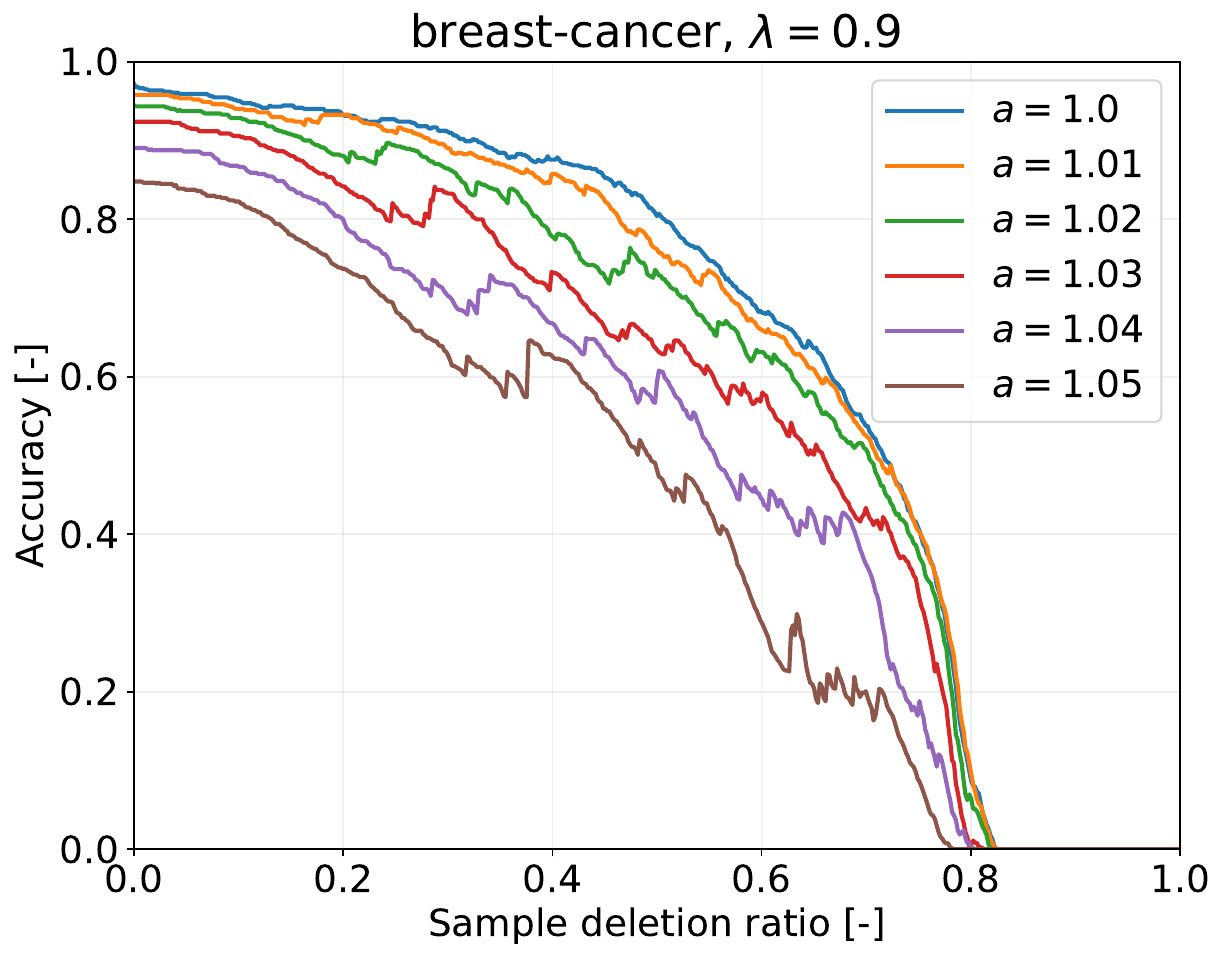} &
				\includegraphics[width=0.30\textwidth]{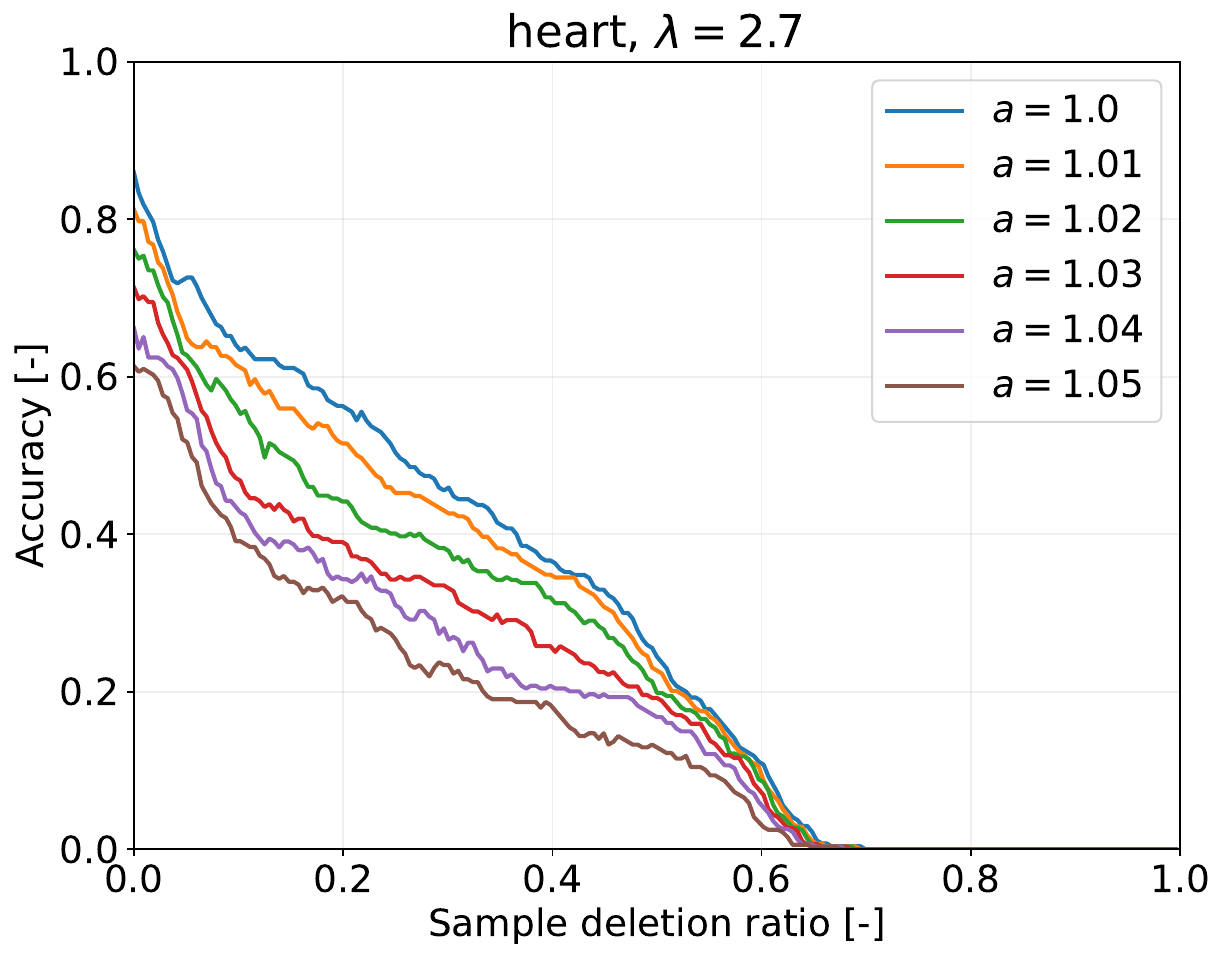}
			\end{tabular}
		\end{center}
		\caption{We show the lower bound of the worst-case weighted validation accuracy ($1 - {\rm WrVaEr^{\rm UB}}$). This graph indicates that the lower bound of test accuracy is theoretically guaranteed under covariate shift with an unknown test distribution.}
		\label{fig:result-table-guarantee}
	\end{figure}

\subsection{Coreset Selection for Image Data}
	\label{subsec:result-image}
	This section discusses coreset selection for image datasets. Although the proposed method is not specifically designed for deep learning, it can be applied to image data by leveraging Neural Tangent Kernels (NTK)\citep{neuraltangents2020} or using the layers preceding the final layer of a deep learning model as a feature extractor.
	In this experiment, considering future extensibility, we apply the proposed method to features extracted from images using a feature extractor.
	The experimental results using NTK will be included in the Appendix~\ref{app:result-ntk}.
	In this paper, we evaluate the DRCS method using the CIFAR10 dataset \citep{cifar10}.  
	CIFAR10 consists of 50,000 training instances and 10,000 test instances, divided into 10 categories.  
	Since the proposed method is designed for binary classification, we extract a subset of the CIFAR10 dataset.  
	Specifically, we create a binary classification dataset consisting of 40,000 training instances categorized as vehicles (airplane, automobile, ship, truck) and animals (cat, deer, dog, horse).  
	The test dataset is similarly divided into two classes, resulting in a total of 8,000 test instances.  
	The generalization performance of the datasets selected by the coreset selection methods is evaluated using the widely used deep neural network ResNet50 \citep{he2016deep}.  
	In this experiment, all hyperparameters and experimental settings are kept consistent before and after instance selection and model retraining.  
	Specifically, for all experiments, we use a batch size of 128, a learning rate of 0.01, weight decay of 0.001, and train the model using the Adam optimizer for 100 epochs.  
	In the proposed method, the selected instances are transformed using a fixed feature extractor, and classification is performed using the DRCS algorithm.  
	The feature extractor is trained under the same settings as described in the earlier experimental setup.
	In this experiment for image data, we adopt Algorithm~\ref{alg:2} for calucuration of the bound in \eqref{eq:main-bound}.
	
	{\bf Baselines} The baseline methods for comparing coreset selection are as follows:
	uncertainty-based method:(a)Least Confidence \citep{coleman2019selection}, error/loss-based methods: (b)GraNd \citep{paul2021deep}, (c)DeepFool \citep{ducoffe2018adversarial}, Gradient matching-based method: (d)GradMatch \citep{killamsetty2021grad}, Bilevel optimization-based method: (e)Glister \citep{killamsetty2021glister}, Submodularity-based method: (f)Log-Determinant \citep{iyer2021submodular}.
	For image datasets, comparative experiments were conducted, including methods designed for deep learning.
	In these experiments, the baseline methods were implemented using \citet{guo2022deepcore}.

	The results are shown in Figure~\ref{fig:result-image}.
	The method for visualization and calculation is the same as that used in Section~\ref{subsec:result-table}.
	In this figure, the proposed method demonstrates the higher weighted validation accuracy compared to other methods.

\begin{table}[tb]
	\begin{tabular}{cc}
	\begin{minipage}[t]{0.48\hsize}
	\begin{table}[H]
	\caption{Tabular datasets for numerical experiments. All are binary classification datasets from LIBSVM dataset \citep{libsvmDataset}.}
	\label{tab:dataset-SS}
	\begin{center}
	{\small
	\begin{tabular}{l|rrr}
	\hline
	\multicolumn{1}{c|}{Name} & \multicolumn{1}{c}{$n$} & \multicolumn{1}{c}{$n^+$} & \multicolumn{1}{c}{$d$} \\
	\hline
	splice            &  1000 &  517 & 61 \\
	australian        &  690 &  307 & 15 \\
	breast-cancer     &  683 &  239 & 11 \\
	heart             &  270 &  120 & 14 \\
	ionosphere        &  351 &  225 & 35 \\
	\hline
	\end{tabular}
	}
	\end{center}
	\end{table}
	\end{minipage}
	&
	\begin{minipage}[t]{0.48\hsize}
	\begin{figure}[H]
	\begin{center}
	\begin{tabular}{cc}
	\includegraphics[width=0.8\textwidth]{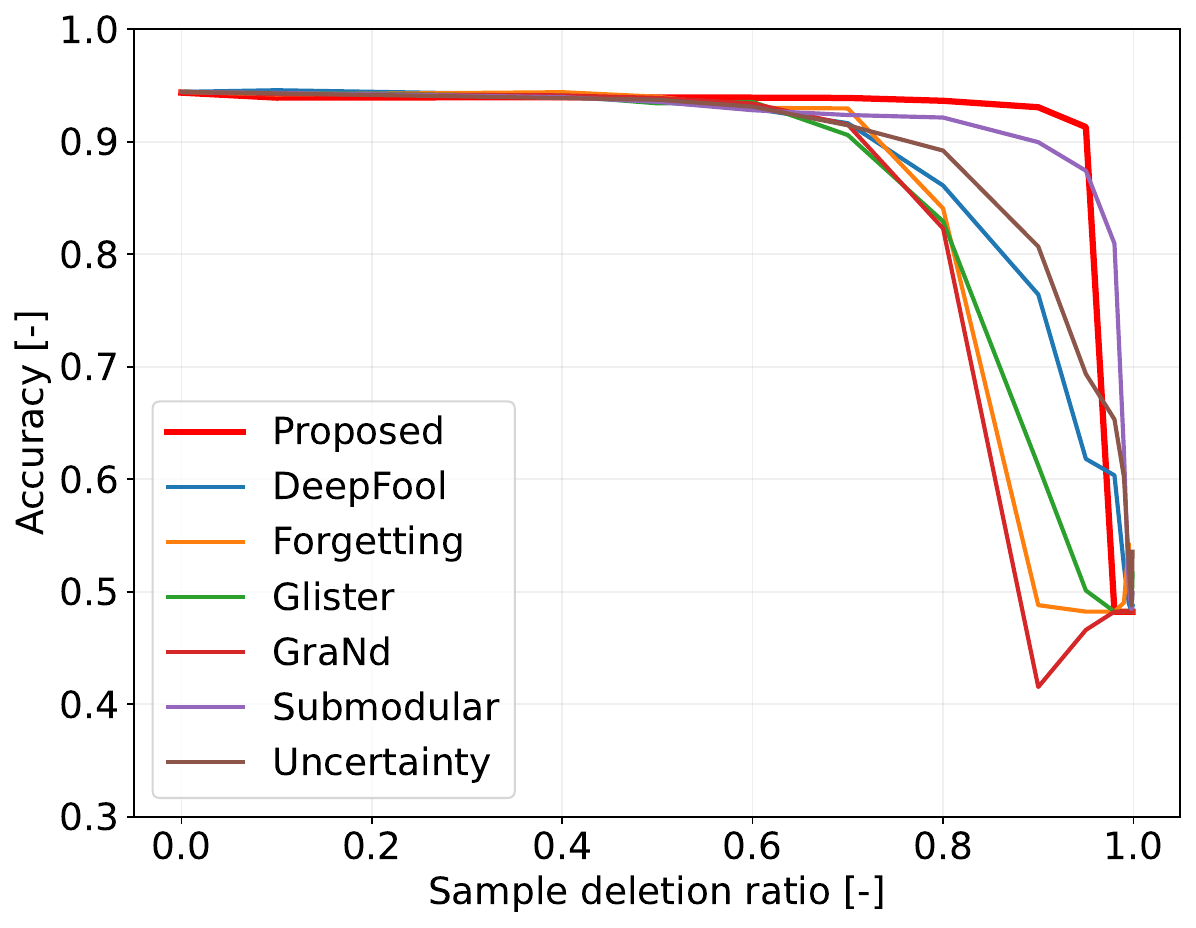}
	\end{tabular}
	\end{center}
	\caption{we compare our proposed method with several instance selection baselines with respect to the weighted validation accuracy ($1-{\rm VaEr}$). Our method exhibits superior performance generally.}
	\label{fig:result-image}
	\end{figure}
	\end{minipage}
	\end{tabular}
	\end{table}

\subsection{Discussion on Regularization Parameters}
	\label{subsec:discussion-lambda}
	In the previous section, we presented results using a specific preset value for $\lambda$ based on a predefined criterion.
	Ideally, $\lambda$ should be selected to optimize model performance.
	Here, we discuss the relationship between the choice of $\lambda$, the level of theoretical guarantees, and the resulting model performance.
	Here, we present the results for table data.
	\begin{figure}[tb]
		\begin{center}
			\begin{tabular}{cccc}
				\includegraphics[width=0.22\textwidth]{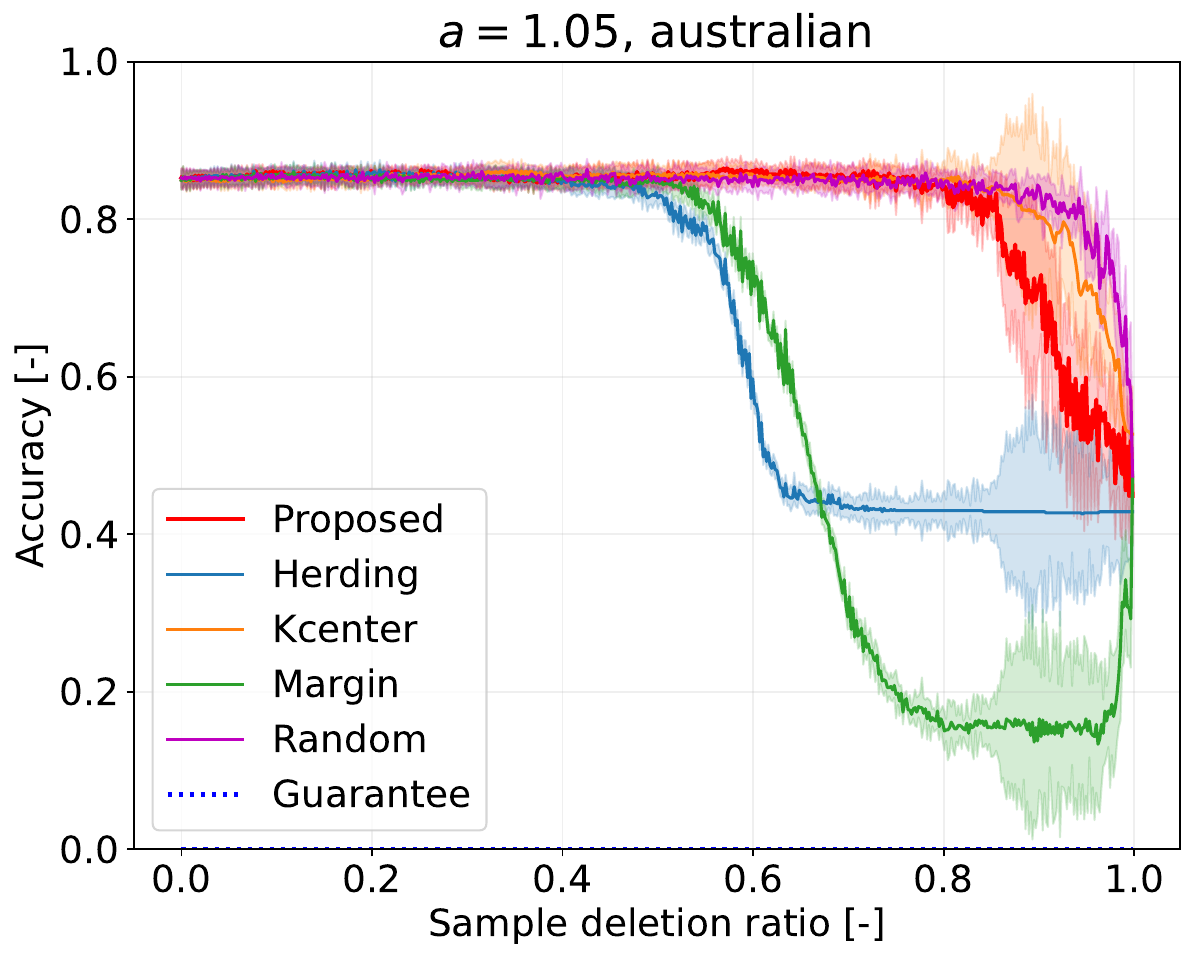} &
				\includegraphics[width=0.22\textwidth]{fig/table_logistic/australian-logistic/kernel/lam_1.5/a1.05000.pdf} &
				\includegraphics[width=0.22\textwidth]{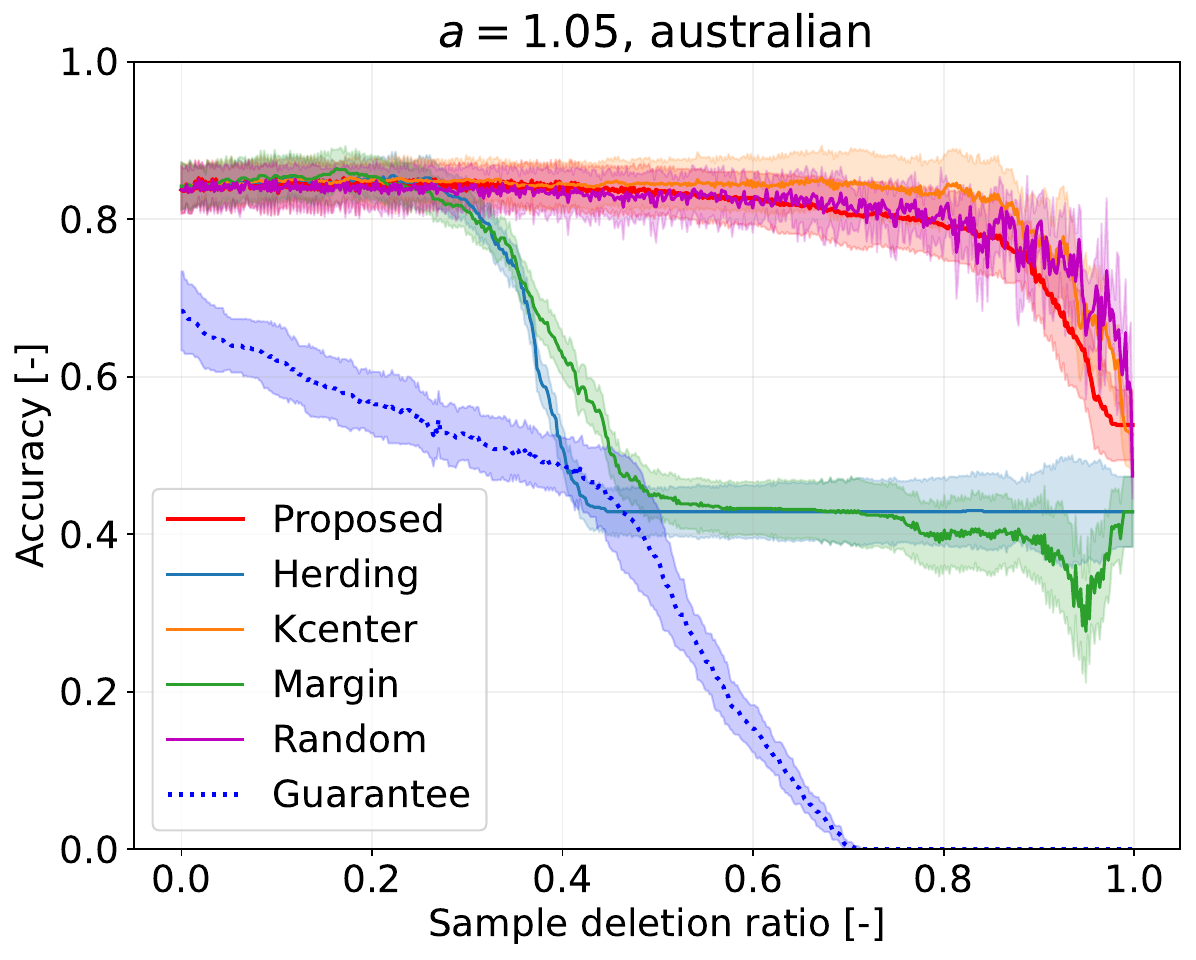} &
				\includegraphics[width=0.22\textwidth]{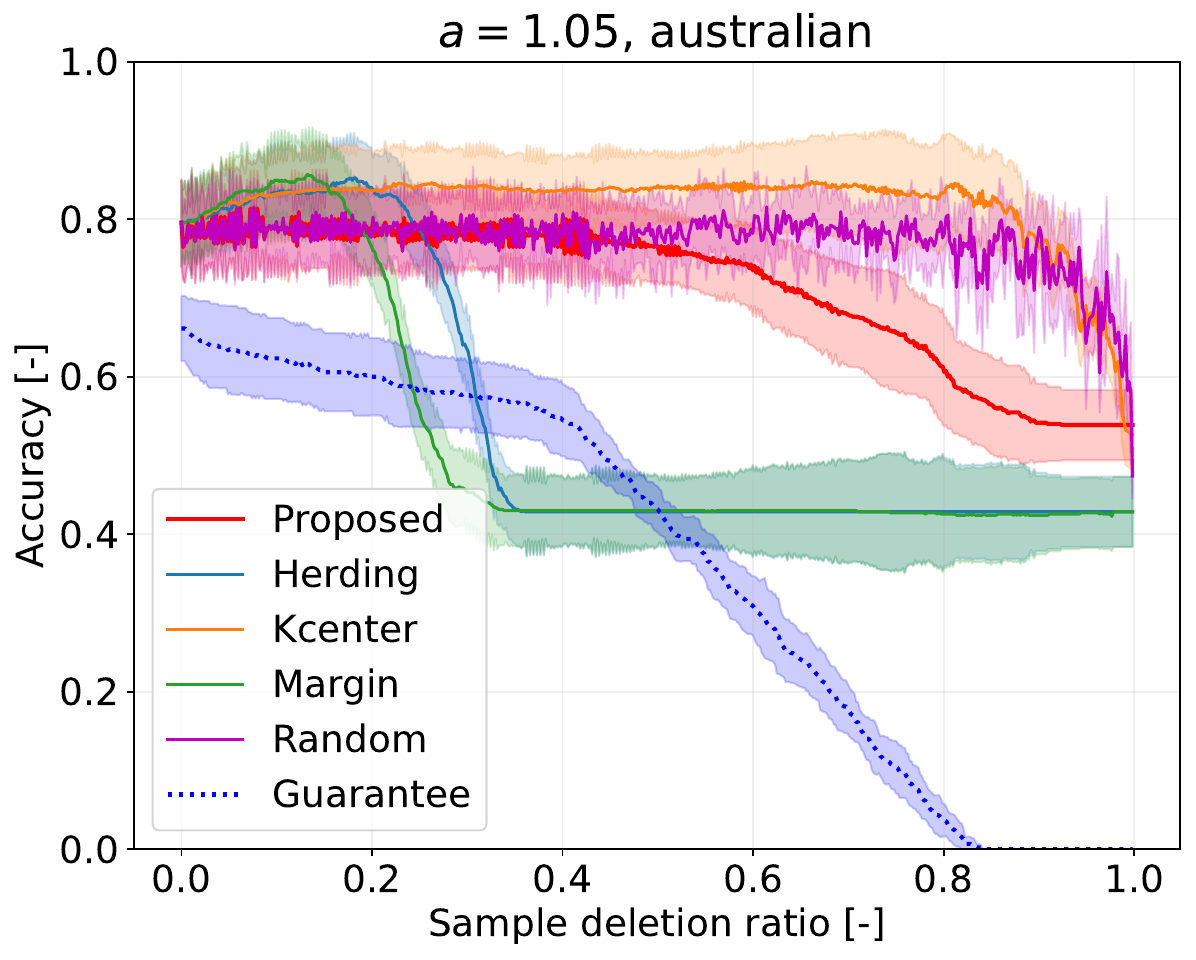} \\
				\includegraphics[width=0.22\textwidth]{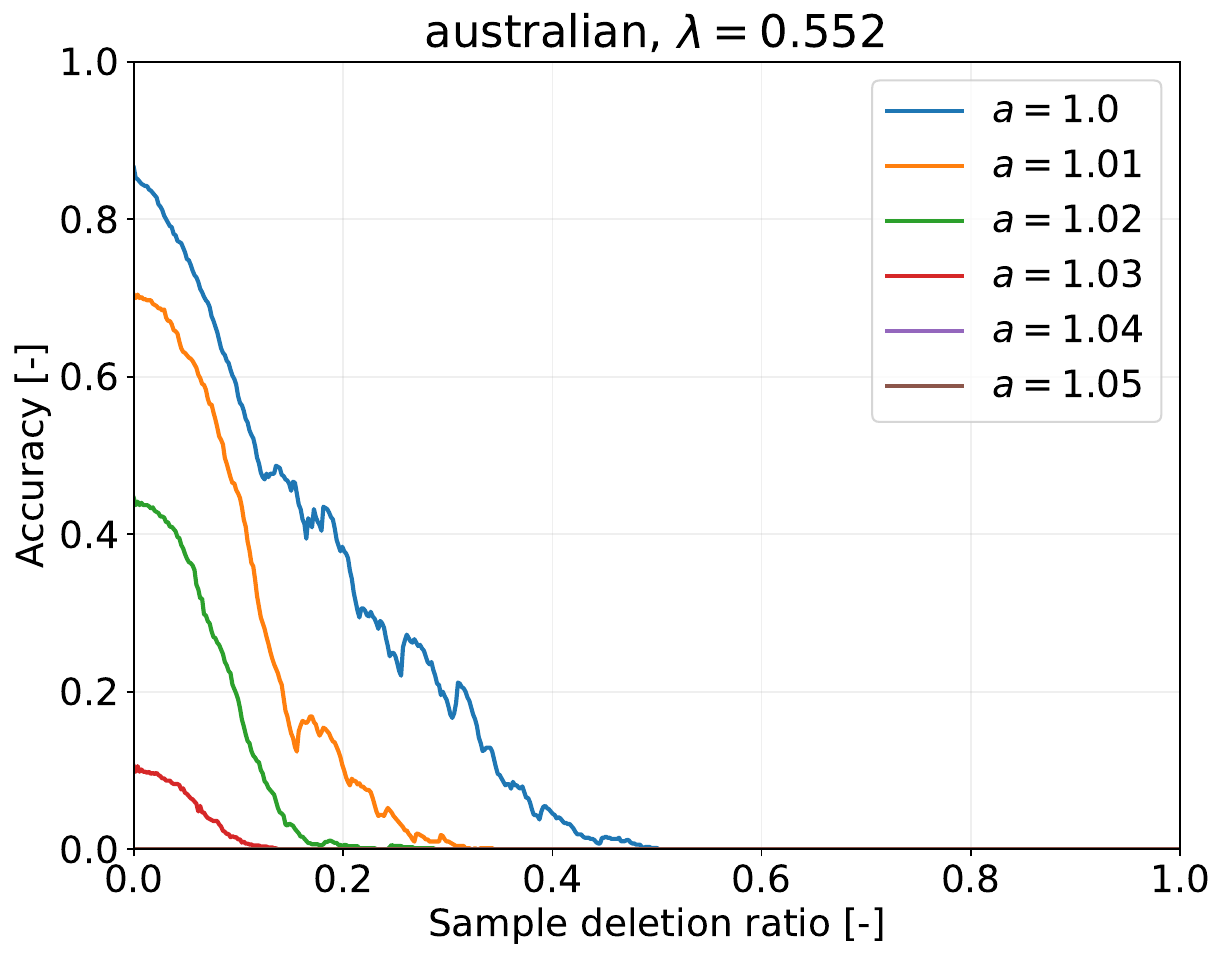} &
				\includegraphics[width=0.22\textwidth]{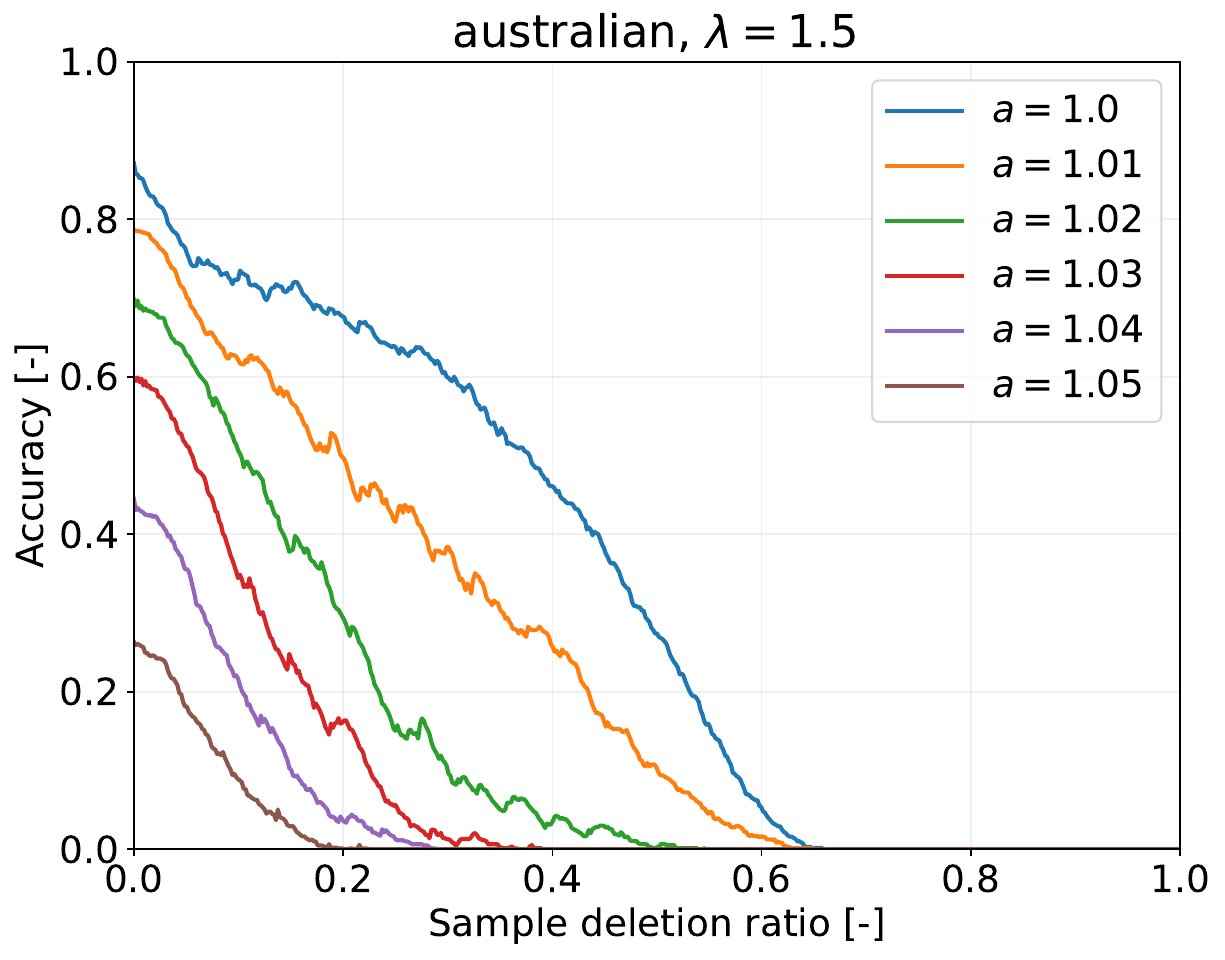} &
				\includegraphics[width=0.22\textwidth]{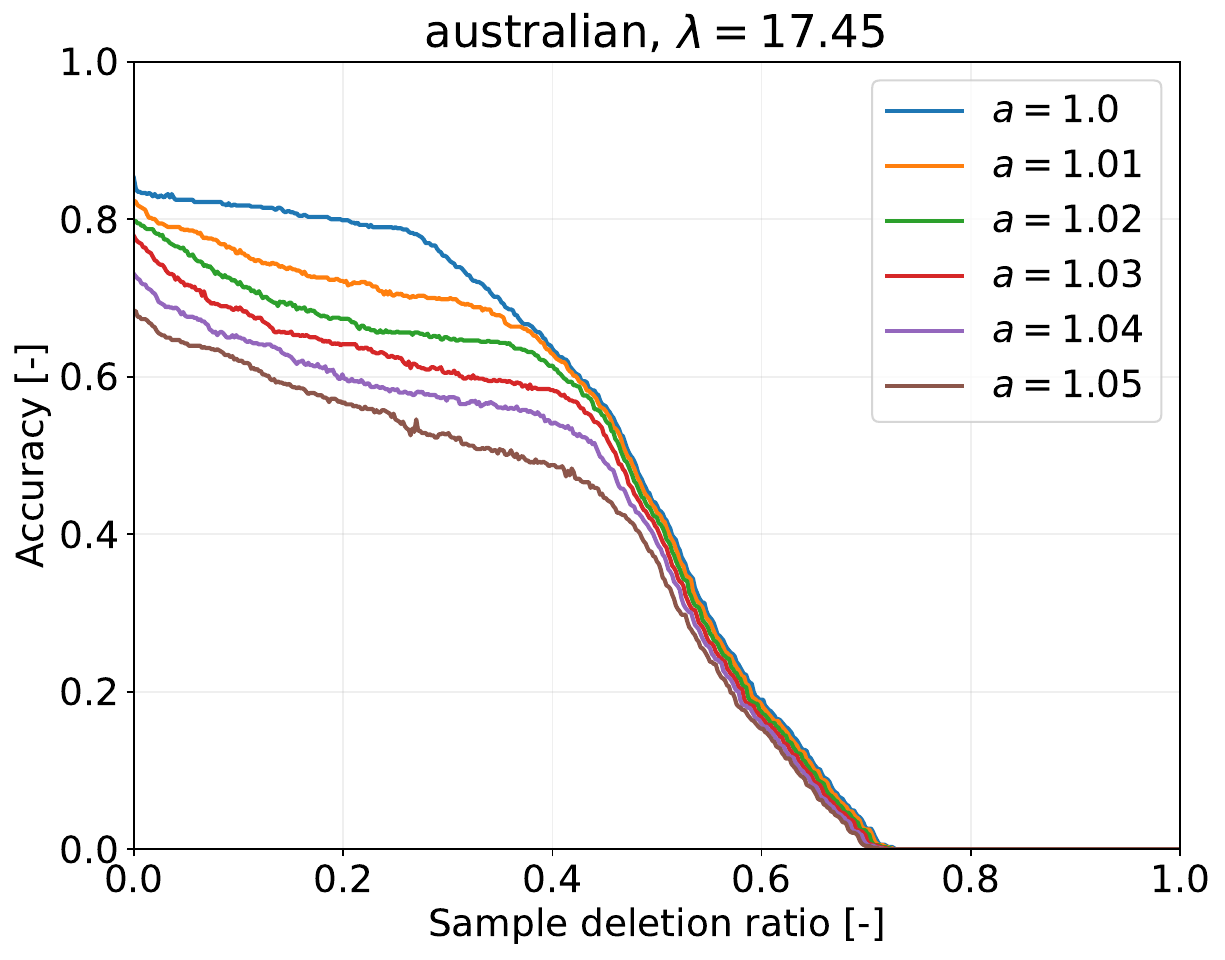} &
				\includegraphics[width=0.22\textwidth]{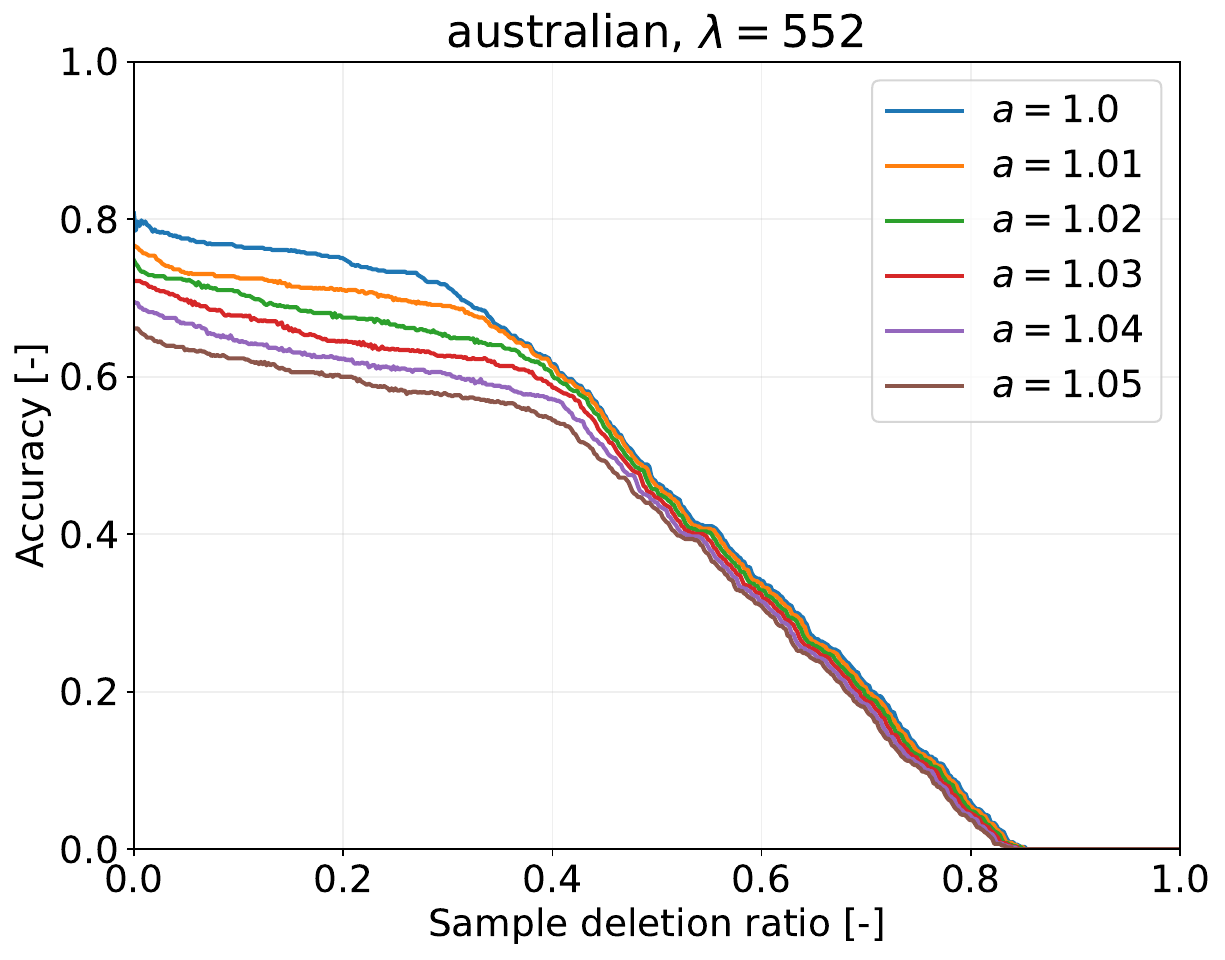}
			\end{tabular}
		\end{center}
		\caption{The results represent the model performance across varying values of lambda. The top row corresponds to the
		weighted validation accuracy ($1-{\rm VaEr}$), and the bottom row to the lower bound of the worst-case weighted validation accuracy ($1-{\rm WrVaEr^{\rm UB}}$). The first column shows results for \(\lambda = n \cdot 10^{-3}\), the second column for \(\lambda = \lambda_{\rm best}\) by cross-varidation, the third column for \(\lambda = n \cdot 10^{-1.5}\), and the fourth column for \(\lambda = n\).}
		\label{fig:guarantee}
	\end{figure}

	Figure~\ref{fig:guarantee} illustrates the number of removed instances that can be theoretically guaranteed for specific accuracy levels across different values of $\lambda$.  
	Fisrt, let us compare the columns in the top row of Figure~\ref{fig:guarantee}.  
	In the first and fourth columns, it can be observed that the performance of the proposed method deteriorates in the later stages.
	When strong regularization is applied, the model's performance remains poor even with the entire training set, which diminishes the effectiveness of the proposed method.
	On the other hand, when the regularization is small, the bound of the model parameters becomes broader, making instance selection less effective.
	Second, let us compare the columns in the bottom row of Figure~\ref{fig:guarantee}.
	In the first column, the model provides strong performance guarantees without any sample deletion or weight change (\(a = 1.0\)).
	However, even slight deletions or small weight changes cause the theoretical guarantees to break down.
	In contrast, in the third and fourth columns, where \(\lambda = n \cdot 10^{-1.5}, n\), the method still provides broader guarantees even with sample deletions or weight changes.
	As the sample deletion ratio increases, the method can provide wider guarantees, although this comes at the cost of reduced guaranteed performance.
	These observations reveal a trade-off between weight changes, sample deletion ratios, and guaranteed model performance.  
	This trade-off is likely influenced by the choice of the regularization parameter.  
	As shown in \eqref{eq:zeta}, a smaller $\lambda$ results in a larger parameter bound, while a larger $\lambda$ leads to a tighter parameter bound.  
	This effect directly impacts the range of accuracy lower bounds that can be guaranteed.

\section{Conclusions}
\label{sec:conclusions}

In this paper, we proposed DRCS as a robust coreset selection method when the data distribution in deployment environment is uncertain.
The proposed DRCS method effectively reduces data storage and model update costs in a DR learning environment.
Our technical contribution is deriving an upper bound of the worst-case weighted validation error under covariate shift,
and then, we perform coreset selection aimed at minimizing this upper bound.
As a result, the upper bound of the test error under future uncertain covariate shift was estimated, and its theoretical guarantee was provided.
Furthermore, the effectiveness of the proposed DRCS method was also demonstrated in the experiments.
Additionally, the proposed method was developed for binary classification problems.
It is, therefore, crucial to develop a framework that extends to multi-class classification problems.

\subsubsection*{Acknowledgments}

This work was partially supported by MEXT KAKENHI (JP20H00601, JP23K16943, JP24K15080), JST CREST (JPMJCR21D3 including AIP challenge program, JPMJCR22N2), JST Moonshot R\&D (JPMJMS2033-05), JST AIP Acceleration Research (JPMJCR21U2), JST ACT-X (JPMJAX24C3), NEDO (JPNP20006) and RIKEN Center for Advanced Intelligence Project.

\bibliography{myref}

\begin{thebibliography}{38}
\providecommand{\natexlab}[1]{#1}
\providecommand{\url}[1]{\texttt{#1}}
\expandafter\ifx\csname urlstyle\endcsname\relax
  \providecommand{\doi}[1]{doi: #1}\else
  \providecommand{\doi}{doi: \begingroup \urlstyle{rm}\Url}\fi

\bibitem[Chang \& Lin(2011)Chang and Lin]{libsvmDataset}
Chih-Chung Chang and Chih-Jen Lin.
\newblock Libsvm: A library for support vector machines.
\newblock \emph{ACM Transactions on Intelligent Systems and Technology (TIST)},
  2\penalty0 (3):\penalty0 27, 2011.
\newblock Datasets are provided in authors' website:
  \url{https://www.csie.ntu.edu.tw/~cjlin/libsvmtools/datasets/}.

\bibitem[Chen \& Paschalidis(2021)Chen and
  Paschalidis]{chen2021distributionally}
Ruidi Chen and Ioannis~Ch. Paschalidis.
\newblock Distributionally robust learning.
\newblock arXiv Preprint, 2021.
\newblock URL \url{https://arxiv.org/abs/2108.08993}.

\bibitem[Coleman et~al.(2019)Coleman, Yeh, Mussmann, Mirzasoleiman, Bailis,
  Liang, Leskovec, and Zaharia]{coleman2019selection}
Cody Coleman, Christopher Yeh, Stephen Mussmann, Baharan Mirzasoleiman, Peter
  Bailis, Percy Liang, Jure Leskovec, and Matei Zaharia.
\newblock Selection via proxy: Efficient data selection for deep learning.
\newblock \emph{arXiv preprint arXiv:1906.11829}, 2019.

\bibitem[Delage \& Ye(2010)Delage and Ye]{delage2010distributionally}
Erick Delage and Yinyu Ye.
\newblock Distributionally robust optimization under moment uncertainty with
  application to data-driven problems.
\newblock \emph{Operations Research}, 58\penalty0 (3):\penalty0 595--612, 2010.
\newblock \doi{10.1287/opre.1090.0741}.

\bibitem[Diamond \& Boyd(2016)Diamond and Boyd]{diamond2016cvxpy}
Steven Diamond and Stephen Boyd.
\newblock {CVXPY}: A {P}ython-embedded modeling language for convex
  optimization.
\newblock \emph{Journal of Machine Learning Research}, 2016.
\newblock URL \url{https://stanford.edu/~boyd/papers/pdf/cvxpy_paper.pdf}.
\newblock To appear.

\bibitem[Ducoffe \& Precioso(2018)Ducoffe and Precioso]{ducoffe2018adversarial}
Melanie Ducoffe and Frederic Precioso.
\newblock Adversarial active learning for deep networks: a margin based
  approach.
\newblock \emph{arXiv preprint arXiv:1802.09841}, 2018.

\bibitem[Goh \& Sim(2010)Goh and Sim]{goh2010distributionally}
Joel Goh and Melvyn Sim.
\newblock Distributionally robust optimization and its tractable
  approximations.
\newblock \emph{Operations Research}, 58\penalty0 (4-1):\penalty0 902--917,
  2010.
\newblock \doi{10.1287/opre.1090.0795}.

\bibitem[Guo et~al.(2022)Guo, Zhao, and Bai]{guo2022deepcore}
Chengcheng Guo, Bo~Zhao, and Yanbing Bai.
\newblock Deepcore: A comprehensive library for coreset selection in deep
  learning.
\newblock In \emph{International Conference on Database and Expert Systems
  Applications}, pp.\  181--195. Springer, 2022.

\bibitem[Hanada et~al.(2023)Hanada, Hashimoto, Taji, and
  Takeuchi]{10.1162/neco_a_01619}
Hiroyuki Hanada, Noriaki Hashimoto, Kouichi Taji, and Ichiro Takeuchi.
\newblock Generalized low-rank update: Model parameter bounds for low-rank
  training data modifications.
\newblock \emph{Neural Computation}, 35\penalty0 (12):\penalty0 1970--2005,
  2023.

\bibitem[Hanada et~al.(2024)Hanada, Tatsuya, Satoshi, Tanaka, Okura, Inatsu,
  Hashimoto, Takeno, Murayama, Lee, Kojima, and
  Takeuchi]{hanada2024distributionallyrobustsafesample}
Hiroyuki Hanada, Aoyama Tatsuya, Akahane Satoshi, Tomonari Tanaka, Yoshito
  Okura, Yu~Inatsu, Noriaki Hashimoto, Shion Takeno, Taro Murayama, Hanju Lee,
  Shinya Kojima, and Ichiro Takeuchi.
\newblock Distributionally robust safe sample screening, 2024.
\newblock URL \url{https://arxiv.org/abs/2406.05964}.

\bibitem[Harris et~al.(2020)Harris, Millman, van~der Walt, Gommers, Virtanen,
  Cournapeau, Wieser, Taylor, Berg, Smith, Kern, Picus, Hoyer, van Kerkwijk,
  Brett, Haldane, del R{\'{i}}o, Wiebe, Peterson, G{\'{e}}rard-Marchant,
  Sheppard, Reddy, Weckesser, Abbasi, Gohlke, and Oliphant]{harris2020array}
Charles~R. Harris, K.~Jarrod Millman, St{\'{e}}fan~J. van~der Walt, Ralf
  Gommers, Pauli Virtanen, David Cournapeau, Eric Wieser, Julian Taylor,
  Sebastian Berg, Nathaniel~J. Smith, Robert Kern, Matti Picus, Stephan Hoyer,
  Marten~H. van Kerkwijk, Matthew Brett, Allan Haldane, Jaime~Fern{\'{a}}ndez
  del R{\'{i}}o, Mark Wiebe, Pearu Peterson, Pierre G{\'{e}}rard-Marchant,
  Kevin Sheppard, Tyler Reddy, Warren Weckesser, Hameer Abbasi, Christoph
  Gohlke, and Travis~E. Oliphant.
\newblock Array programming with {NumPy}.
\newblock \emph{Nature}, 585\penalty0 (7825):\penalty0 357--362, September
  2020.
\newblock \doi{10.1038/s41586-020-2649-2}.
\newblock URL \url{https://doi.org/10.1038/s41586-020-2649-2}.

\bibitem[He et~al.(2016)He, Zhang, Ren, and Sun]{he2016deep}
Kaiming He, Xiangyu Zhang, Shaoqing Ren, and Jian Sun.
\newblock Deep residual learning for image recognition.
\newblock In \emph{Proceedings of the IEEE Conference on Computer Vision and
  Pattern Recognition (CVPR)}, June 2016.

\bibitem[Hiriart-Urruty \& Lemar{\'e}chal(1993)Hiriart-Urruty and
  Lemar{\'e}chal]{hiriart1993convex}
Jean-Baptiste Hiriart-Urruty and Claude Lemar{\'e}chal.
\newblock \emph{Convex Analysis and Minimization Algorithms II: Advanced Theory
  and Bundle Methods}.
\newblock Springer, 1993.

\bibitem[Iyer et~al.(2021)Iyer, Khargoankar, Bilmes, and
  Asanani]{iyer2021submodular}
Rishabh Iyer, Ninad Khargoankar, Jeff Bilmes, and Himanshu Asanani.
\newblock Submodular combinatorial information measures with applications in
  machine learning.
\newblock In \emph{Algorithmic Learning Theory}, pp.\  722--754. PMLR, 2021.

\bibitem[Iyer \& Bilmes(2013)Iyer and Bilmes]{iyer2013submodular}
Rishabh~K Iyer and Jeff~A Bilmes.
\newblock Submodular optimization with submodular cover and submodular knapsack
  constraints.
\newblock \emph{Advances in neural information processing systems}, 26, 2013.

\bibitem[Killamsetty et~al.(2021{\natexlab{a}})Killamsetty, Durga,
  Ramakrishnan, De, and Iyer]{killamsetty2021grad}
Krishnateja Killamsetty, Sivasubramanian Durga, Ganesh Ramakrishnan, Abir De,
  and Rishabh Iyer.
\newblock Grad-match: Gradient matching based data subset selection for
  efficient deep model training.
\newblock In \emph{International Conference on Machine Learning}, pp.\
  5464--5474. PMLR, 2021{\natexlab{a}}.

\bibitem[Killamsetty et~al.(2021{\natexlab{b}})Killamsetty, Sivasubramanian,
  Ramakrishnan, and Iyer]{killamsetty2021glister}
Krishnateja Killamsetty, Durga Sivasubramanian, Ganesh Ramakrishnan, and
  Rishabh Iyer.
\newblock Glister: Generalization based data subset selection for efficient and
  robust learning.
\newblock In \emph{Proceedings of the AAAI Conference on Artificial
  Intelligence}, volume~35, pp.\  8110--8118, 2021{\natexlab{b}}.

\bibitem[Killamsetty et~al.(2021{\natexlab{c}})Killamsetty, Zhao, Chen, and
  Iyer]{killamsetty2021retrieve}
Krishnateja Killamsetty, Xujiang Zhao, Feng Chen, and Rishabh Iyer.
\newblock Retrieve: Coreset selection for efficient and robust semi-supervised
  learning.
\newblock \emph{Advances in neural information processing systems},
  34:\penalty0 14488--14501, 2021{\natexlab{c}}.

\bibitem[Kirkpatrick et~al.(2017)Kirkpatrick, Pascanu, Rabinowitz, Veness,
  Desjardins, Rusu, Milan, Quan, Ramalho, Grabska-Barwinska, Hassabis, Clopath,
  Kumaran, and Hadsell]{kirkpatrick2017overcoming}
James Kirkpatrick, Razvan Pascanu, Neil Rabinowitz, Joel Veness, Guillaume
  Desjardins, Andrei~A. Rusu, Kieran Milan, John Quan, Tiago Ramalho, Agnieszka
  Grabska-Barwinska, Demis Hassabis, Claudia Clopath, Dharshan Kumaran, and
  Raia Hadsell.
\newblock Overcoming catastrophic forgetting in neural networks.
\newblock \emph{Proceedings of the National Academy of Sciences}, 114\penalty0
  (13):\penalty0 3521--3526, 2017.
\newblock \doi{10.1073/pnas.1611835114}.

\bibitem[Krizhevsky(2009)]{cifar10}
Alex Krizhevsky.
\newblock The cifar-10 dataset, 2009.
\newblock URL \url{https://www.cs.toronto.edu/~kriz/cifar.html}.

\bibitem[Lee et~al.(2017)Lee, Bahri, Novak, Schoenholz, Pennington, and
  Sohl-Dickstein]{lee2017deep}
Jaehoon Lee, Yasaman Bahri, Roman Novak, Samuel~S Schoenholz, Jeffrey
  Pennington, and Jascha Sohl-Dickstein.
\newblock Deep neural networks as gaussian processes.
\newblock \emph{arXiv preprint arXiv:1711.00165}, 2017.

\bibitem[Margatina et~al.(2021)Margatina, Vernikos, Barrault, and
  Aletras]{margatina2021active}
Katerina Margatina, Giorgos Vernikos, Lo{\"\i}c Barrault, and Nikolaos Aletras.
\newblock Active learning by acquiring contrastive examples.
\newblock \emph{arXiv preprint arXiv:2109.03764}, 2021.

\bibitem[Mirzasoleiman et~al.(2020)Mirzasoleiman, Bilmes, and
  Leskovec]{mirzasoleiman2020coresets}
Baharan Mirzasoleiman, Jeff Bilmes, and Jure Leskovec.
\newblock Coresets for data-efficient training of machine learning models.
\newblock In \emph{Proceedings of the 37th International Conference on Machine
  Learning}, pp.\  6950--6960, 2020.

\bibitem[Ndiaye et~al.(2015)Ndiaye, Fercoq, Gramfort, and
  Salmon]{ndiaye2015gap}
Eugene Ndiaye, Olivier Fercoq, Alexandre Gramfort, and Joseph Salmon.
\newblock Gap safe screening rules for sparse multi-task and multi-class
  models.
\newblock In \emph{Advances in Neural Information Processing Systems}, pp.\
  811--819, 2015.

\bibitem[Novak et~al.(2020)Novak, Xiao, Hron, Lee, Alemi, Sohl-Dickstein, and
  Schoenholz]{neuraltangents2020}
Roman Novak, Lechao Xiao, Jiri Hron, Jaehoon Lee, Alexander~A. Alemi, Jascha
  Sohl-Dickstein, and Samuel~S. Schoenholz.
\newblock Neural tangents: Fast and easy infinite neural networks in python.
\newblock In \emph{International Conference on Learning Representations}, 2020.
\newblock URL \url{https://github.com/google/neural-tangents}.

\bibitem[Paszke et~al.(2017)Paszke, Gross, Chintala, Chanan, Yang, DeVito, Lin,
  Desmaison, Antiga, and Lerer]{paszke2017automatic}
Adam Paszke, Sam Gross, Soumith Chintala, Gregory Chanan, Edward Yang, Zachary
  DeVito, Zeming Lin, Alban Desmaison, Luca Antiga, and Adam Lerer.
\newblock Automatic differentiation in pytorch.
\newblock NIPS 2017 Workshop Autodiff, 2017.

\bibitem[Paul et~al.(2021)Paul, Ganguli, and Dziugaite]{paul2021deep}
Mansheej Paul, Surya Ganguli, and Gintare~Karolina Dziugaite.
\newblock Deep learning on a data diet: Finding important examples early in
  training.
\newblock \emph{Advances in neural information processing systems},
  34:\penalty0 20596--20607, 2021.

\bibitem[Pedregosa et~al.(2011)Pedregosa, Varoquaux, Gramfort, Michel, Thirion,
  Grisel, Blondel, Prettenhofer, Weiss, Dubourg, Vanderplas, Passos,
  Cournapeau, Brucher, Perrot, and Duchesnay]{scikit-learn}
F.~Pedregosa, G.~Varoquaux, A.~Gramfort, V.~Michel, B.~Thirion, O.~Grisel,
  M.~Blondel, P.~Prettenhofer, R.~Weiss, V.~Dubourg, J.~Vanderplas, A.~Passos,
  D.~Cournapeau, M.~Brucher, M.~Perrot, and E.~Duchesnay.
\newblock Scikit-learn: Machine learning in {P}ython.
\newblock \emph{Journal of Machine Learning Research}, 12:\penalty0 2825--2830,
  2011.

\bibitem[Rockafellar(1970)]{rockafellar1970convex}
Ralph~Tyrell Rockafellar.
\newblock \emph{Convex analysis}.
\newblock Princeton university press, 1970.

\bibitem[Sch{\"o}lkopf et~al.(2001)Sch{\"o}lkopf, Herbrich, and
  Smola]{scholkopf2001generalized}
Bernhard Sch{\"o}lkopf, Ralf Herbrich, and Alex~J Smola.
\newblock A generalized representer theorem.
\newblock In \emph{International conference on computational learning theory},
  pp.\  416--426. Springer, 2001.

\bibitem[Sener \& Savarese(2017)Sener and Savarese]{sener2017active}
Ozan Sener and Silvio Savarese.
\newblock Active learning for convolutional neural networks: A core-set
  approach.
\newblock \emph{arXiv preprint arXiv:1708.00489}, 2017.

\bibitem[Shimodaira(2000)]{shimodaira2000improving}
Hidetoshi Shimodaira.
\newblock Improving predictive inference under covariate shift by weighting the
  log-likelihood function.
\newblock \emph{Journal of statistical planning and inference}, 90\penalty0
  (2):\penalty0 227--244, 2000.

\bibitem[Sugiyama et~al.(2007)Sugiyama, Krauledat, and
  M{{\"u}}ller]{sugiyama2007covariate}
Masashi Sugiyama, Matthias Krauledat, and Klaus-Robert M{{\"u}}ller.
\newblock Covariate shift adaptation by importance weighted cross validation.
\newblock \emph{Journal of Machine Learning Research}, 8\penalty0
  (35):\penalty0 985--1005, 2007.
\newblock URL \url{http://jmlr.org/papers/v8/sugiyama07a.html}.

\bibitem[Toneva et~al.(2019)Toneva, Sordoni, des Combes, Trischler, Bengio, and
  Gordon]{toneva2019forgetting}
Mariya Toneva, Alessandro Sordoni, Remi~Tachet des Combes, Adam Trischler,
  Yoshua Bengio, and Geoff Gordon.
\newblock An empirical study of example forgetting during deep neural network
  learning.
\newblock In \emph{International Conference on Learning Representations}, 2019.

\bibitem[Virtanen et~al.(2020)Virtanen, Gommers, Oliphant, Haberland, Reddy,
  Cournapeau, Burovski, Peterson, Weckesser, Bright, {van der Walt}, Brett,
  Wilson, Millman, Mayorov, Nelson, Jones, Kern, Larson, Carey, Polat, Feng,
  Moore, {VanderPlas}, Laxalde, Perktold, Cimrman, Henriksen, Quintero, Harris,
  Archibald, Ribeiro, Pedregosa, {van Mulbregt}, and {SciPy 1.0
  Contributors}]{2020SciPy-NMeth}
Pauli Virtanen, Ralf Gommers, Travis~E. Oliphant, Matt Haberland, Tyler Reddy,
  David Cournapeau, Evgeni Burovski, Pearu Peterson, Warren Weckesser, Jonathan
  Bright, St{\'e}fan~J. {van der Walt}, Matthew Brett, Joshua Wilson, K.~Jarrod
  Millman, Nikolay Mayorov, Andrew R.~J. Nelson, Eric Jones, Robert Kern, Eric
  Larson, C~J Carey, {\.I}lhan Polat, Yu~Feng, Eric~W. Moore, Jake
  {VanderPlas}, Denis Laxalde, Josef Perktold, Robert Cimrman, Ian Henriksen,
  E.~A. Quintero, Charles~R. Harris, Anne~M. Archibald, Ant{\^o}nio~H. Ribeiro,
  Fabian Pedregosa, Paul {van Mulbregt}, and {SciPy 1.0 Contributors}.
\newblock {{SciPy} 1.0: Fundamental Algorithms for Scientific Computing in
  Python}.
\newblock \emph{Nature Methods}, 17:\penalty0 261--272, 2020.
\newblock \doi{10.1038/s41592-019-0686-2}.

\bibitem[Wang et~al.(2022)Wang, Zhang, Yang, Yu, Li, HONG, Zhang, Li, Zhong,
  and Zhu]{wang2022memory}
Liyuan Wang, Xingxing Zhang, Kuo Yang, Longhui Yu, Chongxuan Li, Lanqing HONG,
  Shifeng Zhang, Zhenguo Li, Yi~Zhong, and Jun Zhu.
\newblock Memory replay with data compression for continual learning.
\newblock In \emph{International Conference on Learning Representations}, 2022.

\bibitem[Welling(2009)]{welling2009herding}
Max Welling.
\newblock Herding dynamical weights to learn.
\newblock In \emph{Proceedings of the 26th annual international conference on
  machine learning}, pp.\  1121--1128, 2009.

\bibitem[Zou \& Hastie(2005)Zou and Hastie]{zou05regularization}
Hui Zou and Trevor Hastie.
\newblock Regularization and variable selection via the elastic net.
\newblock \emph{Journal of the Royal Statistical Society, Series B},
  67:\penalty0 301--320, 2005.

\end{thebibliography}
\bibliographystyle{tmlr}
\newpage

\appendix

\section{Proofs of general lemmas} \label{app:general-lemmas}

In this appendix, we present a collection of lemmas that are utilized in other sections of the appendix.

\begin{lemma}\label{lem:fenchel-moreau} \emph{(Fenchel-Moreau theorem)}
Let $f: \mathbb{R}^d\to\mathbb{R}\cup\{+\infty\}$ be a convex function. The biconjugate $f^{**}$ coincides with $f$ if $f$ is convex, proper (i.e., $\exists \bm p\in\mathbb{R}^d:~f(\bm p) < +\infty$), and lower-semicontinuous.
\end{lemma}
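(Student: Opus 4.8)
The plan is to work with the Legendre--Fenchel conjugate $f^*(\bm y) := \sup_{\bm x \in \mathbb{R}^d} \left( \langle \bm x, \bm y \rangle - f(\bm x) \right)$ and to prove $f^{**} \le f$ and $f^{**} \ge f$ separately. The inequality $f^{**} \le f$ holds for every function and uses no hypotheses: the Fenchel--Young inequality $f^*(\bm y) \ge \langle \bm x, \bm y \rangle - f(\bm x)$ rearranges to $f(\bm x) \ge \langle \bm x, \bm y \rangle - f^*(\bm y)$ for all $\bm y$, and taking the supremum over $\bm y$ gives $f(\bm x) \ge f^{**}(\bm x)$. All the substance is in the reverse inequality, where convexity, lower-semicontinuity, and properness are used.

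To prove $f^{**} \ge f$ I would argue by contradiction. Suppose there are $\bm x_0 \in \mathbb{R}^d$ and $r \in \mathbb{R}$ with $f^{**}(\bm x_0) \le r < f(\bm x_0)$. The epigraph $\mathrm{epi}\, f = \{ (\bm x, t) \in \mathbb{R}^d \times \mathbb{R} : f(\bm x) \le t \}$ is nonempty by properness, convex by convexity of $f$, and closed by lower-semicontinuity, while the point $(\bm x_0, r)$ lies outside it. The strict separation theorem for a point and a closed convex set in $\mathbb{R}^{d+1}$ then yields $(\bm a, b) \in \mathbb{R}^d \times \mathbb{R}$ and $c \in \mathbb{R}$ with $\langle \bm a, \bm x \rangle + b t \le c < \langle \bm a, \bm x_0 \rangle + b r$ for all $(\bm x, t) \in \mathrm{epi}\, f$. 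Since the epigraph is stable under increasing the last coordinate, letting $t \to +\infty$ forces $b \le 0$.

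If $b < 0$, rescale so that $b = -1$; then $\langle \bm a, \bm x \rangle - f(\bm x) \le c$ for all $\bm x$, that is $f^*(\bm a) \le c$, while $c < \langle \bm a, \bm x_0 \rangle - r$, so $f^{**}(\bm x_0) \ge \langle \bm a, \bm x_0 \rangle - f^*(\bm a) \ge \langle \bm a, \bm x_0 \rangle - c > r \ge f^{**}(\bm x_0)$, a contradiction. The case $b = 0$ is the main obstacle: the separating hyperplane is vertical, giving only $\langle \bm a, \bm x \rangle \le c < \langle \bm a, \bm x_0 \rangle$ for $\bm x \in \mathrm{dom}\, f$, which carries no information about the values of $f$. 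To close this case I would first show that $f^*$ is proper --- equivalently, that $f$ admits an affine minorant --- by a second separation argument applied to the point $(\bm x_1, f(\bm x_1) - 1)$ for some $\bm x_1 \in \mathrm{dom}\, f$, which produces a non-vertical supporting hyperplane of $\mathrm{epi}\, f$ and hence some $\bm y_0$ with $f^*(\bm y_0) < +\infty$. Then, using $\langle \bm a, \bm x \rangle \le c$ on $\mathrm{dom}\, f$, one obtains $f^*(\bm y_0 + \lambda \bm a) \le f^*(\bm y_0) + \lambda c$ for every $\lambda > 0$, hence $f^{**}(\bm x_0) \ge \langle \bm x_0, \bm y_0 \rangle - f^*(\bm y_0) + \lambda \left( \langle \bm a, \bm x_0 \rangle - c \right)$, which tends to $+\infty$ as $\lambda \to \infty$ and contradicts $f^{**}(\bm x_0) \le r < +\infty$.

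I expect the vertical case $b = 0$, together with the auxiliary claim that $f^*$ is proper, to be the only delicate point; the rest follows directly from the separation theorem and the definition of the conjugate. It is worth noting that properness of $f$ is precisely what makes both separation arguments available, since it guarantees $\mathrm{epi}\, f \neq \emptyset$, and that once $f^*$ is known to be proper one automatically has $f^{**} > -\infty$ everywhere, so the number $r$ in the contradiction hypothesis may indeed be taken real.
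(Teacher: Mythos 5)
Your proof is correct: $f^{**}\le f$ follows from Fenchel--Young as you say, and your epigraph-separation argument for the reverse inequality — including the delicate vertical case $b=0$, which you close by first establishing that $f^*$ is proper (an affine minorant exists) and then letting the slope $\lambda\to\infty$ — is the standard proof of the Fenchel--Moreau theorem. The paper gives no proof of its own, deferring to Section 12 of Rockafellar (1970), and your argument is essentially the separation-based one underlying that reference, so the two approaches coincide.
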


\begin{proof}
Refer to Section 12 of \citep{rockafellar1970convex} for details.
\end{proof}

As a specific instance of Lemma \ref{lem:fenchel-moreau}, the result holds when $f$ is finite-valued ($\forall \bm p\in\mathbb{R}^d:~f(\bm p)<+\infty$) and convex.

\begin{lemma}\label{lem:strong-convexity-smoothness}
For a convex function $f: \mathbb{R}^d\to\mathbb{R}\cup\{+\infty\}$, the following statements hold:
\begin{itemize}
\item If $f$ is proper and $\nu$-smooth, then $f^*$ is $(1/\nu)$-strongly convex.
\item If $f$ is proper, lower-semicontinuous, and $\kappa$-strongly convex, then $f^*$ is $(1/\kappa)$-smooth.
\end{itemize}
\end{lemma}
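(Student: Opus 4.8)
The plan is to derive both statements from a single well-known correspondence: for a proper lower-semicontinuous convex $f$ one has $\partial f^* = (\partial f)^{-1}$ (this is the Fenchel--Young equality together with Lemma~\ref{lem:fenchel-moreau}), which I will combine with the monotone-operator reformulations of the two regularity notions. Concretely, I will use that $g$ is $\mu$-strongly convex exactly when $\partial g$ is $\mu$-strongly monotone (equivalently, $\partial g - \mu\,\mathrm{Id}$ is a monotone operator), and that a convex $g$ is $\nu$-smooth exactly when $\nabla g$ is $\tfrac1\nu$-cocoercive (the Baillon--Haddad characterization). The two bullets then become dual to one another: the first turns cocoercivity of $\nabla f$ into strong monotonicity of $\partial f^*$, and the second turns strong monotonicity of $\partial f$ into Lipschitz continuity of $\nabla f^*$.

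For the second bullet I would argue as follows. From $\kappa$-strong convexity, $f(\bm x') \ge f(\bm x) + \langle \bm s, \bm x' - \bm x\rangle + \tfrac{\kappa}{2}\|\bm x' - \bm x\|_2^2$ for every $\bm s \in \partial f(\bm x)$; this forces superlinear growth of $f$, so $f^*$ is finite on all of $\mathbb{R}^d$ and the strongly concave problem defining $f^*(\bm y)$ has a unique maximizer. Hence $f^*$ is differentiable everywhere, $\nabla f^*(\bm y)$ is that maximizer, and $\bm y \in \partial f(\nabla f^*(\bm y))$. Adding the strong-convexity inequality to its swap gives $\kappa$-strong monotonicity of $\partial f$; evaluating it at $\bm x = \nabla f^*(\bm y)$, $\bm x' = \nabla f^*(\bm y')$ and applying Cauchy--Schwarz yields $\|\nabla f^*(\bm y) - \nabla f^*(\bm y')\|_2 \le \tfrac1\kappa \|\bm y - \bm y'\|_2$, i.e.\ $f^*$ is $(1/\kappa)$-smooth.

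For the first bullet, $\nu$-smoothness of the convex $f$ (which is then finite-valued on $\mathbb{R}^d$, hence lsc and $f^{**} = f$) gives the cocoercivity inequality $\langle \nabla f(\bm x) - \nabla f(\bm x'),\, \bm x - \bm x'\rangle \ge \tfrac1\nu \|\nabla f(\bm x) - \nabla f(\bm x')\|_2^2$. Using $\partial f^* = (\nabla f)^{-1}$, any $\bm x \in \partial f^*(\bm y)$ and $\bm x' \in \partial f^*(\bm y')$ satisfy $\bm y = \nabla f(\bm x)$ and $\bm y' = \nabla f(\bm x')$, so substituting into the cocoercivity inequality gives $\langle \bm y - \bm y',\, \bm x - \bm x'\rangle \ge \tfrac1\nu \|\bm y - \bm y'\|_2^2$. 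Thus $\partial f^*$ is $\tfrac1\nu$-strongly monotone, equivalently $\partial\big(f^* - \tfrac{1}{2\nu}\|\cdot\|_2^2\big) = \partial f^* - \tfrac1\nu\,\mathrm{Id}$ is monotone, so $f^* - \tfrac{1}{2\nu}\|\cdot\|_2^2$ is convex --- which is precisely $(1/\nu)$-strong convexity of $f^*$ in the sense of the definition of strong convexity given earlier.

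I expect the only delicate point to be the convex-analytic bookkeeping rather than the inequalities: justifying $\partial f^* = (\partial f)^{-1}$ and that $f^*$ is proper, lsc and convex (for which I would cite Rockafellar), and, in the first bullet, taking care that $f^*$ need not be finite everywhere (for instance, if $\nabla f$ is bounded then $\mathrm{dom}\, f^*$ is bounded), so that strong convexity of $f^*$ must be established through the subgradient/monotonicity characterization rather than any pointwise Hessian-type bound. Everything else is a couple of lines of algebra.
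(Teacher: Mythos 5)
Your proof is correct, but it takes a genuinely different route from the paper: the paper gives no argument for this lemma at all and simply points to Section X.4.2 of \citep{hiriart1993convex}, whereas you reconstruct the standard self-contained proof through the inverse-subdifferential relation $\partial f^* = (\partial f)^{-1}$. Your second bullet is clean: strong convexity gives a unique maximizer in the supremum defining $f^*(\bm y)$, hence differentiability of $f^*$, and $\kappa$-strong monotonicity of $\partial f$ plus Cauchy--Schwarz gives the $(1/\kappa)$-Lipschitz gradient. The one step in the first bullet that is more than ``a couple of lines of algebra'' is the finish: passing from monotonicity of $\partial f^* - \tfrac{1}{\nu}\,\mathrm{Id}$ to convexity of $f^* - \tfrac{1}{2\nu}\|\cdot\|_2^2$ is itself a nontrivial theorem when the latter function is not yet known to be convex (identifying that operator with its subdifferential is not automatic), so as written you are invoking the strong-monotonicity/strong-convexity equivalence for proper lsc convex functions rather than proving it. A fully elementary patch is to bypass monotonicity there: for $\bm y_0 = \nabla f(\bm x_0)$, substitute the quadratic upper bound $f(\bm x) \le f(\bm x_0) + \langle \nabla f(\bm x_0), \bm x - \bm x_0\rangle + \tfrac{\nu}{2}\|\bm x - \bm x_0\|_2^2$ into the supremum defining $f^*(\bm y)$ and use Fenchel--Young equality at $\bm x_0$ to get $f^*(\bm y) \ge f^*(\bm y_0) + \langle \bm x_0, \bm y - \bm y_0\rangle + \tfrac{1}{2\nu}\|\bm y - \bm y_0\|_2^2$ for all $\bm y$, which yields convexity of $f^* - \tfrac{1}{2\nu}\|\cdot\|_2^2$ after the routine lsc/density extension over $\mathrm{dom}\, f^*$ that you already flag as bookkeeping. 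In terms of trade-offs, the paper's citation keeps the appendix minimal and defers all such fine print to a standard reference, while your argument makes the lemma self-contained and shows exactly where properness, lower semicontinuity, and the biconjugation result (Lemma~\ref{lem:fenchel-moreau}) enter.
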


\begin{proof}
See Section X.4.2 of \citep{hiriart1993convex} for further explanation.
\end{proof}

\begin{corollary} \label{cor:strong-convexity-smoothness}
Assume that the regularization function $\rho$ in \eqref{eq:primal} is $\kappa$-strongly convex, a condition necessary for applying DRCS. Then, by Lemma \ref{lem:strong-convexity-smoothness}, $\rho^*$ must be $(1/\kappa)$-smooth. This ensures that $\rho^*$ cannot take infinite values in this context.
\end{corollary}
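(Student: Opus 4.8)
The statement is a direct corollary of Lemma~\ref{lem:strong-convexity-smoothness}, so the plan is simply to verify that its hypotheses apply to $\rho$ in the DRCS setting and then read off the two conclusions. First I would observe that, in every problem instance covered by the framework of Section~\ref{sec:problem-settings}, the regularization function $\rho$ is a \emph{finite-valued} convex function on all of $\mathbb{R}^k$ (the working example being $\rho(\bm\beta)=\frac{\lambda}{2}\|\bm\beta\|_2^2$). A finite-valued convex function on $\mathbb{R}^k$ is automatically continuous, hence lower-semicontinuous, and it is trivially proper; together with the assumed $\kappa$-strong convexity, this means the second bullet of Lemma~\ref{lem:strong-convexity-smoothness} applies verbatim and gives that $\rho^*$ is $(1/\kappa)$-smooth, i.e., differentiable with a $(1/\kappa)$-Lipschitz gradient. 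That establishes the smoothness claim.

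It remains to argue that $\rho^*$ is finite everywhere. I would do this by exhibiting an explicit quadratic majorant. Since $\rho$ is finite-valued and convex, $\partial\rho(\bm\beta_0)\neq\emptyset$ for any fixed $\bm\beta_0$; choosing $\bm g\in\partial\rho(\bm\beta_0)$, strong convexity yields $\rho(\bm\beta)\ge \rho(\bm\beta_0)+\bm g^\top(\bm\beta-\bm\beta_0)+\frac{\kappa}{2}\|\bm\beta-\bm\beta_0\|_2^2$ for all $\bm\beta$. Substituting this lower bound into $\rho^*(\bm\alpha)=\sup_{\bm\beta}(\bm\alpha^\top\bm\beta-\rho(\bm\beta))$ and maximizing the resulting concave quadratic in $\bm\beta$ gives $\rho^*(\bm\alpha)\le \frac{1}{2\kappa}\|\bm\alpha-\bm g\|_2^2+\bm\alpha^\top\bm\beta_0-\rho(\bm\beta_0)<\infty$ for every $\bm\alpha\in\mathbb{R}^k$, which is exactly the assertion that $\rho^*$ cannot take the value $+\infty$.

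There is no deep obstacle here; the only point requiring a little care is that the phrase ``$(1/\kappa)$-smooth'' in Lemma~\ref{lem:strong-convexity-smoothness} is a \emph{global} property, so it already presupposes $\mathrm{dom}\,\rho^*=\mathbb{R}^k$ --- the explicit quadratic majorant in the previous paragraph is what makes this rigorous without appealing to conventions about the domains of ``smooth'' functions, and it is also the cleanest self-contained way to see finiteness. One should also keep track that the ambient dimension is $k$ (the dimension of $\bm\beta$) rather than $d$, but that is purely cosmetic.
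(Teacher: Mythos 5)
Your proposal is correct and takes essentially the same route as the paper: check that $\rho$ is proper, lower-semicontinuous, and $\kappa$-strongly convex, then invoke the second bullet of Lemma~\ref{lem:strong-convexity-smoothness} to conclude that $\rho^*$ is $(1/\kappa)$-smooth. Your explicit quadratic majorant $\rho^*(\bm\alpha)\le \frac{1}{2\kappa}\|\bm\alpha-\bm g\|_2^2+\bm\alpha^\top\bm\beta_0-\rho(\bm\beta_0)$ is a correct, self-contained justification of the finiteness claim, which the paper merely asserts as a consequence of smoothness.
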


\begin{lemma}\label{lem:strong-convexity-sphere}
Let $f: \mathbb{R}^d\to\mathbb{R}\cup\{+\infty\}$ be a $\kappa$-strongly convex function, and let $\bm p^* = \targmin_{\bm p\in\mathbb{R}^d} f(\bm p)$ be its minimizer. For any $\bm p\in\mathbb{R}^d$, the following inequality holds:
\begin{align*}
\| \bm p - \bm p^* \|_2 \leq \sqrt{\frac{2}{\kappa}[f(\bm p) - f(\bm p^*)]}.
\end{align*}
\end{lemma}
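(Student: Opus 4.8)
The statement to prove is Lemma~\ref{lem:strong-convexity-sphere}: for a $\kappa$-strongly convex $f$ with minimizer $\bm p^*$, we have $\|\bm p - \bm p^*\|_2 \le \sqrt{(2/\kappa)[f(\bm p) - f(\bm p^*)]}$ for all $\bm p$.

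\textbf{Plan.} The proof proceeds directly from the definition of strong convexity combined with the first-order optimality condition at $\bm p^*$. First I would invoke the standard characterization of $\kappa$-strong convexity: since $f(\bm q) - \frac{\kappa}{2}\|\bm q\|_2^2$ is convex, one obtains for all $\bm p, \bm p^*$ the inequality
\begin{align*}
f(\bm p) \ge f(\bm p^*) + \langle \bm g, \bm p - \bm p^* \rangle + \frac{\kappa}{2}\|\bm p - \bm p^*\|_2^2,
\end{align*}
where $\bm g$ is any subgradient of $f$ at $\bm p^*$ (if $f$ is not differentiable, work with a subgradient; the convexity of $f - \frac{\kappa}{2}\|\cdot\|_2^2$ guarantees the subgradient inequality after adding back the quadratic term and using that the gradient of $\frac{\kappa}{2}\|\bm q\|_2^2$ at $\bm p^*$ is $\kappa \bm p^*$).

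\textbf{Key steps in order.} (1) Because $\bm p^*$ is the minimizer of $f$, the zero vector is a subgradient of $f$ at $\bm p^*$, i.e.\ we may take $\bm g = \bm 0$. (2) Substituting $\bm g = \bm 0$ into the strong-convexity inequality yields $f(\bm p) \ge f(\bm p^*) + \frac{\kappa}{2}\|\bm p - \bm p^*\|_2^2$. (3) Rearranging gives $\|\bm p - \bm p^*\|_2^2 \le \frac{2}{\kappa}[f(\bm p) - f(\bm p^*)]$, and taking square roots (the right-hand side is nonnegative since $\bm p^*$ is the minimizer) completes the proof.

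\textbf{Anticipated obstacle.} There is essentially no deep obstacle here; the only point requiring a little care is the nonsmooth case, where one must justify the subgradient form of the strong-convexity inequality and the fact that optimality of $\bm p^*$ implies $\bm 0$ is a subgradient. This is handled cleanly by writing $h(\bm q) := f(\bm q) - \frac{\kappa}{2}\|\bm q\|_2^2$, noting $h$ is convex, and translating its subgradient inequality at $\bm p^*$ back to $f$; the minimizer condition for $f$ transfers to the statement that $\kappa \bm p^*$ is a subgradient of $h$ at $\bm p^*$, which after rearrangement gives exactly the bound needed. If $f$ is assumed differentiable (as is effectively the case for the regularized objectives considered in the paper), this subtlety disappears entirely.
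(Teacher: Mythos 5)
Your proof is correct and is exactly the standard argument behind this lemma; the paper does not spell it out, deferring instead to \citet{ndiaye2015gap}, where the same combination of the strong-convexity inequality with the optimality condition $\bm 0 \in \partial f(\bm p^*)$ is used. One minor slip in your nonsmooth remark: the subgradient of $h(\bm q) = f(\bm q) - \frac{\kappa}{2}\|\bm q\|_2^2$ at $\bm p^*$ obtained from optimality is $-\kappa \bm p^*$, not $\kappa \bm p^*$, but this does not affect your main steps.
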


\begin{proof}
Refer to \citep{ndiaye2015gap} for a detailed proof.
\end{proof}

\begin{lemma} \label{lem:optimize-linear}
For any vectors $\bm a, \bm c\in\mathbb{R}^n$ and a positive scalar $S > 0$, the following holds:
\begin{align*}
& \min_{\bm p\in\mathbb{R}^n:~\|\bm p - \bm c\|_2\leq S} \bm a^\top \bm p = \bm a^\top \bm c - S\|\bm a\|_2, \\
& \max_{\bm p\in\mathbb{R}^n:~\|\bm p - \bm c\|_2\leq S} \bm a^\top \bm p = \bm a^\top \bm c + S\|\bm a\|_2.
\end{align*}
\end{lemma}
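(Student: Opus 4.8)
The plan is to reduce both optimizations to a single application of the Cauchy--Schwarz inequality after recentering the feasible ball at the origin. First I would substitute $\bm q := \bm p - \bm c$, so the feasible set $\{\bm p \in \mathbb{R}^n : \|\bm p - \bm c\|_2 \le S\}$ becomes the closed ball $\{\bm q \in \mathbb{R}^n : \|\bm q\|_2 \le S\}$, and the objective becomes $\bm a^\top \bm p = \bm a^\top \bm c + \bm a^\top \bm q$. Since $\bm a^\top \bm c$ is a constant independent of the optimization variable, it suffices to compute the minimum and maximum of $\bm a^\top \bm q$ over $\|\bm q\|_2 \le S$ and then add $\bm a^\top \bm c$ back.

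For the maximization, Cauchy--Schwarz gives $\bm a^\top \bm q \le \|\bm a\|_2 \|\bm q\|_2 \le S \|\bm a\|_2$ for every feasible $\bm q$, so $S\|\bm a\|_2$ is an upper bound on the value. To show it is attained, I would exhibit an explicit maximizer: if $\bm a \ne \bm 0$, take $\bm q^\star = S \bm a / \|\bm a\|_2$, which is feasible since $\|\bm q^\star\|_2 = S$, and which satisfies $\bm a^\top \bm q^\star = S \|\bm a\|_2$; if $\bm a = \bm 0$ the identity is trivial, since then both sides equal $0$ and $\bm a^\top \bm c = 0$ as well. The minimization follows by symmetry: applying the maximization result to $-\bm a$ (or equivalently taking $\bm q^\star = -S\bm a/\|\bm a\|_2$ when $\bm a \ne \bm 0$) yields $\min_{\|\bm q\|_2 \le S} \bm a^\top \bm q = -S\|\bm a\|_2$. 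Re-adding the constant $\bm a^\top \bm c$ then gives the two stated identities.

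I do not expect a genuine obstacle in this argument; it is routine. The only points that warrant an explicit sentence are the degenerate case $\bm a = \bm 0$, where the normalized direction $\bm a/\|\bm a\|_2$ is undefined and must be handled separately, and the verification that the proposed extremal points $\pm S\bm a/\|\bm a\|_2$ actually satisfy the constraint — both of which are immediate.
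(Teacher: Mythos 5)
Your proof is correct and follows essentially the same route as the paper's: a Cauchy--Schwarz bound on the linear objective over the ball, made tight by exhibiting the extremal point $\pm S\bm a/\|\bm a\|_2$ (your recentering via $\bm q = \bm p - \bm c$ is only a cosmetic difference). Your explicit treatment of the degenerate case $\bm a = \bm 0$, which the paper's stated maximizers silently leave undefined, is a small but welcome addition.
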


\begin{proof}
Using the Cauchy-Schwarz inequality, we derive:
\begin{align*}
& -\|\bm a\|_2 \|\bm p - \bm c\|_2 \leq \bm a^\top (\bm p - \bm c) \leq \|\bm a\|_2 \|\bm p - \bm c\|_2.
\end{align*}
The first inequality becomes an equality if $\exists\omega>0:~\bm a = -\omega(\bm p - \bm c)$, and the second inequality becomes an equality if $\exists\omega^\prime>0:~\bm a = \omega^\prime(\bm p - \bm c)$. 

Since $\|\bm p - \bm c\|_2\leq S$, we also have:
\begin{align*}
& - S \|\bm a\|_2 \leq \bm a^\top (\bm p - \bm c) \leq S \|\bm a\|_2,
\end{align*}
with equality when $\|\bm p - \bm c\|_2 = S$.

The optimal $\bm p$ satisfying these conditions is:
\begin{itemize}
\item $\bm p = \bm c - (S/\|\bm a\|_2)\bm a$ for the minimum, and
\item $\bm p = \bm c + (S/\|\bm a\|_2)\bm a$ for the maximum.
\end{itemize}
Thus, the minimum and maximum values of $\bm a^\top (\bm p - \bm c)$ are $- S \|\bm a\|_2$ and $S \|\bm a\|_2$, respectively, completing the proof.
\end{proof}

\section{Proofs of Definition~\ref{def:dual}}

\subsection{Derivation of Dual Problem by Fenchel's Duality Theorem} \label{app:fenchel}

We follow the formulation of Fenchel's duality theorem as provided in Section 31 of \cite{rockafellar1970convex}.

\begin{lemma}[A specific form of Fenchel's duality theorem: $f, g<+\infty$] \label{lm:Fenchel-duality}
Let $f: \mathbb{R}^n\to\mathbb{R}$ and $g: \mathbb{R}^d\to\mathbb{R}$ be convex functions, and let $A\in\mathbb{R}^{n\times d}$ be a matrix. Define
\begin{align}
& \bm p^* := \min_{\bm p\in\mathbb{R}^d} [f(A\bm p) + g(\bm p)], \label{eq:FD-primal} \\
& \bm u^* := \max_{\bm u\in\mathbb{R}^n} [-f^*(-\bm u) - g^*(A^\top \bm u)]. \label{eq:FD-dual}
\end{align}
According to Fenchel's duality theorem, the following equalities and conditions hold:
\begin{align*}
& f(A\bm p^*) + g(\bm p^*) = -f^*(-\bm u^*) - g^*(A^\top \bm u^*), \\
& -\bm u^* \in \partial f(A \bm p^*), \\
& \bm p^* \in \partial g^*(A^\top \bm u^*).
\end{align*}
\end{lemma}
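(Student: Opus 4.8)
The plan is to decompose the statement into three parts and prove each in increasing order of difficulty: (i) \emph{weak duality}, that the dual objective never exceeds the primal objective; (ii) \emph{strong duality with attainment}, that the two optimal values coincide and are achieved at the asserted $\bm p^*,\bm u^*$ (which I read as a primal minimizer and a dual maximizer, the notation $:=\min/\max$ being a mild abuse); and (iii) the two subgradient optimality conditions. Parts (i) and (iii) are elementary consequences of the Fenchel--Young inequality and its equality case; only (ii) needs the deeper convex-analytic machinery, which I would import from \citep{rockafellar1970convex} rather than reprove. For weak duality, fix arbitrary $\bm p\in\RR^d$ and $\bm u\in\RR^n$. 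By the definition of the conjugate, $f^*(-\bm u)\ge\langle-\bm u,A\bm p\rangle-f(A\bm p)$ and $g^*(A^\top\bm u)\ge\langle A^\top\bm u,\bm p\rangle-g(\bm p)$. Rearranging and adding, the cross terms $\pm\langle\bm u,A\bm p\rangle$ cancel, giving
\[
f(A\bm p)+g(\bm p)\ \ge\ -f^*(-\bm u)-g^*(A^\top\bm u),
\]
so the primal infimum dominates the dual supremum.

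Next I would upgrade weak duality to equality with attainment by invoking the Fenchel--Rockafellar duality theorem (Section~31 of \citep{rockafellar1970convex}). Beyond convexity, that theorem requires only a constraint qualification: the existence of a point of $\mathrm{ri}(\mathrm{dom}\,g)$ whose image under $A$ lies in $\mathrm{ri}(\mathrm{dom}\,f)$. This is exactly where the finiteness hypothesis $f,g<+\infty$ advertised in the lemma's name is used: since $f$ and $g$ are finite on all of $\RR^n$ and $\RR^d$, their effective domains are the whole spaces, their relative interiors are the whole spaces, and the qualification is satisfied by every point. Finiteness additionally forces $f$ and $g$ to be closed (they are continuous convex functions on a finite-dimensional space), which I will need for biconjugacy and for the conjugate--subgradient equivalence below. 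The theorem then yields $f(A\bm p^*)+g(\bm p^*)=-f^*(-\bm u^*)-g^*(A^\top\bm u^*)$, with both extrema attained.

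For the optimality conditions I would exploit the equality case. Because the optimal primal and dual values are equal, the two weak-duality inequalities, evaluated at $(\bm p^*,\bm u^*)$, must each hold with equality (their sum is an equality and each summand is individually $\ge$). Equality in $f(A\bm p^*)+f^*(-\bm u^*)=\langle-\bm u^*,A\bm p^*\rangle$ is precisely the equality case of the Fenchel--Young inequality, which is equivalent to $-\bm u^*\in\partial f(A\bm p^*)$. Likewise, equality in the $g$-inequality gives $A^\top\bm u^*\in\partial g(\bm p^*)$, and by the conjugate--subgradient relation (valid since $g$ is closed proper convex) this is equivalent to $\bm p^*\in\partial g^*(A^\top\bm u^*)$, which is the last stated condition.

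I expect the genuine content to lie entirely in strong duality. Weak duality and the subgradient conditions are mechanical once the Fenchel--Young inequality and its equality characterization are in hand, whereas the equality of optimal values is a separation/constraint-qualification result whose complete proof rests on the relative-interior theory of convex sets; I would delegate it to \citep{rockafellar1970convex}. The two points demanding care are verifying that the finiteness hypothesis really does discharge the constraint qualification, and keeping track of the fact that $f^*$ and $g^*$ may take the value $+\infty$ even though $f$ and $g$ do not, so that attainment of the dual supremum is a nontrivial conclusion that the cited theorem, rather than weak duality, supplies.
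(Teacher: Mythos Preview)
Your proposal is correct and complete, but it follows a different route from the paper's argument. The paper proceeds via a Lagrangian reformulation: it introduces a dummy variable $\bm\psi$ with the constraint $A\bm p=\bm\psi$, attaches a multiplier $\bm u$, swaps the order of $\min$ and $\max$, and then computes the inner minimizations explicitly to obtain the conjugates $f^*(-\bm u)$ and $g^*(A^\top\bm u)$; the optimality conditions $-\bm u^*\in\partial f(A\bm p^*)$ and $\bm p^*\in\partial g^*(A^\top\bm u^*)$ are then read off as KKT/stationarity conditions of the Lagrangian (the latter after redoing the same construction starting from the dual side and invoking Lemma~\ref{lem:fenchel-moreau} for biconjugacy). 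By contrast, you bypass the Lagrangian entirely: weak duality comes from adding two Fenchel--Young inequalities, strong duality is delegated to Rockafellar with an explicit check that the finiteness hypothesis discharges the relative-interior constraint qualification, and the subgradient conditions follow from the equality case of Fenchel--Young. Your route is arguably cleaner in that it isolates exactly where the finiteness assumption is consumed (the paper's min--max swap relies on it implicitly), while the paper's Lagrangian derivation has the advantage of being constructive: it shows how the dual form in \eqref{eq:FD-dual} arises rather than positing it and verifying equality after the fact.
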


\begin{proof}
Introducing a dummy variable $\bm\psi\in\mathbb{R}^n$ and a Lagrange multiplier $\bm u\in\mathbb{R}^n$, the problem can be rewritten as:
\begin{align}
& \min_{\bm p\in\mathbb{R}^d} [f(A\bm p) + g(\bm p)]
	= \max_{\bm u\in\mathbb{R}^n} \min_{\bm p\in\mathbb{R}^d,~\bm\psi\in\mathbb{R}^n} [f(\bm\psi) + g(\bm p) - \bm u^\top(A\bm p - \bm\psi)] \label{eq:Fenchel-Lagrange}\\
& = -\min_{\bm u\in\mathbb{R}^n} \max_{\bm p\in\mathbb{R}^d,~\bm\psi\in\mathbb{R}^n} [-f(\bm\psi) - g(\bm p) + \bm u^\top(A\bm p - \bm\psi)] \nonumber\\
& = -\min_{\bm u\in\mathbb{R}^n} \max_{\bm p\in\mathbb{R}^d,~\bm\psi\in\mathbb{R}^n} [\{(-\bm u)^\top \bm\psi - f(\bm\psi)\} + \{(A^\top \bm u)^\top \bm p - g(\bm p)\}] \nonumber\\
& = -\min_{\bm u\in\mathbb{R}^n} [f^*(-\bm u) + g^*(A^\top \bm u)]
	= \max_{\bm u\in\mathbb{R}^n} [-f^*(-\bm u) - g^*(A^\top \bm u)]. \label{eq:Fenchel-dual}
\end{align}
The optimal solutions $\bm p^*$, $\bm\psi^*$, and $\bm u^*$ must satisfy the following conditions based on the optimality criteria:
\begin{align*}
A \bm p^* = \bm\psi^*,
\quad
A^\top \bm u^* \in \partial g(\bm p^*),
\quad
-\bm u^* \in \partial f(\bm\psi^*) = \partial f(A \bm p^*).
\end{align*}

Similarly, introducing a dummy variable $\bm\phi\in\mathbb{R}^d$ and a Lagrange multiplier $\bm p\in\mathbb{R}^d$, the dual problem can be reformulated as:
\begin{align}
& \max_{\bm u\in\mathbb{R}^n} [-f^*(-\bm u) - g^*(A^\top \bm u)]
	= \min_{\bm p\in\mathbb{R}^d} \max_{\bm u\in\mathbb{R}^n, \bm\phi\in\mathbb{R}^d} [-f^*(-\bm u) - g^*(\bm\phi) - \bm p^\top(A^\top \bm u - \bm\phi)] \label{eq:Fenchel-dual-Lagrange}\\
& = \min_{\bm p\in\mathbb{R}^d} \max_{\bm u\in\mathbb{R}^n, \bm\phi\in\mathbb{R}^d} [\{(A\bm p)^\top(-\bm u) - f^*(-\bm u)\} + \{\bm p^\top \bm\phi - g^*(\bm\phi)\}] \nonumber\\
& = \min_{\bm p\in\mathbb{R}^d} [f^{**}(A\bm p) + g^{**}(\bm p)]
	= \min_{\bm p\in\mathbb{R}^d} [f(A\bm p) + g(\bm p)], \quad (\because~\text{Lemma \ref{lem:fenchel-moreau}}) \nonumber
\end{align}
The optimal solutions $\bm u^*$, $\bm\phi^*$, and $\bm p^*$ for the dual problem satisfy:
\begin{align*}
A^\top \bm u^* = \bm\phi^*,
\quad
\bm p^* \in \partial g^*(\bm\phi^*) = \partial g^*(A^\top \bm u^*),
\quad
A \bm p^* \in \partial f(-\bm u^*).
\end{align*}
\end{proof}

\begin{lemma}[Dual problem of weighted regularized empirical risk minimization] \label{lm:dual-WRERM}
	Let us consider linear predictions including the kernel method. For the minimization problem
	\begin{align*}
	& \bm\beta_{\bm v, \bm w}^* := \argmin_{\bm\beta\in\mathbb{R}^k} P_{\bm v, \bm w}(\bm\beta), \nonumber
		\quad
		\text{where}
		\quad
		P_{\bm v, \bm w}(\bm\beta) = \frac{1}{\sum_{i\in[n]} v_i w_i}\sum_{i\in[n]} v_i w_i \ell(y_i, \bm{\beta}^\top \bm\phi(\bm{x}_i)) + \rho(\bm\beta).
	\end{align*}
	The corresponding dual problem, obtained by applying Fenchel's duality theorem (Lemma \ref{lm:Fenchel-duality}), is given by
	\begin{gather}
		\bm\alpha_{\bm v, \bm w}^* := \argmax_{\bm\alpha\in\mathbb{R}^n} D_{\bm w}(\bm\alpha), \nonumber \\
		\text{where} \quad
		D_{\bm v,\bm w}(\bm\alpha) = - \frac{1}{\sum_{i\in[n]} v_i w_i}\sum_{i\in[n]} v_i w_i \ell^*(-\alpha_i) - \rho^*\left(\frac{1}{\sum_{i\in[n]} v_i w_i}(\operatorname{diag}(\bm v\otimes\bm w \otimes \bm y) \Phi)^\top\bm\alpha \right).
		\tag{(\ref{eq:dual}) restated}
	\end{gather}
	Here, let us denote $\Phi:=\left[\bm\phi(\bm x_1)\quad\bm\phi(\bm x_2) \ldots \bm\phi(\bm x_n)\right]^{\top} \in \RR^{n\times k}$.
	The primal and dual solutions, $\bm\beta_{\bm v, \bm w}^*$ and $\bm\alpha_{\bm v, \bm w}^*$, satisfy the following conditions:
	\begin{align*}
	& P_{\bm v, \bm w}(\bm\beta_{\bm v, \bm w}^*) = D_{\bm v, \bm w}(\bm\alpha_{\bm v, \bm w}^*), \\
	& \bm\beta_{\bm v, \bm w}^* \in \partial\rho^*((\operatorname{diag}(\bm v \otimes \bm w \otimes \bm y)\Phi)^\top \bm\alpha_{\bm v, \bm w}^*), \\
	& \forall i\in[n]:\quad -\alpha^*_{\bm v, \bm w, i} \in \partial\ell(y_i, \bm\beta_{\bm v, \bm w}^{* \top} \bm\phi(\bm{x}_i)).
	\end{align*}
	\end{lemma}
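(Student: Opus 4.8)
\textbf{Proof plan for Lemma~\ref{lm:dual-WRERM}.}
The plan is to apply Fenchel's duality theorem (Lemma~\ref{lm:Fenchel-duality}) with an appropriate choice of the functions $f$, $g$ and the matrix $A$, matching the structure of the primal problem $P_{\bm v, \bm w}$. First I would identify $A$ as the design matrix $\Phi \in \RR^{n\times k}$, so that $A\bm\beta = \Phi\bm\beta = [\bm\beta^\top\bm\phi(\bm x_1), \ldots, \bm\beta^\top\bm\phi(\bm x_n)]^\top$ gives the vector of prediction scores. I would then set $g(\bm\beta) := \rho(\bm\beta)$, which is convex (and finite-valued) by the assumptions on the regularizer, and set $f: \RR^n \to \RR$ to be the weighted loss term
\begin{align*}
f(\bm\psi) := \frac{1}{\sum_{i\in[n]} v_i w_i}\sum_{i\in[n]} v_i w_i \ell(y_i, \psi_i),
\end{align*}
which is convex in $\bm\psi$ since each $\ell(y_i, \cdot)$ is convex by assumption. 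With these choices, the primal $\min_{\bm\beta}[f(\Phi\bm\beta) + g(\bm\beta)]$ is exactly $\min_{\bm\beta} P_{\bm v, \bm w}(\bm\beta)$.

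The key computational step is to work out the conjugate $f^*$. Since $f$ is a weighted separable sum composed with the fixed labels $y_i$, I would compute $f^*(-\bm u)$ by a change of variables: writing $c := \sum_{i\in[n]} v_i w_i$, one has $f(\bm\psi) = \frac1c\sum_i v_i w_i \ell(y_i,\psi_i)$, and a standard calculation (scaling and the fact that $\ell(y_i,\cdot)$ for $y_i\in\{\pm1\}$ relates to $\ell^*$ with a sign flip on the argument) gives $f^*(-\bm u) = \frac1c \sum_i v_i w_i \ell^*(-\,c\,u_i / (v_i w_i) \cdot \text{(sign/label factor)})$. To land precisely on the stated form $D_{\bm v,\bm w}(\bm\alpha) = -\frac1c\sum_i v_i w_i \ell^*(-\alpha_i) - \rho^*\!\left(\frac1c(\operatorname{diag}(\bm v\otimes\bm w\otimes\bm y)\Phi)^\top\bm\alpha\right)$, I would substitute the dual variable $\bm u$ of Lemma~\ref{lm:Fenchel-duality} by $\bm\alpha$ via $\alpha_i = $ (the appropriate rescaling of $u_i$ absorbing $v_i w_i y_i/c$), so that $A^\top \bm u = \Phi^\top\bm u$ becomes $\frac1c(\operatorname{diag}(\bm v\otimes\bm w\otimes\bm y)\Phi)^\top\bm\alpha$ and the loss conjugate term becomes $-\frac1c\sum_i v_i w_i\ell^*(-\alpha_i)$. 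The strong duality equality $P_{\bm v,\bm w}(\bm\beta^*_{\bm v,\bm w}) = D_{\bm v,\bm w}(\bm\alpha^*_{\bm v,\bm w})$ then follows directly from the corresponding equality in Lemma~\ref{lm:Fenchel-duality}, and the two inclusion conditions $\bm\beta^*_{\bm v,\bm w}\in\partial\rho^*(\cdot)$ and $-\alpha^*_{\bm v,\bm w,i}\in\partial\ell(y_i,\cdot)$ come from translating the subdifferential conditions $\bm p^*\in\partial g^*(A^\top\bm u^*)$ and $-\bm u^*\in\partial f(A\bm p^*)$ through the same change of variables (the separability of $f$ makes the second condition decompose coordinatewise).

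For the special case $\rho(\bm\beta) = \frac{\lambda}{2}\|\bm\beta\|_2^2$, I would simply note $\rho^*(\bm\theta) = \frac{1}{2\lambda}\|\bm\theta\|_2^2$ and substitute, giving the quadratic form stated in Definition~\ref{def:dual}; this also verifies Corollary~\ref{cor:strong-convexity-smoothness} concretely. The main obstacle I anticipate is purely bookkeeping: getting every $v_i w_i$, every $1/c$ factor, and every label sign $y_i$ to appear in exactly the right place when passing between the generic $(f,g,A,\bm u)$ notation of Lemma~\ref{lm:Fenchel-duality} and the problem-specific $(\ell,\rho,\Phi,\bm\alpha)$ notation — in particular, being careful that $\ell^*$ is the conjugate of $\ell(y_i,\cdot)$ in its second argument and that the sign conventions ($-\alpha_i$ versus $+\alpha_i$) are consistent with the subgradient condition $-\alpha^*_{\bm v,\bm w,i}\in\partial\ell(y_i,\cdot)$. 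There are no conceptual difficulties beyond correctly invoking Fenchel--Moreau (Lemma~\ref{lem:fenchel-moreau}) to guarantee $f^{**}=f$ and $g^{**}=g$, which is licensed by convexity and finiteness of both $f$ and $g$.
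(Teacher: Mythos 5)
Your plan is correct and essentially the paper's own proof: both invoke Fenchel's duality theorem with $g=\rho$, $f$ equal to the weighted loss sum, and a rescaling change of the dual variable to $\bm\alpha$; the paper simply absorbs the labels into the matrix, taking $A=\operatorname{diag}(\bm y)\Phi$ with the loss viewed as a function of the margin, so its substitution is $\bm u \leftarrow \tfrac{1}{\sum_i v_i w_i}(\bm v\otimes\bm w\otimes\bm\alpha)$, whereas you keep $A=\Phi$ and absorb $y_i$ into the substitution --- a purely notational difference. The one point to fix in your write-up is that the $\ell^*(-\alpha_i)$ appearing in the stated dual is the conjugate of the single-argument margin loss (labels already absorbed, using $y_i^2=1$), which is precisely the sign/label bookkeeping you flagged as the main obstacle.
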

	
	\begin{proof}
	To apply Fenchel's duality theorem, we set $f$, $g$, and $A$ in Lemma \ref{lm:Fenchel-duality} as:
	\begin{align*}
	f(\bm u) := \frac{1}{\sum_{i\in[n]} v_i w_i} \sum_{i \in\left[n\right]} v_i w_i \ell(u_i),
	\quad
	g(\bm\beta) := \rho(\bm\beta),
	\quad
	A := \operatorname{diag}(\bm y)\Phi.
	\end{align*}
	The conjugate function of $f$ is computed as:
	\begin{align*}
	& f^*(\bm u)
		= \sup_{\bm u^\prime\in\mathbb{R}^n} \left[\bm u^\top \bm u^\prime - \frac{1}{E} \sum_{i \in\left[n\right]} v_i w_i \ell(u^\prime_i)\right]
		= \frac{1}{E} \sum_{i \in\left[n\right]} v_i w_i \ell^*\left(\frac{u_i}{v_i w_i} {E} \right),
	\end{align*}
	where let us denote $E = \sum_{i\in[n]} v_i w_i$.
	Thus, the dual objective from \eqref{eq:FD-dual} becomes:
	\begin{align*}
	-f^*(-\bm u) - g^*(A^\top \bm u)
	= - \frac{1}{E} \sum_{i \in\left[n\right]} v_i w_i \ell^*\left(- \frac{u_i}{v_i w_i} {E} \right) - \rho^*((\operatorname{diag}(\bm y)\Phi)^\top \bm u).
	\end{align*}
	Rewriting $u_i \leftarrow \frac{1}{E} v_i w_i \alpha_i$, or equivalently $\bm u \leftarrow \frac{1}{E} (\bm v \otimes \bm w \otimes \bm\alpha)$, we obtain the dual problem \eqref{eq:dual-general}.
	
	The relationships between the primal and dual problems can be expressed as:
	\begin{align*}
	& -\bm u^* \in \partial f(A \bm p^*)
		~\Rightarrow~
		-\frac{1}{E}(\bm v \otimes \bm w \otimes \bm\alpha_{\bm v, \bm w}^*) \in \partial f(\operatorname{diag}(\bm y)\Phi \bm\beta_{\bm v, \bm w}^*)\\
		& \Rightarrow
		-\frac{1}{E}v_i w_i \alpha^*_{\bm v, \bm w, i} \in \frac{1}{E}v_i w_i \partial\ell(y_i, \bm\beta_{\bm v, \bm w}^{* \top} \bm\phi(\bm{x}_i))
		\Rightarrow -\alpha^*_{\bm v, \bm w, i} \in \partial\ell(y_i, \bm\beta_{\bm v, \bm w}^{* \top} \bm\phi(\bm{x}_i)), \\
		& \bm p^* \in \partial g^*(A^\top \bm u^*)
		~\Rightarrow~
		\bm\beta_{\bm v, \bm w}^* \in \partial g^*((\operatorname{diag}(\bm y)\Phi)^\top (\bm v \otimes \bm w \otimes \bm\alpha_{\bm v, \bm w}^*))
		= \partial\rho^*((\operatorname{diag}(\bm v \otimes \bm w \otimes \bm y)\Phi)^\top \bm\alpha_{\bm v, \bm w}^*).
	\end{align*}
	\end{proof}
\section{Proofs and additional discussions of Section~\ref{sec:method}}

\subsection{Proof of Theorem \ref{theo:main}} \label{app:main-proof}

In this appendix, we provide the complete proof of theorem \ref{theo:main}.

\begin{theorem}
	{Assume that $\rho$ in the primal objective function $P_{\bm 1_n, \bm 1_n}$ is $\mu$-strongly convex with respect to $\bm \beta$.}
	Let us denote the optimal primal and dual solutions for the entire training set (i.e., $v_i = 1 ~ \forall i \in [n]$) with uniform weights (i.e., $w_i = 1 ~ \forall i \in [n]$) as
	\begin{align*}
	 \bm \beta^*_{\bm 1_n, \bm 1_n} = \argmin_{\bm \beta \in \RR^k} P_{\bm 1_n, \bm 1_n}(\bm \beta)
	 \quad\text{and}\quad
	 \bm \alpha^*_{\bm 1_n, \bm 1_n} = \argmax_{\bm \alpha \in \RR^n} D_{\bm 1_n, \bm 1_n}(\bm \alpha),
	\end{align*}
	respectively.
	Then, an upper bound of the worst-case weighted validation error is written as 
	\begin{align}
	 {\rm WrVaEr}(\bm v)
	 \le
	 {\rm WrVaEr}^{\rm UB}(\bm v)
	 =
	 1
	 -
	 \left(
	 \bm 1_{n^\prime}^\top \bm \zeta(\bm v)
	 -
	 Q
	 \sqrt{
	 \|\bm \zeta(\bm v)\|_2^2
	 -
	 \frac{
	 (\bm 1_{n^\prime}^\top \bm \zeta(\bm v))^2
	 }{
	 n^\prime
	 }
	 }
	 \right)
	 \frac{1}{n^\prime},
	 \tag{(\ref{eq:main-bound}) restated}
	\end{align}
	where,
	\begin{align}
	 & \bm \zeta(\bm v) \in \{0, 1\}^{n^\prime};
	 \quad
	 \zeta_i(\bm v)
	 =
	 I\left\{
	 y_i^\prime \bm \beta_{\bm 1_n, \bm 1_n}^{*\top} \bm \phi(\bm x_i^\prime) - \|\bm \phi(\bm x_i^\prime)\|_2 \sqrt{\frac{2}{\lambda} \max_{\bm w \in \cW} {\rm DG}(\bm v, \bm w)} > 0
	 \right\},
	 \tag{(\ref{eq:zeta}) restated}\\
	%
	& {\rm DG}(\bm v, \bm w) := P_{\bm v, \bm w}(\bm \beta^*_{\bm 1_n, \bm 1_n}) - D_{\bm v, \bm w}({\bm \alpha}_{\bm 1_n, \bm 1_n}^{*}).
	\tag{(\ref{eq:duality_gap}) restated}
 \end{align}
 Here, the quantity ${\rm DG}$ is called the {\em duality gap} and it plays a main role of the upper bound of the validation error.
 
 Especially, if we use L2-regularization $\rho(\beta) := \frac{\lambda}{2}\|\bm\beta\|_2^2$, ${\rm DG}(\bm v, \bm w)$, the maximization $\max_{\bm w \in \cW} {\rm DG}(\bm v, \bm w)$ can be algorithmically computed.
 \end{theorem}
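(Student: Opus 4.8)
The plan is to follow the five-step proof sketch already given, assembling the bound from three ingredients: a parameter-space certificate, a per-instance classification certificate, and a worst-case maximization over the two weight vectors. First I would invoke the strong-convexity machinery: since $P_{\bm 1_n,\bm 1_n}$ is $\mu$-strongly convex in $\bm\beta$, Lemma~\ref{lem:strong-convexity-sphere} applied to $P_{\bm v,\bm w}$ (which is also strongly convex, with parameter at least $\lambda$ when $\rho(\bm\beta)=\frac{\lambda}{2}\|\bm\beta\|_2^2$) gives, for \emph{any} feasible point $\bm\beta$,
\begin{align*}
\|\bm\beta - \bm\beta^*_{\bm v,\bm w}\|_2 \le \sqrt{\tfrac{2}{\lambda}\big[P_{\bm v,\bm w}(\bm\beta) - P_{\bm v,\bm w}(\bm\beta^*_{\bm v,\bm w})\big]}.
\end{align*}
Taking $\bm\beta = \bm\beta^*_{\bm 1_n,\bm 1_n}$ and using weak duality $P_{\bm v,\bm w}(\bm\beta^*_{\bm v,\bm w}) \ge D_{\bm v,\bm w}(\bm\alpha^*_{\bm 1_n,\bm 1_n})$ (from Lemma~\ref{lm:dual-WRERM}) replaces the unknown optimum by the computable duality gap ${\rm DG}(\bm v,\bm w)$, yielding the ball ${\cal B}^*_{\bm v,\bm w}$ of radius $R=\sqrt{\frac{2}{\lambda}{\rm DG}(\bm v,\bm w)}$ centered at $\bm\beta^*_{\bm 1_n,\bm 1_n}$ that is guaranteed to contain $\bm\beta^*_{\bm v,\bm w}$.

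Next I would convert the parameter ball into a per-validation-instance classification certificate. For validation point $(\bm x'_i, y'_i)$, the prediction $y'_i f(\bm x'_i;\bm\beta) = y'_i \bm\beta^\top\bm\phi(\bm x'_i)$ is an affine function of $\bm\beta$, so its minimum over $\bm\beta\in{\cal B}^*_{\bm v,\bm w}$ is, by Lemma~\ref{lem:optimize-linear}, equal to $y'_i \bm\beta^{*\top}_{\bm 1_n,\bm 1_n}\bm\phi(\bm x'_i) - R\|\bm\phi(\bm x'_i)\|_2$. If this quantity is positive then instance $i$ is \emph{guaranteed correctly classified} by $\bm\beta^*_{\bm v,\bm w}$; this is exactly the indicator $\zeta_i(\bm v)$, once $R$ is replaced by its worst case over $\bm w\in\cW$, i.e. $\max_{\bm w\in\cW}{\rm DG}(\bm v,\bm w)$ inside the square root. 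Thus $\sum_i w'_i I\{y'_i \ne {\rm sgn}(f(\bm x'_i;\bm\beta^*_{\bm v,\bm w}))\} \le \sum_i w'_i (1 - \zeta_i(\bm v))$ for every $\bm w'$, giving ${\rm VaEr}(\bm v,\bm w') \le 1 - \frac{\bm w'^\top\bm\zeta(\bm v)}{\bm 1_{n'}^\top \bm w'}$.

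Then comes the outer maximization over $\bm w'\in\cW'$. With $\bm w'$ confined to the sphere $\|\bm w'-\bm 1_{n'}\|_2\le Q$, I would write $\bm 1_{n'}^\top\bm w' = n' + \bm 1_{n'}^\top(\bm w'-\bm 1_{n'})$ and similarly expand $\bm w'^\top\bm\zeta(\bm v)$, then minimize $\bm w'^\top\bm\zeta(\bm v)$ subject to the sphere constraint \emph{and} to $\bm 1_{n'}^\top\bm w'$ being fixed — this is a linearly-constrained quadratic-region problem whose solution follows by projecting $\bm\zeta(\bm v)$ orthogonal to $\bm 1_{n'}$ and applying Lemma~\ref{lem:optimize-linear} on that subspace; this is where the term $\sqrt{\|\bm\zeta(\bm v)\|_2^2 - (\bm 1_{n'}^\top\bm\zeta(\bm v))^2/n'}$ (the norm of the component of $\bm\zeta$ orthogonal to $\bm 1_{n'}$) appears. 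Maximizing the ratio over the remaining one-dimensional freedom in $\bm 1_{n'}^\top\bm w'$ and simplifying produces \eqref{eq:main-bound}. Finally, for the last claim, with $\rho = \frac{\lambda}{2}\|\cdot\|_2^2$ the conjugate $\rho^*$ is also a squared norm, so ${\rm DG}(\bm v,\bm w)$ becomes a quadratic in $\bm v\otimes\bm w$; expanding it yields the $A$, $\bm b$, $c$ of \eqref{eq:maximum-gap}, and maximizing a (convex) quadratic over the sphere $\cW$ reduces to a trust-region / eigenvalue subproblem, hence is algorithmically computable.

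The main obstacle I expect is the constrained maximization over $\bm w'$: handling the \emph{ratio} $\bm w'^\top\bm\zeta/(\bm 1_{n'}^\top\bm w')$ over a sphere not centered at the origin requires the decomposition of $\bm\zeta(\bm v)$ into components parallel and orthogonal to $\bm 1_{n'}$ and a careful optimization that keeps track of feasibility; getting the algebra to collapse cleanly into the stated closed form is the delicate part. A secondary subtlety is verifying that weak duality suffices (so that we never need the true optimum $\bm\beta^*_{\bm v,\bm w}$, only the bound via ${\rm DG}$) and that the strong-convexity constant carried through is $\lambda$ rather than $\mu$ in the $L_2$ case — i.e. checking the assumption transfers correctly from $P_{\bm 1_n,\bm 1_n}$ to $P_{\bm v,\bm w}$.
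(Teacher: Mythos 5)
Your proposal is correct and follows essentially the same route as the paper's proof: the duality-gap ball around $\bm\beta^*_{\bm 1_n,\bm 1_n}$ (which the paper imports from the DRSSS reference but you re-derive directly from Lemma~\ref{lem:strong-convexity-sphere} plus weak duality), the per-instance screening of validation points via Lemma~\ref{lem:optimize-linear}, monotonicity in $R$ to justify plugging in $\max_{\bm w\in\cW}{\rm DG}(\bm v,\bm w)$, the constrained minimization of $\bm w'^\top\bm\zeta$ over the sphere with fixed weight sum (your projection-onto-$\bm 1_{n'}^\perp$ argument is equivalent to the paper's Lagrange-multiplier computation in Appendix~\ref{app:validation-weight-max}), and the quadratic/eigenvalue reduction for the $L_2$ case. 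The only nitpick is that with $\bm 1_{n'}^\top\bm w'=n'$ fixed there is no residual one-dimensional freedom to optimize over, but this does not affect the result.
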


 This can be proved as follows.

 First, we derive the bound of model parameters.

\begin{lemma}
Without loss of generality, we assume that the last ($n^{\mathrm{old}} - n^{\mathrm{new}}$) instances are removed ($n^{\mathrm{new}} < n^{\mathrm{old}}$, and $\bm{x}_{i:}^{\mathrm{old}} = \bm{x}_{i:}^{\mathrm{new}} \quad \forall i \in [n^{\mathrm{new}}]$). Let $P_{\boldsymbol{w}}$ be $\mu$-strongly convex, and suppose $\bm \beta_{\bm 1_n, \bm 1_n}^{*} \in \RR^{k}$ and $\bm \alpha_{\bm 1_n, \bm 1_n}^{*} \in \mathbb{R}^n$ are given.
Then, we can assure that the following ${\cal B}_{\bm{v}, \bm{w}} \subset \RR^k$ must satisfy $\bm \beta_{\bm v, \bm w}^{*} \in {\cal B}_{\bm{v}, \bm{w}}$:
\begin{equation}
	\label{eq:bound}
	\begin{aligned}
		{\cal B}_{\bm{v}, \bm{w}} :=\left\{\bm{\beta} \in \RR^k \mid\|\bm{\beta}-\bm \beta_{\bm 1_n, \bm 1_n}^{*}\|_2 \leq R := \sqrt{\frac{2}{\lambda}{\rm DG}(\bm v, \bm w)} \right\}.
	\end{aligned}
\end{equation}
\end{lemma}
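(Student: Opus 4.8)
The plan is to combine two ingredients already available in the excerpt: the strong-convexity sphere bound (Lemma~\ref{lem:strong-convexity-sphere}) and the weak-duality / duality-gap machinery coming from Fenchel duality (Lemma~\ref{lm:dual-WRERM}). First I would observe that, by hypothesis, $P_{\bm v,\bm w}$ is $\lambda$-strongly convex in $\bm\beta$ (with L2 regularization $\rho(\bm\beta)=\frac{\lambda}{2}\|\bm\beta\|_2^2$, which contributes exactly a $\lambda$-strongly convex term, and the weighted loss sum is convex), so its unique minimizer is $\bm\beta^*_{\bm v,\bm w}$. Applying Lemma~\ref{lem:strong-convexity-sphere} with $f=P_{\bm v,\bm w}$, $\bm p^*=\bm\beta^*_{\bm v,\bm w}$ and the test point $\bm p=\bm\beta^*_{\bm 1_n,\bm 1_n}$ gives
\begin{align*}
\|\bm\beta^*_{\bm 1_n,\bm 1_n}-\bm\beta^*_{\bm v,\bm w}\|_2
\le\sqrt{\frac{2}{\lambda}\bigl[P_{\bm v,\bm w}(\bm\beta^*_{\bm 1_n,\bm 1_n})-P_{\bm v,\bm w}(\bm\beta^*_{\bm v,\bm w})\bigr]}.
\end{align*}

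\textbf{Bounding the gap by a computable quantity.}
The quantity $P_{\bm v,\bm w}(\bm\beta^*_{\bm v,\bm w})$ is not known before retraining, so the next step is to replace it by something we can evaluate. Here I would invoke weak duality for the primal-dual pair $(P_{\bm v,\bm w},D_{\bm v,\bm w})$ established in Lemma~\ref{lm:dual-WRERM}: for \emph{any} feasible dual point $\bm\alpha$ we have $D_{\bm v,\bm w}(\bm\alpha)\le D_{\bm v,\bm w}(\bm\alpha^*_{\bm v,\bm w})=P_{\bm v,\bm w}(\bm\beta^*_{\bm v,\bm w})$. Choosing the particular dual point $\bm\alpha=\bm\alpha^*_{\bm 1_n,\bm 1_n}$ (the one we actually have in hand) yields $P_{\bm v,\bm w}(\bm\beta^*_{\bm v,\bm w})\ge D_{\bm v,\bm w}(\bm\alpha^*_{\bm 1_n,\bm 1_n})$, hence
\begin{align*}
P_{\bm v,\bm w}(\bm\beta^*_{\bm 1_n,\bm 1_n})-P_{\bm v,\bm w}(\bm\beta^*_{\bm v,\bm w})
\le P_{\bm v,\bm w}(\bm\beta^*_{\bm 1_n,\bm 1_n})-D_{\bm v,\bm w}(\bm\alpha^*_{\bm 1_n,\bm 1_n})
= {\rm DG}(\bm v,\bm w),
\end{align*}
which is exactly the duality gap defined in \eqref{eq:duality_gap}. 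Substituting this into the sphere bound gives $\|\bm\beta^*_{\bm v,\bm w}-\bm\beta^*_{\bm 1_n,\bm 1_n}\|_2\le\sqrt{\tfrac{2}{\lambda}{\rm DG}(\bm v,\bm w)}=R$, i.e. $\bm\beta^*_{\bm v,\bm w}\in{\cal B}_{\bm v,\bm w}$, which is the claim.

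\textbf{Remarks on the subtleties.}
The main things to get right are the bookkeeping issues rather than any deep step. I would need to be careful that the strong-convexity constant is genuinely $\lambda$ (the normalization $\frac{1}{\sum_i v_iw_i}$ in front of the loss term does not affect the regularizer, so the constant really is $\lambda$, matching the $R$ in \eqref{eq:bound}); that ${\rm DG}(\bm v,\bm w)\ge 0$ so the square root is well-defined (this is just weak duality again, $P_{\bm v,\bm w}(\bm\beta^*_{\bm 1_n,\bm 1_n})\ge P_{\bm v,\bm w}(\bm\beta^*_{\bm v,\bm w})\ge D_{\bm v,\bm w}(\bm\alpha^*_{\bm 1_n,\bm 1_n})$); and that $D_{\bm v,\bm w}$ is finite at $\bm\alpha^*_{\bm 1_n,\bm 1_n}$, which follows from Corollary~\ref{cor:strong-convexity-smoothness} since $\rho^*$ is then finite-valued. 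The instance-removal setup in the lemma statement (relabeling so the removed instances are last) is cosmetic and plays no role in the argument beyond fixing notation; strictly one only uses that $\bm v\in\{0,1\}^n$ and $\bm w\in{\cal W}$. The only genuine ``obstacle'' is conceptual: recognizing that the unknown optimal primal value can be sandwiched between the primal value at the old solution and the dual value at the old dual solution, turning an a-priori-uncomputable gap into the computable duality gap ${\rm DG}$ — but this is precisely the standard safe-screening idea of \citet{ndiaye2015gap,10.1162/neco_a_01619}, so no new difficulty arises.
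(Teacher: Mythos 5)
Your proposal is correct and follows essentially the same route as the paper, which simply cites Section 4.2.1 of \citet{10.1162/neco_a_01619}: that reference establishes the bound via exactly this combination of the strong-convexity ball (Lemma~\ref{lem:strong-convexity-sphere}) and weak duality evaluated at the old primal/dual pair, turning the unknown suboptimality gap into the computable duality gap ${\rm DG}(\bm v,\bm w)$. Your bookkeeping remarks (strong-convexity constant $\lambda$, nonnegativity of the gap, finiteness of $D_{\bm v,\bm w}$ at $\bm\alpha^*_{\bm 1_n,\bm 1_n}$) are accurate and fill in details the paper leaves implicit.
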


\begin{proof}
	See Section 4.2.1 of \citep{10.1162/neco_a_01619}
\end{proof}
Here, ${\rm DG}$ takes various values depending on the possible weights $\bm w$ and coreset vector $\bm v$.

Next, we calculate a bound of the weighted validation error using ${\cal B}_{\bm{v}, \bm{w}}$.
\begin{theorem} \label{thm:acc-lb}
	The range of the weighted validation error is derived using the bound of model parameters after retraining, ${\cal B}_{\bm{v}, \bm{w}} \subset \RR^k$, where $\bm{\beta}_{\bm v, \bm w} ^{*} \in {\cal B}_{\bm{v}, \bm{w}}$, as follows:
	\begin{align}
		\displaystyle\frac{n_{\rm surelyincorrect}^{(\bm w')}}{\displaystyle \sum_{i \in\left[n^ \prime \right]} w_i'} & \leq {\rm VaEr} \leq \frac{n_{\rm surelyincorrect}^{(\bm w')} + {n_\mathrm{unknown}^{(\bm w')}}}{\displaystyle \sum_{i \in\left[n^ \prime \right]} w_i'} = \frac{n_{\rm all}^{(\bm w')} - {n_{\rm surelycorrect}^{(\bm w')}}}{\displaystyle \sum_{i \in\left[n^ \prime \right]} w_i'}, \label{eq:error-bound}\\
		\text{where} \quad n_{\rm surelycorrect}^{(\bm w')} & = \sum_{i \in\left[n^ \prime \right]} w_i' I\left[\min_{\bm{\beta} \in {\cal B}_{\bm{v}, \bm{w}}} y_i^{\prime} \bm{\beta}^\top \bm\phi(\bm x^{\prime}_{i}) > 0\right], \label{eq:correct-bound}\\
		n_{\rm surelyincorrect}^{(\bm w')} & = \sum_{i \in\left[n^ \prime \right]} w_i' I\left[\max_{\bm{\beta} \in {\cal B}_{\bm{v}, \bm{w}}} y_i^{\prime} \bm{\beta}^\top \bm\phi(\bm x^{\prime}_{i}) < 0\right], \label{eq:incorrect-bound}\\
		n_\mathrm{unknown}^{(\bm w')} & = \sum_{i \in\left[n^ \prime \right]} w_i' I\left[\min_{\bm{\beta} \in {\cal B}_{\bm{v}, \bm{w}}} y_i^{\prime} \bm{\beta}^\top \bm\phi(\bm x^{\prime}_{i}) < 0, \quad \max_{\bm{\beta} \in {\cal B}_{\bm{v}, \bm{w}}} y_i^{\prime} \bm{\beta}^\top \bm\phi(\bm x^{\prime}_{i}) > 0\right], \label{eq:unknown-bound}\\
		n_{\rm all}^{(\bm w')} & = n_{\rm surelycorrect}^{(\bm w')} + n_{\rm surelyincorrect}^{(\bm w')} + n_{\rm unknown}^{(\bm w')} = \sum_{i \in\left[n^ \prime \right]} w_i'. \label{eq:n-all}
	\end{align}
\end{theorem}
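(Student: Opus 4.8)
The plan is to exploit the fact that the weighted validation error ${\rm VaEr}(\bm v, \bm w')$ is a sum of indicator terms $w_i' I\{y_i' \ne {\rm sgn}(f(\bm x_i'; \bm \beta^*_{\bm v, \bm w}))\}$, and that we do not know $\bm \beta^*_{\bm v, \bm w}$ exactly but only that it lies in the hypersphere ${\cal B}_{\bm v, \bm w}$ from the preceding lemma. The key observation is that for each validation instance $i$, the contribution to the error depends only on the sign of the margin $y_i' \bm \beta^\top \bm \phi(\bm x_i')$ evaluated at $\bm \beta = \bm \beta^*_{\bm v, \bm w}$. Since $\bm \beta^*_{\bm v, \bm w} \in {\cal B}_{\bm v, \bm w}$, I would partition the validation indices into three groups according to the behavior of the affine function $\bm \beta \mapsto y_i' \bm \beta^\top \bm \phi(\bm x_i')$ over the ball ${\cal B}_{\bm v, \bm w}$: those for which it is strictly positive everywhere on the ball (``surely correct''), those for which it is strictly negative everywhere (``surely incorrect''), and those for which it takes both signs (``unknown'').

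First I would fix an arbitrary $\bm w'$ and, for each $i \in [n']$, examine $\min_{\bm \beta \in {\cal B}_{\bm v, \bm w}} y_i' \bm \beta^\top \bm \phi(\bm x_i')$ and $\max_{\bm \beta \in {\cal B}_{\bm v, \bm w}} y_i' \bm \beta^\top \bm \phi(\bm x_i')$; by Lemma~\ref{lem:optimize-linear} these are computed in closed form as $y_i' \bm \beta_{\bm 1_n, \bm 1_n}^{*\top} \bm \phi(\bm x_i') \mp R \|\bm \phi(\bm x_i')\|_2$ with $R = \sqrt{\tfrac{2}{\lambda}{\rm DG}(\bm v, \bm w)}$ (this closed form is exactly what feeds into \eqref{eq:zeta} later, though it is not strictly needed for the statement of this theorem). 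Next I would argue: if $i$ is ``surely correct'' then the classification at $\bm \beta^*_{\bm v, \bm w}$ is necessarily correct, so instance $i$ contributes $0$ to ${\rm VaEr}$; if $i$ is ``surely incorrect'' it contributes $w_i'$ to ${\rm VaEr}$; and if $i$ is ``unknown'' its contribution is somewhere in $[0, w_i']$. Summing these contributions and dividing by $\sum_{i\in[n']} w_i'$ gives the lower bound $n_{\rm surelyincorrect}^{(\bm w')} / \sum_i w_i'$ (counting only the forced errors) and the upper bound $(n_{\rm surelyincorrect}^{(\bm w')} + n_{\rm unknown}^{(\bm w')})/\sum_i w_i'$ (charging every unknown instance as an error). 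The final equality in \eqref{eq:error-bound} is just the bookkeeping identity $n_{\rm all}^{(\bm w')} = n_{\rm surelycorrect}^{(\bm w')} + n_{\rm surelyincorrect}^{(\bm w')} + n_{\rm unknown}^{(\bm w')}$ from \eqref{eq:n-all}, rearranged.

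I do not expect a serious obstacle here — the theorem is essentially a sign-tracking argument combined with the parameter bound of the previous lemma. The one point requiring a little care is the treatment of the boundary case where the margin at $\bm \beta^*_{\bm v, \bm w}$ is exactly zero, i.e. the definition of ${\rm sgn}(0)$ and whether the strict inequalities in \eqref{eq:correct-bound}--\eqref{eq:unknown-bound} classify a zero-margin instance into the ``unknown'' bucket; since the bounds are stated with strict inequalities, a zero-margin instance falls into ``unknown'' and is (conservatively) allowed to be counted as an error in the upper bound, which keeps the inequality valid regardless of the sign convention. The other mild subtlety is that $R$ depends on $\bm w$ through ${\rm DG}(\bm v, \bm w)$, so the three counts are functions of both $\bm v$ and the training weight $\bm w$ used to define the ball; but since this theorem fixes $\bm v$ and $\bm w$ and only bounds ${\rm VaEr}(\bm v, \bm w')$ for a given $\bm w'$, this dependence is benign and the maximization over $\bm w \in \cW$ (and then over $\bm w' \in \cW'$) is deferred to the assembly of the main theorem.
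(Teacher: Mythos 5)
Your proposal is correct and follows essentially the same route as the paper: the paper's own justification is precisely this three-way partition of validation instances (surely correct, surely incorrect, unknown) according to the sign of $y_i'\bm\beta^\top\bm\phi(\bm x_i')$ over the parameter ball ${\cal B}_{\bm v,\bm w}$, with the bounds obtained by charging all unknown instances as errors (upper) or as correct (lower). Your additional remarks on the closed-form min/max via Lemma~\ref{lem:optimize-linear} and the zero-margin boundary case are consistent with how the paper later uses the result in Lemma~\ref{lem:linear_score} and Theorem~\ref{theo:main}.
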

Here, \eqref{eq:incorrect-bound} indicates that if the bound of model parameters after retraining is known, some of the test instances can be predicted.
%
%
Moreover, \eqref{eq:unknown-bound} indicates that knowing only the bound of model parameters is insufficient to determine the correctness of classification.
%
%
The interpretation of \eqref{eq:error-bound} is as follows.
If $n_\mathrm{unknown}^{(\bm w')}$ is assumed to be entirely misclassified, the validation error reaches its maximum possible value, resulting in the rightmost side of the inequality. Conversely, if all are assumed to be correctly classified, the validation error reaches its minimum possible value, resulting in the leftmost side of the inequality.
%
%
That is, the range of the worst-case test accuracy after retraining can be expressed as \eqref{eq:error-bound}.
%

%
%
Next, using ${\cal B}_{\bm{v}, \bm{w}}$, we describe the method for computing an upper and lower bounds of the linear score $y_i^{\prime} \bm{\beta}^\top \bm\phi(\bm x^{\prime}_{i})$, which is necessary for the classification in \eqref{eq:incorrect-bound} and \eqref{eq:unknown-bound}.
%
%
In general, when ${\cal B}_{\bm{v}, \bm{w}}$ is represented as a hypersphere, these bounds can be explicitly obtained.
\begin{lemma}\label{lem:linear_score}
	If ${\cal B}_{\bm{v}, \bm{w}}$ is given as a hypersphere of radius $R \in \mathbb{R}_{\geq0}$ centered at the original model parameter $\bm{\beta}_{\bm 1_n, \bm 1_n}^*$,
	$\left({\cal B}_{\bm{v}, \bm{w}} :=\left\{\bm\beta \in \RR^k \mid\|\bm\beta-\bm{\beta}_{\bm{1}_n, \bm{1}_n}^{*}\|_2 \leq R\right\}\right)$, an upper and lower bounds of the linear score $y_i^{\prime} \bm \beta^\top \bm x^{\prime}_i$ can be analytically calculated as
	\begin{align}
		& \min_{\bm{\beta} \in {\cal B}_{\bm{v}, \bm{w}}} y_i^{\prime}\bm{\beta}^\top \bm\phi(\bm x^{\prime}_i) = y_i^{\prime}\bm{\beta}_{\bm 1_n, \bm 1_n}^{*\top} \bm\phi(\bm x^{\prime}_i) -\|y_i^{\prime}\bm\phi(\bm x^{\prime}_i)\|_2 R, \label{eq:linear_score_min}\\
		& \max_{\bm{\beta} \in {\cal B}_{\bm{v}, \bm{w}}} y_i^{\prime}\bm{\beta}^\top \bm\phi(\bm x^{\prime}_i) = y_i^{\prime}\bm{\beta}_{\bm 1_n, \bm 1_n}^{*\top} \bm\phi(\bm x^{\prime}_i) + \|y_i^{\prime}\bm\phi(\bm x^{\prime}_i)\|_2 R.\label{eq:linear_score_max}
	\end{align}
\end{lemma}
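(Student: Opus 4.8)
The plan is to prove Lemma~\ref{lem:linear_score} by a direct application of the Cauchy--Schwarz optimization result already established in Lemma~\ref{lem:optimize-linear}. Since $\mathcal{B}_{\bm v,\bm w}$ is, by hypothesis, exactly the Euclidean ball $\{\bm\beta\in\RR^k \mid \|\bm\beta - \bm\beta^*_{\bm 1_n,\bm 1_n}\|_2 \le R\}$, minimizing or maximizing the affine function $\bm\beta \mapsto y_i^\prime \bm\beta^\top \bm\phi(\bm x_i^\prime) = (y_i^\prime \bm\phi(\bm x_i^\prime))^\top \bm\beta$ over this set is precisely the template handled by Lemma~\ref{lem:optimize-linear}.

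First I would set $\bm a := y_i^\prime \bm\phi(\bm x_i^\prime) \in \RR^k$, $\bm c := \bm\beta^*_{\bm 1_n,\bm 1_n}$, and $S := R$ in the statement of Lemma~\ref{lem:optimize-linear}. Applying the first identity of that lemma gives
\begin{align*}
\min_{\bm\beta \in \mathcal{B}_{\bm v,\bm w}} y_i^\prime \bm\beta^\top \bm\phi(\bm x_i^\prime)
= \bm a^\top \bm c - S\|\bm a\|_2
= y_i^\prime \bm\beta_{\bm 1_n,\bm 1_n}^{*\top}\bm\phi(\bm x_i^\prime) - \|y_i^\prime \bm\phi(\bm x_i^\prime)\|_2 R,
\end{align*}
which is exactly \eqref{eq:linear_score_min}, and the second identity of the lemma yields \eqref{eq:linear_score_max} in the same way with the sign flipped. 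The optimal $\bm\beta$ achieving each extremum is $\bm\beta^*_{\bm 1_n,\bm 1_n} \mp (R/\|\bm a\|_2)\bm a$, matching the witnesses given in the proof of Lemma~\ref{lem:optimize-linear}.

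One minor point I would address explicitly is the degenerate case $\bm a = \bm 0$, i.e.\ $\bm\phi(\bm x_i^\prime) = \bm 0$ (or, trivially, $|y_i^\prime| \ne 1$, which does not occur here): then the affine function is constant and both the min and the max equal $y_i^\prime \bm\beta_{\bm 1_n,\bm 1_n}^{*\top}\bm\phi(\bm x_i^\prime) = 0$, consistent with the formulas since $\|\bm a\|_2 = 0$. Honestly there is no real obstacle in this lemma --- it is a bookkeeping corollary of Lemma~\ref{lem:optimize-linear} --- so the only thing worth being careful about is making sure the norm $\|y_i^\prime \bm\phi(\bm x_i^\prime)\|_2$ is written in the same form as in the theorem statement (one can of course simplify it to $\|\bm\phi(\bm x_i^\prime)\|_2$ since $y_i^\prime \in \{\pm 1\}$, which is the form ultimately used in \eqref{eq:zeta}).
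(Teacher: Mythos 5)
Your proposal is correct and matches the paper's own argument: the paper proves this lemma simply by invoking Lemma~\ref{lem:optimize-linear} with $\bm a = y_i^\prime \bm\phi(\bm x_i^\prime)$, $\bm c = \bm\beta^*_{\bm 1_n,\bm 1_n}$, and $S = R$, exactly as you do. Your extra remark on the degenerate case $\bm\phi(\bm x_i^\prime)=\bm 0$ is a harmless addition not present in the paper.
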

The proof is shown in Lemma~\ref{lem:optimize-linear}.
%
%
In \eqref{eq:linear_score_min} and \eqref{eq:linear_score_max}, these show that an upper and lower bounds of the linear score $y_i^{\prime}\bm{\beta}^\top \bm\phi(\bm x^{\prime}_{i})$ depend on the value of $R$.
%
%
As $R$ increases, a lower bound is more likely to take negative values, while an upper bound is more likely to take positive values.
\begin{corollary} \label{cor:R-lb}
	If the bound of model parameters after retraining, ${\cal B}_{\bm{v}, \bm{w}} \subset \RR^k$, is specifically represented as a hypersphere with an L2 norm radius $R$, maximizing an upper bound of the worst-case weighted validation error, as shown in \eqref{eq:error-bound}, is equivalent to maximizing $R$.
\end{corollary}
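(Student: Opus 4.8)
The plan is to reduce the statement to a monotonicity observation: the right-hand side (the upper bound) of \eqref{eq:error-bound} is a nondecreasing function of the radius $R$, so maximizing it is the same as maximizing $R$. First I would rewrite that upper bound using $n_{\rm all}^{(\bm w')} = \sum_{i\in[n^\prime]} w_i'$ from \eqref{eq:n-all}, which turns the upper bound on ${\rm VaEr}$ in Theorem~\ref{thm:acc-lb} into $1 - n_{\rm surelycorrect}^{(\bm w')}/\sum_{i\in[n^\prime]} w_i'$. Hence, for a fixed validation weight vector $\bm w'$, maximizing this upper bound is equivalent to minimizing $n_{\rm surelycorrect}^{(\bm w')}$ as defined in \eqref{eq:correct-bound}.

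Next I would substitute the closed form from Lemma~\ref{lem:linear_score} into \eqref{eq:correct-bound}. When ${\cal B}_{\bm v,\bm w}$ is the ball of radius $R$ centered at $\bm\beta^*_{\bm 1_n,\bm 1_n}$, the event $\{\min_{\bm\beta\in{\cal B}_{\bm v,\bm w}} y_i' \bm\beta^\top\bm\phi(\bm x_i') > 0\}$ is exactly $\{y_i' \bm\beta_{\bm 1_n,\bm 1_n}^{*\top}\bm\phi(\bm x_i') > \|\bm\phi(\bm x_i')\|_2\, R\}$. Since $R \ge 0$ and $\|\bm\phi(\bm x_i')\|_2 \ge 0$, the threshold $\|\bm\phi(\bm x_i')\|_2\, R$ is nondecreasing in $R$, so each indicator in \eqref{eq:correct-bound} is nonincreasing in $R$; because $w_i' \ge 0$, the nonnegative-weighted sum $n_{\rm surelycorrect}^{(\bm w')}$ is nonincreasing in $R$, and therefore $1 - n_{\rm surelycorrect}^{(\bm w')}/\sum_{i\in[n^\prime]} w_i'$ is nondecreasing in $R$ for every fixed $\bm w'$.

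Finally I would pass to the worst case over $\bm w' \in {\cal W}'$: the pointwise maximum over $\bm w'$ of a family of functions each nondecreasing in $R$ is again nondecreasing in $R$, so the upper bound on the worst-case weighted validation error is a monotone nondecreasing function of $R$. Consequently, any setting of the quantities that determine $R$ — in our context the training weight vector $\bm w \in {\cal W}$, which enters only through $R = \sqrt{(2/\lambda)\,{\rm DG}(\bm v,\bm w)}$ — that maximizes $R$ also maximizes this upper bound, giving the claimed equivalence. I do not expect a genuinely hard step here: the only care needed is that the bound is a step function of $R$ (so the equivalence means a maximizer of $R$ is a maximizer of the bound, not a strict correspondence of all optimizers) and that the boundary case $y_i' \bm\beta_{\bm 1_n,\bm 1_n}^{*\top}\bm\phi(\bm x_i') = \|\bm\phi(\bm x_i')\|_2\, R$ is harmlessly absorbed by the weak inequality; the substance is simply chaining the two monotonicities correctly through the $\max_{\bm w'}$.
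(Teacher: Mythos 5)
Your proposal is correct and follows essentially the same route the paper takes: it combines the closed-form score bounds of Lemma~\ref{lem:linear_score} with the observation that each indicator in $n_{\rm surelycorrect}^{(\bm w')}$ is nonincreasing in $R$, so the upper bound in \eqref{eq:error-bound} (and its maximum over $\bm w' \in \cal W'$) is nondecreasing in $R$, which is exactly the monotonicity argument the paper sketches around Corollary~\ref{cor:R-lb}. Your explicit caveats about $w_i' \ge 0$ and the one-directional nature of the ``equivalence'' are harmless refinements of the paper's implicit reasoning, not deviations from it.
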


	%
	Finally, considering the worst-case weighted validation error, the optimization of the weights $\bm w'$ assigned to the validation instances is performed.
	\begin{lemma} \label{lem:validation-weight-max}
		Assume that the range of validation weights $\bm w'$ is an L2-norm hypersphere, defined as $\mathcal{W}':=\left\{\boldsymbol{w}' \mid \|\boldsymbol{w}'- \bm 1_{n ^\prime}\|_2 \leq Q\right\}$ $(Q>0)$, and that the sum of the weights is constant. In this case, the maximization problem for the validation weights that results in an upper bound of the weighted validation error can be formulated as follows:
		\begin{align}
			\label{eq:validation-weight-opt}
			\displaystyle \max_{\bm w' \in \cal W'} \frac{n_{\rm all}^{(\bm w')} - {n_{\rm surelycorrect}^{(\bm w')}}}{\displaystyle \sum_{i \in\left[n^ \prime \right]} w_i'}, \quad
			& \text{where} \quad n_{\rm surelycorrect}^{(\bm w')} = \bm\zeta(\bm v, \bm w)^\top \bm w', \quad \zeta_i = I\left[\min_{\bm{\beta} \in {\cal B}_{\bm{v}, \bm{w}}} y_i^{\prime}\bm{\beta}^\top \bm\phi(\bm x^{\prime}_{i}) > 0\right], \\
			& \text{subject to} \quad \|\bm w - \bm 1_{n ^\prime}\|_2 \leq Q, \quad {\displaystyle \sum_{i \in\left[n^ \prime \right]} w_i'} = n^\prime,
		\end{align}
		and the maximization of this problem can be analytically computed as follows:
		\begin{align}
			\displaystyle \max_{\bm w' \in \cal W'} \frac{n_{\rm all}^{(\bm w')} - {n_{\rm surelycorrect}^{(\bm w')}}}{\displaystyle \sum_{i \in\left[n^ \prime \right]} w_i'} = 1 - \left(\bm 1_{n^\prime}^{\top} \bm \zeta(\bm v, \bm w) -Q \sqrt{\|\bm \zeta(\bm v, \bm w)\|_2^2-\frac{\left(\bm 1_{n ^\prime}^{\top} \bm \zeta(\bm v, \bm w) \right)^2}{n^\prime}}\right) \frac{1}{n^\prime}.
		\end{align}
	\end{lemma}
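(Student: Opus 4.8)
The plan is to reduce the stated maximization to a linear optimization over the intersection of an $L_2$-ball and a hyperplane, and then solve it in closed form via an orthogonal decomposition. Write $\bm\zeta := \bm\zeta(\bm v,\bm w)$ for brevity. First I would use the constraint $\sum_{i\in[n^\prime]} w_i^\prime = n^\prime$ together with $n_{\rm all}^{(\bm w^\prime)} = \sum_i w_i^\prime$ and $n_{\rm surelycorrect}^{(\bm w^\prime)} = \bm\zeta^\top\bm w^\prime$ to rewrite the objective as $\frac{n_{\rm all}^{(\bm w^\prime)} - n_{\rm surelycorrect}^{(\bm w^\prime)}}{\sum_i w_i^\prime} = 1 - \frac{\bm\zeta^\top\bm w^\prime}{n^\prime}$. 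Hence maximizing it over $\mathcal{W}^\prime$ is exactly equivalent to computing $\min \bm\zeta^\top\bm w^\prime$ subject to $\|\bm w^\prime - \bm 1_{n^\prime}\|_2 \le Q$ and $\bm 1_{n^\prime}^\top\bm w^\prime = n^\prime$; the formulation \eqref{eq:validation-weight-opt} itself is just the upper bound of Theorem~\ref{thm:acc-lb} specialized to a hyperspherical weight set, so the only real work is the closed form.

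Next I would substitute $\bm w^\prime = \bm 1_{n^\prime} + \bm u$, so the feasible set becomes $\{\bm u : \|\bm u\|_2 \le Q,\ \bm 1_{n^\prime}^\top\bm u = 0\}$ and $\bm\zeta^\top\bm w^\prime = \bm 1_{n^\prime}^\top\bm\zeta + \bm\zeta^\top\bm u$, a constant plus a linear term to be minimized. Decompose $\bm\zeta = \bm\zeta_{\parallel} + \bm\zeta_{\perp}$ with $\bm\zeta_{\parallel} := \frac{\bm 1_{n^\prime}^\top\bm\zeta}{n^\prime}\bm 1_{n^\prime}$ (the component along $\bm 1_{n^\prime}$) and $\bm\zeta_{\perp}$ orthogonal to $\bm 1_{n^\prime}$. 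For any feasible $\bm u$ one has $\bm\zeta^\top\bm u = \bm\zeta_{\perp}^\top\bm u$, since $\bm\zeta_{\parallel}^\top\bm u = 0$. Minimizing the linear functional $\bm\zeta_{\perp}^\top\bm u$ over the ball $\|\bm u\|_2 \le Q$ is exactly Lemma~\ref{lem:optimize-linear} with $\bm c = \bm 0$ and $S = Q$, giving value $-Q\|\bm\zeta_{\perp}\|_2$ attained at $\bm u^\star = -\tfrac{Q}{\|\bm\zeta_{\perp}\|_2}\bm\zeta_{\perp}$; crucially $\bm 1_{n^\prime}^\top\bm u^\star = 0$, so $\bm u^\star$ also satisfies the hyperplane constraint, and the ball-only minimum therefore coincides with the minimum over the full feasible set.

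Finally, by the Pythagorean identity $\|\bm\zeta\|_2^2 = \|\bm\zeta_{\parallel}\|_2^2 + \|\bm\zeta_{\perp}\|_2^2$ with $\|\bm\zeta_{\parallel}\|_2^2 = \frac{(\bm 1_{n^\prime}^\top\bm\zeta)^2}{n^\prime}$, we get $\|\bm\zeta_{\perp}\|_2 = \sqrt{\|\bm\zeta\|_2^2 - \frac{(\bm 1_{n^\prime}^\top\bm\zeta)^2}{n^\prime}}$, hence $\min \bm\zeta^\top\bm w^\prime = \bm 1_{n^\prime}^\top\bm\zeta - Q\sqrt{\|\bm\zeta\|_2^2 - \frac{(\bm 1_{n^\prime}^\top\bm\zeta)^2}{n^\prime}}$; dividing by $n^\prime$ and subtracting from $1$ yields the claimed formula. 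The degenerate case $\bm\zeta_{\perp} = \bm 0$ (i.e. $\bm\zeta \propto \bm 1_{n^\prime}$) is consistent, as both sides reduce to $1 - \bm 1_{n^\prime}^\top\bm\zeta/n^\prime$. I do not expect a genuine obstacle; the one point that needs a little care is verifying that the ball-minimizer $\bm u^\star$ actually lies on the hyperplane $\bm 1_{n^\prime}^\top\bm u = 0$, which is precisely what lets the equality constraint drop out and produces the clean closed form.
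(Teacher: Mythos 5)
Your proof is correct, and it takes a genuinely different route from the paper's. The paper proves the closed form by the method of Lagrange multipliers: it forms $L(\bm w',\mu,\nu)=\bm\zeta^\top\bm w'+\mu(\|\bm w'-\bm 1_{n'}\|_2^2-Q^2)+\nu(\bm 1_{n'}^\top\bm w'-n')$, solves the stationarity conditions for $\nu$ and $\mu$, and substitutes back to evaluate $\bm\zeta^\top\bm w'$ at the stationary points, implicitly treating the ball constraint as active ($\|\bm w'-\bm 1_{n'}\|_2=Q$) and then selecting the sign that gives the minimum. You instead translate by $\bm w'=\bm 1_{n'}+\bm u$, split $\bm\zeta$ into its components along and orthogonal to $\bm 1_{n'}$, observe that on the hyperplane $\bm 1_{n'}^\top\bm u=0$ the objective reduces to $\bm\zeta_\perp^\top\bm u$, and then invoke Lemma~\ref{lem:optimize-linear} on the ball alone, noting that the ball-minimizer $\bm u^\star=-\tfrac{Q}{\|\bm\zeta_\perp\|_2}\bm\zeta_\perp$ automatically satisfies the hyperplane constraint so the relaxation is tight; the Pythagorean identity then yields $\|\bm\zeta_\perp\|_2=\sqrt{\|\bm\zeta\|_2^2-(\bm 1_{n'}^\top\bm\zeta)^2/n'}$ and the stated formula. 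What your approach buys: it handles the inequality constraint $\|\bm w'-\bm 1_{n'}\|_2\le Q$ directly rather than assuming activity, it avoids the sign ambiguity in $\mu$ that the Lagrangian route must resolve, and it makes the geometric content (projection onto the orthogonal complement of $\bm 1_{n'}$) explicit, including the degenerate case $\bm\zeta\propto\bm 1_{n'}$ where the paper's expression $\pm\frac{Q}{\sqrt{\cdot}}$ would divide by zero. What the paper's route buys is mainly uniformity with the companion maximization of the duality gap elsewhere in the appendix, where Lagrangian stationary-point computations are also used.
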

	The proof is shown in Appendix~\ref{app:validation-weight-max}.
	Here, $\bm \zeta$ is a function of $\bm v$ and $\bm w$.
	From \eqref{eq:linear_score_min} and \eqref{eq:validation-weight-opt}, maximizing $R$ with respect to $\bm w$ allows us to maximize an upper bound of the worst-case weighted validation error. This is expressed as
	\begin{gather}
		\zeta_i(\bm v)
		=
		I\left\{
		y_i^\prime \bm \beta_{\bm 1_n, \bm 1_n}^{*\top} \bm \phi(\bm x_i^\prime) - \|\bm \phi(\bm x_i^\prime)\|_2 R > 0
		\right\}, \\
		\text{where } R = \sqrt{\frac{2}{\lambda} \max_{\bm w \in \cW} {\rm DG}(\bm v, \bm w)}.
	 \end{gather}
	%
	Therefore, an upper bound of the worst-case weighted validation error is written as

	\begin{align}
		{\rm WrVaEr}(\bm v)
		\le
		1
		-
		\left(
		\bm 1_{n^\prime}^\top \bm \zeta(\bm v)
		-
		Q
		\sqrt{
		\|\bm \zeta(\bm v)\|_2^2
		-
		\frac{
		\left(\bm 1_{n^\prime}^\top \bm \zeta(\bm v)\right)^2
		}{
		n^\prime
		}
		}
		\right)
		\frac{1}{n^\prime}
		\end{align}

	%
	%
	This implies a theoretical guarantee for an upper bound of the worst-case test error after retraining.

\subsection{Proof of Lemma \ref{lem:validation-weight-max}} \label{app:validation-weight-max}

In this appendix, we provide the proof of a following constrained maximization problem:
\begin{gather}
	\displaystyle \max_{\bm w' \in \cal W'} \frac{n_{\rm all}^{(\bm w')} - {n_{\rm surelycorrect}^{(\bm w')}}}{\displaystyle \sum_{i \in\left[n^ \prime \right]} w_i'}, \quad \text{subject to } \left\|\bm w'- \bm 1_{n ^\prime} \right\|_2 = Q, \quad \bm 1_{n ^\prime}^\top \bm w' = n ^\prime, \\
	\text{where} \quad n_{\rm surelycorrect}^{(\bm w')} = \bm \zeta^\top \bm w'
\end{gather}
Then, this problem can be transformed and rewritten as follows:
\begin{align}
	\label{eq:linear-score-sum}
	\displaystyle \max_{\bm w' \in \cal W'} \frac{n_{\rm all}^{(\bm w')} - {n_{\rm surelycorrect}^{(\bm w')}}}{\displaystyle \sum_{i \in\left[n^ \prime \right]} w_i'}
	= 1 - \frac{\min_{\bm w' \in {\cal W'}} \bm \zeta^\top \bm w'}{n ^\prime}.
\end{align}
Thus, we prove that this minimization problem can be solved analytically.

\begin{proof}
	First, we write the Lagrangian function with Lagrange multiplier $\mu, \nu \in \RR$ as:
	\begin{equation}
		L(\bm w', \mu, \nu) = \bm\zeta^\top \bm w' + \mu\left(\left\|\bm w'- \bm 1_{n ^\prime} \right\|_2^2 - Q^2\right) + \nu\left(\bm 1_{n ^\prime}^\top \bm w' - n^\prime\right).
	\end{equation}
	Then, due to the property of Lagrange multiplier, the stationary points of \eqref{eq:linear-score-sum} are obtained as
	\begin{gather}
		\frac{\partial L(\bm w', \mu, \nu)}{\partial \bm w'} = \bm\zeta + 2\mu\left(\bm w'- \bm 1_{n ^\prime} \right) + \nu\bm 1_{n ^\prime} = 0, \label{eq:0}\\
		2\mu\left(\bm w'- \bm 1_{n ^\prime} \right) = - \bm\zeta - \nu\bm 1_{n ^\prime} \label{eq:1}.
	\end{gather}
	Squaring both sides of \eqref{eq:1}, we get:
	\begin{equation}
		\label{eq:2}
		4\mu^2\left\|\bm w'- \bm 1_{n ^\prime} \right\|_2^2 = \left\|\bm\zeta\right\|_2^2 + 2\nu \bm\zeta^\top \bm 1_{n ^\prime} + \nu^2 n ^\prime.
	\end{equation}
	Multiplying both sides of \eqref{eq:1} by $\bm{1}_{n^\prime}^\top$, we obtain:
	\begin{align}
		2\mu\left(\bm 1_{n ^\prime}^\top \bm w'- n^\prime\right) & = - \bm 1_{n ^\prime}^\top \bm\zeta - \nu \bm 1_{n ^\prime}^\top \bm 1_{n ^\prime}, \nonumber\\
		0 & =  - \bm 1_{n ^\prime}^\top \bm\zeta - \nu n ^\prime, \nonumber\\
		\therefore \nu & = -\frac{\bm 1_{n ^\prime}^\top \bm\zeta}{n ^\prime}. \label{eq:3}
	\end{align}
	Substituting \eqref{eq:3} into \eqref{eq:2}, we obtain:
	\begin{align}
		4\mu^2\left\|\bm w'- \bm 1_{n ^\prime} \right\|_2^2 & = \left\|\bm\zeta\right\|_2^2 - \frac{2}{n ^\prime} \left(\bm 1_{n ^\prime}^\top \bm\zeta\right)^2 + \left(-\frac{\bm 1_{n ^\prime}^\top \bm\zeta}{n ^\prime}\right)^2 n ^\prime, \nonumber\\
		4\mu^2\left\|\bm w'- \bm 1_{n ^\prime} \right\|_2^2 & = \left\|\bm\zeta\right\|_2^2 - \frac{\left(\bm 1_{n ^\prime}^\top \bm\zeta\right)^2}{n ^\prime}, \nonumber\\
		2\mu & = \pm \frac{1}{Q}\sqrt{\left\|\bm\zeta\right\|_2^2 - \frac{\left(\bm 1_{n ^\prime}^\top \bm\zeta\right)^2}{n ^\prime}}. \label{eq:4}
	\end{align}
	Substituting \eqref{eq:4} into \eqref{eq:0}, we obtain:
	\begin{equation}
		\bm w' = \bm 1_{n ^\prime} \pm \frac{Q}{\sqrt{\left\|\bm\zeta\right\|_2^2 - \displaystyle\frac{\left(\bm 1_{n ^\prime}^\top \bm\zeta\right)^2}{n ^\prime}}} \left(- \bm\zeta + \frac{\bm 1_{n ^\prime}^\top \bm\zeta}{n ^\prime}\bm 1_{n ^\prime}\right). \label{eq:5}
	\end{equation}
	Multiplying both sides of \eqref{eq:5} by $\bm \zeta$, we obtain:
	\begin{align}
		\bm\zeta^\top \bm w' & = \bm 1_{n ^\prime}^\top \bm\zeta \pm \frac{Q}{\sqrt{\left\|\bm\zeta\right\|_2^2 - \displaystyle\frac{\left(\bm 1_{n ^\prime}^\top \bm\zeta\right)^2}{n ^\prime}}} \left(- \left\|\bm\zeta\right\|_2^2 + \frac{\bm 1_{n ^\prime}^\top \bm\zeta}{n ^\prime}\bm\zeta^\top \bm 1_{n ^\prime}\right) \nonumber\\
		& = \bm 1_{n ^\prime} ^\top \bm\zeta \mp Q \sqrt{\left\|\bm\zeta\right\|_2^2 - \displaystyle\frac{\left(\bm 1_{n ^\prime}^\top \bm\zeta\right)^2}{n ^\prime}}
	\end{align}
	Finally, we obtain the minimum value of $\bm\zeta^\top \bm w'$:
	\begin{align}
		\min_{\bm w' \in {\cal W'}} \bm\zeta^\top \bm w' = \bm 1_{n ^\prime} ^\top \bm\zeta - Q \sqrt{\left\|\bm\zeta\right\|_2^2 - \displaystyle\frac{\left(\bm 1_{n ^\prime}^\top \bm\zeta\right)^2}{n ^\prime}} \nonumber
	\end{align}
\end{proof}

\subsection{Use of strongly-convex regularization functions other than L2-regularization} \label{app:regularization-functions}

Let us consider the use of regularization functions other than L2-regularization, assuming that the function is $\kappa$-strongly convex to satisfy the conditions for applying DRCS. For L2-regularization, the term
$\rho^*\left(\frac{1}{\sum_{i\in[n]} v_i w_i}(\operatorname{diag}(\bm v\otimes\bm w \otimes \bm y) \Phi)^\top\bm\alpha_{\bm 1_n, \bm 1_n}^{*} \right)$
in the duality gap \eqref{eq:duality_gap} simplifies to a quadratic function with respect to $\bm w$. However, for other regularization functions, even if they are $\kappa$-strongly convex, the behavior can differ significantly, potentially complicating the application of DRCS.

As a famous example, consider the \emph{elastic net} regularization \cite{zou05regularization}. With hyperparameters $\lambda > 0$ and $\lambda^\prime > 0$, the regularization function and its convex conjugate are defined as follows:
\begin{align*}
& \rho(\bm\beta) := \frac{\lambda}{2}\|\bm\beta\|_2^2 + \lambda^\prime\|\bm\beta\|_1, \\
& \rho^*(\bm p) = \frac{1}{2\lambda}\sum_{j\in \left[d\right]}\left[\max\{0, |p_j| - \lambda^\prime \}\right]^2.
\end{align*}
In this case, the term
\begin{gather*}
	\rho^*\left(\frac{1}{\sum_{i\in[n]} v_i w_i}(\operatorname{diag}(\bm v\otimes\bm w \otimes \bm y) \Phi)^\top \bm\alpha_{\bm 1_n, \bm 1_n}^{*} \right) \\
	= \frac{1}{2\lambda} \sum_{j\in \left[d\right]} \left[\max\{0, \left|\frac{1}{\sum_{i\in[n]} v_i w_i}(\bm v \otimes \bm w \otimes \bm \alpha_{\bm 1_n, \bm 1_n}^{*} \otimes \bm y \otimes \Phi_{:j})\right| - \lambda^\prime \}\right]^2
\end{gather*}
requires maximization with respect to $\bm w$, which is nontrivial and introduces additional complexity.

Next, we discuss a sufficient condition for regularization functions that allows weighted regularized empirical risk minimization to support both kernelization and DRCS.


\begin{lemma}
In weighted regularized empirical risk minimization as defined in \ref{eq:primal}, the regularization function $\rho$ can support both DRCS and kernelization if it is described as:
\[
\rho(\bm\beta) = \frac{\kappa}{2}\|\bm\beta\|_2^2 + {\cal H}(\|\bm\beta\|_2),
\]
where ${\cal H}: \mathbb{R}_{\geq 0} \to \mathbb{R}$ is an increasing function and $\kappa$ is a positive constant.
\end{lemma}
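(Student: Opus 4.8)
The plan is to verify two things separately: that the stated form of $\rho$ satisfies the strong-convexity hypothesis used throughout the DRCS derivation, and that the dual problem \eqref{eq:dual}, the duality gap \eqref{eq:duality_gap}, and the validation-error bound \eqref{eq:main-bound} can all be evaluated using only the kernel matrix $K=\Phi\Phi^\top$ (entries $\bm\phi(\bm x_i)^\top\bm\phi(\bm x_j)$), never the explicit feature map.

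First I would handle strong convexity. Writing $\rho(\bm\beta)-\frac{\kappa}{2}\|\bm\beta\|_2^2={\cal H}(\|\bm\beta\|_2)$ and using that $\bm\beta\mapsto\|\bm\beta\|_2$ is convex while ${\cal H}$ is convex and non-decreasing, the composition ${\cal H}(\|\bm\beta\|_2)$ is convex, so $\rho$ is $\kappa$-strongly convex in the sense of the opening Definition. This is precisely the hypothesis of Corollary~\ref{cor:strong-convexity-smoothness}, of the parameter-bound lemma of Appendix~\ref{app:main-proof} (whose radius becomes $R=\sqrt{(2/\kappa)\,{\rm DG}(\bm v,\bm w)}$ via Lemma~\ref{lem:strong-convexity-sphere}), and of Theorem~\ref{theo:main}; hence every step of the proof sketch goes through with $\lambda$ replaced by $\kappa$. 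The point to flag explicitly is that convexity of ${\cal H}$ — not merely its monotonicity — is what is used here; this is implicit in the requirement that $\rho$ itself be convex.

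Next I would establish kernelizability. The key structural fact is that $\rho$ depends on $\bm\beta$ only through $r=\|\bm\beta\|_2$: set $g(r):=\frac{\kappa}{2}r^2+{\cal H}(r)$ so that $\rho(\bm\beta)=g(\|\bm\beta\|_2)$. Optimizing first over the direction of $\bm\beta$ at fixed norm (Cauchy--Schwarz) gives $\rho^*(\bm p)=\sup_{r\ge 0}\big[r\|\bm p\|_2-g(r)\big]=\tilde g(\|\bm p\|_2)$, where $\tilde g(s):=\sup_{r\ge 0}[rs-g(r)]$, so the conjugate is radial as well, with $\tilde g$ convex and non-decreasing on $[0,\infty)$. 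Substituting the dual argument $\bm p=\frac{1}{E}\big(\operatorname{diag}(\bm v\otimes\bm w\otimes\bm y)\Phi\big)^\top\bm\alpha$ with $E=\sum_i v_iw_i$, one has $\|\bm p\|_2^2=\frac{1}{E^2}(\bm v\otimes\bm w\otimes\bm y\otimes\bm\alpha)^\top K(\bm v\otimes\bm w\otimes\bm y\otimes\bm\alpha)$, which depends only on $K$, while the loss term $-\frac1E\sum_i v_iw_i\ell^*(-\alpha_i)$ is feature-free; thus $D_{\bm v,\bm w}(\bm\alpha)$ is a function of $K$ alone and can be maximized over $\bm\alpha$ without materializing $\bm\phi$. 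For predictions one invokes the optimality relation $\bm\beta^*_{\bm v,\bm w}\in\partial\rho^*\big((\operatorname{diag}(\bm v\otimes\bm w\otimes\bm y)\Phi)^\top\bm\alpha^*_{\bm v,\bm w}\big)$ of Lemma~\ref{lm:dual-WRERM}: since $\partial\rho^*(\bm p)$ at any $\bm p\ne\bm 0$ is a nonnegative multiple of $\bm p/\|\bm p\|_2$ (and $\bm\beta^*=\bm 0$ when $\bm p=\bm 0$), we get $\bm\beta^*_{\bm v,\bm w}=\eta\,\Phi^\top\operatorname{diag}(\bm v\otimes\bm w\otimes\bm y)\bm\alpha^*_{\bm v,\bm w}$ for a scalar $\eta\ge 0$, hence $f(\bm x;\bm\beta^*_{\bm v,\bm w})=\eta\sum_i v_iw_iy_i\alpha^*_{\bm v,\bm w,i}\,\bm\phi(\bm x_i)^\top\bm\phi(\bm x)$. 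Together with $\|\bm\phi(\bm x_i')\|_2=\sqrt{\bm\phi(\bm x_i')^\top\bm\phi(\bm x_i')}$, this makes ${\rm DG}(\bm v,\bm w)$, the indicator set $\bm\zeta(\bm v)$ in \eqref{eq:zeta}, and hence ${\rm WrVaEr}^{\rm UB}(\bm v)$ all computable from $K$, completing the claim.

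The main obstacle is the conjugate-and-subdifferential bookkeeping for radial functions: rigorously identifying $\rho^*$ with $\tilde g(\|\cdot\|_2)$ (including the effective domain of $\tilde g$ and attainment of the inner supremum over direction) and establishing that $\partial\rho^*(\bm p)$ points along $\bm p$, with the $\bm p=\bm 0$ case treated separately — because it is this representer-theorem-style structure, rather than strong convexity by itself, that actually yields kernelization. A secondary point to verify is that maximizing ${\rm DG}(\bm v,\bm w)$ over $\bm w\in\cW$ remains tractable for general ${\cal H}$; unlike the pure-$L_2$ case this is no longer a quadratic in $\bm w$, so the lemma should be read as asserting that kernelization and the strong-convexity prerequisites of DRCS hold, not that the inner $\bm w$-maximization stays a single eigenvalue problem.
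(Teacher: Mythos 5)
Your proof is correct, but it takes a genuinely different route from the paper's. The paper disposes of the kernelization half in one line by invoking the generalized representer theorem of \cite{scholkopf2001generalized}, which requires exactly that $\rho(\bm\beta)={\cal G}(\|\bm\beta\|_2)$ for a strictly increasing ${\cal G}$, and then simply intersects this with the DRCS requirement that $\rho$ be $\kappa$-strongly convex to arrive at the stated form ${\cal G}(r)=\frac{\kappa}{2}r^2+{\cal H}(r)$; no conjugate computations appear. You instead re-derive the kernelization from scratch on the dual side: you show $\rho^*$ is itself radial, that the dual objective in \eqref{eq:dual} and the duality gap depend on the features only through $K$, and that the optimality relation $\bm\beta^*_{\bm v,\bm w}\in\partial\rho^*\bigl((\operatorname{diag}(\bm v\otimes\bm w\otimes\bm y)\Phi)^\top\bm\alpha^*_{\bm v,\bm w}\bigr)$ forces a representer-type expansion because the gradient of a radial conjugate points along its argument (and is single-valued here, since $\kappa$-strong convexity of $\rho$ makes $\rho^*$ smooth by Lemma~\ref{lem:strong-convexity-smoothness}). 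What your route buys is a self-contained sufficiency verification that also makes explicit how $\bm\zeta(\bm v)$ and ${\rm WrVaEr}^{\rm UB}$ remain kernel-computable, and your closing caveat that the eigenvalue-based maximization of ${\rm DG}$ over $\bm w$ is special to pure $L_2$ matches the paper's own discussion of the elastic net in the preceding appendix subsection. What the paper's route buys is brevity and directness, at the cost of leaving implicit exactly the point you flag: monotonicity of ${\cal H}$ alone does not make $\frac{\kappa}{2}\|\bm\beta\|_2^2+{\cal H}(\|\bm\beta\|_2)$ $\kappa$-strongly convex, so convexity of ${\cal H}$ (equivalently, convexity of $\rho-\frac{\kappa}{2}\|\cdot\|_2^2$) must be read into the statement in both your argument and the paper's; your explicit flagging of this is a fair and useful observation rather than a gap.
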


\begin{proof}
According to the \emph{generalized representer theorem} \cite{scholkopf2001generalized}, weighted regularized empirical risk minimization can be kernelized if the regularization function $\rho$ can be expressed in terms of a strictly increasing function ${\cal G}: \mathbb{R}_{\geq 0} \to \mathbb{R}$ as $\rho(\bm\beta) = {\cal G}(\|\bm\beta\|_2)$.

By combining this requirement with the condition for applying DRCS, which demands that $\rho$ be $\kappa$-strongly convex, we obtain the stated form of $\rho$.
\end{proof}

\subsection{Methods for Greedy Coreset Selection} \label{app:algorithm}

%
A naive greedy approach involves removing the training instance that minimizes \eqref{eq:main-bound} one at a time (this approach is referred to as {\tt greedy approach 1}).
First, the pseudocode of {\tt greedy approach 1} for small datasets is given in Algorithm \ref{alg:1}.
\begin{algorithm}[H]
  \renewcommand{\algorithmicrequire}{\textbf{Input:}}
  \renewcommand{\algorithmicensure}{\textbf{Output:}}
  \caption{Distributionally Robust Coreset Selection for Small Datasets}
  \label{alg:1}
  \begin{algorithmic}[1]
      \Require Dataset ${\mathcal D} := \{(\bm x_i, y_i)\}_{i \in [n]}$, matrix $A$, vector $\bm b$, constant $c$

      \State Initialize $\boldsymbol{v} \gets \{1\}^n$
      \State Set desired number of deletions, $n^\mathrm{del}$

      \While{number of deletions is less than $n^\mathrm{del}$}
          \For{each $i$ where $v_i = 1$}
              \State Set $\boldsymbol{v'} \gets \boldsymbol{v}$ and $v'_i \gets 0$ \Comment{Remove $i$-th element from $\boldsymbol{v}$}
              \State Compute the duality gap when a instance is removed:
              \[
              {\rm DG}_i = \max_{\boldsymbol{w} \in \mathcal{W}} \left\{(\boldsymbol{v'} \otimes \boldsymbol{w})^\top A (\boldsymbol{v'} \otimes \boldsymbol{w}) + \bm b^\top (\boldsymbol{v'} \otimes \boldsymbol{w}) + c \right\}
              \]
              \State Store ${\rm DG}_i$
          \EndFor
          \State Find the index $i^*$ corresponding to the smallest ${\rm DG}_i$ value
          \State Set $v_{i^*} \gets 0$ \Comment{Remove the element with the smallest maximum value}

          \State Update $\boldsymbol{v}$ and repeat the process
      \EndWhile

      \State Construct the subset $\hat{\mathcal{D}}=\left\{\bm x_{i}, y_i \mid \left\{\bm x_{i}, y_i\right\} \in \mathcal{D}, v_i=1\right\}$;
      \Ensure Selected dataset: $\hat{\mathcal{D}}$

  \end{algorithmic}
\end{algorithm}

Next, the pseudocode of {\tt greedy approach 2} for large datasets is given in Algorithm \ref{alg:2}.
It performs the maximization calculation once at the beginning, and then sequentially performs instance selection.
\newpage

\begin{algorithm}[H]
  \renewcommand{\algorithmicrequire}{\textbf{Input:}}
  \renewcommand{\algorithmicensure}{\textbf{Output:}}
  \caption{Distributionally Robust Coreset Selection for Large Datasets}
  \label{alg:2}
  \begin{algorithmic}[1]
      \Require Dataset ${\mathcal D} := \{(\bm x_i, y_i)\}_{i \in [n]}$, matrix $A$, vector $\bm b$, constant $c$

      \State Initialize $\boldsymbol{v} \gets \{1\}^n$
      \State Compute worst-case weight that maximize the duality gap :
      \[
      \bm w^{\mathrm{worst}} = \argmax_{\boldsymbol{w} \in \mathcal{W}} \left\{(\boldsymbol{v} \otimes \boldsymbol{w})^\top A (\boldsymbol{v} \otimes \boldsymbol{w}) + \bm b^\top (\boldsymbol{v} \otimes \boldsymbol{w}) + c\right\}
      \]
      \State Set desired number of deletions, $n^\mathrm{del}$

      \While{number of deletions is less than $n^\mathrm{del}$}
          \For{each $i$ where $v_i = 1$}
              \State Set $\boldsymbol{v'} \gets \boldsymbol{v}$ and $v'_i \gets 0$ \Comment{Remove $i$-th element from $\boldsymbol{v}$}
              \State Calculate ${\rm DG}_i \gets (\boldsymbol{v'} \otimes \boldsymbol{w}^{\mathrm{worst}})^\top A (\boldsymbol{v'} \otimes \boldsymbol{w}^{\mathrm{worst}}) + \bm b^\top (\boldsymbol{v'} \otimes \boldsymbol{w}^{\mathrm{worst}}) + c$
              \State Store ${\rm DG}_i$
          \EndFor
          \State Find the index $i^*$ corresponding to the smallest ${\rm DG}_i$ value
          \State Set $v_{i^*} \gets 0$ \Comment{Remove the element with the smallest maximum value}

          \State Update $\boldsymbol{v}$ and repeat the process
      \EndWhile

      \State Construct the subset $\hat{\mathcal{D}}=\left\{\bm x_{i}, y_i \mid \left\{\bm x_{i}, y_i\right\} \in \mathcal{D}, v_i=1\right\}$;
      \Ensure Selected dataset: $\hat{\mathcal{D}}$

  \end{algorithmic}
\end{algorithm}

Then, the pseudocode of {\tt greedy approach 3} for large datasets is given in Algorithm \ref{alg:3}.

\begin{algorithm}[H]
  \renewcommand{\algorithmicrequire}{\textbf{Input:}}
  \renewcommand{\algorithmicensure}{\textbf{Output:}}
  \caption{Distributionally Robust Coreset Selection for Large Datasets}
  \label{alg:3}
  \begin{algorithmic}[1]
      \Require Dataset ${\mathcal D} := \{(\bm x_i, y_i)\}_{i \in [n]}$, matrix $A$, vector $\bm b$, constant $c$

      \State Initialize $\boldsymbol{v} \gets \{1\}^n$
      \State Compute worst-case weight that maximize the duality gap :
      \[
      \bm w^{\mathrm{worst}} = \argmax_{\boldsymbol{w} \in \mathcal{W}} \left\{(\boldsymbol{v} \otimes \boldsymbol{w})^\top A (\boldsymbol{v} \otimes \boldsymbol{w}) + \bm b^\top (\boldsymbol{v} \otimes \boldsymbol{w}) + c\right\}
      \]
      \State Set desired number of deletions, $n^\mathrm{del}$

			\For{each $i\in[n]$}
					\State Set $\boldsymbol{v'} \gets \boldsymbol{v}$ and $v'_i \gets 0$ \Comment{Remove $i$-th element from $\boldsymbol{v}$}
					\State Calculate ${\rm DG}_i \gets (\boldsymbol{v'} \otimes \boldsymbol{w}^{\mathrm{worst}})^\top A (\boldsymbol{v'} \otimes \boldsymbol{w}^{\mathrm{worst}}) + \bm b^\top (\boldsymbol{v'} \otimes \boldsymbol{w}^{\mathrm{worst}}) + c$
					\State Store ${\rm DG}_i$
			\EndFor
			\State Identify $n^\mathrm{del}$ smallest $\mathrm{DG}_i$'s, and set $v_{i} \gets 0$ for these $i$'s, or $v_{i} \gets 1$ otherwise

      \State Construct the subset $\hat{\mathcal{D}}=\left\{\bm x_{i}, y_i \mid \left\{\bm x_{i}, y_i\right\} \in \mathcal{D}, v_i=1\right\}$;
      \Ensure Selected dataset: $\hat{\mathcal{D}}$

  \end{algorithmic}
\end{algorithm}

%
Here, the duality gap ${\rm DG}_i$ computed within the algorithm is a quadratic convex function with respect to $\bm w$.
%
%
As a method to solve the constrained maximization of ${\rm DG}_i$ in $\mathcal{W}$, we apply method of Lagrange multiplier. In this case, since all stationary points can be computed algorithmically, maximization is achievable.
%
For the proof, please refer to the Appendix of \citet{hanada2024distributionallyrobustsafesample}.
%
%
This maximization calculation requires $O\left(n^3\right)$ time.
For this reason, applying Algorithm \ref{alg:1}, which repeatedly requires maximization calculations, to large datasets is computationally expensive.
Therefore, Algorithm~\ref{alg:2}, which reduces the number of maximization computations, is also considered.
Furthermore, in large datasets, the computational cost increases even in the instance selection process by $v$.
In order to avoid an increase in computational cost, we adopt Algorithm~\ref{alg:3} for lage datasets in Section~\ref{subsec:result-image}.

\section{Details of Experiments}

\subsection{Experimental Environments and Implementation Information} \label{app:implementation}

We used the following computers for experiments:
For experiments except for the image dataset,
we run experiments on a computer with Intel Xeon Silver 4214R (2.40GHz) CPU and 64GB RAM.
For experiments using the image dataset,
we run experiments on a computer with Intel(R) Xeon(R) Gold 6338 (2.00GHz) CPU, NVIDIA RTX A6000 GPU and 1TB RAM.

Procedures are implemented in Python, mainly with the following libraries:
\begin{itemize}
\item {\em NumPy} \citep{harris2020array}: Matrix and vector operations
\item {\em CVXPY} \citep{diamond2016cvxpy}: Convex optimizations (training computations with weights)
\item {\em SciPy} \citep{2020SciPy-NMeth}: Solving equations to maximize the quadratic convex function in \eqref{eq:maximum-gap} (by module {\em optimize.root\_scalar})
\item {\em PyTorch} \citep{paszke2017automatic}: Defining the source neural network (which will be converted to a kernel by NTK) for image prediction
\item {\em neural-tangents} \citep{neuraltangents2020}: NTK
\end{itemize}

\subsection{Data and Learning Setup} \label{app:experimental-setup}

The criteria of selecting datasets (Table \ref{tab:dataset-SS}) and detailed setups are as follows:
\begin{itemize}
\item All of the datasets are downloaded from LIBSVM dataset \citep{libsvmDataset}.
	We used training datasets only if test datasets are provided separately
	(``splice'').
\item In the table, the column ``$d$'' denotes the number of features including the intercept feature.
\end{itemize}

The choice of the regularization hyperparameter $\lambda$, based on the characteristics of the data, is as follows:
We set $\lambda$ as $n$, $n\times 10^{-1.5}$, $n\times 10^{-3.0}$ and best $\lambda$ which is decided by cross-validation.

The choice of the hyperparameter in RBF kernel is fixed as follows: we set $\zeta = d^\prime * \mathbb{V}(Z)$ as suggested in {\tt sklearn.svm.SVC} of \emph{scikit-learn} \citep{scikit-learn}, where $\mathbb{V}$ denotes the elementwise sample variance.



\newpage
\subsection{All Experimental Results of Section \ref{subsec:result-table} using logistic regression model} \label{app:result-logistic}

In this appendix~\ref{app:result-logistic}, we show all experimental results using logistic regression model(logistic loss + L2 reguralization). First, we show model performance.

\begin{figure}[h]
	\begin{tabular}{ccc}
		\begin{minipage}[b]{0.3\hsize}\centering {\small Dataset: australian, $\lambda=\lambda_\mathrm{best}$}\\\includegraphics[width=0.8\hsize]{fig/table_logistic/australian-logistic/kernel/lam_1.5/a1.05000.pdf}\end{minipage}
		&
		\begin{minipage}[b]{0.3\hsize}\centering {\small Dataset: australian, $\lambda=n \cdot 10^{-1.5}$}\\\includegraphics[width=0.8\hsize]{fig/table_logistic/australian-logistic/kernel/lam_17.45/a1.05000.pdf}\end{minipage}
		&
		\begin{minipage}[b]{0.3\hsize}\centering {\small Dataset: australian, $\lambda=n$}\\\includegraphics[width=0.8\hsize]{fig/table_logistic/australian-logistic/kernel/lam_552/a1.05000.pdf}\end{minipage}
		\\
		\begin{minipage}[b]{0.3\hsize}\centering {\small Dataset: breast-cancer, $\lambda=\lambda_\mathrm{best}$}\\\includegraphics[width=0.8\hsize]{fig/table_logistic/breast-cancer-logistic/kernel/lam_0.9/a1.05000.pdf}\end{minipage}
		&
		\begin{minipage}[b]{0.3\hsize}\centering {\small Dataset: breast-cancer, $\lambda=n \cdot 10^{-1.5}$}\\\includegraphics[width=0.8\hsize]{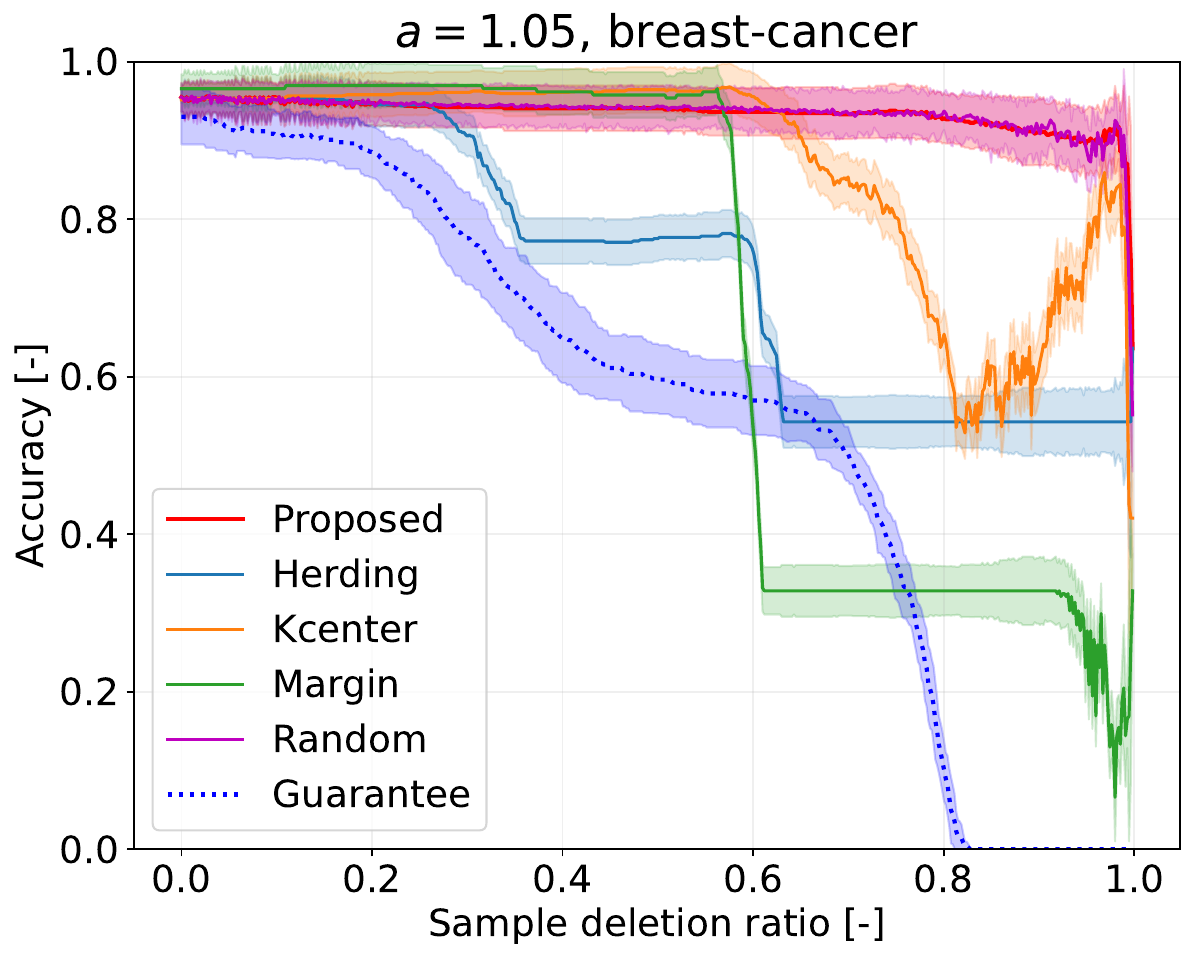}\end{minipage}
		&
		\begin{minipage}[b]{0.3\hsize}\centering {\small Dataset: breast-cancer, $\lambda=n$}\\\includegraphics[width=0.8\hsize]{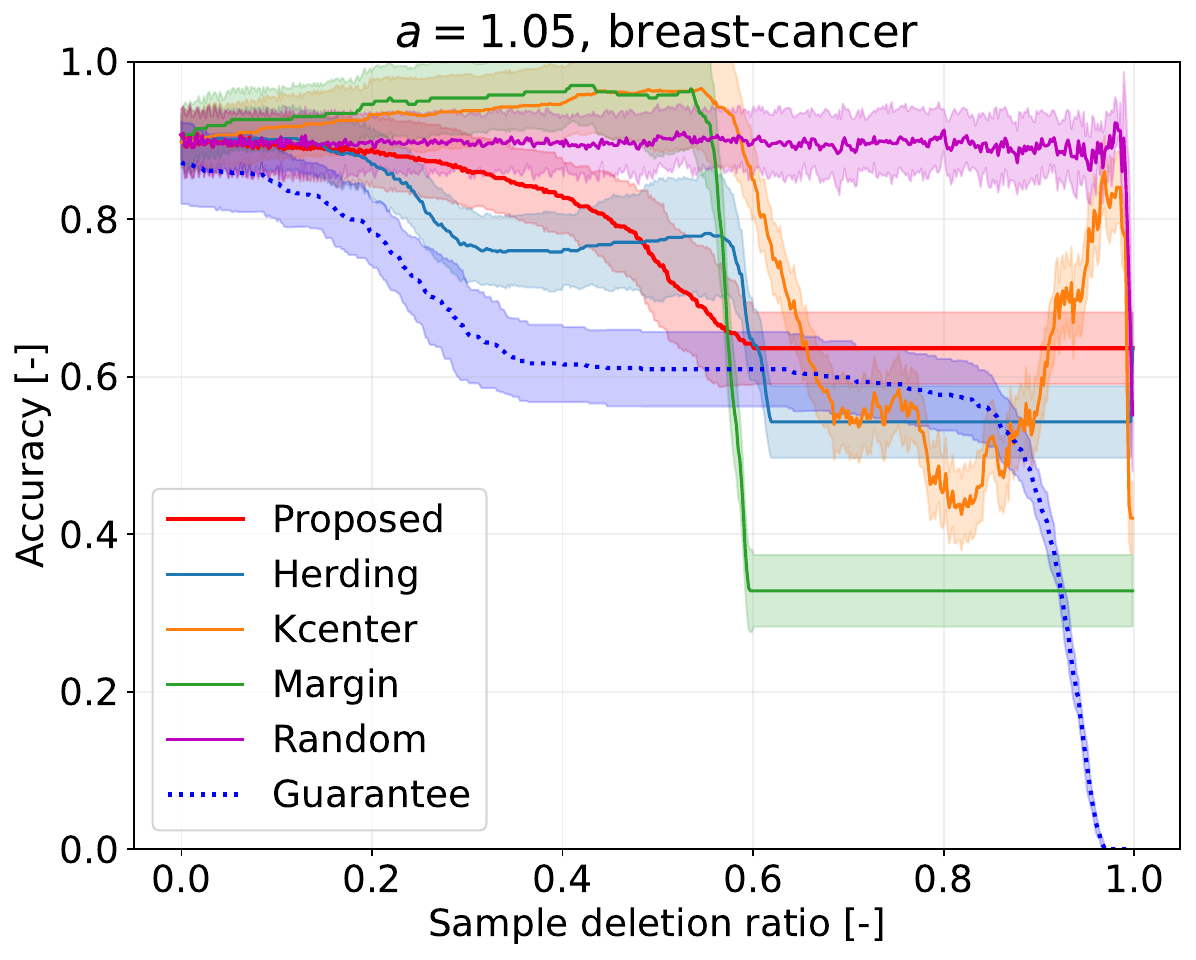}\end{minipage}
		\\
		\begin{minipage}[b]{0.3\hsize}\centering {\small Dataset: heart, $\lambda=\lambda_\mathrm{best}$}\\\includegraphics[width=0.8\hsize]{fig/table_logistic/heart-logistic/kernel/lam_2.7/a1.05000.pdf}\end{minipage}
		&
		\begin{minipage}[b]{0.3\hsize}\centering {\small Dataset: heart, $\lambda=n \cdot 10^{-1.5}$}\\\includegraphics[width=0.8\hsize]{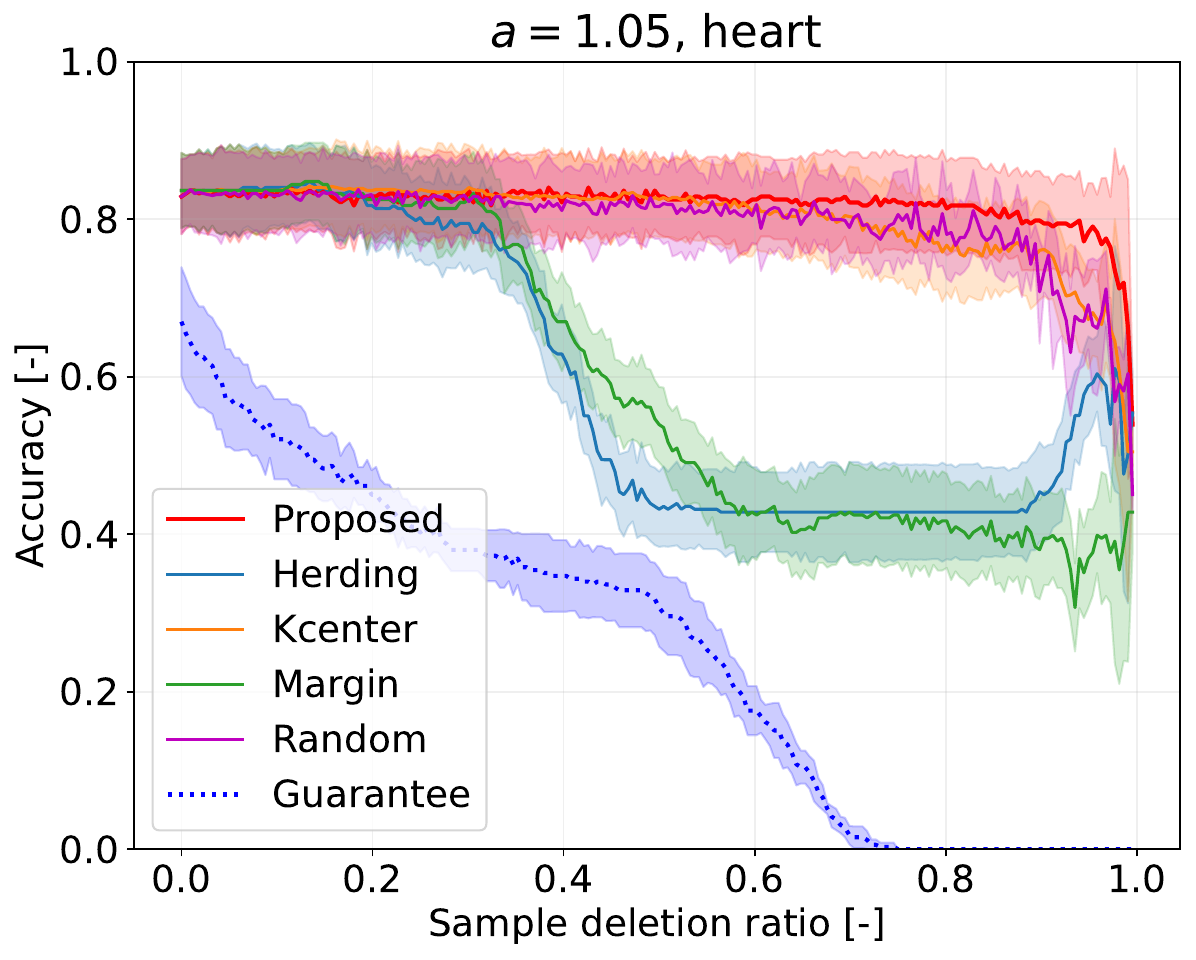}\end{minipage}
		&
		\begin{minipage}[b]{0.3\hsize}\centering {\small Dataset: heart, $\lambda=n$}\\\includegraphics[width=0.8\hsize]{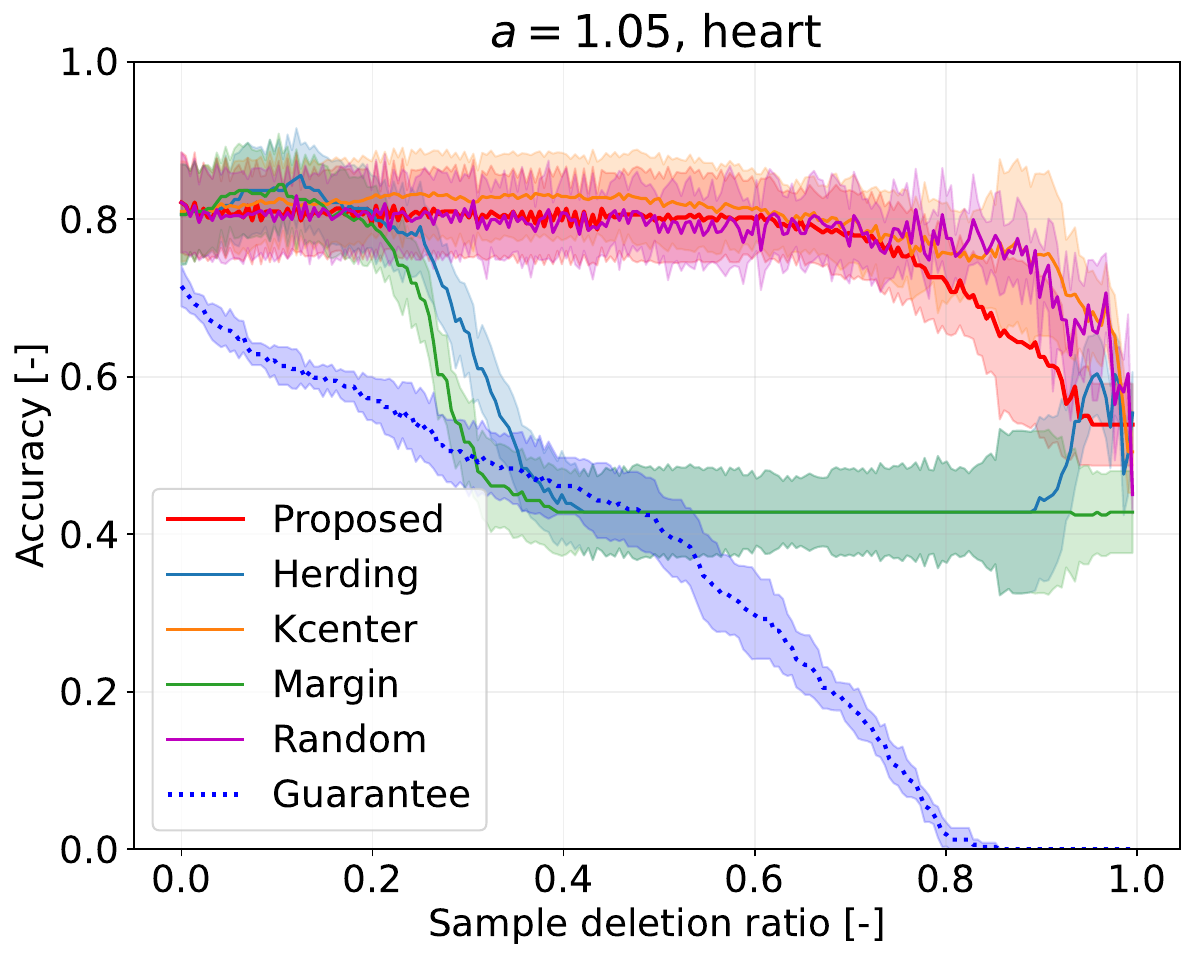}\end{minipage}
		\\
		\begin{minipage}[b]{0.3\hsize}\centering {\small Dataset: ionosphere, $\lambda=\lambda_\mathrm{best}$}\\\includegraphics[width=0.8\hsize]{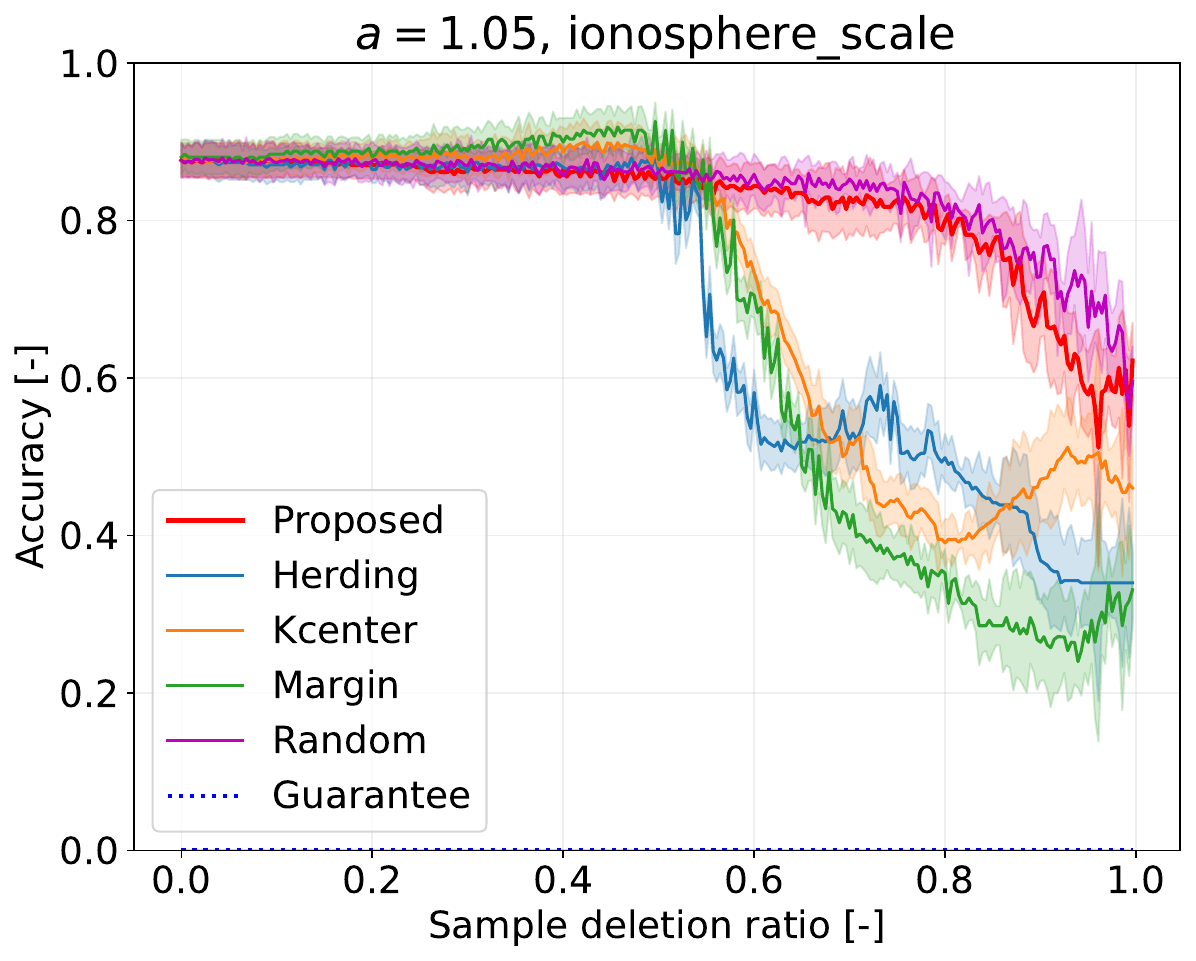}\end{minipage}
		&
		\begin{minipage}[b]{0.3\hsize}\centering {\small Dataset: ionosphere, $\lambda=n \cdot 10^{-1.5}$}\\\includegraphics[width=0.8\hsize]{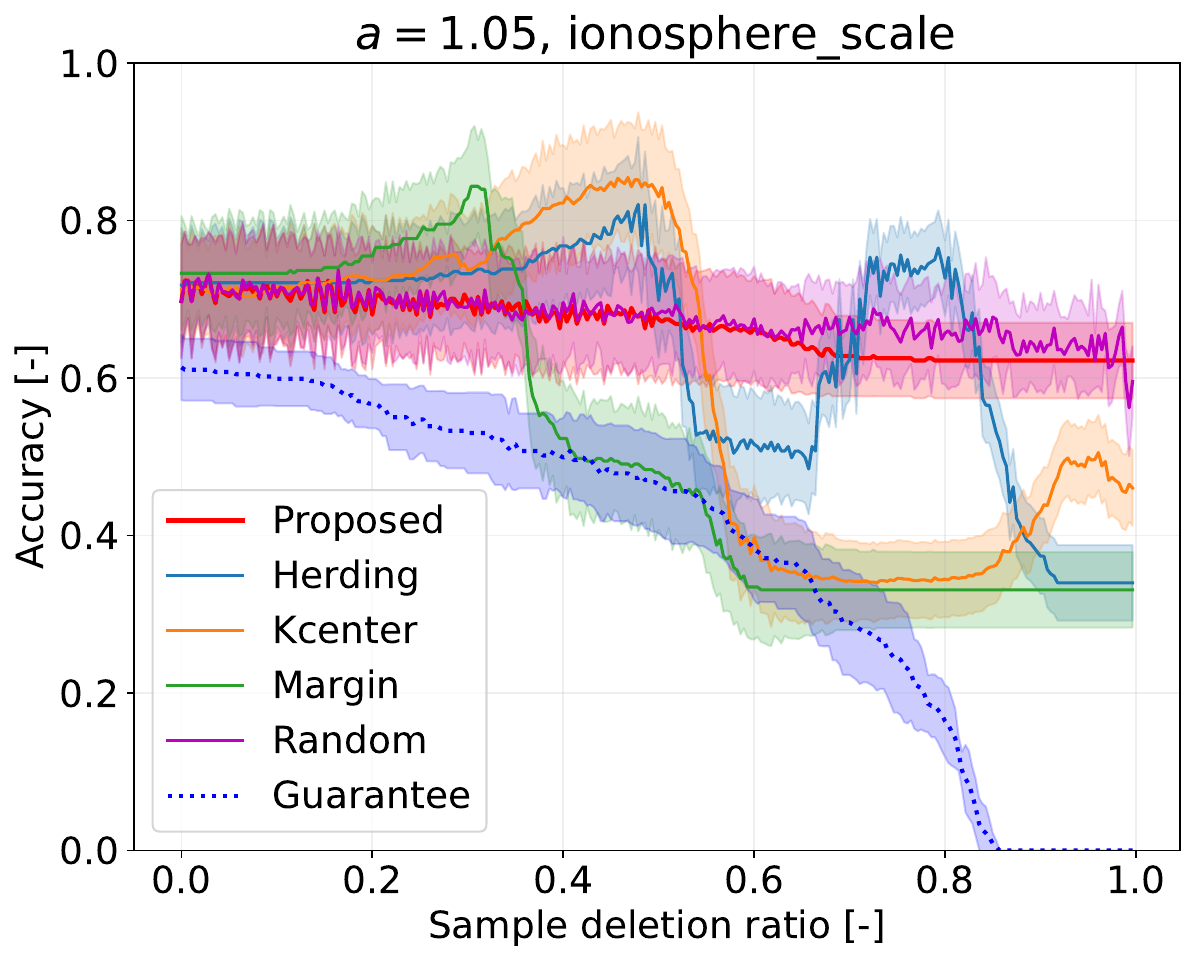}\end{minipage}
		&
		\begin{minipage}[b]{0.3\hsize}\centering {\small Dataset: ionosphere, $\lambda=n$}\\\includegraphics[width=0.8\hsize]{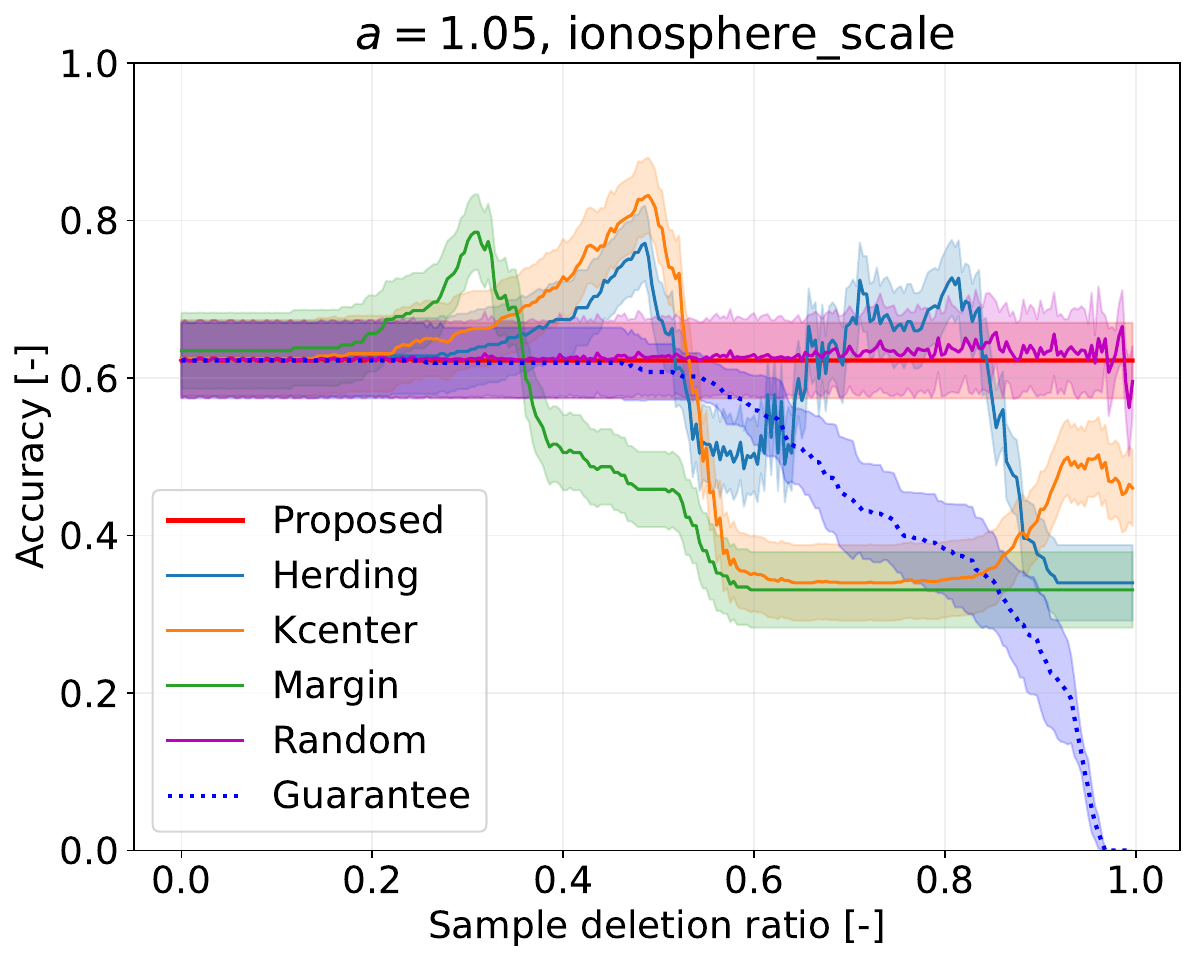}\end{minipage}
		\\
		\begin{minipage}[b]{0.3\hsize}\centering {\small Dataset: splice, $\lambda=\lambda_\mathrm{best}$}\\\includegraphics[width=0.8\hsize]{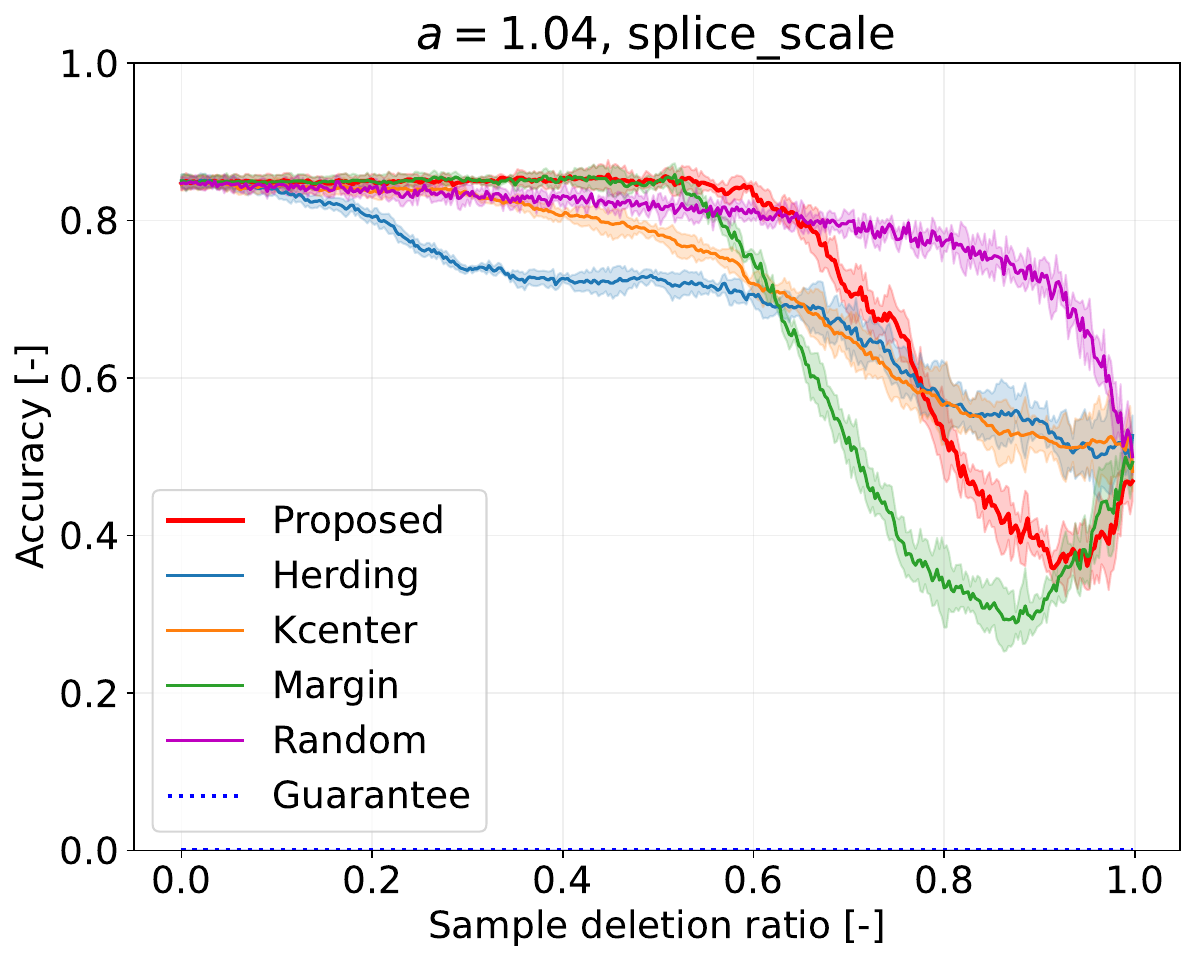}\end{minipage}
		&
		\begin{minipage}[b]{0.3\hsize}\centering {\small Dataset: splice, $\lambda=n_\mathrm \cdot 10^{-1.5}$}\\\includegraphics[width=0.8\hsize]{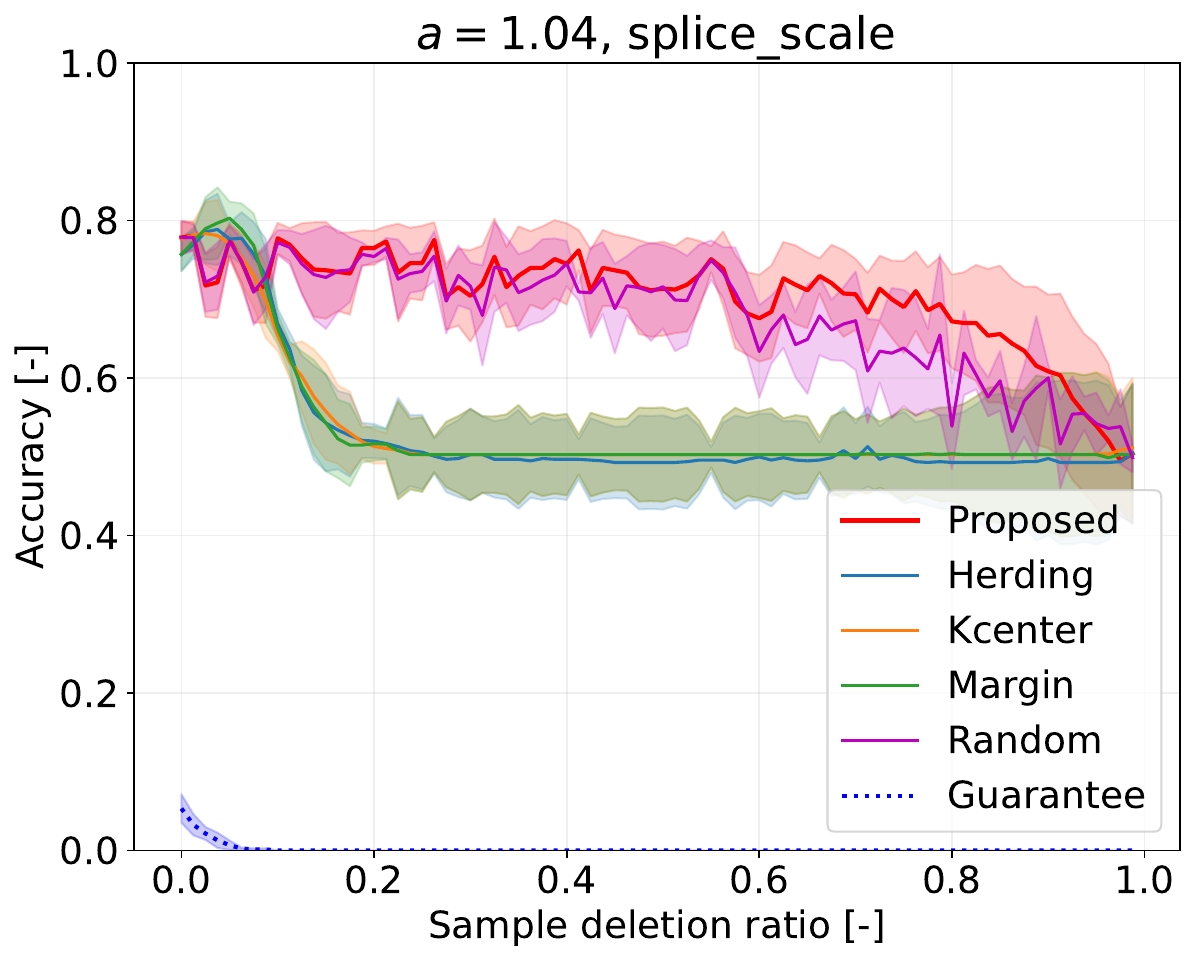}\end{minipage}
		&
		\begin{minipage}[b]{0.3\hsize}\centering {\small Dataset: splice, $\lambda=n$}\\\includegraphics[width=0.8\hsize]{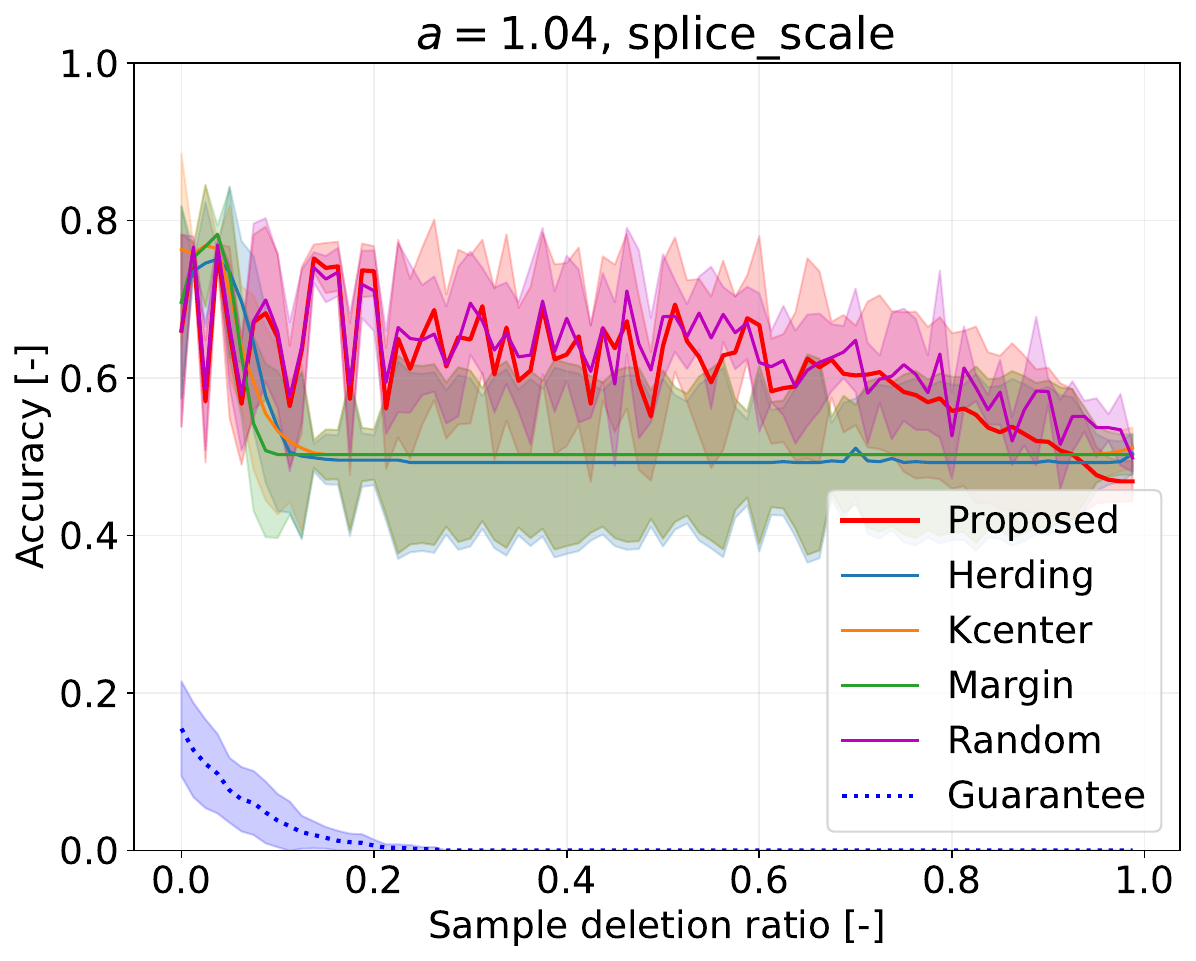}\end{minipage}
		\\
	
	\end{tabular}
	\caption{Model performance for RBF-kernel logistic regression models, under the settings described in Section \ref{sec:experiment} and Appendix \ref{app:experimental-setup}.}
	\label{fig:result-acc-logistic}
	\end{figure}

Next, we show a guarantee of model performance.

\begin{figure}[H]
	\begin{tabular}{ccc}
		\begin{minipage}[b]{0.3\hsize}\centering {\small Dataset: australian, $\lambda=n \cdot 10^{-3}$}\\\includegraphics[width=0.8\hsize]{fig/table_logistic/australian-logistic/kernel/kernel_ss_screening_rate_lam0.552_x_n_y_etest.pdf}\end{minipage}
		&
		\begin{minipage}[b]{0.3\hsize}\centering {\small Dataset: australian, $\lambda=n \cdot 10^{-1.5}$}\\\includegraphics[width=0.8\hsize]{fig/table_logistic/australian-logistic/kernel/kernel_ss_screening_rate_lam17.45_x_n_y_etest.pdf}\end{minipage}
		&
		\begin{minipage}[b]{0.3\hsize}\centering {\small Dataset: australian, $\lambda=n$}\\\includegraphics[width=0.8\hsize]{fig/table_logistic/australian-logistic/kernel/kernel_ss_screening_rate_lam552_x_n_y_etest.pdf}\end{minipage}
		\\
		\begin{minipage}[b]{0.3\hsize}\centering {\small Dataset: breast-cancer, $\lambda=n \cdot 10^{-3}$}\\\includegraphics[width=0.8\hsize]{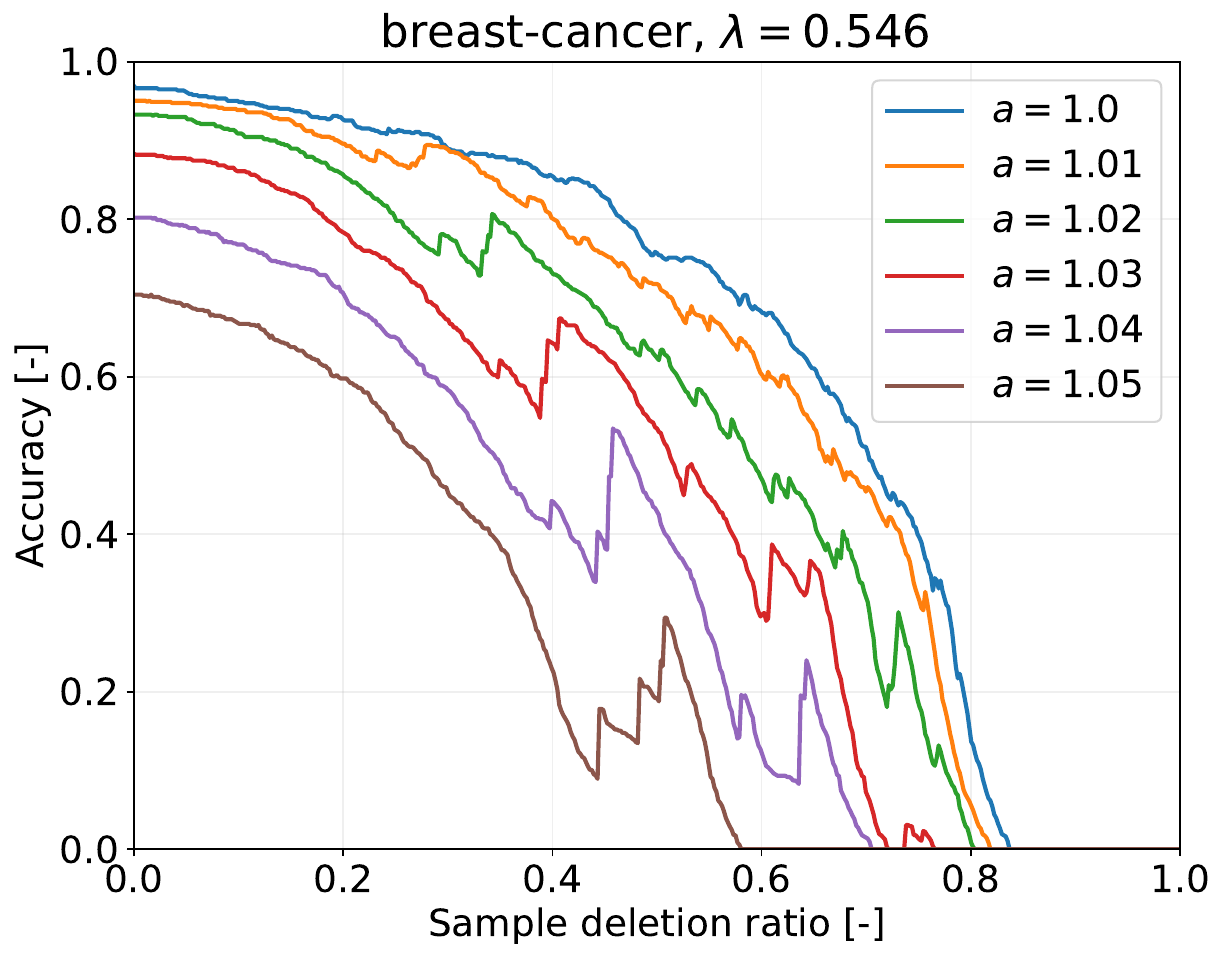}\end{minipage}
		&
		\begin{minipage}[b]{0.3\hsize}\centering {\small Dataset: breast-cancer, $\lambda=n \cdot 10^{-1.5}$}\\\includegraphics[width=0.8\hsize]{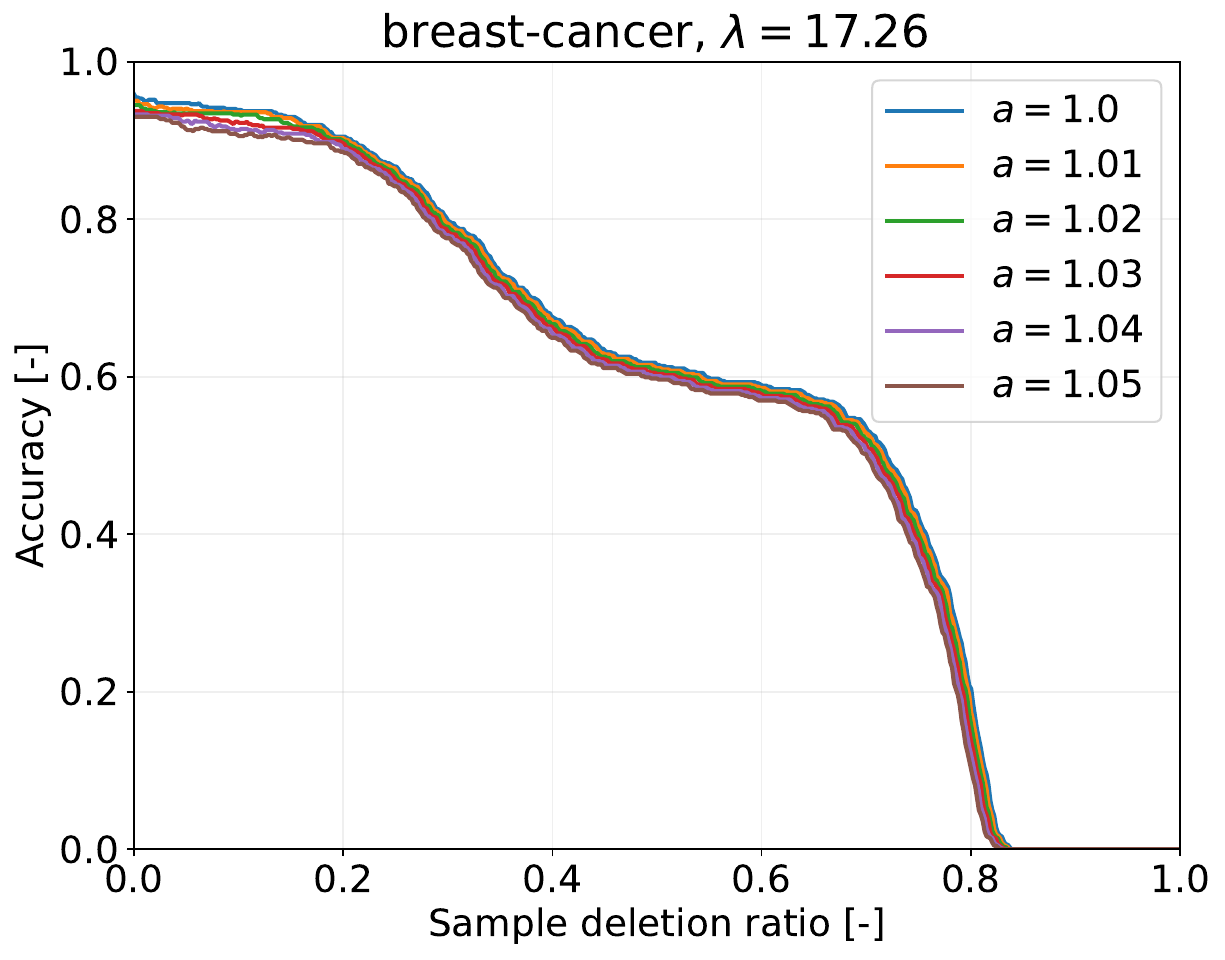}\end{minipage}
		&
		\begin{minipage}[b]{0.3\hsize}\centering {\small Dataset: breast-cancer, $\lambda=n$}\\\includegraphics[width=0.8\hsize]{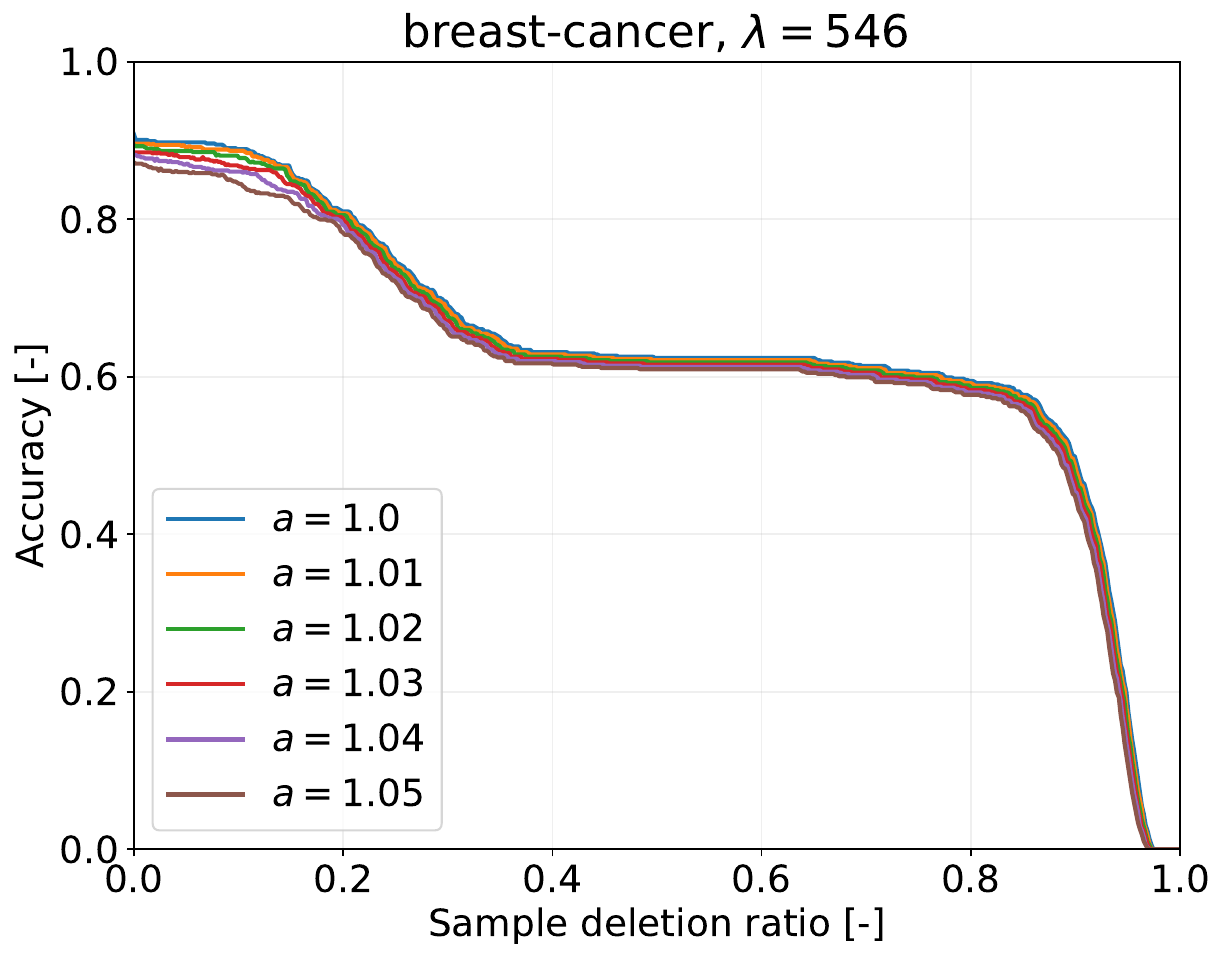}\end{minipage}
		\\
		\begin{minipage}[b]{0.3\hsize}\centering {\small Dataset: heart, $\lambda=n \cdot 10^{-3}$}\\\includegraphics[width=0.8\hsize]{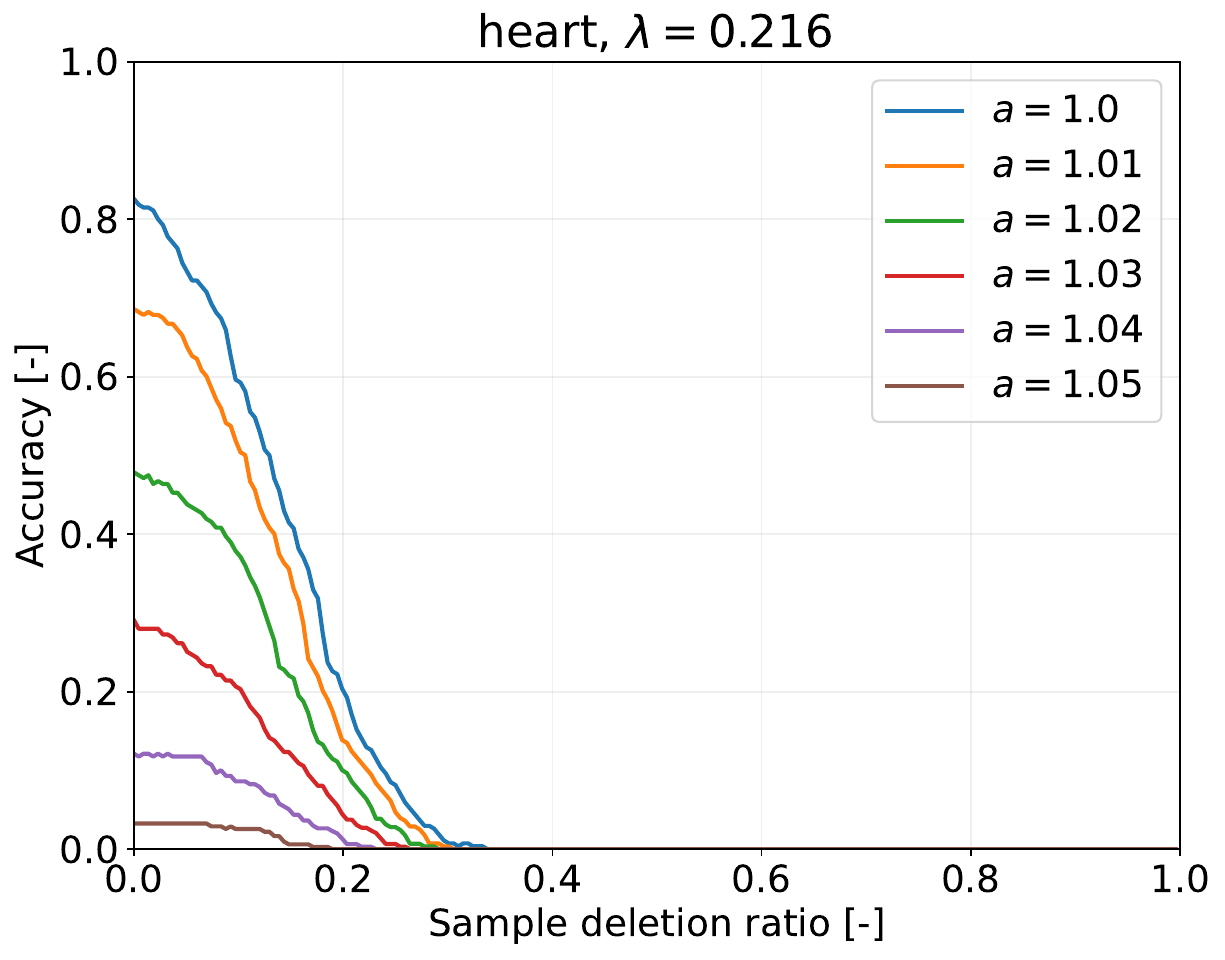}\end{minipage}
		&
		\begin{minipage}[b]{0.3\hsize}\centering {\small Dataset: heart, $\lambda=n \cdot 10^{-1.5}$}\\\includegraphics[width=0.8\hsize]{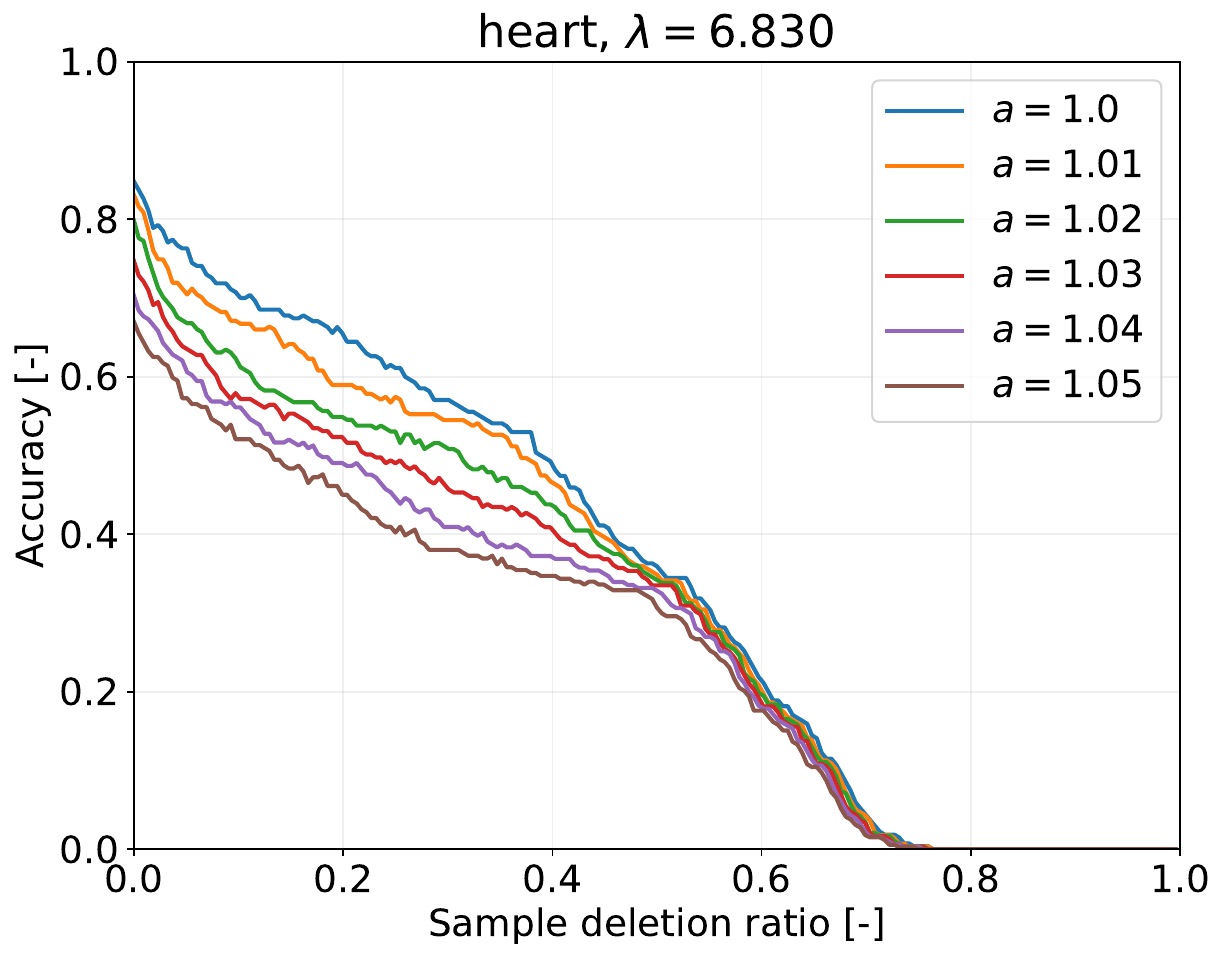}\end{minipage}
		&
		\begin{minipage}[b]{0.3\hsize}\centering {\small Dataset: heart, $\lambda=n$}\\\includegraphics[width=0.8\hsize]{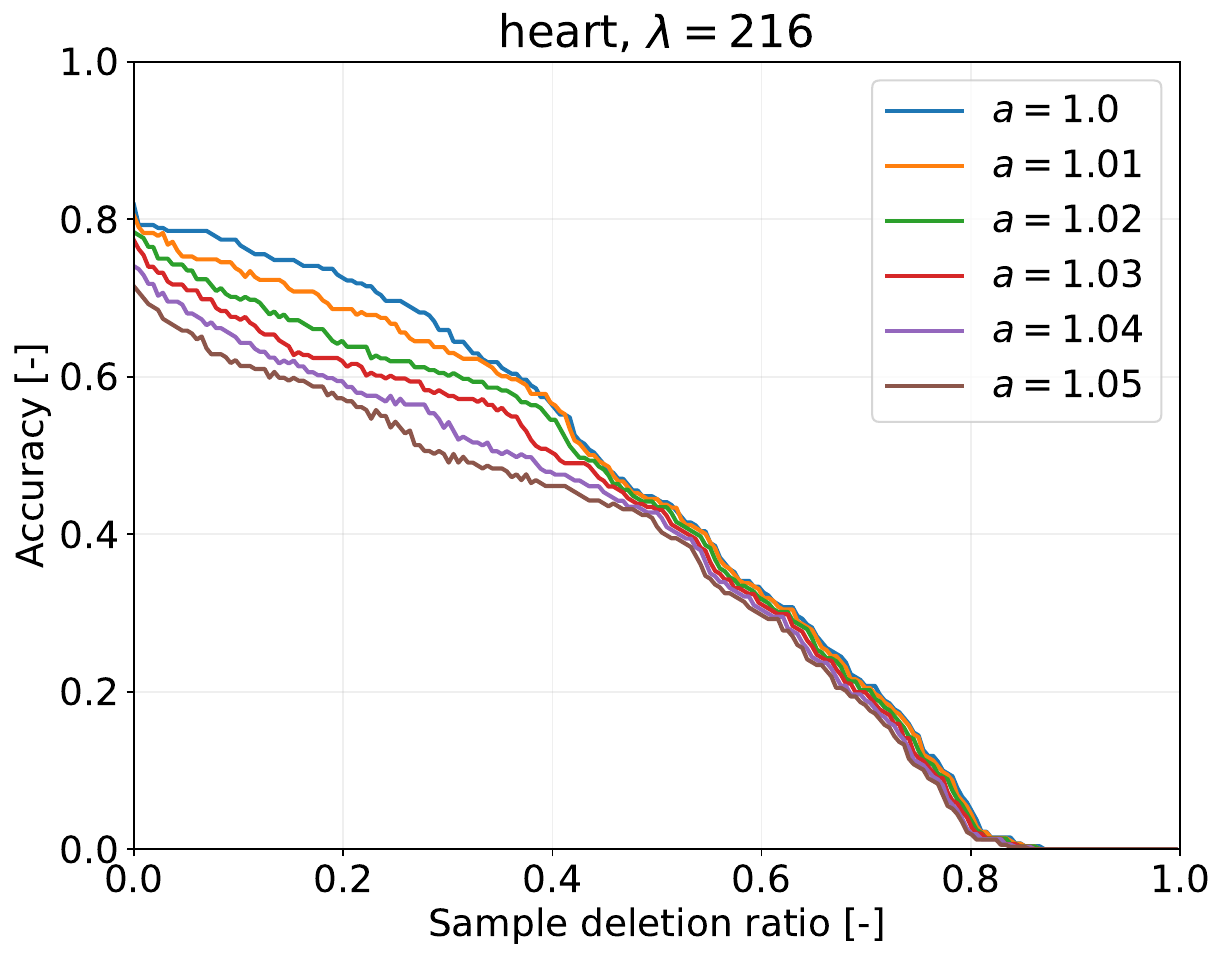}\end{minipage}
		\\
		\begin{minipage}[b]{0.3\hsize}\centering {\small Dataset: ionosphere, $\lambda=n \cdot 10^{-3}$}\\\includegraphics[width=0.8\hsize]{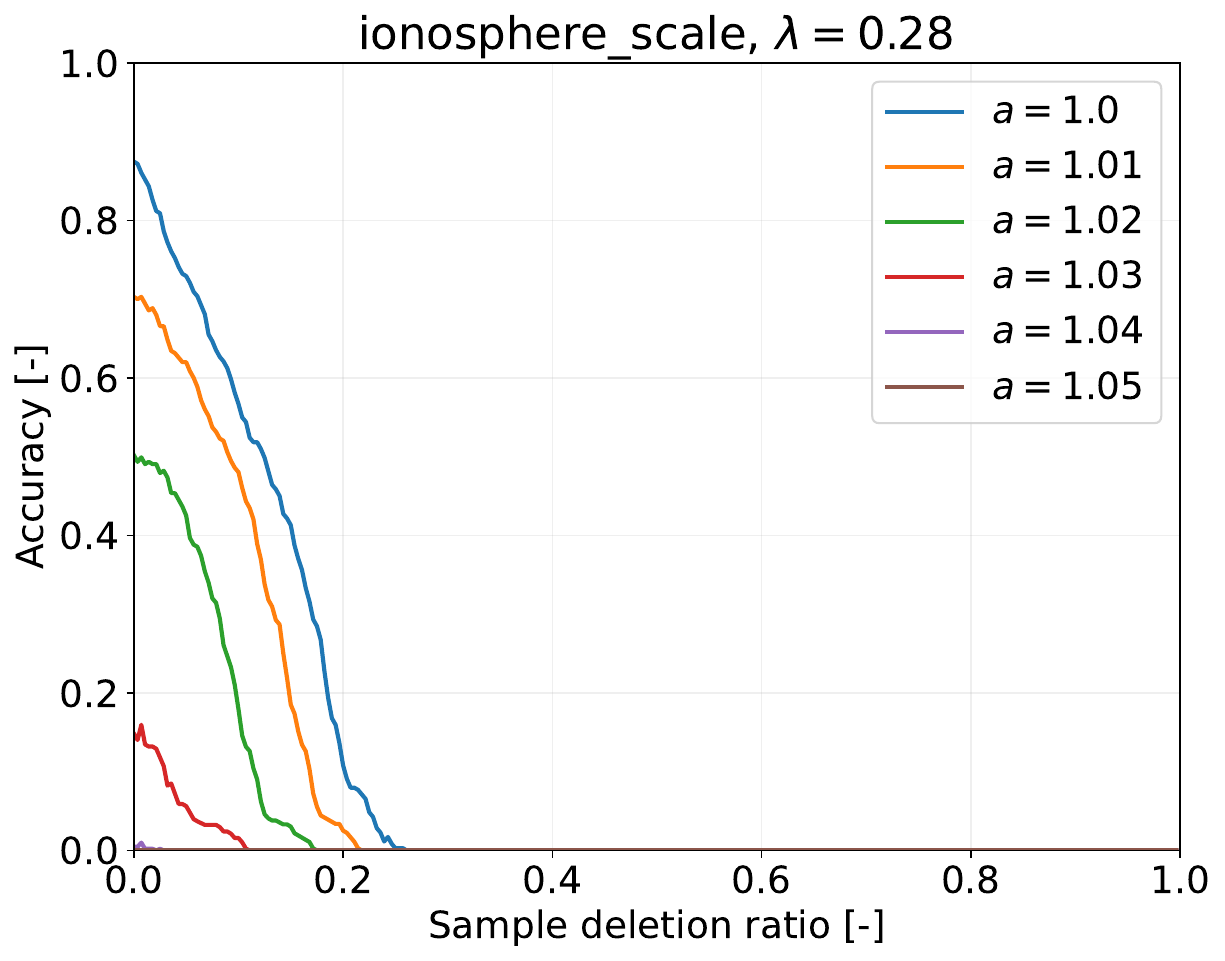}\end{minipage}
		&
		\begin{minipage}[b]{0.3\hsize}\centering {\small Dataset: ionosphere, $\lambda=n \cdot 10^{-1.5}$}\\\includegraphics[width=0.8\hsize]{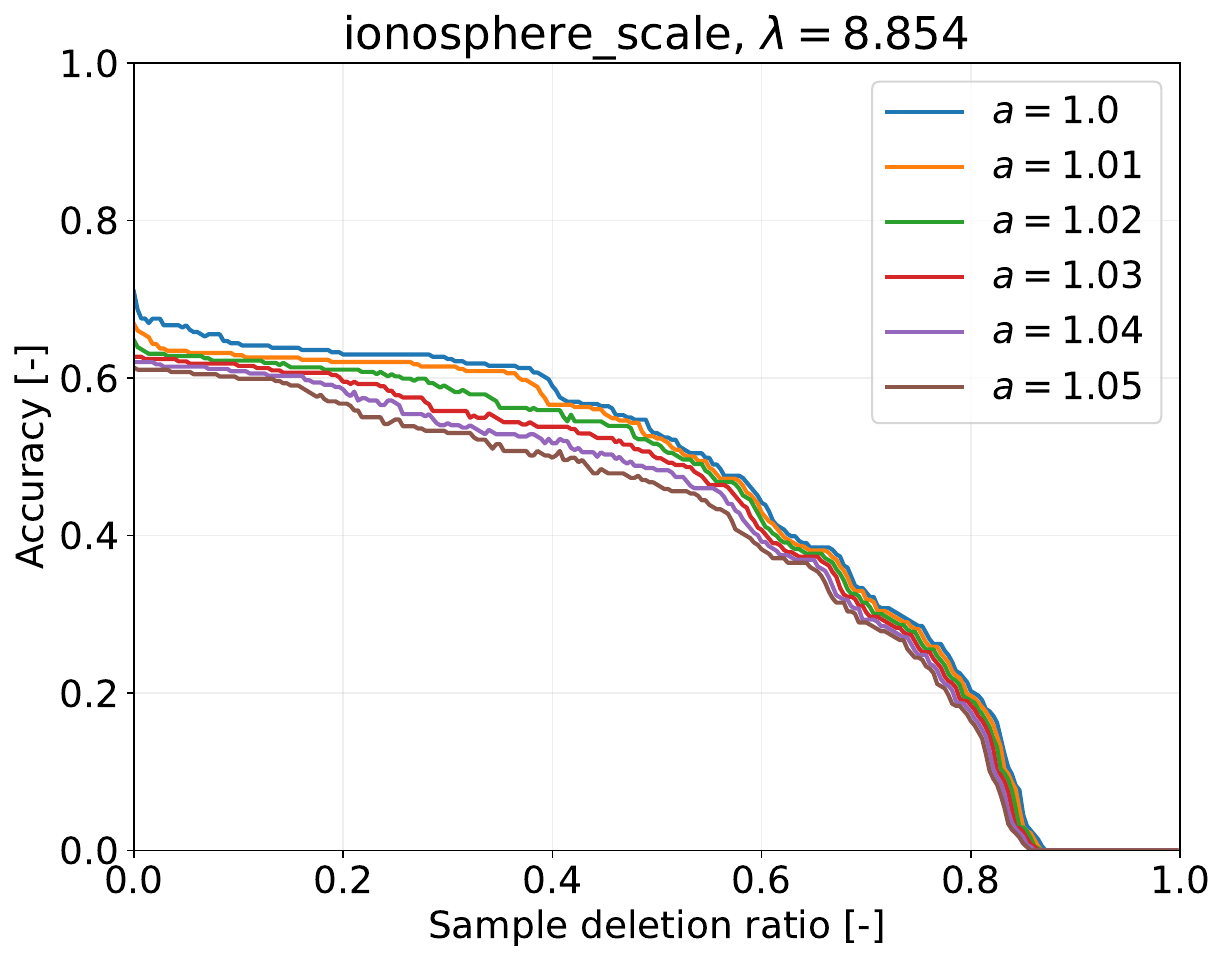}\end{minipage}
		&
		\begin{minipage}[b]{0.3\hsize}\centering {\small Dataset: ionosphere, $\lambda=n$}\\\includegraphics[width=0.8\hsize]{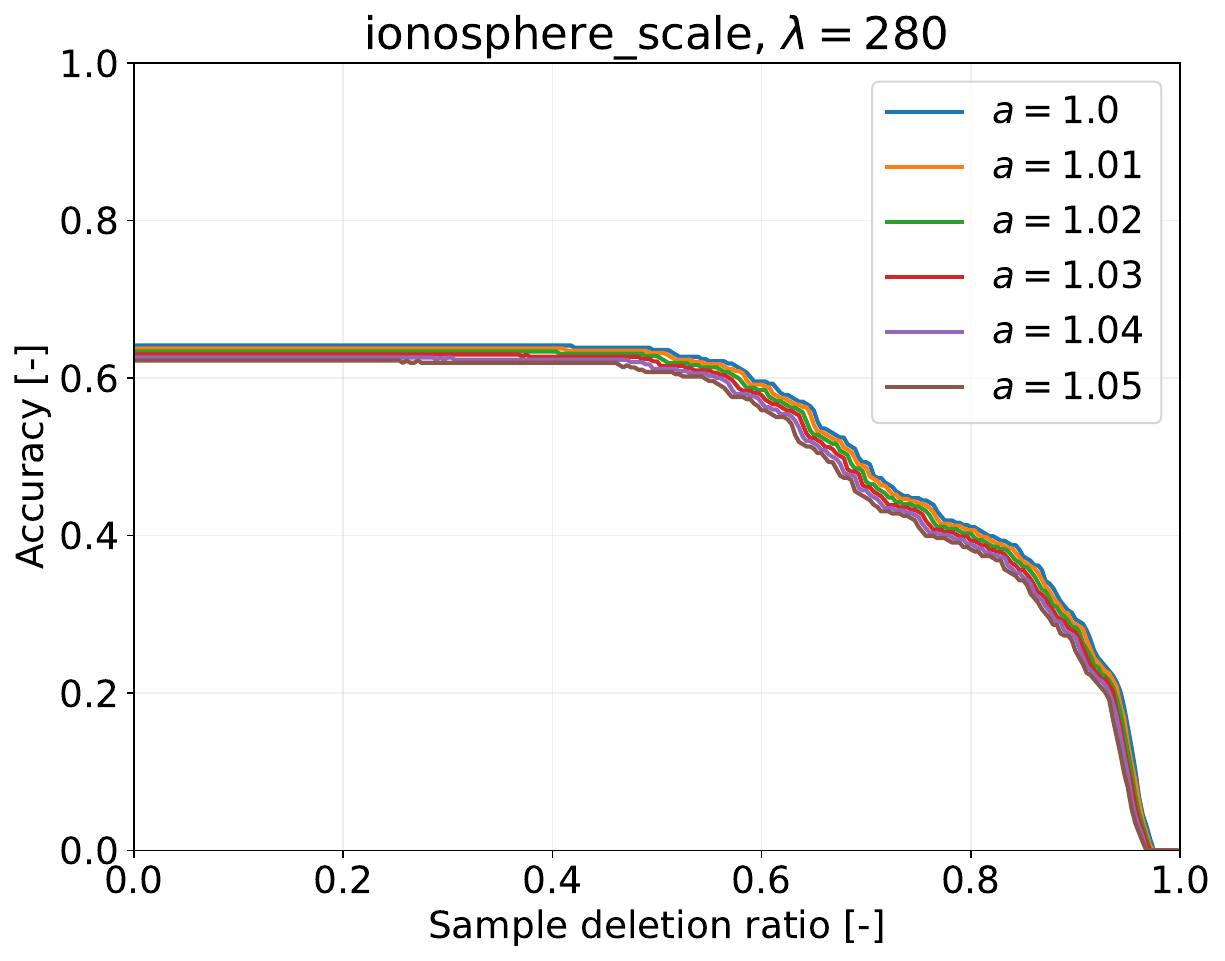}\end{minipage}
		\\
		\begin{minipage}[b]{0.3\hsize}\centering {\small Dataset: splice, $\lambda=n \cdot 10^{-3}$}\\\includegraphics[width=0.8\hsize]{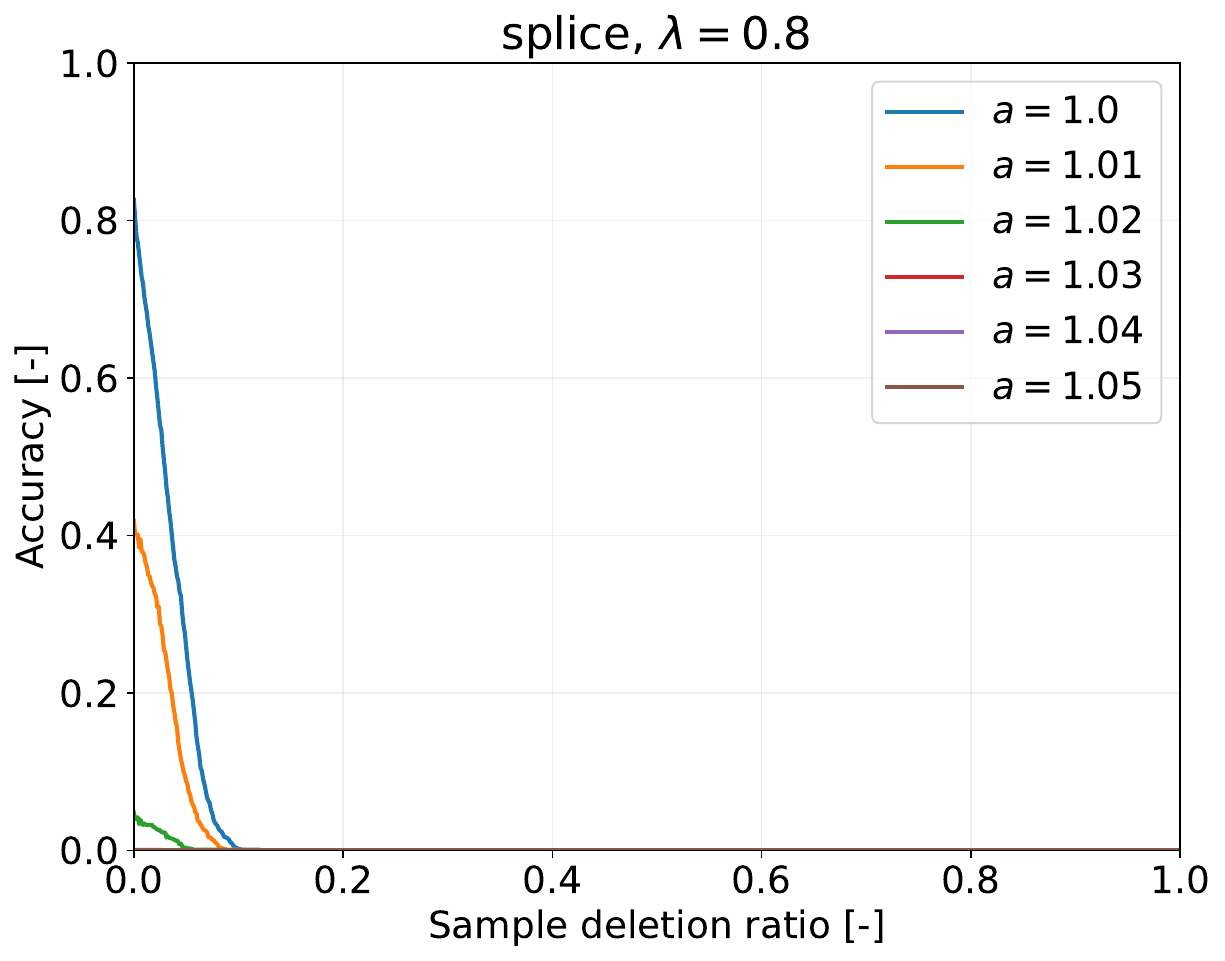}\end{minipage}
		&
		\begin{minipage}[b]{0.3\hsize}\centering {\small Dataset: splice, $\lambda=n_\mathrm \cdot 10^{-1.5}$}\\\includegraphics[width=0.8\hsize]{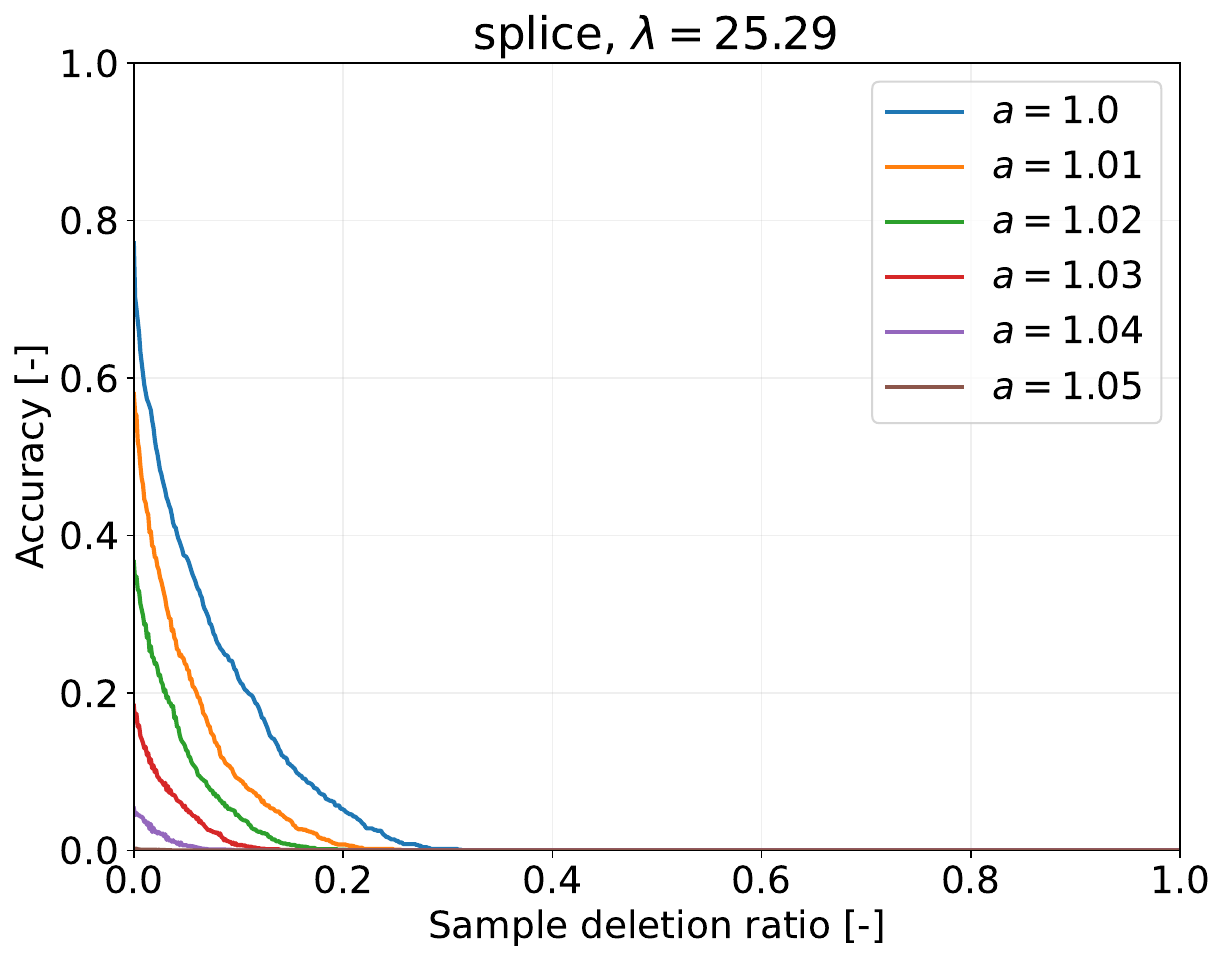}\end{minipage}
		&
		\begin{minipage}[b]{0.3\hsize}\centering {\small Dataset: splice, $\lambda=n$}\\\includegraphics[width=0.8\hsize]{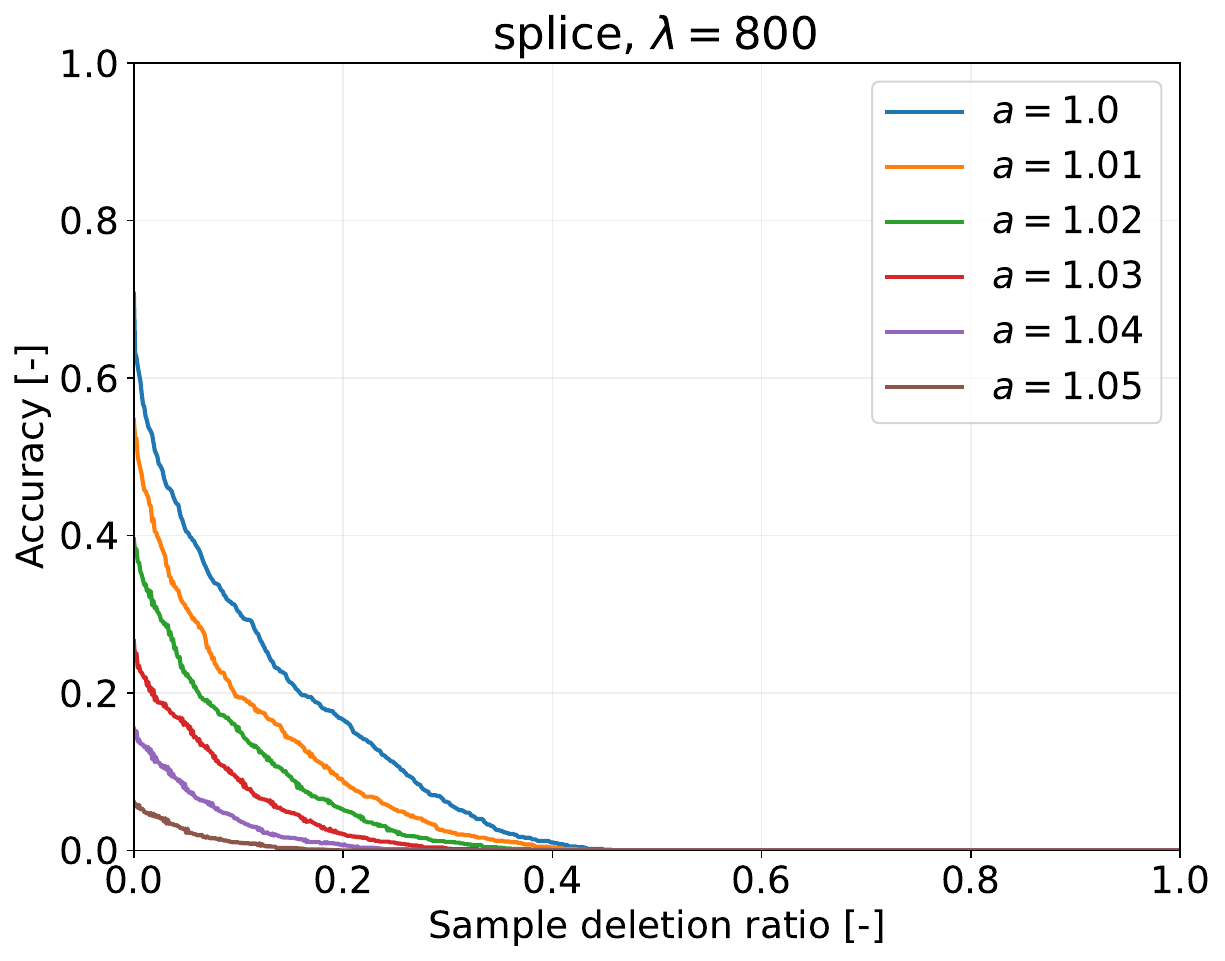}\end{minipage}
		\\
	
	\end{tabular}
	\caption{Model performance for RBF-kernel logistic regression models, under the settings described in Section \ref{sec:experiment} and Appendix \ref{app:experimental-setup}.}
	\label{fig:result-guarantee-logistic}
	\end{figure}

	\newpage
\subsection{All Experimental Results of Section \ref{subsec:result-table} using support vector machine model} \label{app:result-svm}

In this appendix~\ref{app:result-svm}, we show all experimental results using support vector machine model(hinge loss + L2 reguralization). First, we show model performance.

\begin{figure}[H]
\begin{tabular}{ccc}
	\begin{minipage}[b]{0.3\hsize}\centering {\small Dataset: australian, $\lambda=\lambda_\mathrm{best}$}\\\includegraphics[width=0.8\hsize]{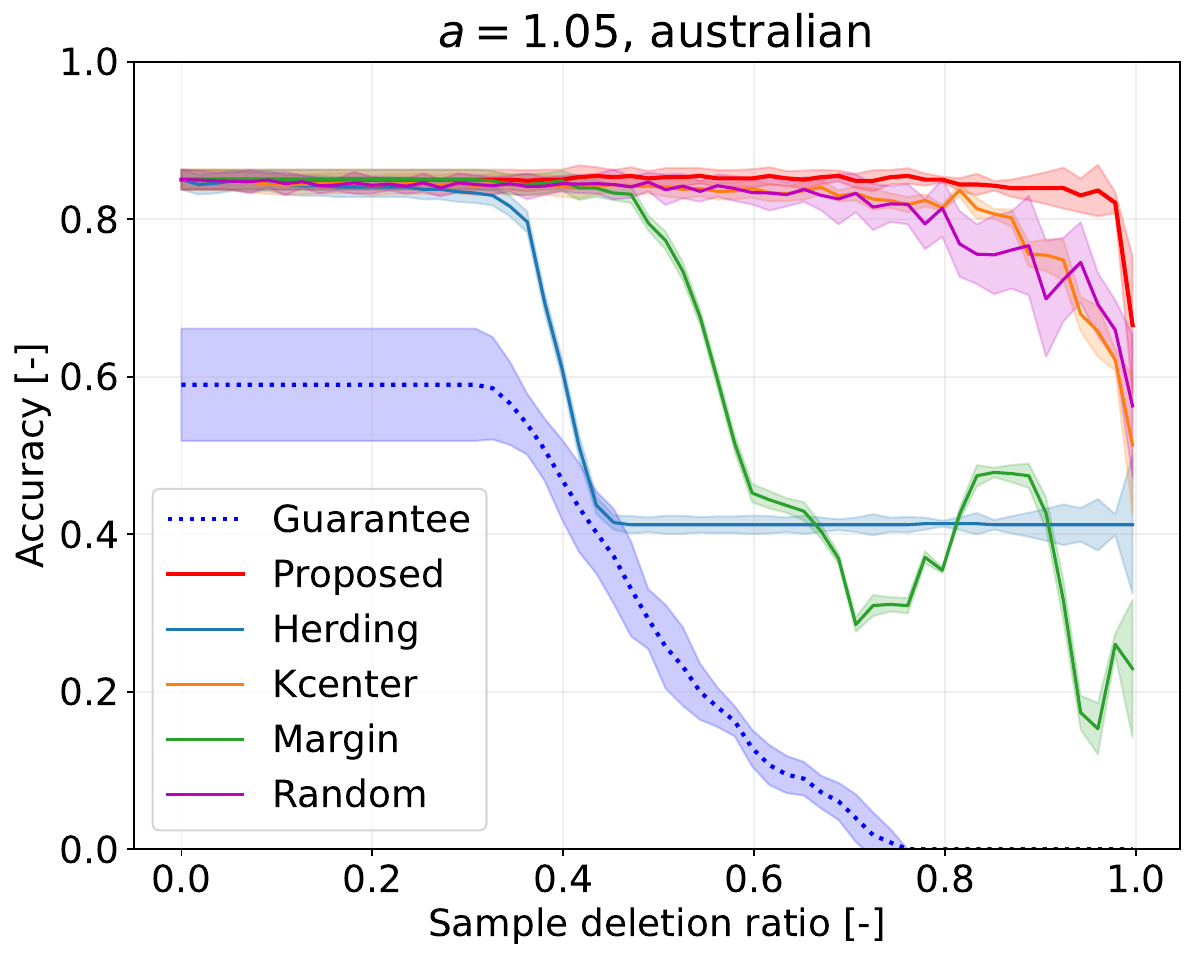}\end{minipage}
	&
	\begin{minipage}[b]{0.3\hsize}\centering {\small Dataset: australian, $\lambda=n \cdot 10^{-1.5}$}\\\includegraphics[width=0.8\hsize]{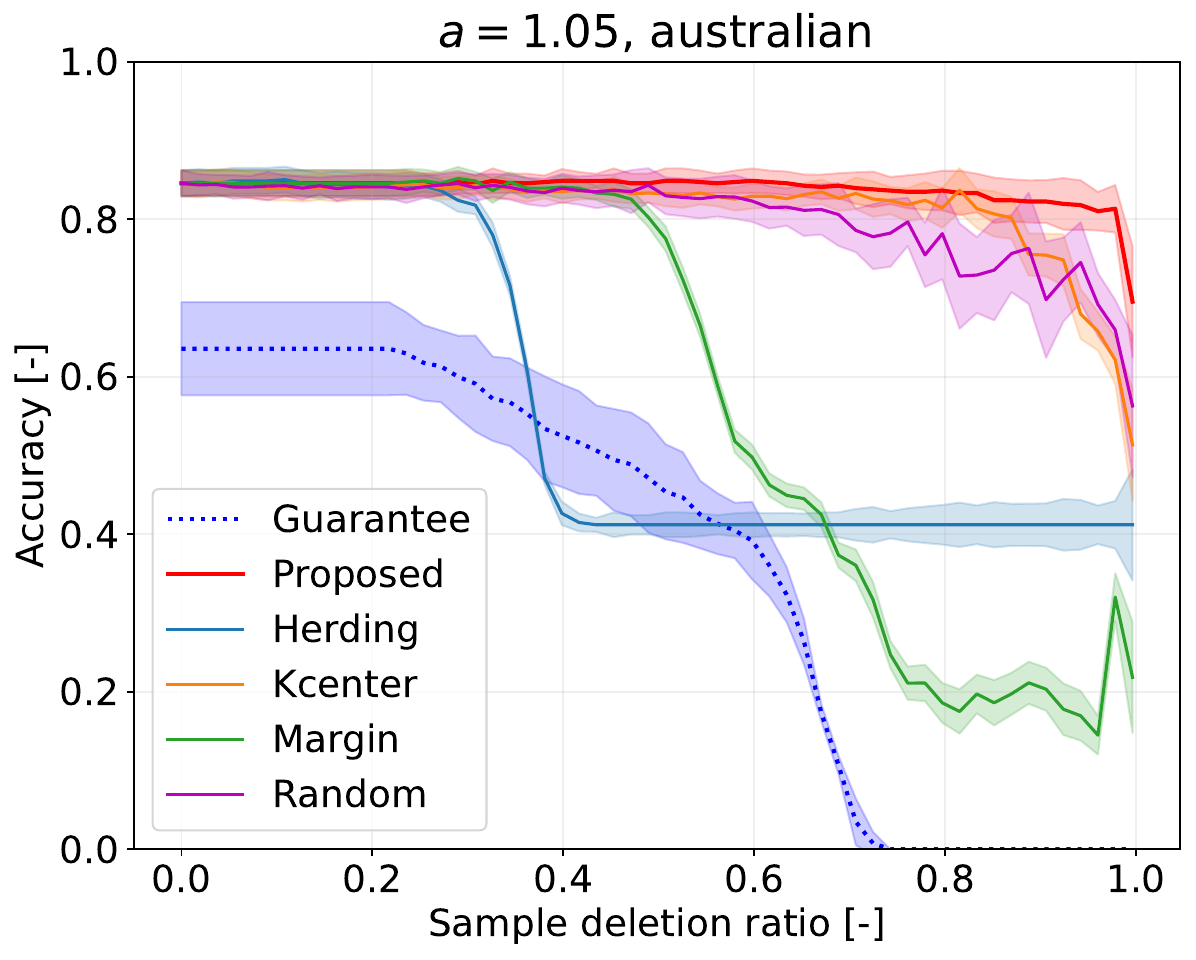}\end{minipage}
	&
	\begin{minipage}[b]{0.3\hsize}\centering {\small Dataset: australian, $\lambda=n$}\\\includegraphics[width=0.8\hsize]{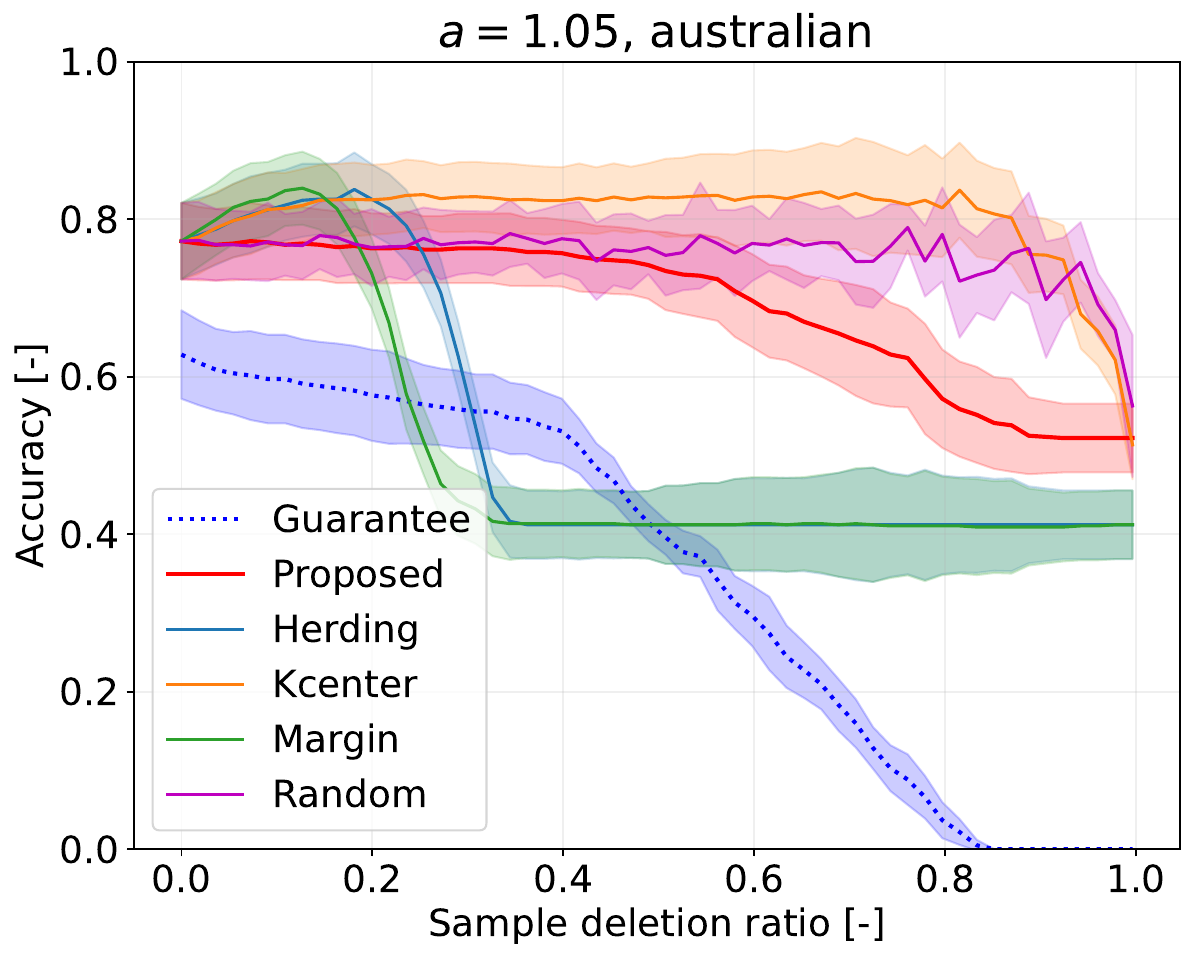}\end{minipage}
	\\
	\begin{minipage}[b]{0.3\hsize}\centering {\small Dataset: breast-cancer, $\lambda=\lambda_\mathrm{best}$}\\\includegraphics[width=0.8\hsize]{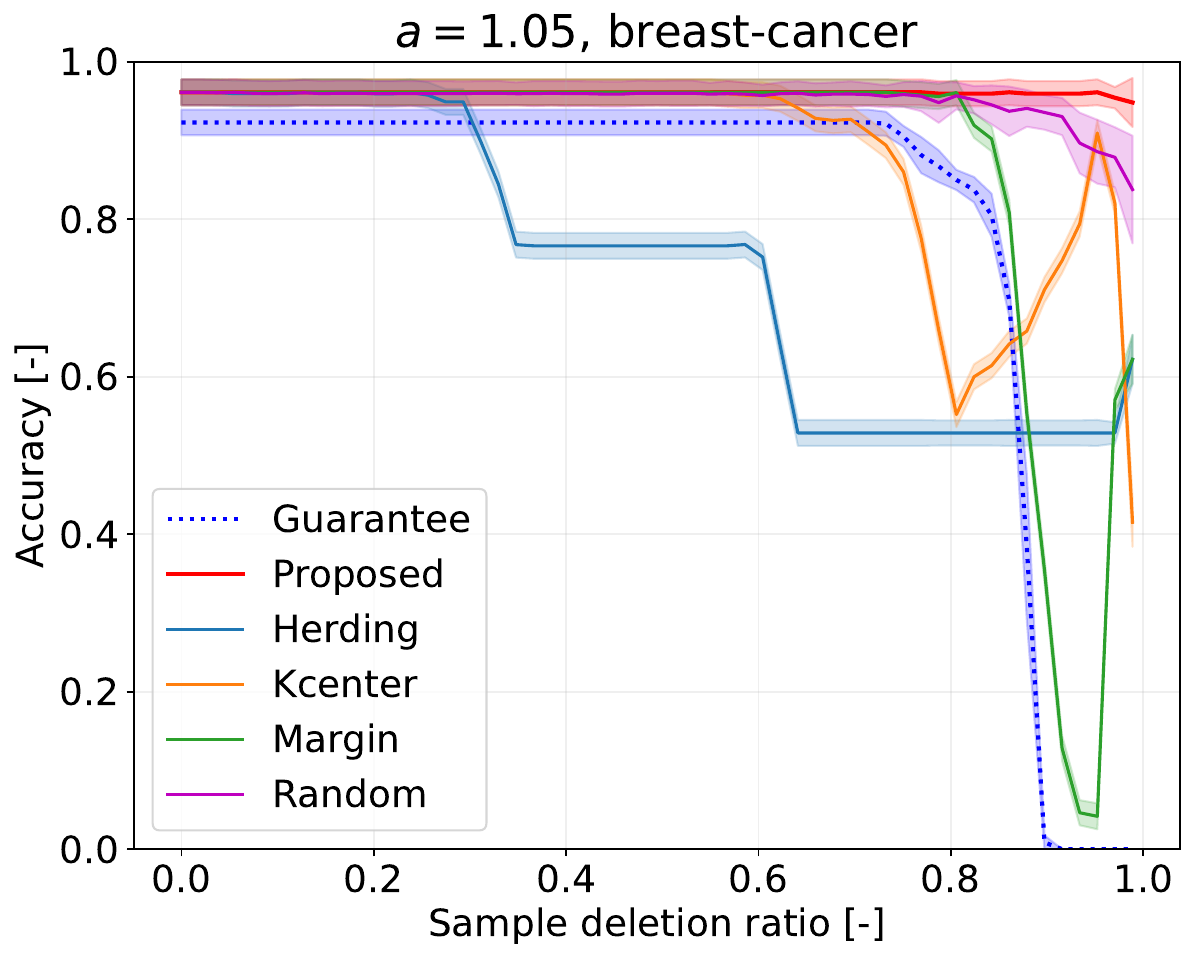}\end{minipage}
	&
	\begin{minipage}[b]{0.3\hsize}\centering {\small Dataset: breast-cancer, $\lambda=n \cdot 10^{-1.5}$}\\\includegraphics[width=0.8\hsize]{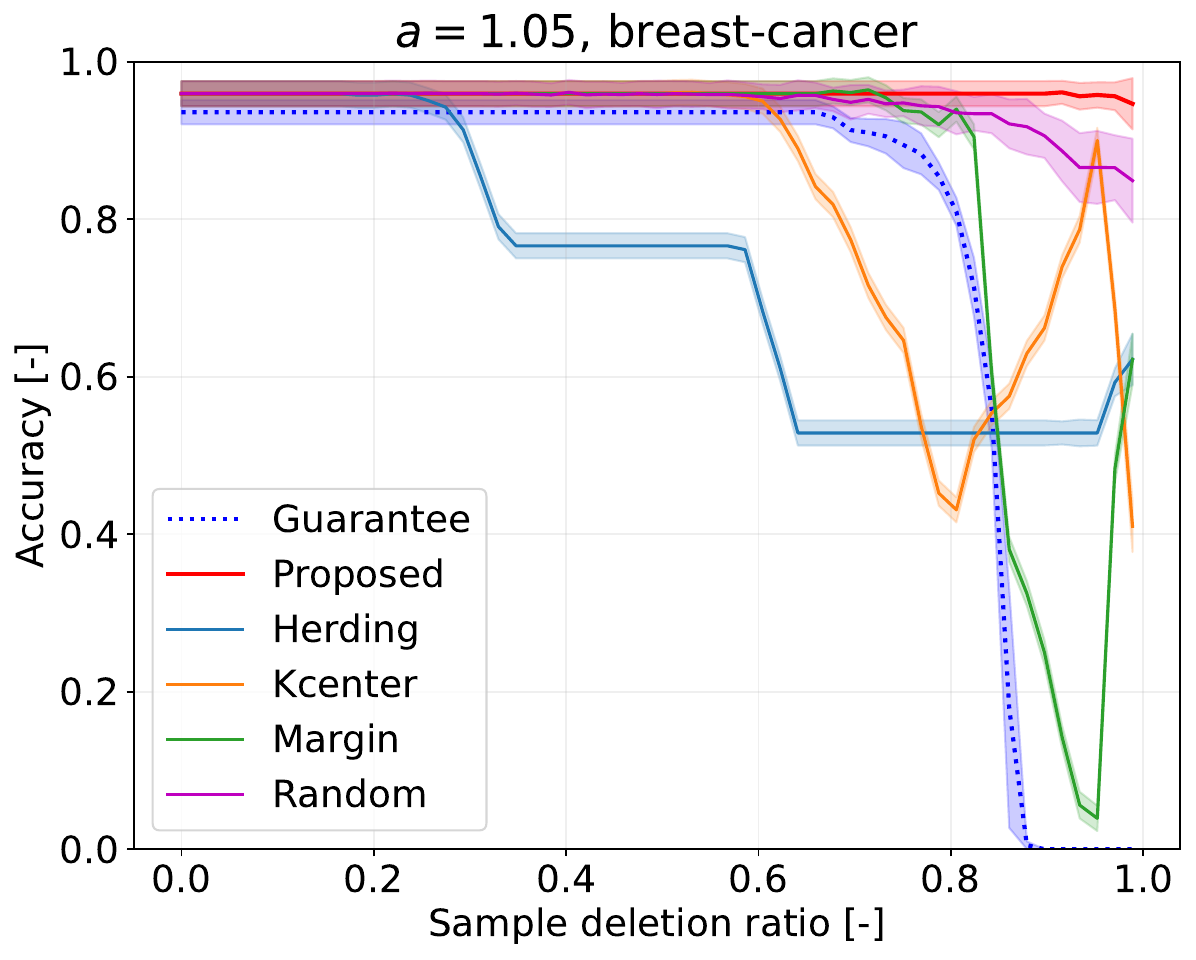}\end{minipage}
	&
	\begin{minipage}[b]{0.3\hsize}\centering {\small Dataset: breast-cancer, $\lambda=n$}\\\includegraphics[width=0.8\hsize]{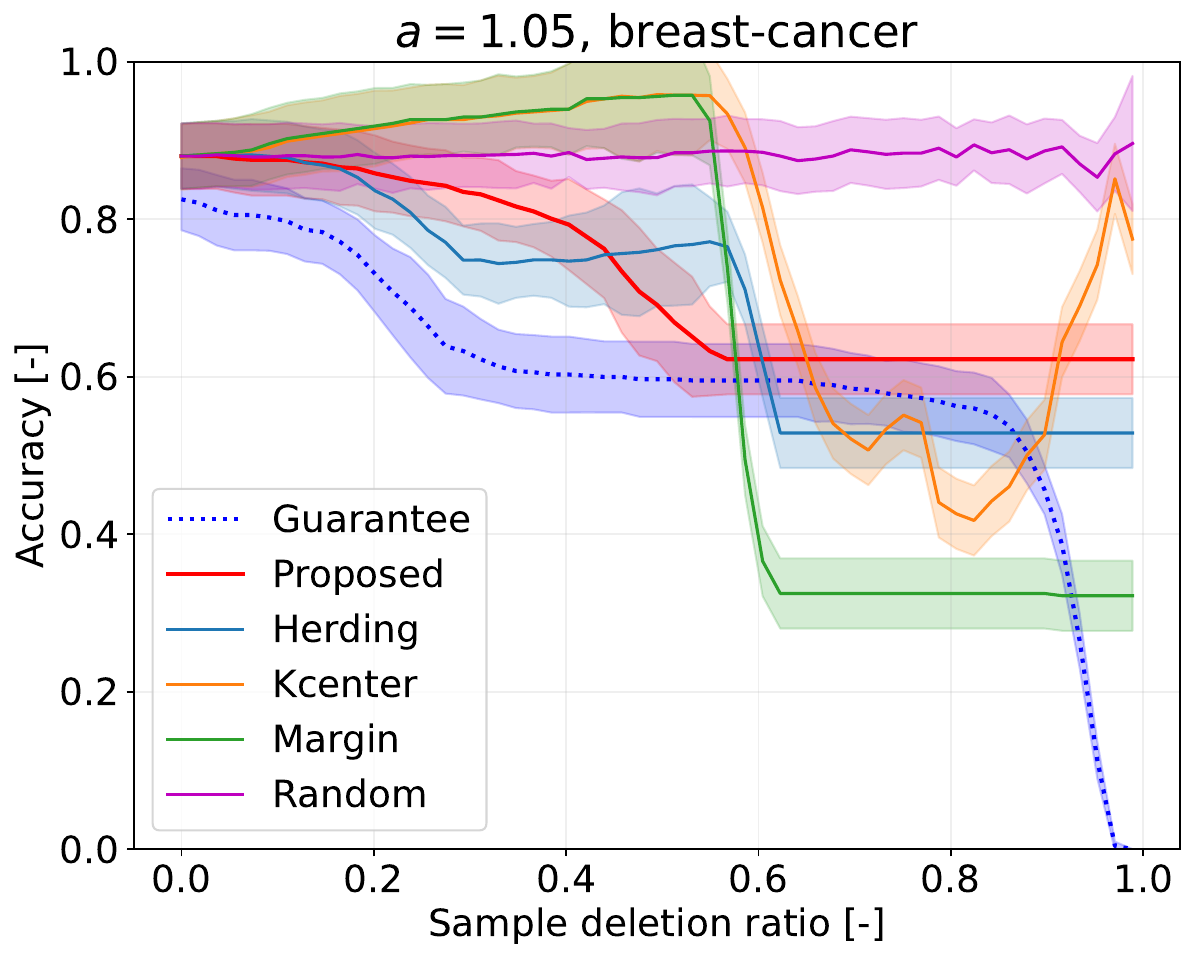}\end{minipage}
	\\
	\begin{minipage}[b]{0.3\hsize}\centering {\small Dataset: heart, $\lambda=\lambda_\mathrm{best}$}\\\includegraphics[width=0.8\hsize]{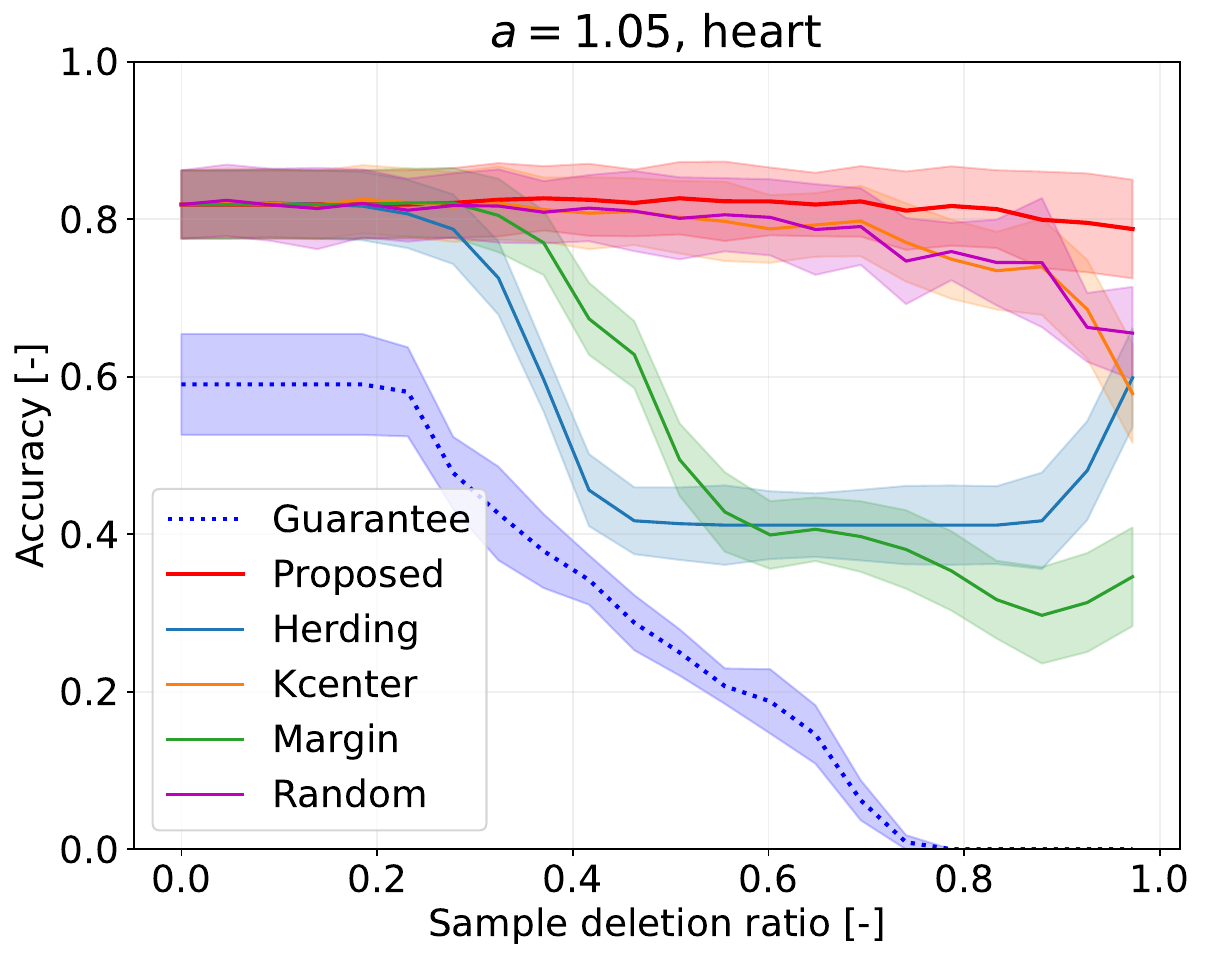}\end{minipage}
	&
	\begin{minipage}[b]{0.3\hsize}\centering {\small Dataset: heart, $\lambda=n \cdot 10^{-1.5}$}\\\includegraphics[width=0.8\hsize]{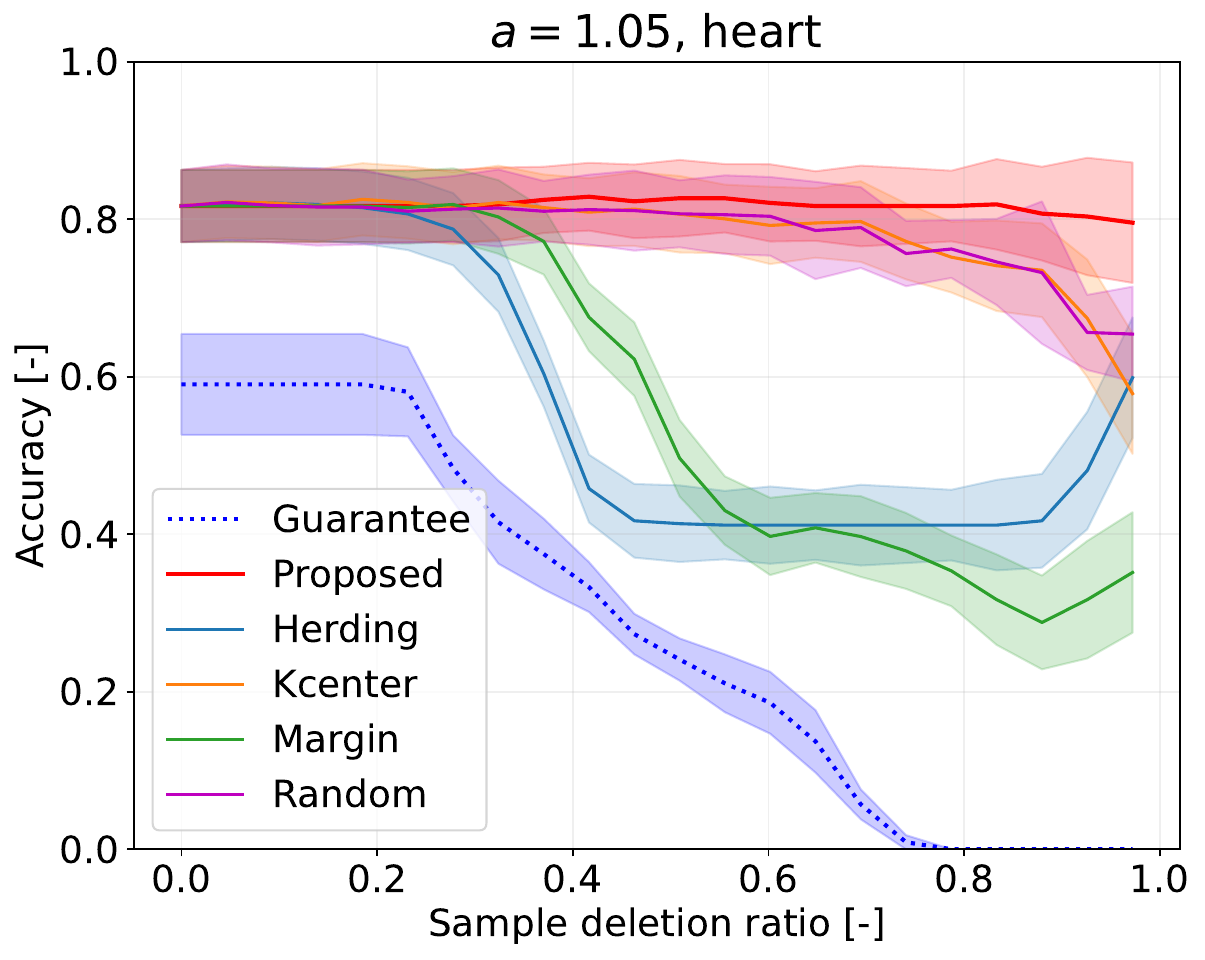}\end{minipage}
	&
	\begin{minipage}[b]{0.3\hsize}\centering {\small Dataset: heart, $\lambda=n$}\\\includegraphics[width=0.8\hsize]{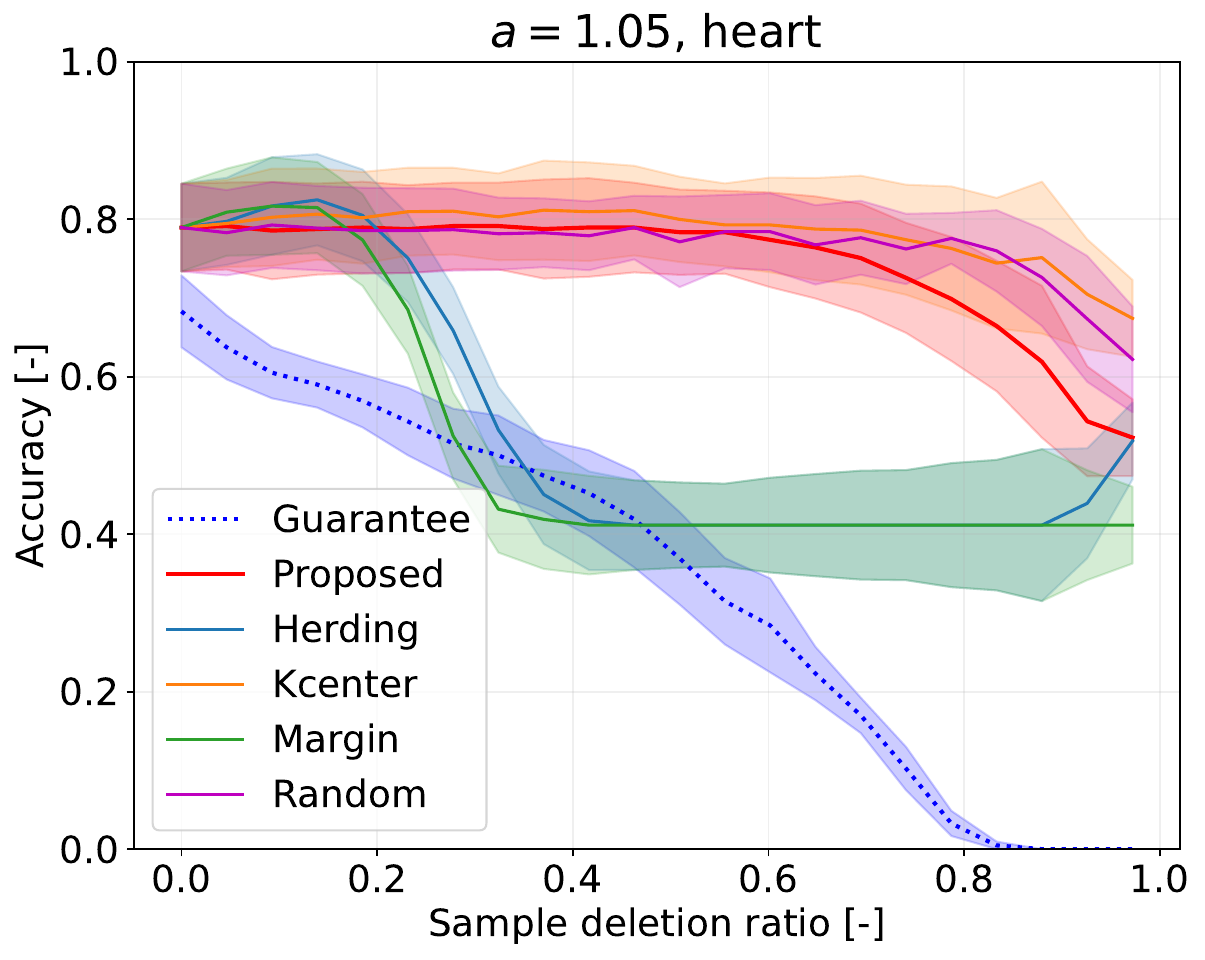}\end{minipage}
	\\
	\begin{minipage}[b]{0.3\hsize}\centering {\small Dataset: ionosphere, $\lambda=\lambda_\mathrm{best}$}\\\includegraphics[width=0.8\hsize]{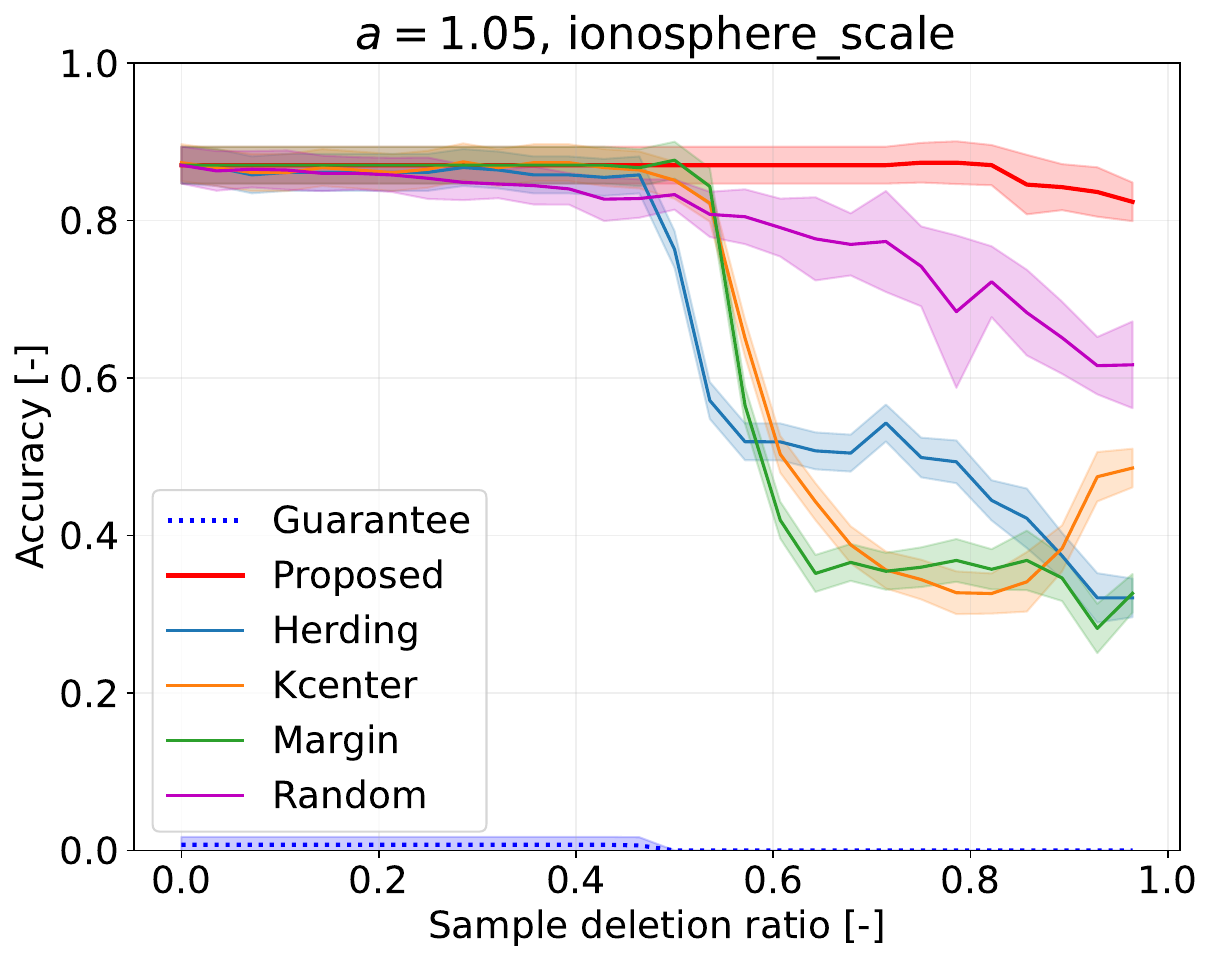}\end{minipage}
	&
	\begin{minipage}[b]{0.3\hsize}\centering {\small Dataset: ionosphere, $\lambda=n \cdot 10^{-1.5}$}\\\includegraphics[width=0.8\hsize]{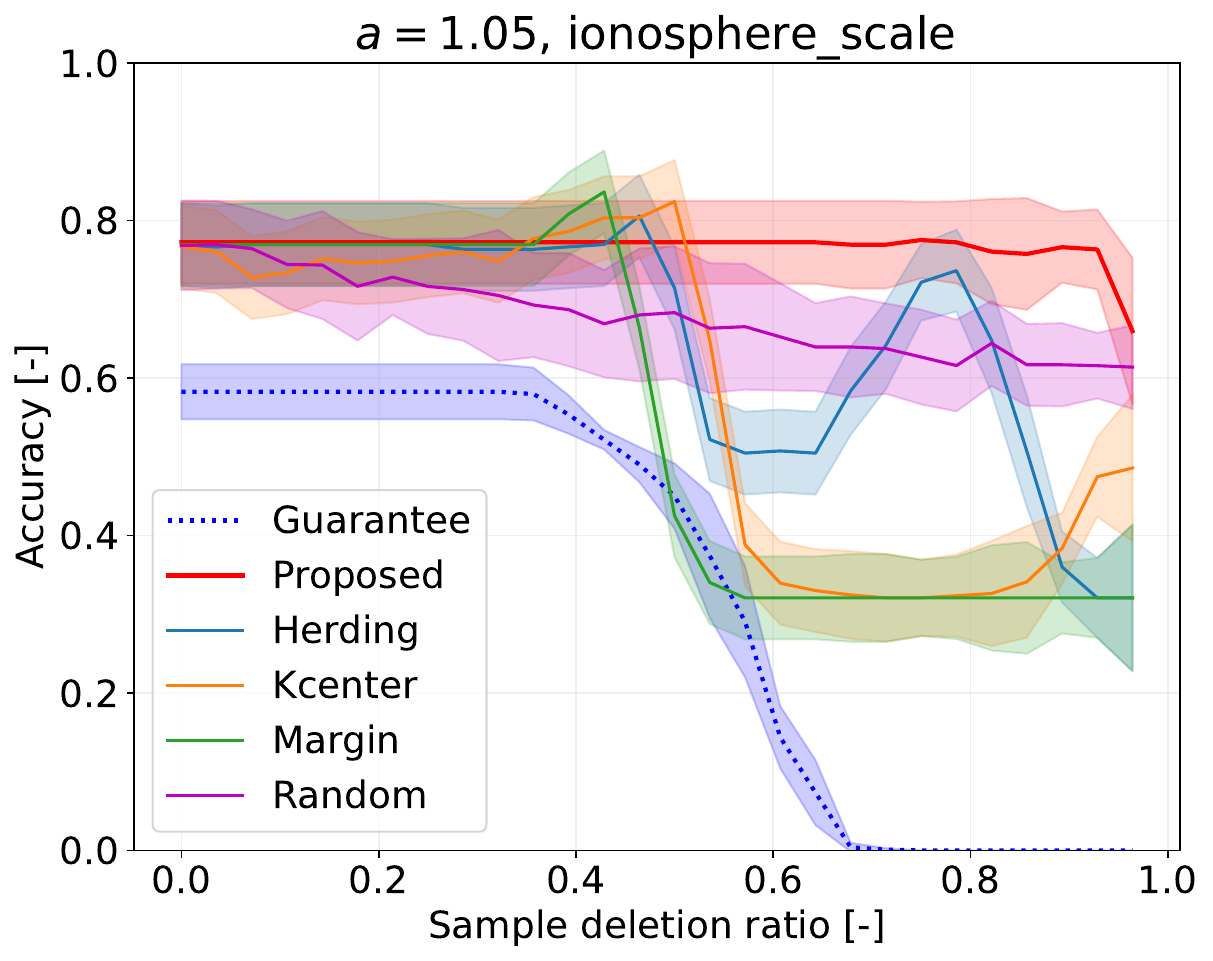}\end{minipage}
	&
	\begin{minipage}[b]{0.3\hsize}\centering {\small Dataset: ionosphere, $\lambda=n$}\\\includegraphics[width=0.8\hsize]{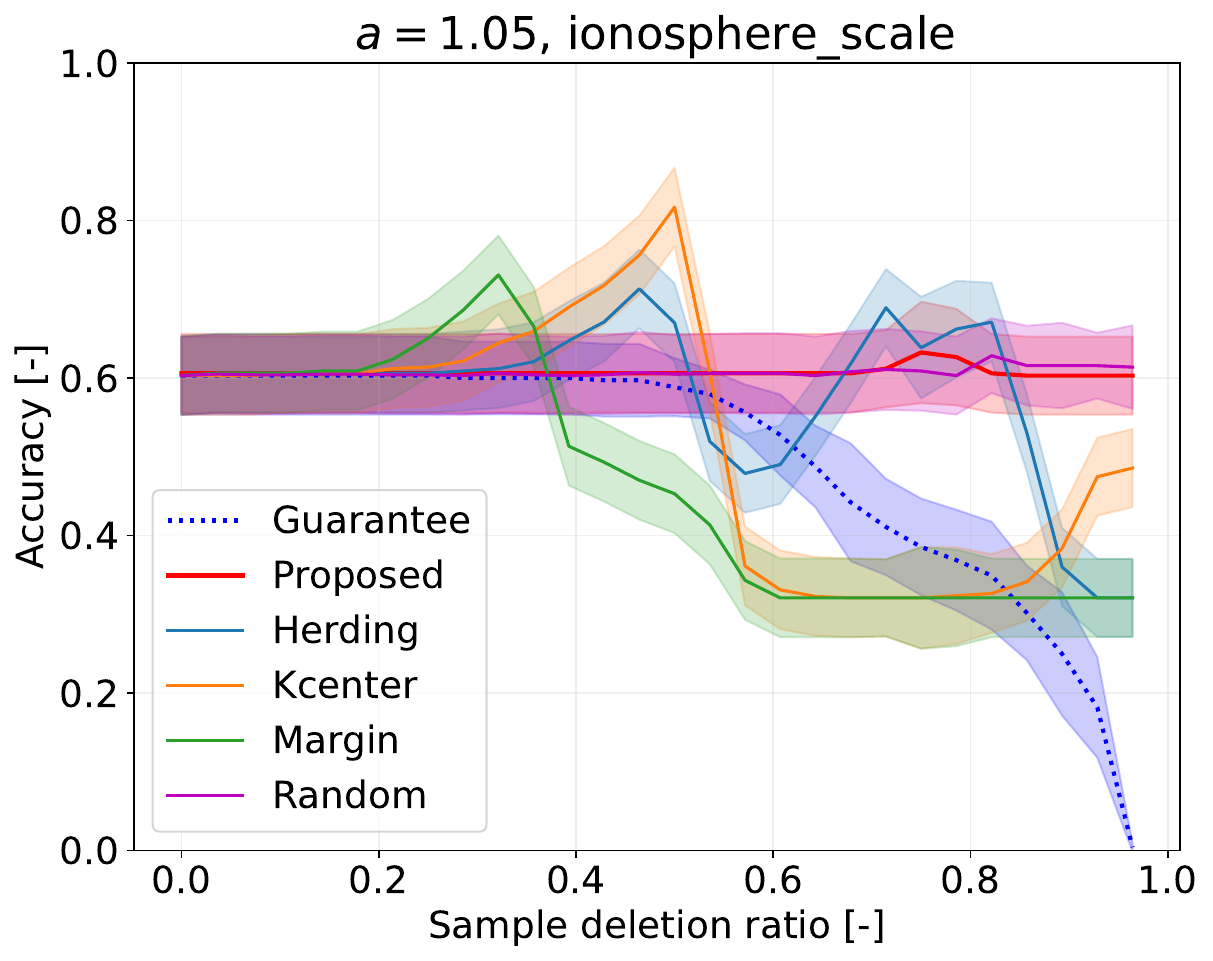}\end{minipage}
	\\
	\begin{minipage}[b]{0.3\hsize}\centering {\small Dataset: splice, $\lambda=\lambda_\mathrm{best}$}\\\includegraphics[width=0.8\hsize]{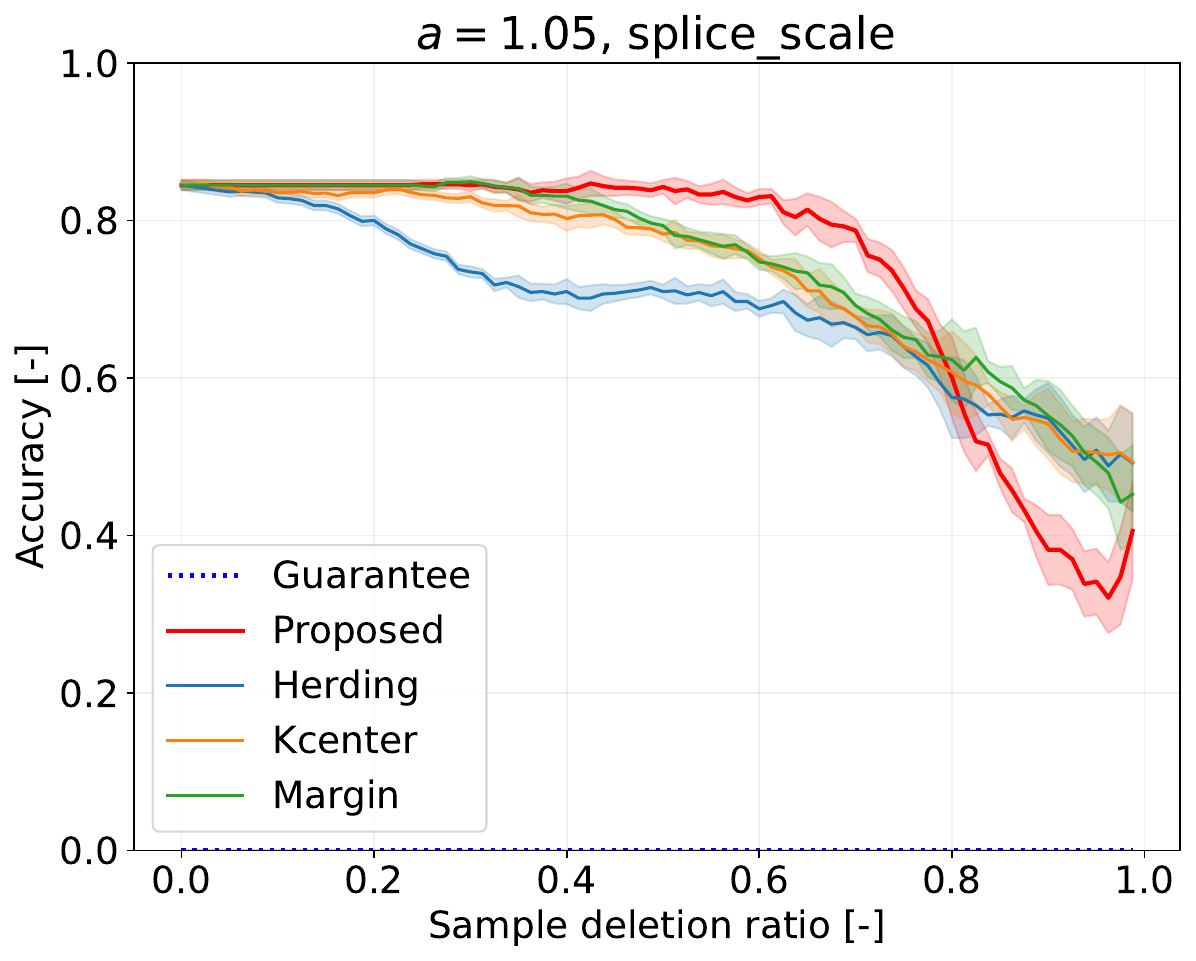}\end{minipage}
	&
	\begin{minipage}[b]{0.3\hsize}\centering {\small Dataset: splice, $\lambda=n_\mathrm \cdot 10^{-1.5}$}\\\includegraphics[width=0.8\hsize]{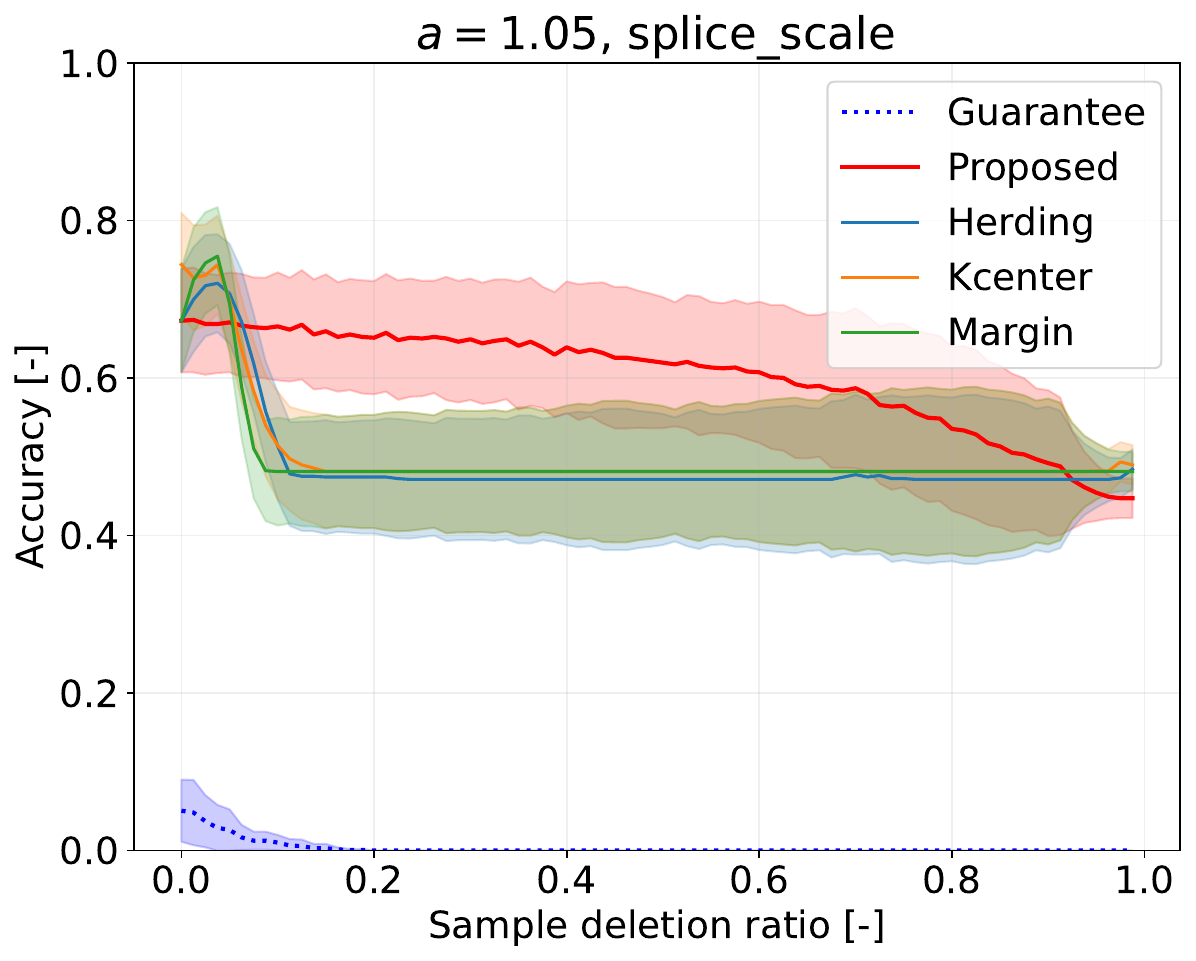}\end{minipage}
	&
	\begin{minipage}[b]{0.3\hsize}\centering {\small Dataset: splice, $\lambda=n$}\\\includegraphics[width=0.8\hsize]{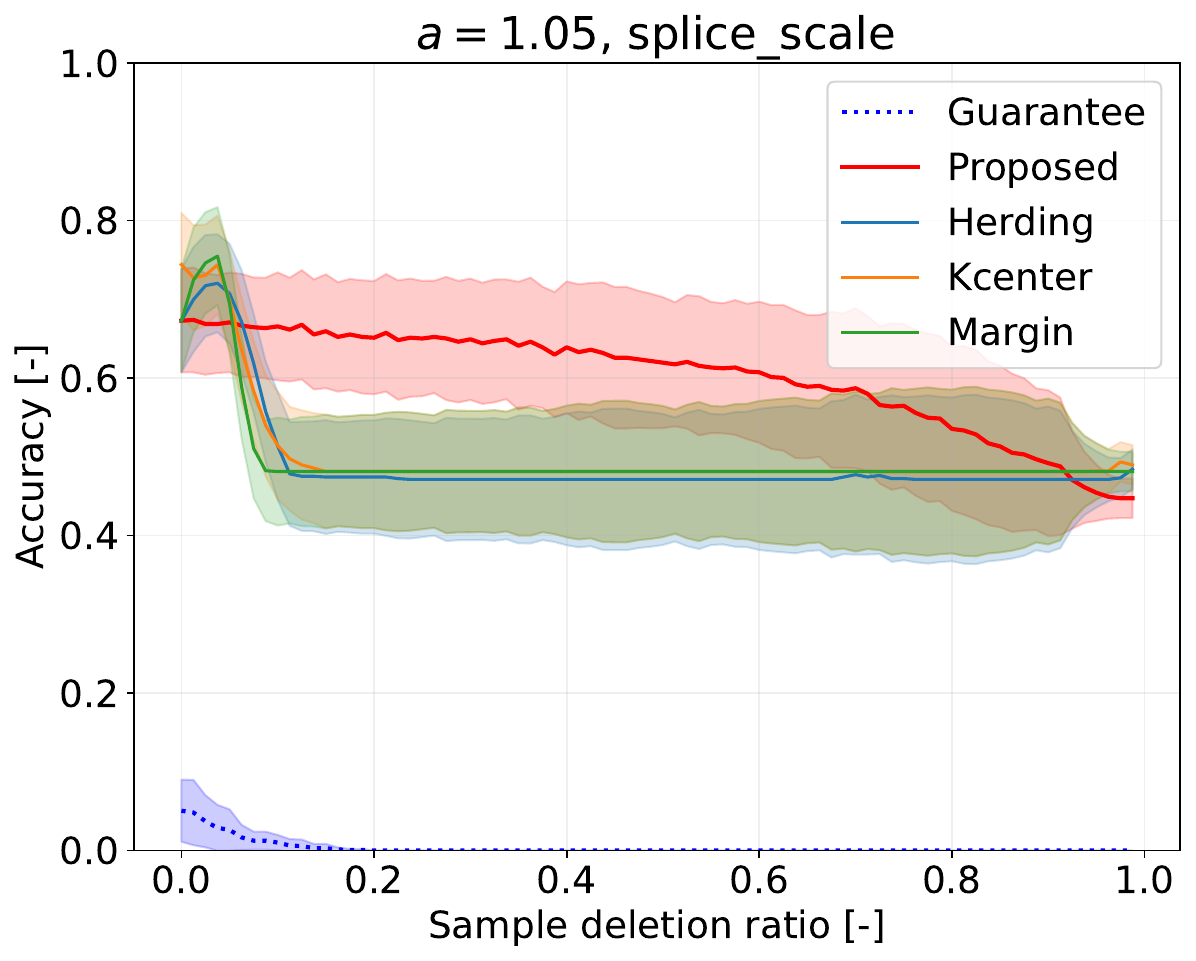}\end{minipage}
	\\

\end{tabular}
\caption{Model performance for RBF-kernel SVMs, under the settings described in Section \ref{sec:experiment} and Appendix \ref{app:experimental-setup}.}
\label{fig:result-acc-svm}
\end{figure}

Next, we show a guarantee of model performance.

\begin{figure}[H]
	\begin{tabular}{ccc}
		\begin{minipage}[b]{0.3\hsize}\centering {\small Dataset: australian, $\lambda=n \cdot 10^{-3}$}\\\includegraphics[width=0.8\hsize]{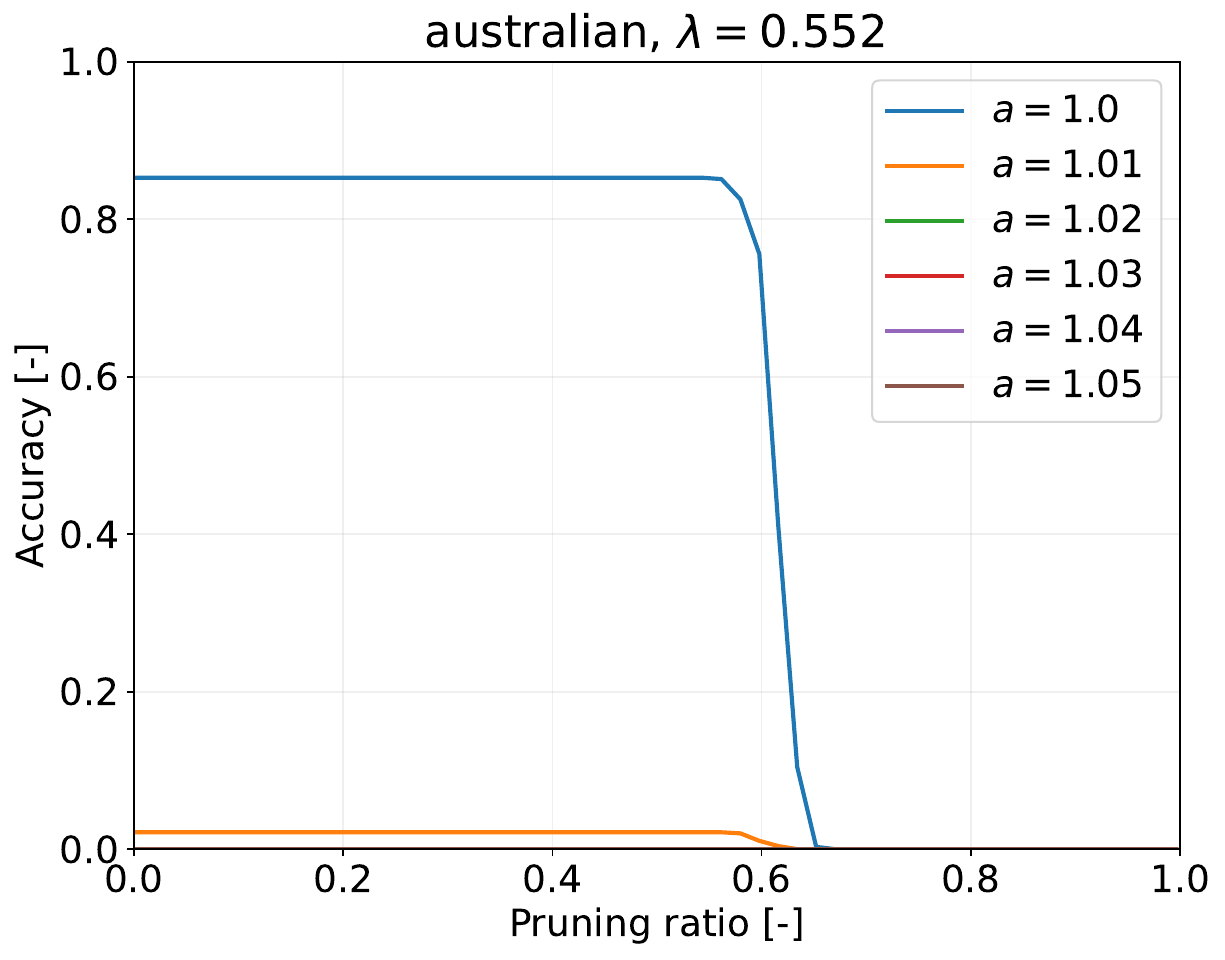}\end{minipage}
		&
		\begin{minipage}[b]{0.3\hsize}\centering {\small Dataset: australian, $\lambda=n \cdot 10^{-1.5}$}\\\includegraphics[width=0.8\hsize]{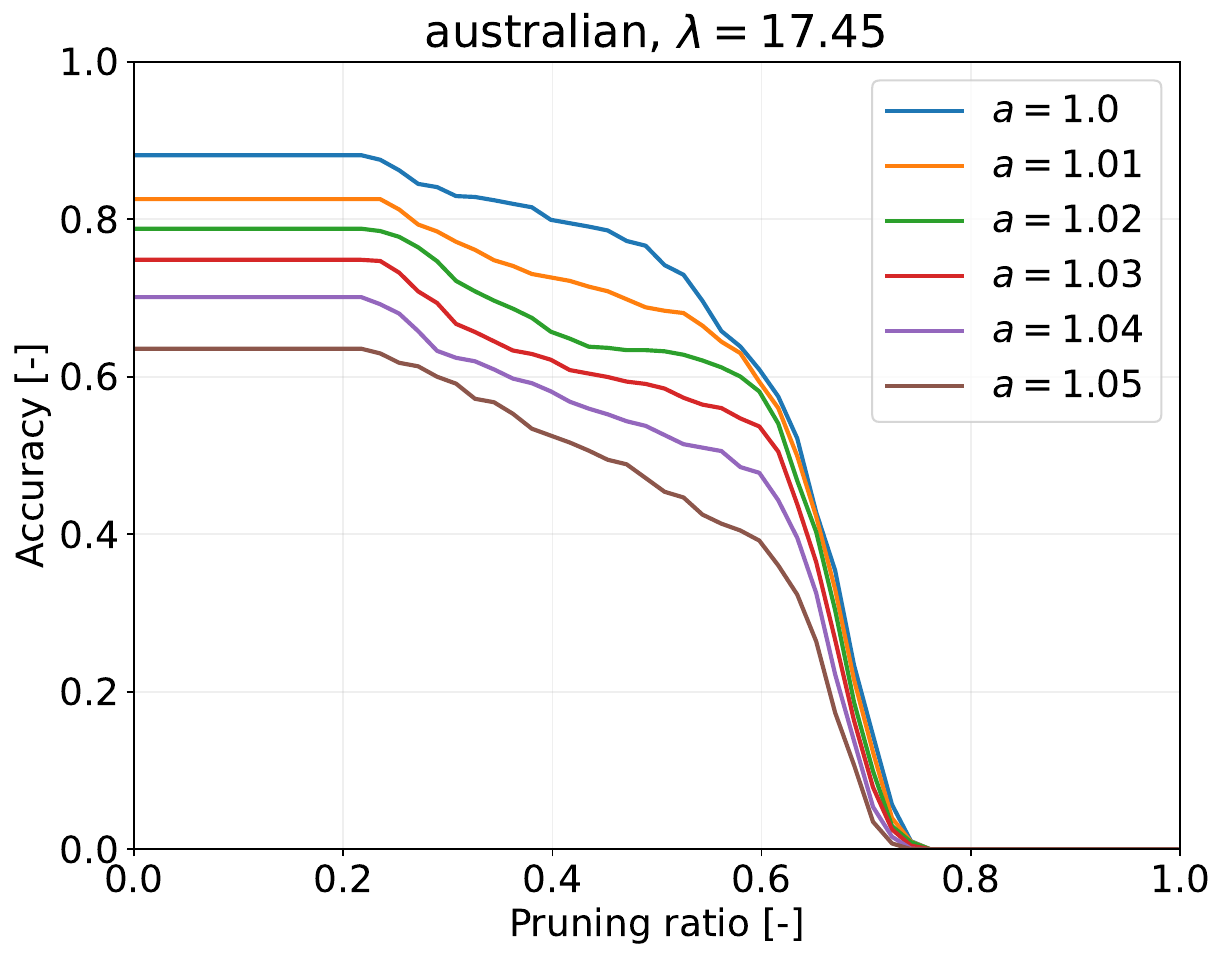}\end{minipage}
		&
		\begin{minipage}[b]{0.3\hsize}\centering {\small Dataset: australian, $\lambda=n$}\\\includegraphics[width=0.8\hsize]{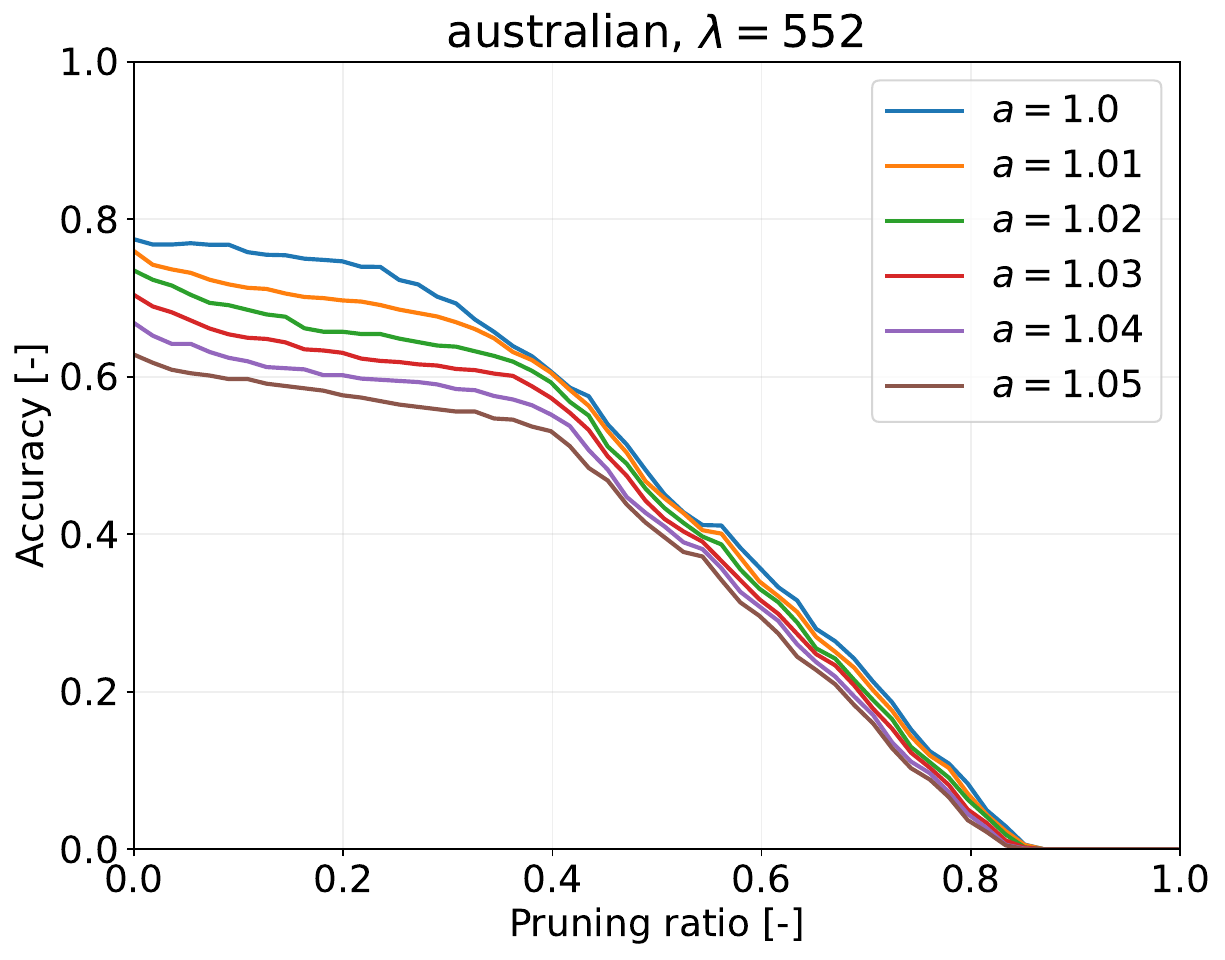}\end{minipage}
		\\
		\begin{minipage}[b]{0.3\hsize}\centering {\small Dataset: breast-cancer, $\lambda=n \cdot 10^{-3}$}\\\includegraphics[width=0.8\hsize]{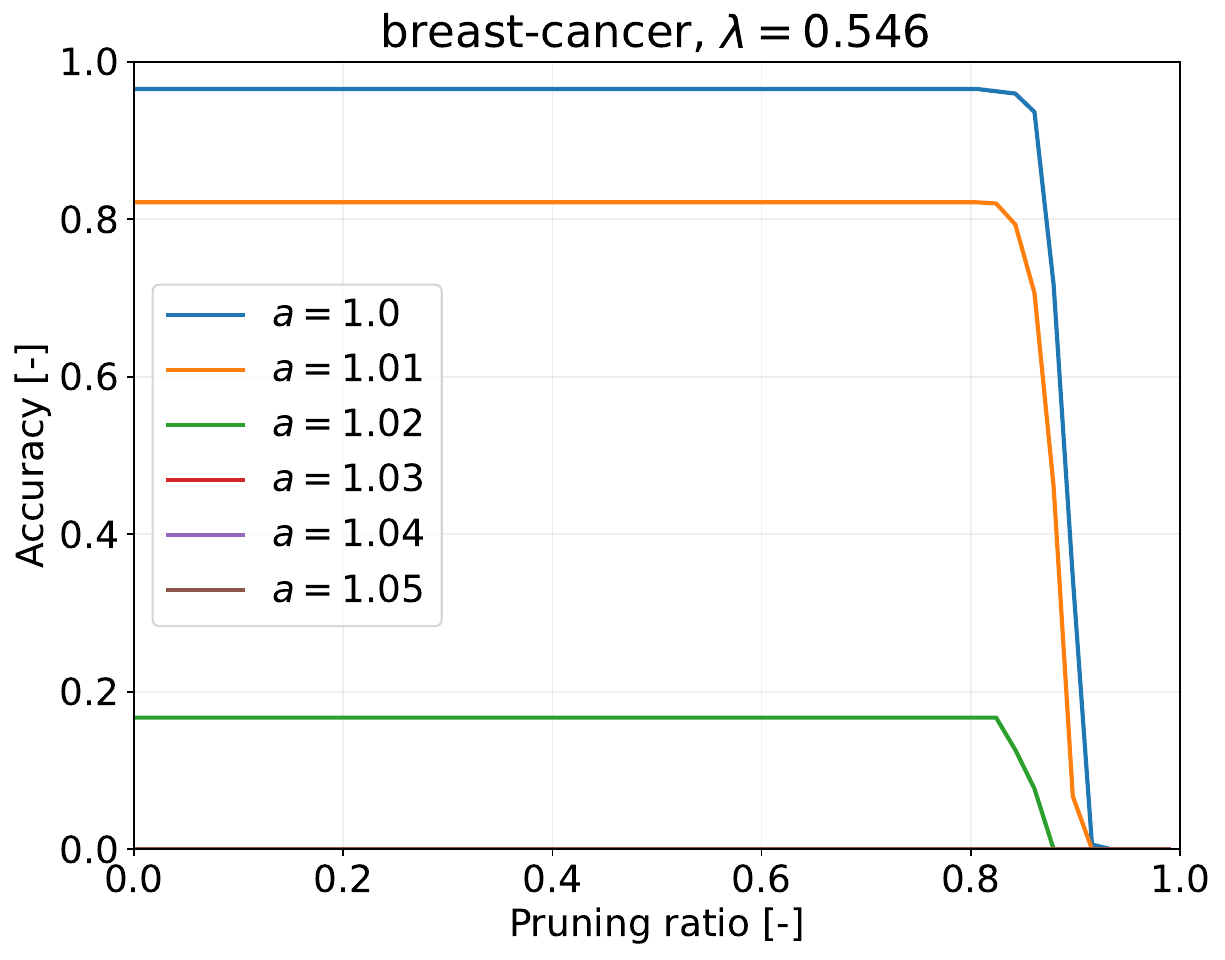}\end{minipage}
		&
		\begin{minipage}[b]{0.3\hsize}\centering {\small Dataset: breast-cancer, $\lambda=n \cdot 10^{-1.5}$}\\\includegraphics[width=0.8\hsize]{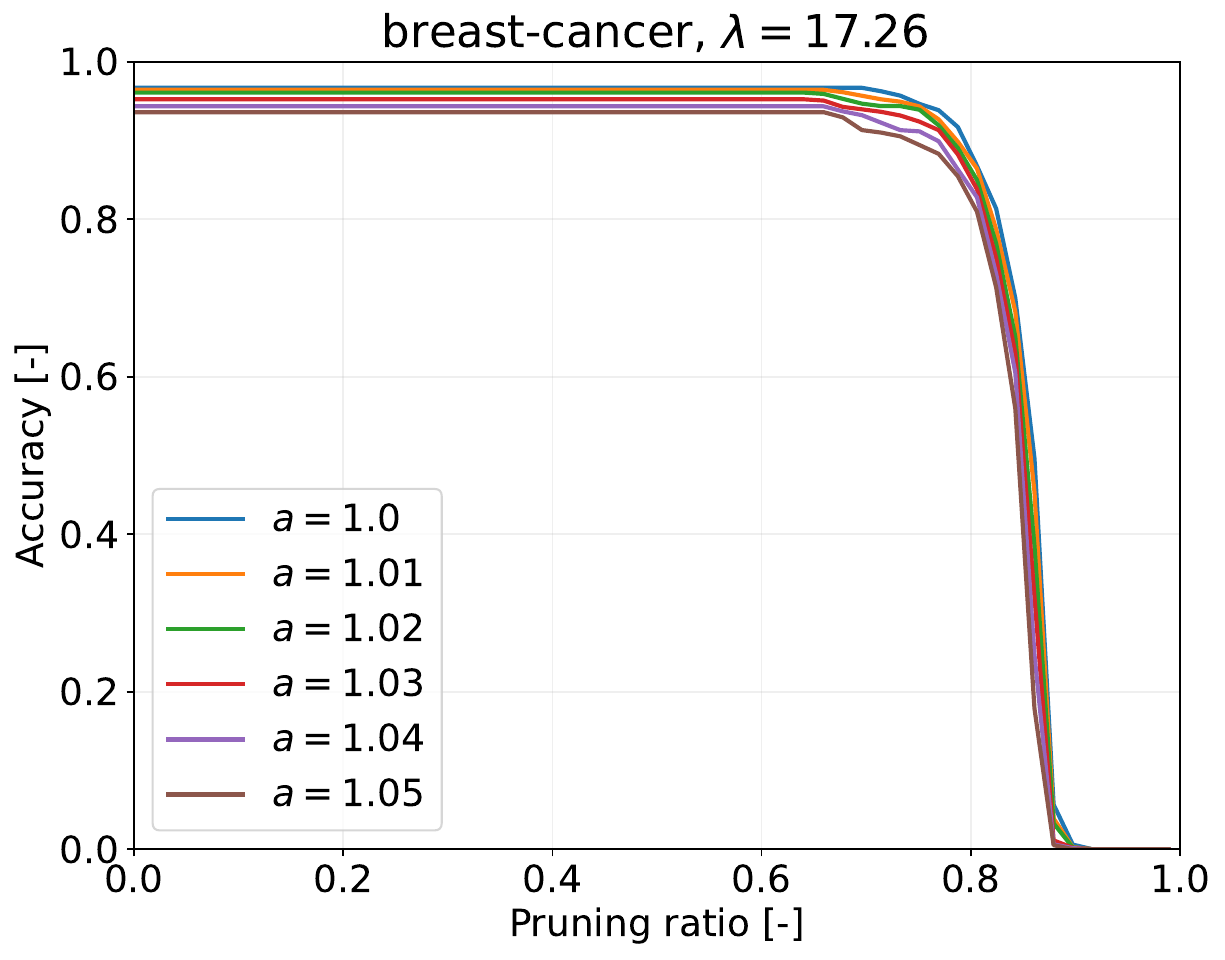}\end{minipage}
		&
		\begin{minipage}[b]{0.3\hsize}\centering {\small Dataset: breast-cancer, $\lambda=n$}\\\includegraphics[width=0.8\hsize]{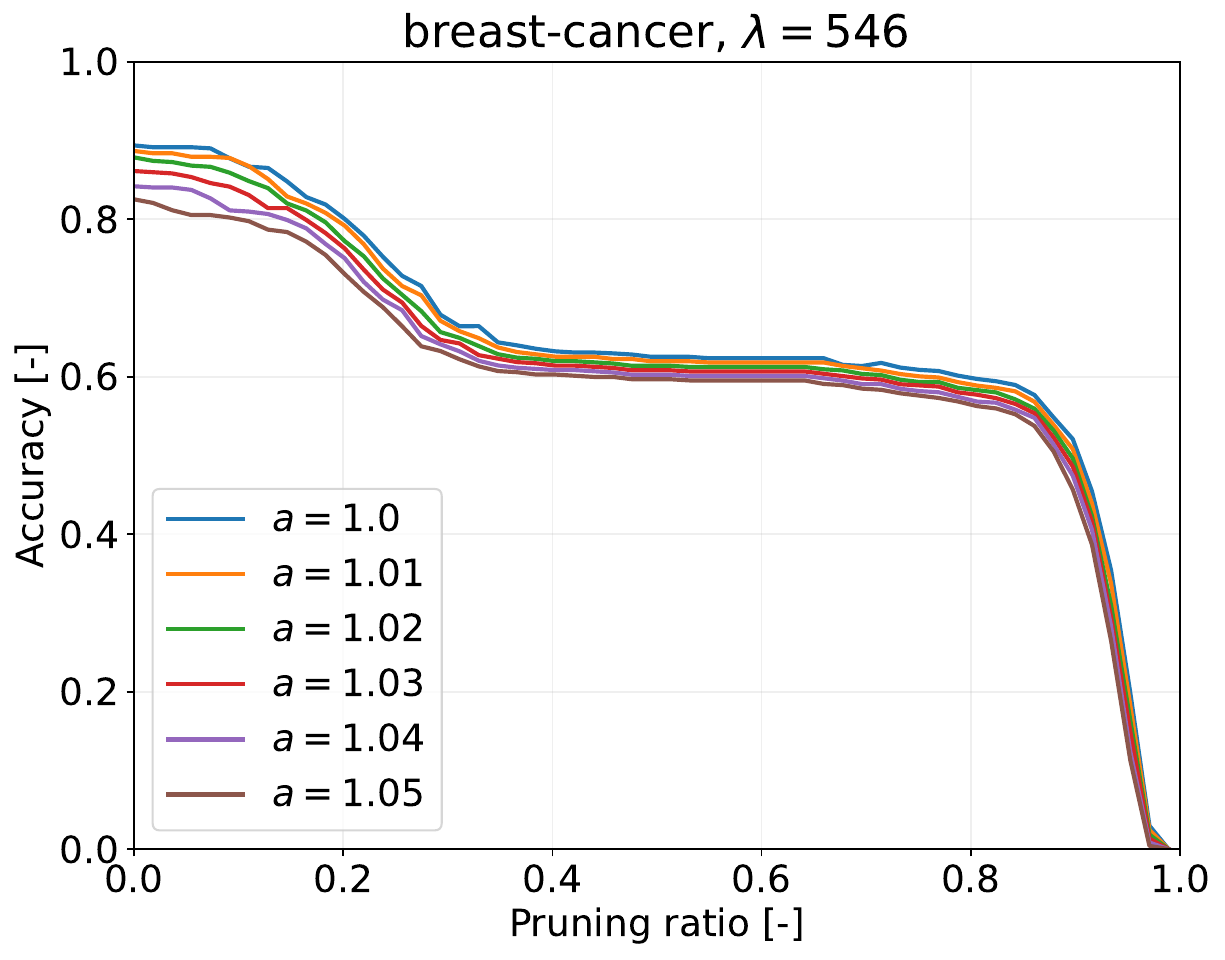}\end{minipage}
		\\
		\begin{minipage}[b]{0.3\hsize}\centering {\small Dataset: heart, $\lambda=n \cdot 10^{-3}$}\\\includegraphics[width=0.8\hsize]{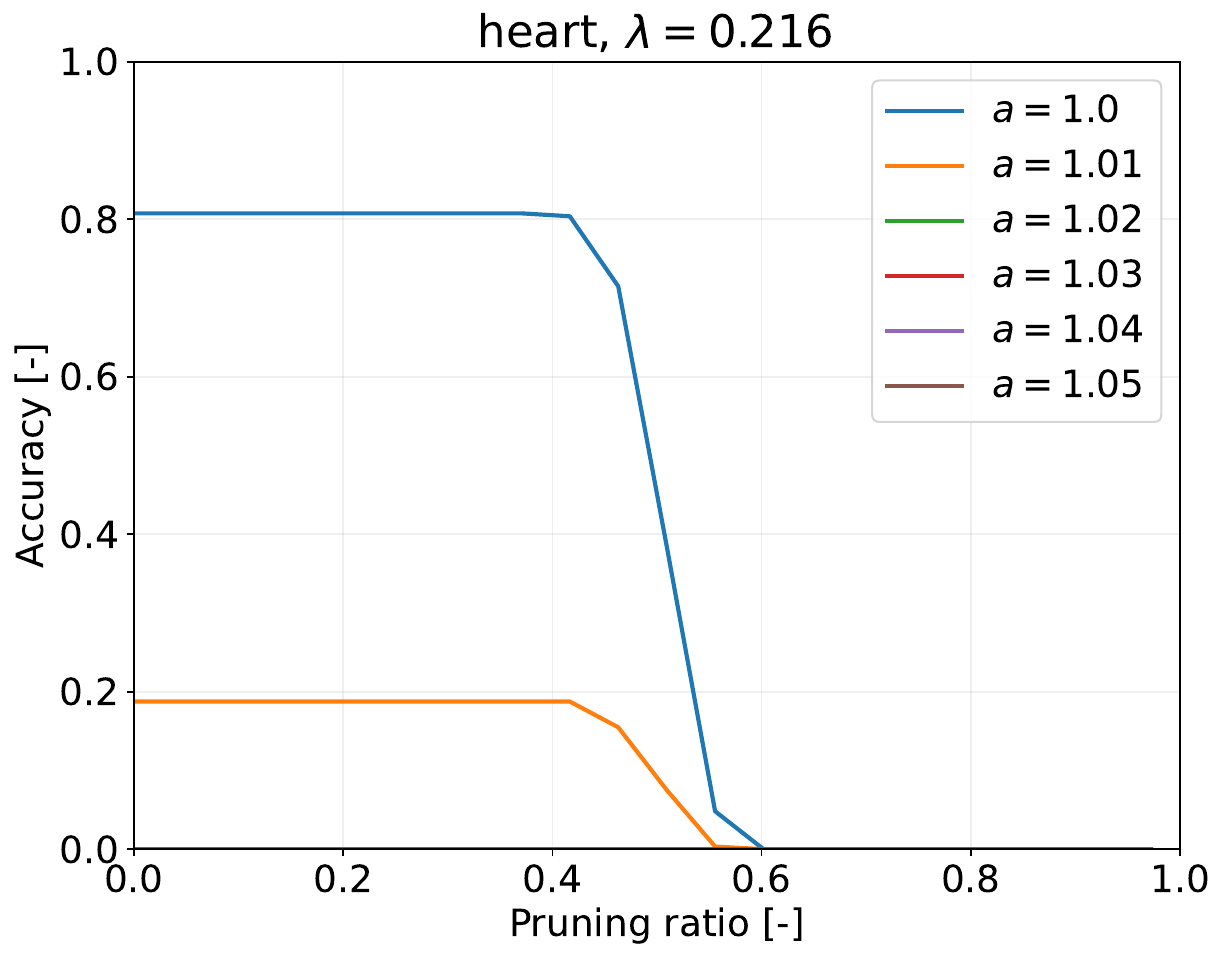}\end{minipage}
		&
		\begin{minipage}[b]{0.3\hsize}\centering {\small Dataset: heart, $\lambda=n \cdot 10^{-1.5}$}\\\includegraphics[width=0.8\hsize]{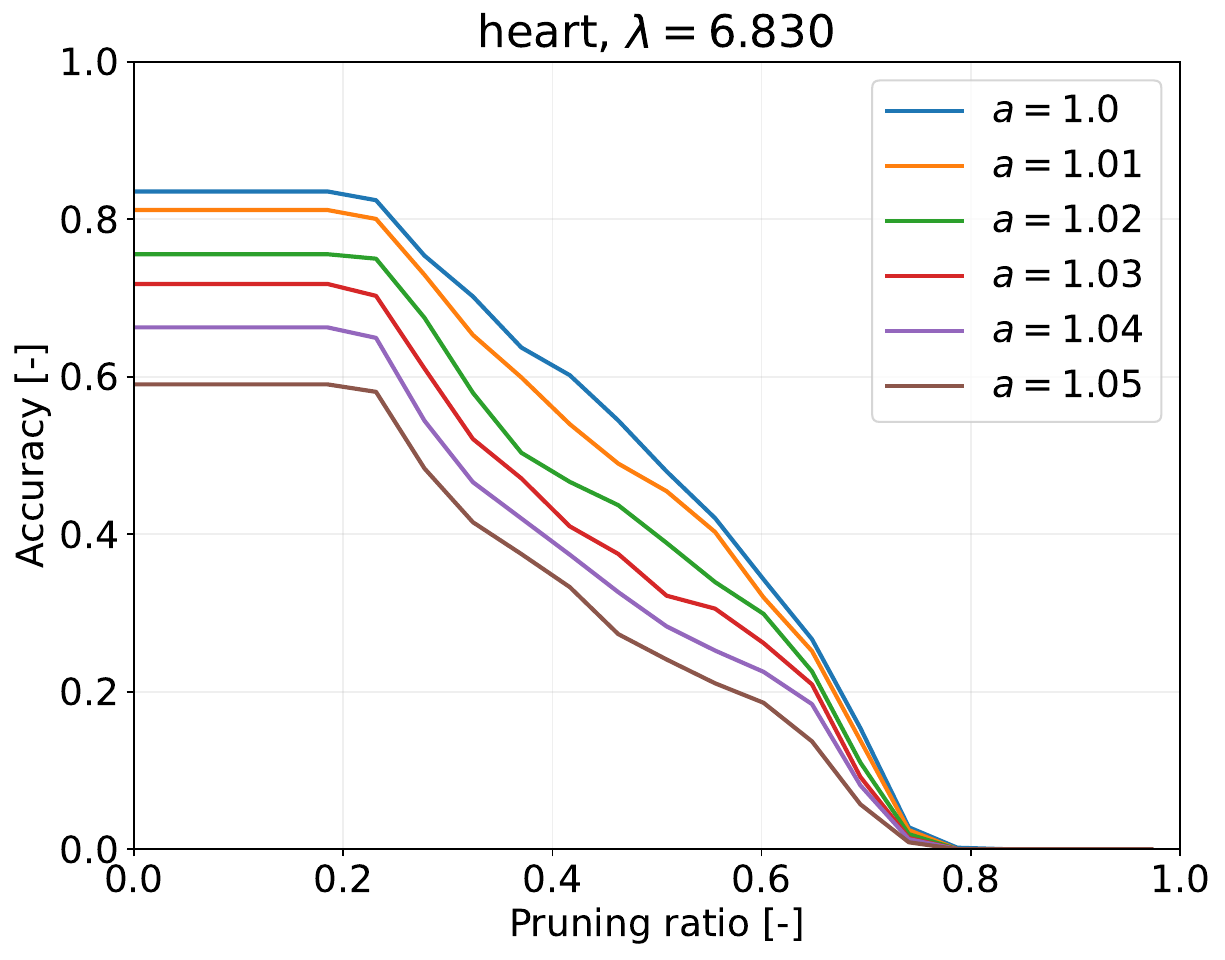}\end{minipage}
		&
		\begin{minipage}[b]{0.3\hsize}\centering {\small Dataset: heart, $\lambda=n$}\\\includegraphics[width=0.8\hsize]{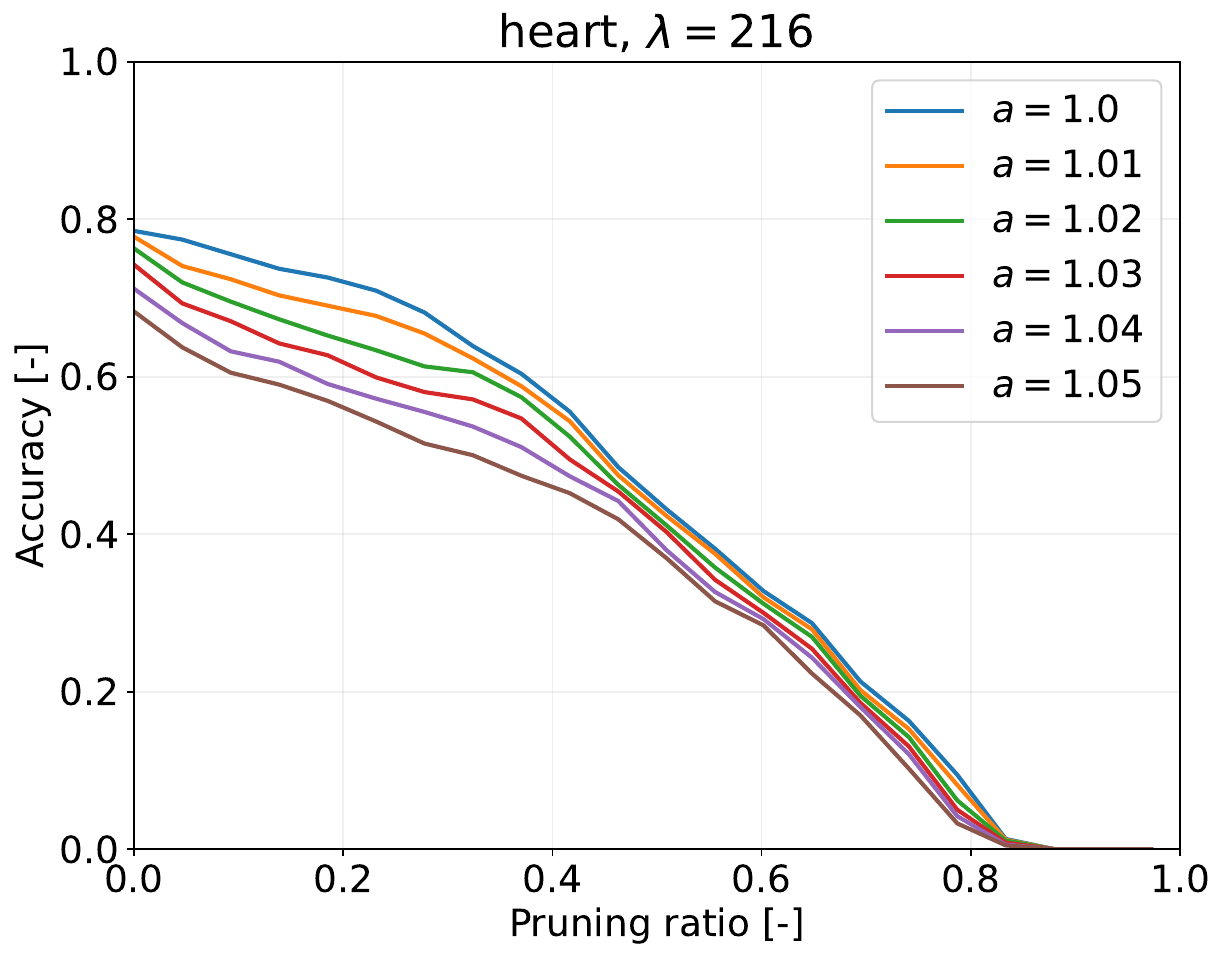}\end{minipage}
		\\
		\begin{minipage}[b]{0.3\hsize}\centering {\small Dataset: ionosphere, $\lambda=n \cdot 10^{-3}$}\\\includegraphics[width=0.8\hsize]{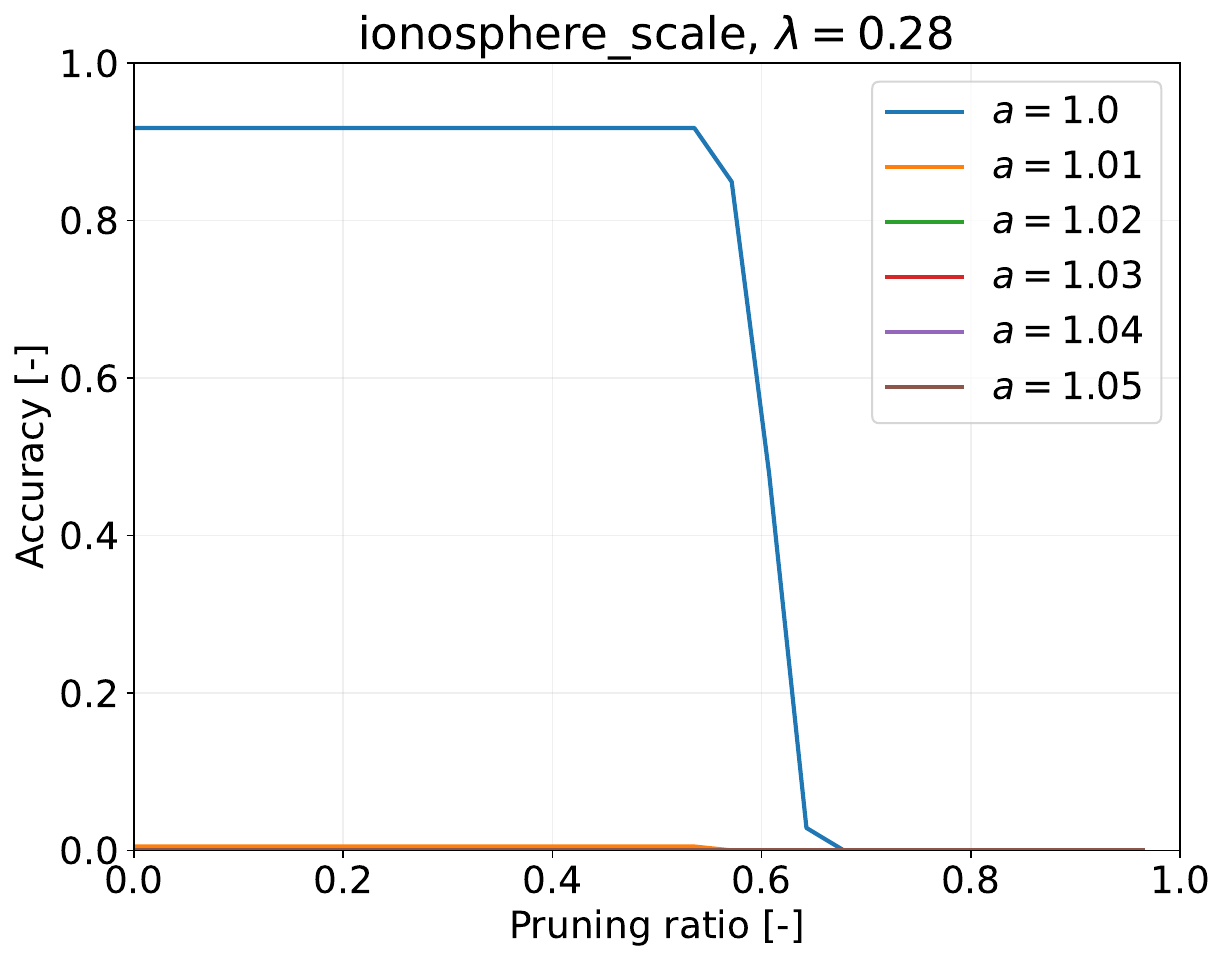}\end{minipage}
		&
		\begin{minipage}[b]{0.3\hsize}\centering {\small Dataset: ionosphere, $\lambda=n \cdot 10^{-1.5}$}\\\includegraphics[width=0.8\hsize]{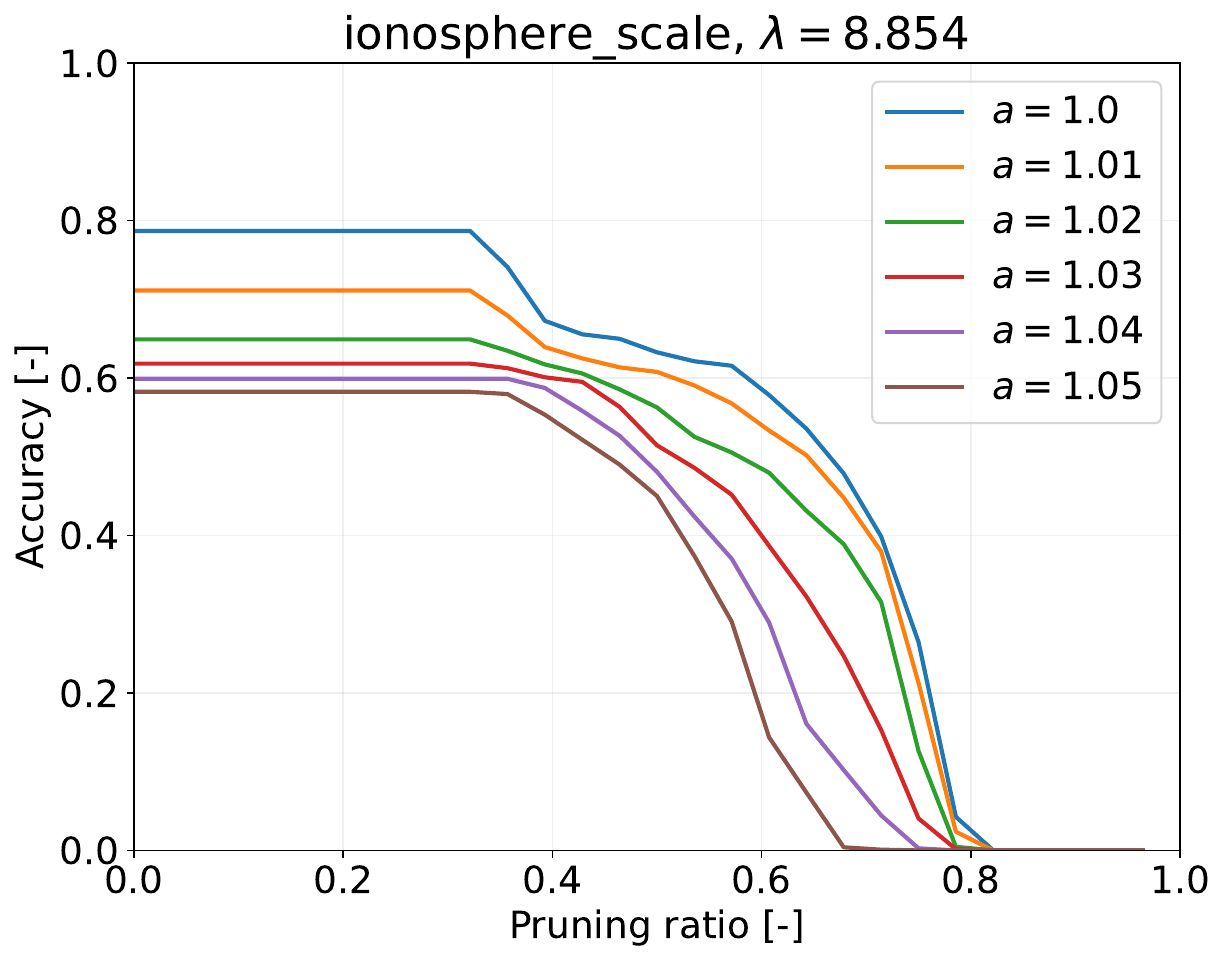}\end{minipage}
		&
		\begin{minipage}[b]{0.3\hsize}\centering {\small Dataset: ionosphere, $\lambda=n$}\\\includegraphics[width=0.8\hsize]{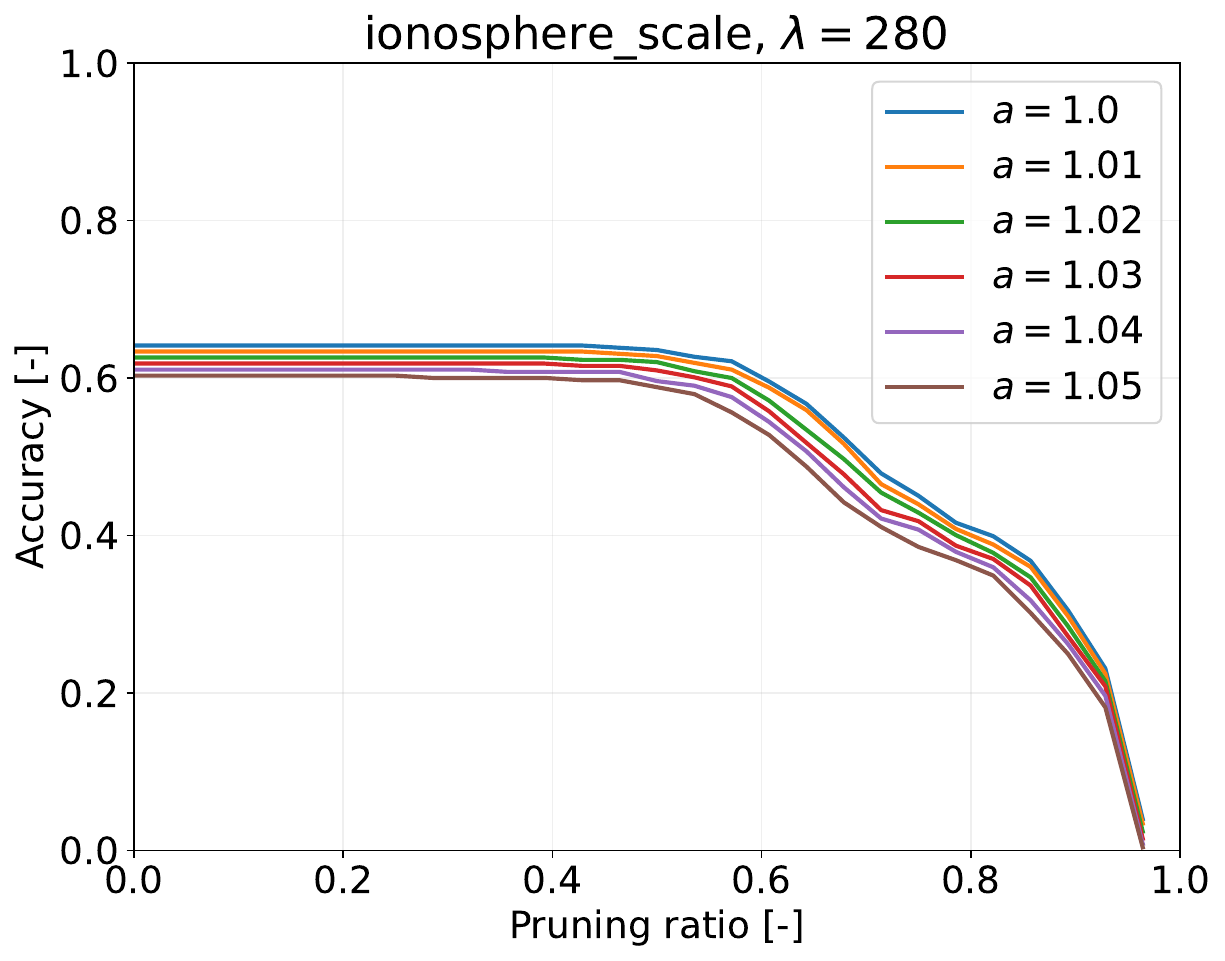}\end{minipage}
		\\
		\begin{minipage}[b]{0.3\hsize}\centering {\small Dataset: splice, $\lambda=n \cdot 10^{-3}$}\\\includegraphics[width=0.8\hsize]{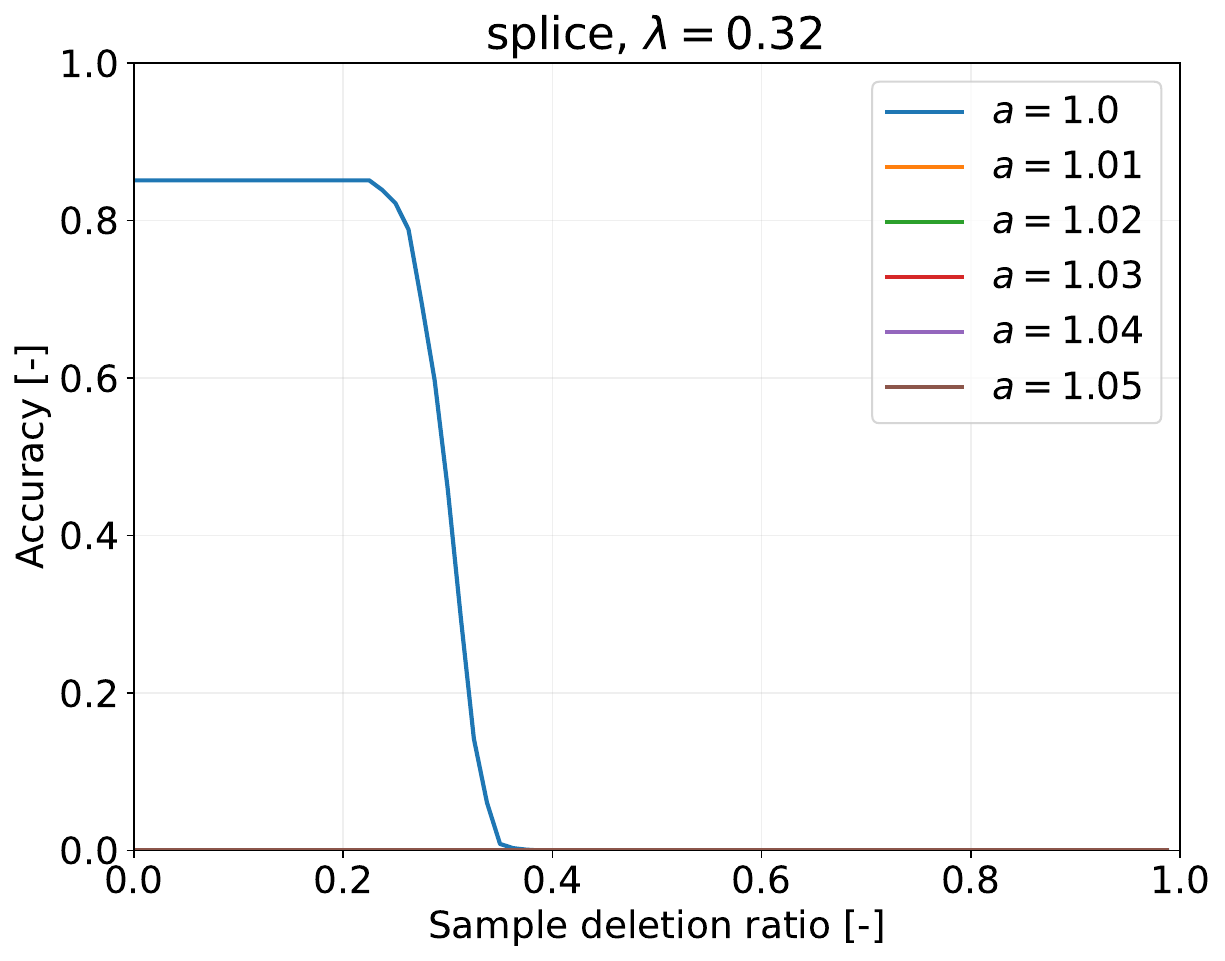}\end{minipage}
		&
		\begin{minipage}[b]{0.3\hsize}\centering {\small Dataset: splice, $\lambda=n_\mathrm \cdot 10^{-1.5}$}\\\includegraphics[width=0.8\hsize]{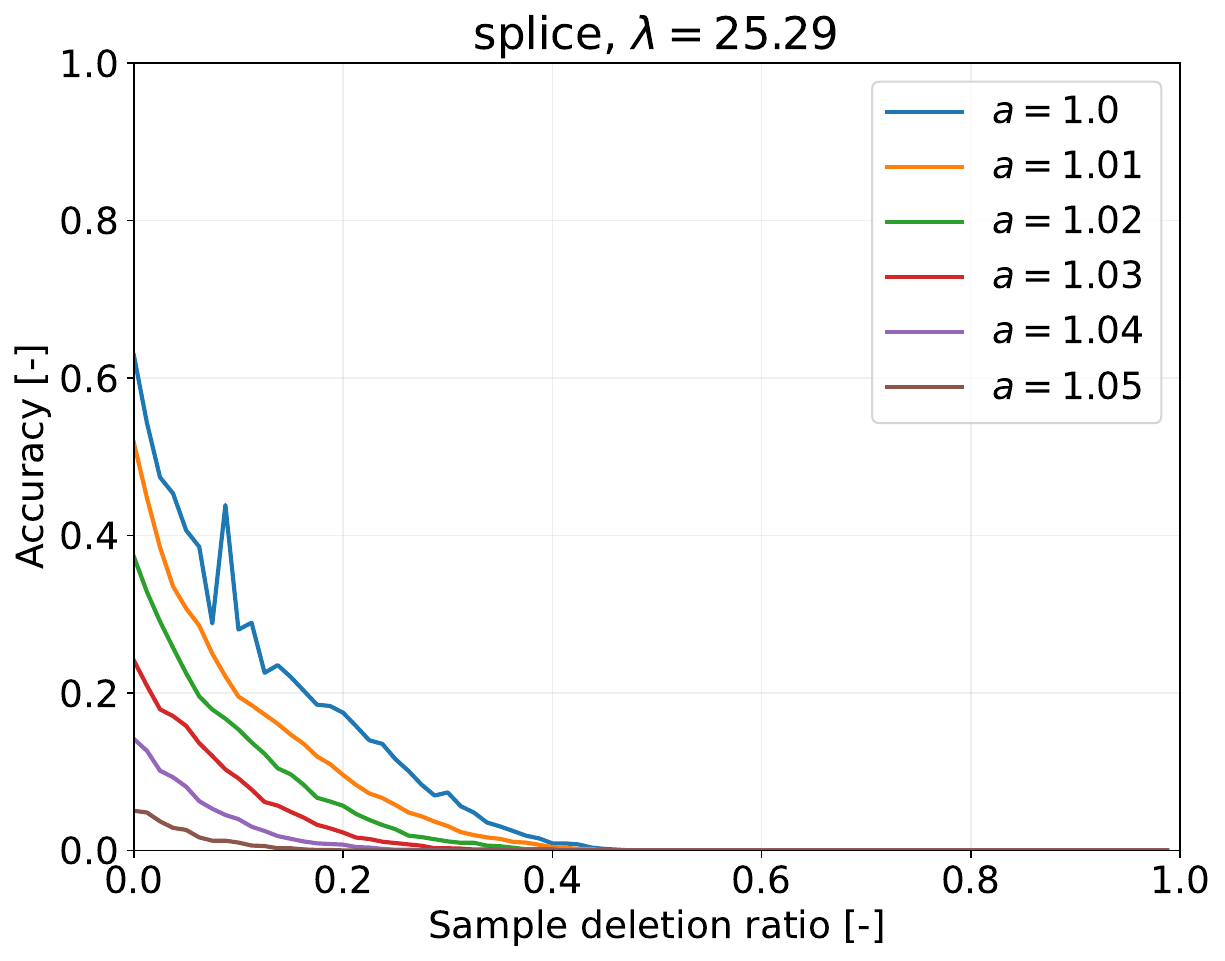}\end{minipage}
		&
		\begin{minipage}[b]{0.3\hsize}\centering {\small Dataset: splice, $\lambda=n$}\\\includegraphics[width=0.8\hsize]{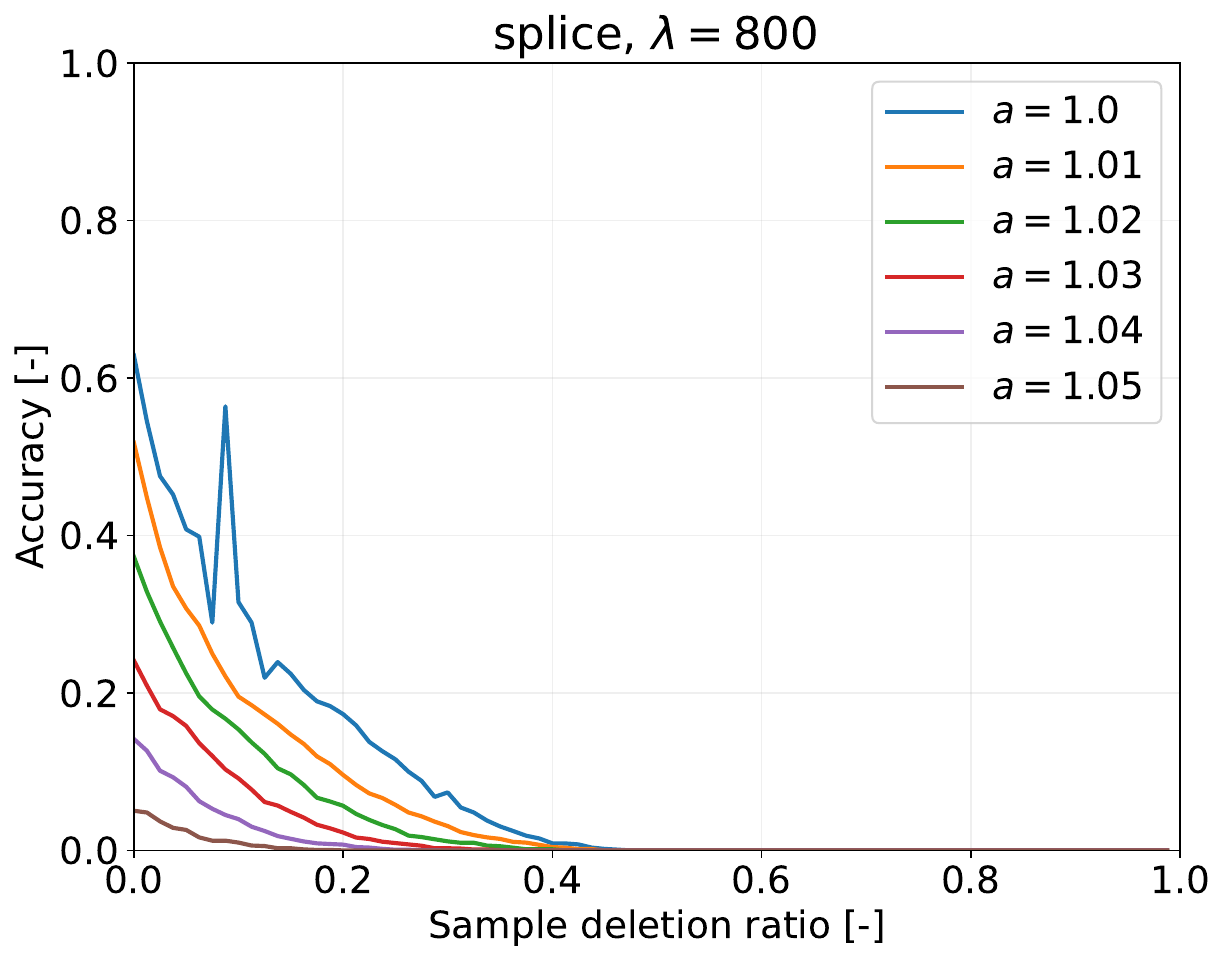}\end{minipage}
		\\
	
	\end{tabular}
	\caption{Model performance for RBF-kernel SVMs, under the settings described in Section \ref{sec:experiment} and Appendix \ref{app:experimental-setup}.}
	\label{fig:result-guarantee-svm}
	\end{figure}

	\newpage

\subsection{The effectiveness of the proposed method in different models} \label{app:model-disucussion}

Here, we compare the results between logistic regression in Appendix~\ref{app:result-logistic} and SVM in Appendix~\ref{app:result-svm}.
Regarding model performance, there is no significant difference between the two models, and the trends when regularization is large or small are generally consistent for both (Figure~\ref{fig:result-acc-logistic} and \ref{fig:result-acc-svm}).
On the other hand, in terms of theoretical evaluation, there are significant differences between the two models (Figure~\ref{fig:result-guarantee-logistic} and \ref{fig:result-guarantee-svm}).
The major difference between SVM and logistic regression lies in whether the model is instance-sparse.
In SVM, training instances where \(\alpha^*_{\bm 1_n, \bm 1_n, i} = 0\) do not affect the duality gap during instance selection.
Therefore, more effective instance selection is possible, as the duality gap can be suppressed further.
In contrast, logistic regression is not a sparse model, and as a result, the theoretical lower bound of the worst-case weighted validation accuracy is expected to be lower than that of SVM.

\subsection{Experimental Results Using NTK in Section \ref{subsec:result-image}} \label{app:result-ntk}

%
We also experimented the DRCS method with NTK for an image dataset.
We composed 5,000-sample binary classification dataset from “CIFAR-10” dataset by choosing from classes “airplane” and "automobile”.
A total of 1000 instances were sampled, and experiments were conducted using Algorithm~\ref{alg:1}.
It was demonstrated that the proposed DRCS method could be approximately applied to deep learning by using NTK.
\begin{figure}[H]
\begin{tabular}{ccc}
\begin{minipage}[b]{0.3\hsize}\centering {\small Dataset: CIFAR10, $\lambda=n \cdot 10^{-3}$}\\\includegraphics[width=0.8\hsize]{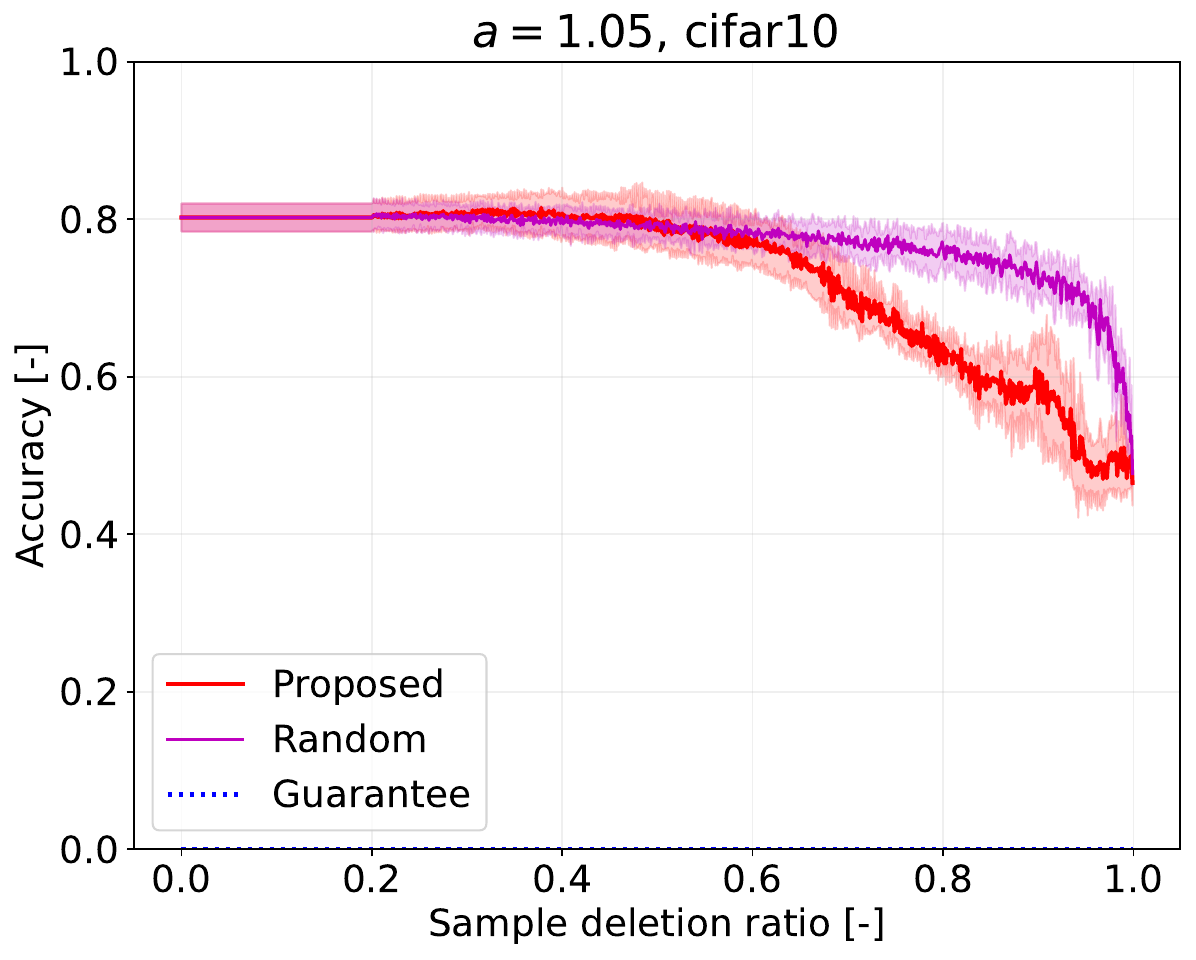}\end{minipage}
&
\begin{minipage}[b]{0.3\hsize}\centering {\small Dataset: CIFAR10, $\lambda=n \cdot 10^{-1.5}$}\\\includegraphics[width=0.8\hsize]{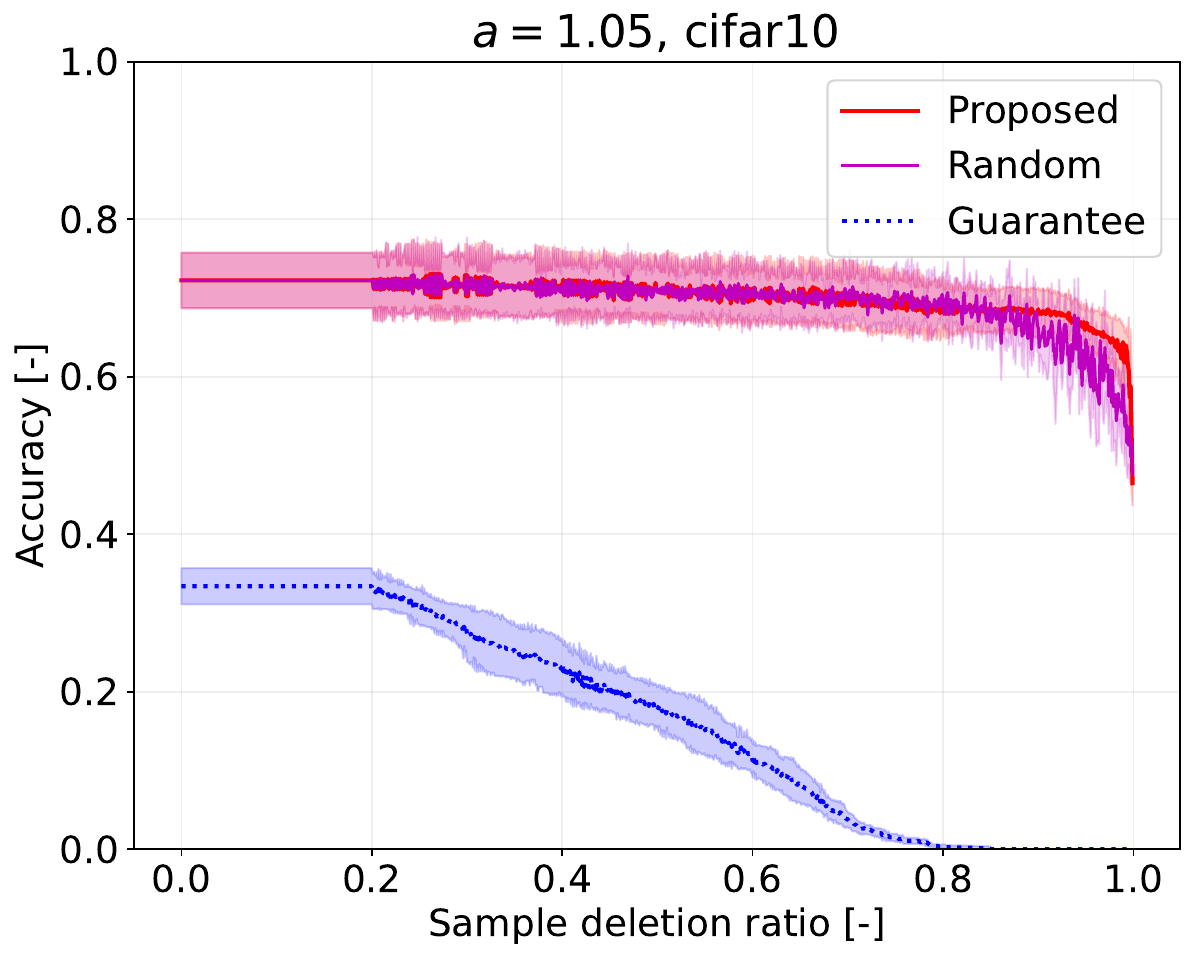}\end{minipage}
&
\begin{minipage}[b]{0.3\hsize}\centering {\small Dataset: CIFAR10, $\lambda=n$}\\\includegraphics[width=0.8\hsize]{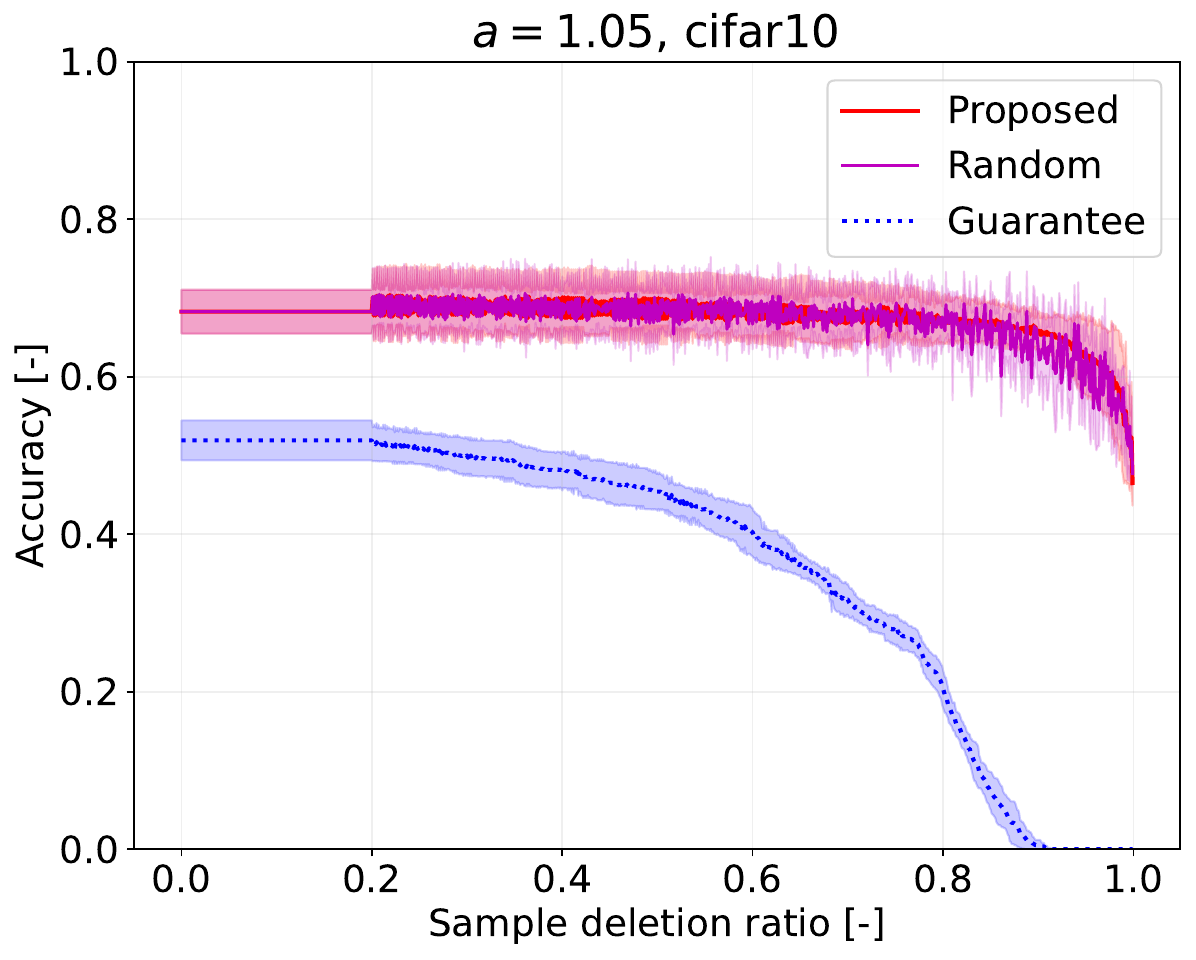}\end{minipage}
\\
\begin{minipage}[b]{0.3\hsize}\centering {\small Dataset: CIFAR10, $\lambda=n \cdot 10^{-3}$}\\\includegraphics[width=0.8\hsize]{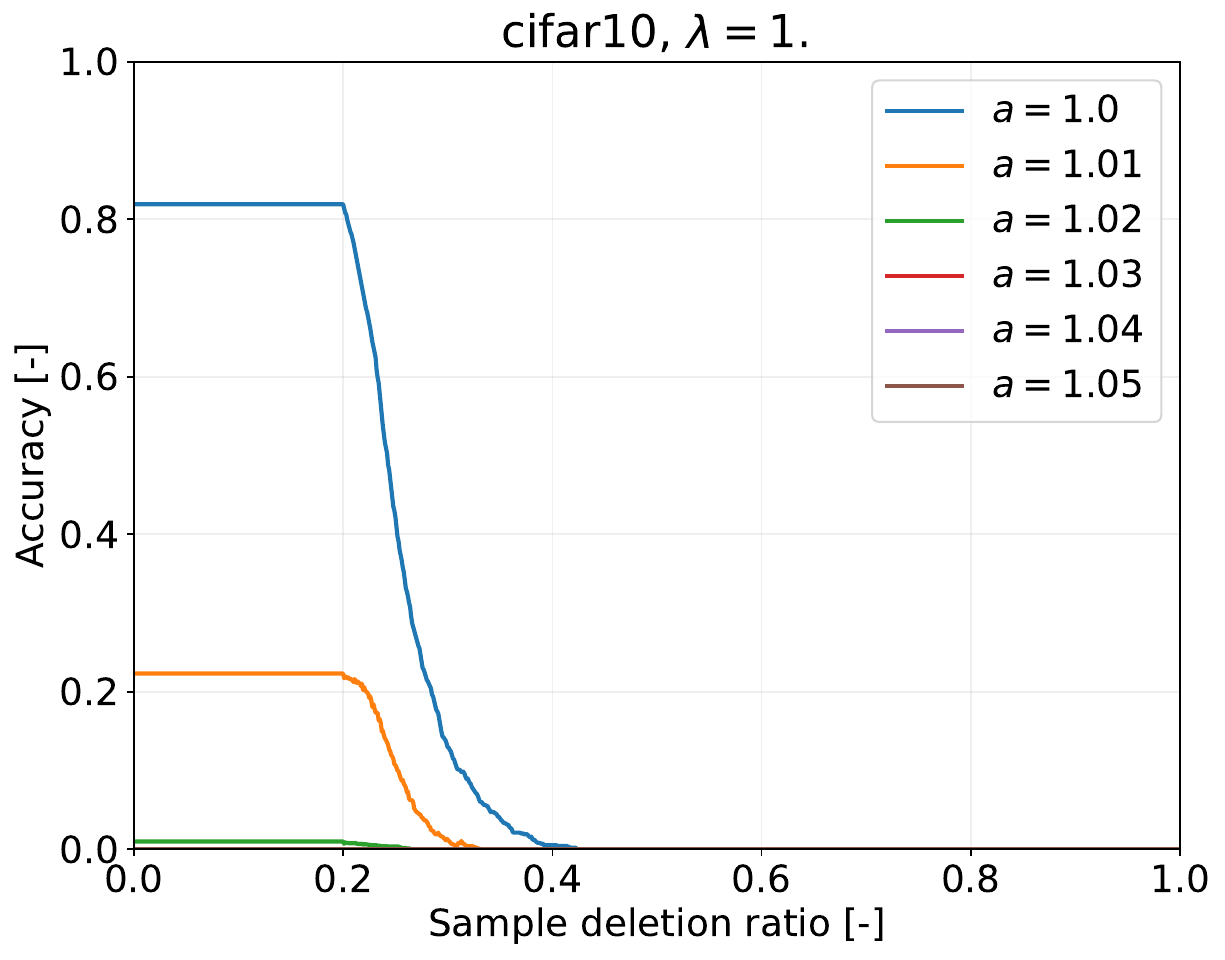}\end{minipage}
&
\begin{minipage}[b]{0.3\hsize}\centering {\small Dataset: CIFAR10, $\lambda=n \cdot 10^{-1.5}$}\\\includegraphics[width=0.8\hsize]{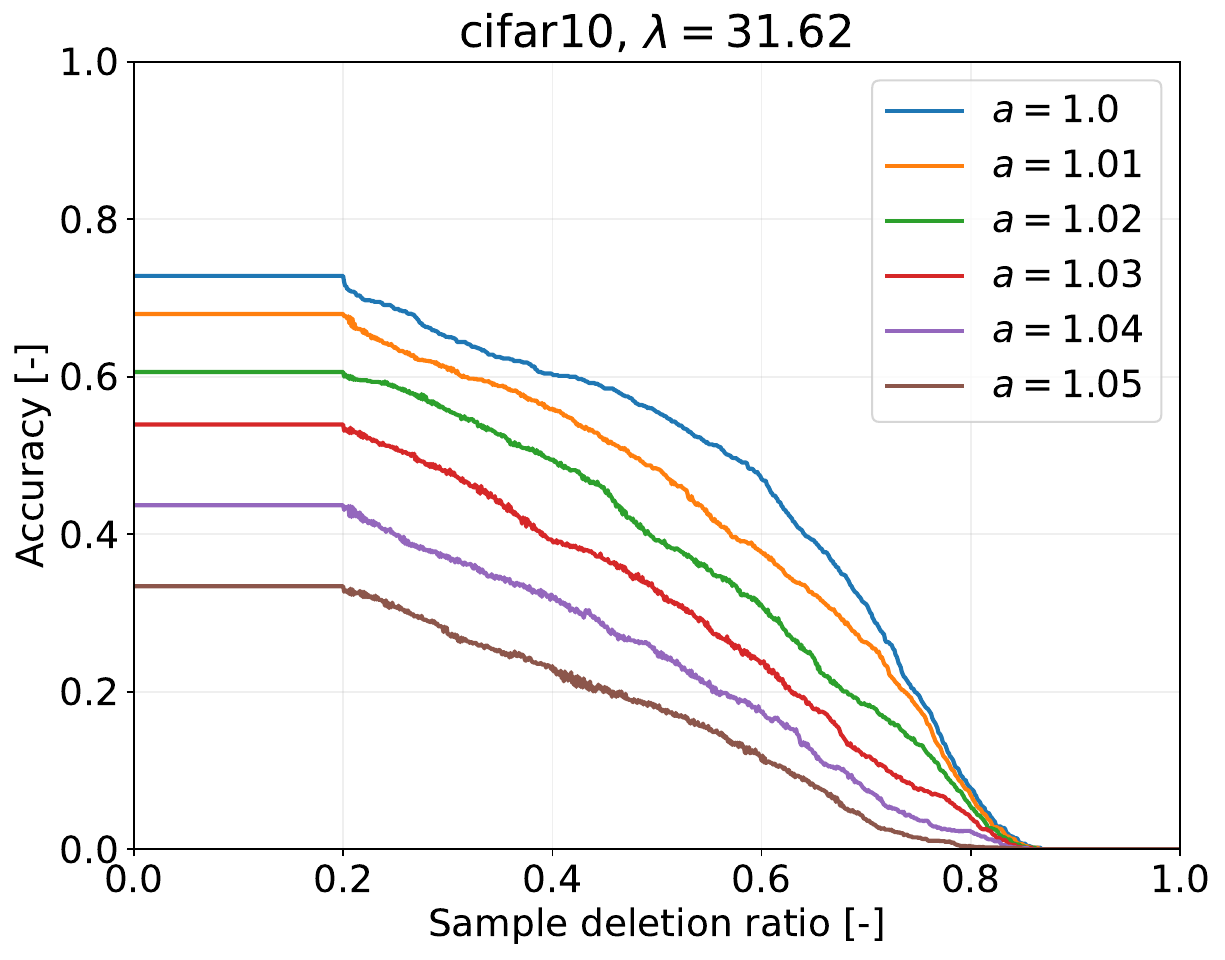}\end{minipage}
&
\begin{minipage}[b]{0.3\hsize}\centering {\small Dataset: CIFAR10, $\lambda=n$}\\\includegraphics[width=0.8\hsize]{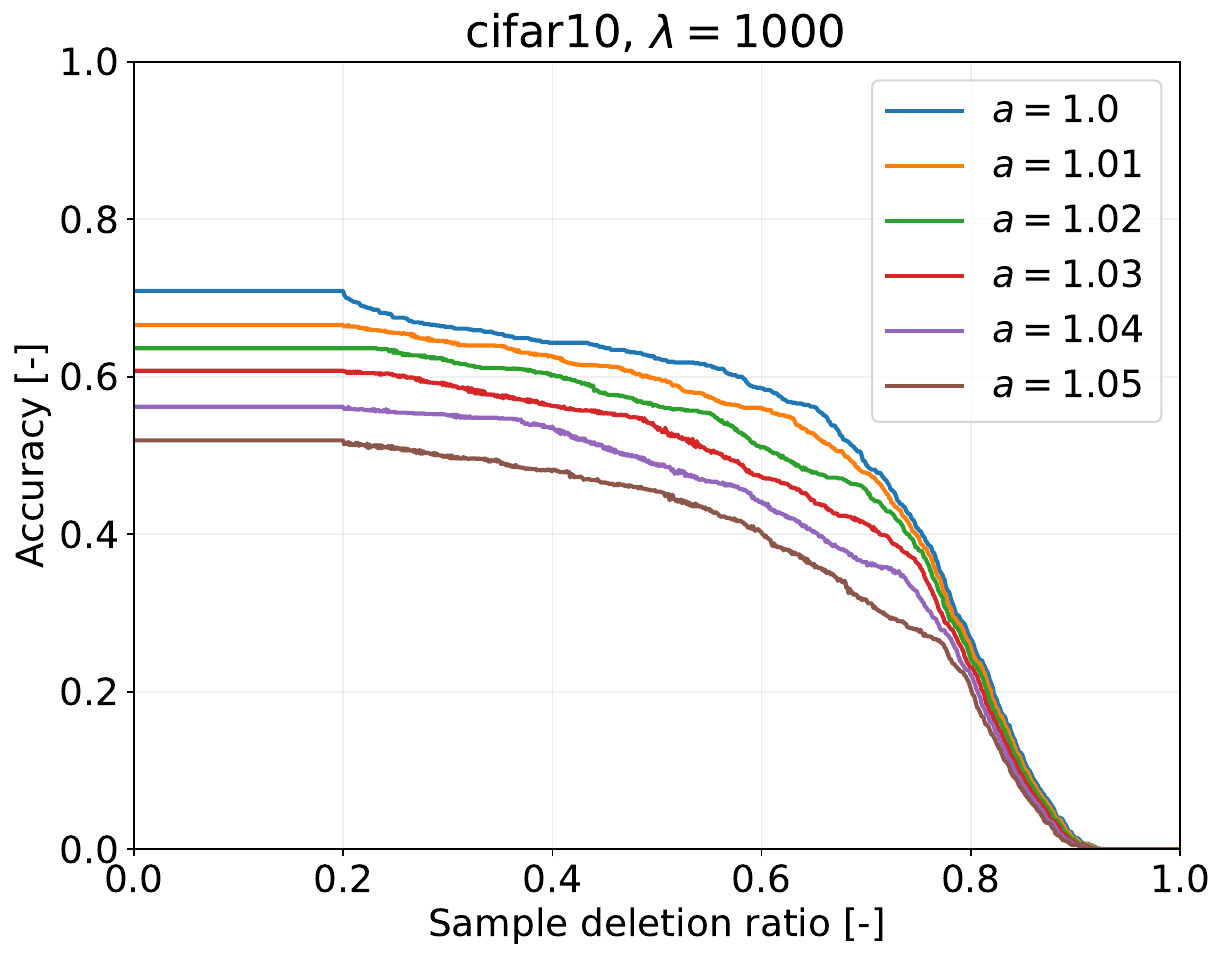}\end{minipage}

\end{tabular}
\caption{Model performance for NTK, under the settings described in Section \ref{sec:experiment} and Appendix \ref{app:experimental-setup}.}
\label{fig:result-ntk-acc}
\end{figure}

\end{document}